%% file: mlps_colm_short.tex
\documentclass{article}
\input{shared/preamble}
\input{shared/metadata}

\definecolor{compressorange}{RGB}{176, 94, 16}
\definecolor{appendixblue}{RGB}{0, 91, 150}

\begin{document}
\ifcolmsubmission
\linenumbers
\fi

\maketitle
\hebbianposttitlefootnotes

\begin{abstract}
\input{sections_short/abstract}

\end{abstract}

\section{Introduction}
\label{sec:intro}
\input{sections_short/section_1_intro/main}

\section{Preliminaries}
\label{sec:definitions}
\input{sections_short/section_2_preliminaries/main}

\section{MLPs, Hebbians, and Margins}
\label{sec:basic_setting}
\label{sec:3}
\input{sections_short/section_3_basic_setting/main}

\section{Margin and Storage Capacity Analysis of Hebbian MLPs}
\label{sec:hebbian-kernel-mlps}
\label{sec:4}
\input{sections_short/section_4_hebbian_kernel_mlps/main}

\section{Integrating Fact-Storing MLPs into Transformers}
\label{sec:5}
\label{sec:usability}
\label{sec:transformers}
\input{sections_short/section_5_transformer_integration/main}

\input{sections_short/section_6_conclusion/main}

\newpage
\input{sections_short/impact_statement}
\input{shared/acknowledgements}

\bibliography{colm2026_conference}
\bibliographystyle{colm2026_conference}

\newpage

\appendix
\onecolumn
\input{appendix_short/main}

\end{document}

%% file: shared/preamble.tex
\usepackage[preprint]{colm2026_conference}

\input{shared/math_commands.tex}

\usepackage{microtype}
\usepackage{graphicx}
\usepackage{subcaption}
\usepackage{booktabs}
\usepackage{hyperref}
\usepackage{url}
\usepackage{amsmath}
\usepackage{amssymb}
\usepackage{mathtools}
\usepackage{amsthm}
\usepackage{mdframed}
\usepackage{caption}
\usepackage{wrapfig}
\usepackage{xcolor}
\usepackage{float}
\usepackage{makecell}
\usepackage{enumitem}
\usepackage{bbm}
\usepackage{natbib}
\usepackage{algorithmic}
\usepackage{algorithm}
\usepackage{aliascnt}
\usepackage[capitalize,noabbrev]{cleveref}
\usepackage{lineno}
\usepackage[breakable]{tcolorbox}
\usepackage{titletoc}

\definecolor{darkblue}{rgb}{0, 0, 0.5}
\hypersetup{colorlinks=true, citecolor=darkblue, linkcolor=darkblue, urlcolor=darkblue}

\definecolor{keyblue}{RGB}{0, 91, 150}       %
\definecolor{valred}{RGB}{180, 60, 30}        %
\definecolor{couplepurple}{RGB}{120, 40, 140} %

\theoremstyle{plain}
\newtheorem{theorem}{Theorem}[section]

\newaliascnt{proposition}{theorem}
\newtheorem{proposition}[proposition]{Proposition}
\aliascntresetthe{proposition}

\newaliascnt{lemma}{theorem}
\newtheorem{lemma}[lemma]{Lemma}
\aliascntresetthe{lemma}

\newaliascnt{corollary}{theorem}
\newtheorem{corollary}[corollary]{Corollary}
\aliascntresetthe{corollary}

\theoremstyle{definition}
\newaliascnt{definition}{theorem}
\newtheorem{definition}[definition]{Definition}
\aliascntresetthe{definition}

\newaliascnt{assumption}{theorem}

\aliascntresetthe{assumption}

\theoremstyle{remark}
\newaliascnt{remark}{theorem}
\newtheorem{remark}[remark]{Remark}
\aliascntresetthe{remark}

\crefname{proposition}{Proposition}{Propositions}
\crefname{lemma}{Lemma}{Lemmas}
\crefname{corollary}{Corollary}{Corollaries}
\crefname{construction}{Construction}{Constructions}
\crefname{definition}{Definition}{Definitions}
\crefname{assumption}{Assumption}{Assumptions}
\crefname{remark}{Remark}{Remarks}

\newcommand{\Pbb}{\mathbb{P}}
\newcommand{\defeq}{\vcentcolon=}

\newcommand{\matK}{\mathbf{K}}

\newcommand{\cN}{\mathcal{N}}

\newtheorem*{proposition*}{Proposition}
\newtheorem*{corollary*}{Corollary}

\newtheorem*{lemma*}{Lemma}

%% file: shared/math_commands.tex
\usepackage{amsmath,amsfonts,bm}

\def\eqref#1{equation~\ref{#1}}

\def\1{\bm{1}}

\def\vone{{\mathbf{1}}}

\def\va{{\mathbf{a}}}
\def\vb{{\mathbf{b}}}
\def\vc{{\mathbf{c}}}

\def\vk{{\mathbf{k}}}

\def\vq{{\mathbf{q}}}

\def\vu{{\mathbf{u}}}
\def\vv{{\mathbf{v}}}
\def\vw{{\mathbf{w}}}

\def\vy{{\mathbf{y}}}

\def\mPhi{{\bm{\Phi}}}

\def\mSigma{{\bm{\Sigma}}}

\DeclareMathAlphabet{\mathsfit}{\encodingdefault}{\sfdefault}{m}{sl}
\SetMathAlphabet{\mathsfit}{bold}{\encodingdefault}{\sfdefault}{bx}{n}

\newcommand{\E}{\mathbb{E}}

\newcommand{\R}{\mathbb{R}}

\newcommand{\Var}{\mathrm{Var}}

\newcommand{\norm}[1]{\|#1 \|}

\newcommand{\col}{\begin{pmatrix}}
\newcommand{\con}{\end{pmatrix}}

\newcommand{\poly}{\operatorname{poly}}

\newcommand{\ip}[2]{\langle {#1}, {#2} \rangle}

\newcommand{\Sph}{\mathbb{S}^{d-1}}

%% file: shared/metadata.tex
\title{MLPs are Hebbians: Constructing Efficient Fact-Storing MLPs for Transformers}

\makeatletter
\renewcommand{\thefootnote}{\fnsymbol{footnote}}
\newcommand{\thanksmark}[1]{\textsuperscript{\@fnsymbol{#1}}}
\newcommand{\corrauthormark}{\kern0.35em\textsuperscript{\@fnsymbol{2}}}
\makeatother

\author{
\textbf{Roberto Garcia}\textsuperscript{1}
  \thanks{Equal first authors; order chosen by coin flip}
  \corrauthormark
\quad
\textbf{Jerry Liu}\textsuperscript{1}
  \thanksmark{1}
  \hspace{-0.2cm}
  \corrauthormark
\quad
\textbf{Ronny Junkins}\textsuperscript{2}
  \thanksmark{1}
\\[0.3ex]
\textbf{Sabri Eyuboglu}\textsuperscript{2}
\quad
\textbf{Atri Rudra}\textsuperscript{3}
\quad
\textbf{Chris R{\'e}}\textsuperscript{2}
\\[1.5ex]
\textsuperscript{1}Institute for Computational \& Mathematical Engineering, Stanford University \\
\textsuperscript{2}Department of Computer Science, Stanford University \\
\textsuperscript{3}Department of Computer Science and Engineering, University at Buffalo
}

\newcommand{\hebbianposttitlefootnotes}{%
  \begingroup
  \renewcommand{\thefootnote}{\fnsymbol{footnote}}%
  \footnotetext[2]{Corresponding authors: \href{mailto:robgarct@stanford.edu}{\texttt{robgarct@stanford.edu}}, \href{mailto:jerrywliu@stanford.edu}{\texttt{jerrywliu@stanford.edu}}.}%
  \footnotetext[3]{Code is released at \url{https://github.com/HazyResearch/hebbian-mlps}.}%
  \endgroup
  \setcounter{footnote}{0}%
  \renewcommand{\thefootnote}{\arabic{footnote}}%
}

%% file: sections_short/abstract.tex
Large language models (LLMs) store factual knowledge in their parameters. While recent work has shown that this knowledge resides in MLP layers, existing constructive and mechanistic interpretability models of fact-storage in LLMs fail to explain the surprising empirical phenomenon that they store facts at an information-theoretically optimal rate. In this work, we develop a theoretical account of this phenomenon. We develop the first Transformer-compatible fact-storing MLP closed-form construction that satisfies the following three properties empirically observed in LLMs: it (i) attains optimal fact storage scaling, (ii) handles arbitrary input/output geometries, and (iii) works inside Transformers. Key to our work is to analyze the \emph{decoding margin} of MLPs, whereas prior work only studies MLP fact storage. Under isotropic embeddings, our construction achieves information-theoretically optimal storage capacity scaling and requires $10$-$104\times$ fewer parameters at matched fact count than prior constructions. For arbitrary key and value embeddings, we show that our construction attains the same storage capacity scaling, up to penalization factors depending on the embedding geometries. Moreover, we demonstrate that our constructed MLPs can be used within Transformer blocks for factual recall tasks at optimal capacity scaling, requiring $15$-$63\times$ fewer parameters at matched fact count than prior constructions. Finally, as a proof-of-concept, we show that fact-storing MLPs enable \emph{modular fact editing} by swapping a Transformer's MLP with a new one.

%% file: sections_short/section_1_intro/main.tex
Large language models (LLMs) achieve remarkable performance across domains such as mathematics, science, and law~\citep{AlphaProof2024_DMind,guha2023legalbench,saab2024capabilities}, in part because they can store vast amounts of knowledge in their parameters~\citep{petroni2019language,meng2023locatingeditingfactualassociations}.
Prior work suggests that knowledge in Transformers is stored in Multi-Layer Perceptrons (MLPs) as key-value mappings, or \emph{facts}~\citep{geva2021transformerfeedforwardlayerskeyvalue,dai2022knowledgeneuronspretrainedtransformers}.
However, despite these findings, fact-storing MLPs remain poorly understood.

While prior work has made important progress toward understanding and modeling fact storage in MLPs, existing models fail to capture three empirically observed properties of LLM fact storage: MLPs must (i) attain optimal fact storage scaling, (ii) handle arbitrary input/output geometries, and (iii) work inside Transformers.
Mechanistic interpretability work~\citep{geva2021transformerfeedforwardlayerskeyvalue,dai2022knowledgeneuronspretrainedtransformers} assumes MLPs store facts in individual neurons, but these models lead to suboptimal storage-capacity scaling.
More recently, \citet{nichani2024understandingfactualrecalltransformers} study LLM fact storage by introducing an MLP \emph{weight construction} (NTK MLP) with theoretical capacity guarantees.
However, existing constructions
(i) theoretically and empirically do not attain the empirically observed information-theoretically optimal capacity scaling of LLMs;
(ii) are restricted to isotropic (e.g., uniformly spherical) embedding distributions, whereas LLM embeddings are anisotropic \citep{ethayarajh2019contextualcontextualizedwordrepresentations,razzhigaev2024shapelearninganisotropyintrinsic}; and 
(iii) cannot be used by Transformer blocks for factual recall tasks, such as answering ``What is the capital of France?''.

Our core insight is to study MLP decoding margin scaling (\Cref{def:margin}). Where prior works focus on fact-storage under noiseless key queries, our decoding margin study allows us to develop the first closed-form MLP construction that is usable by Transformers for factual recall. Consequently, we theoretically show that \emph{Transformer blocks are capable of information-theoretically optimal fact storage} -- providing the first theoretical explanation that aligns with the empirical capacity scaling observed in pretrained LLMs.

Our construction is the first to capture all three empirically observed properties of LLM fact storage:
\input{sections/section_1_intro/figs/banner}

\begin{itemize}[leftmargin=1em, itemsep=0.5em]
\item \textbf{Optimal fact-storage capacity and margin (Sections \ref{sec:basic_setting}, \ref{sec:hebbian-kernel-mlps}).}
Our first step toward matching the empirically optimal fact-storage scaling of LLMs is to demonstrate that 1) MLPs are Hebbian kernel memories and 2) that Hebbian kernel memories achieve asymptotically optimal fact-storage scaling. We show this by developing a closed-form MLP construction, equivalent to a Hebbian memory with sketched quadratic kernel, which provably attains a fact storage capacity of $F = \Theta(W / \log W)$ for $F$ facts using $W$ parameters (\cref{cor:optimal-fact-storage-capacity}).
Theoretically, our construction closes the optimality gap over prior constructions by a factor of $\log^{11} F$ under isotropic embeddings.
Moreover, we show that, at matched fact count, the NTK baseline requires $10$-$104\times$ more parameters than our best data-dependent kernel construction (\cref{fig:sketched_k2_sweeps}c).

\item \textbf{Handling arbitrary embedding geometries (\Cref{sec:hebbian-kernel-mlps}).}
We next study how arbitrary embedding geometries affect MLP margin and storage capacity scaling. We show that generalizing the MLP margin and capacity scaling to arbitrary embedding geometries introduces four embedding-geometric statistics multiplicatively into the information-theoretically optimal scaling derived for isotropic embeddings (\cref{thm:bilinear-scaling}). Intuitively, these statistics penalize the margin and capacity scaling by how clustered the key and value embeddings are. Empirically, we show that our generalized bounds characterize the empirical decoding margin scaling precisely ($R^2 \geq 0.95$; \cref{fig:app:beta-sweeps}c). Furthermore, we find that the capacity gap between our construction and trained MLPs is preserved even under anisotropic embeddings.

\item \textbf{MLPs usable within Transformers for factual recall (\Cref{sec:usability}).}
Towards understanding MLP usage within LLMs, we find that non-trivial decoding margin is needed by MLPs to be used for factual recall within Transformer blocks. Attention layers produce imperfect, noisy queries, so the fact-storing MLPs they query should be robust to noise. Building on this insight, we demonstrate theoretically and empirically, for the first time, that Transformer blocks can retrieve facts from MLPs with optimal fact-storage capacity to solve factual recall tasks (\cref{thm:noise-robust-capacity}, \cref{fig:ssfr_main_fig}c). Moreover, in these Transformer experiments, at matched fact count, the NTK baseline requires roughly $15$-$63\times$ more parameters than Transformer blocks using our data-dependent construction.
\end{itemize}

Finally, as a proof-of-concept, we show that fact-storing MLPs enable \emph{modular fact editing} (\Cref{sec:fact-editing}) by replacing a Transformer's MLP with one storing new facts.
Our method, \textsc{MLP Swapping}, achieves near-perfect \emph{fact-editing score}---correctly editing target facts while avoiding off-target effects---whereas prior state-of-the-art methods degrade to as low as $\sim30$\% score when editing 10\% of the fact-set.

In summary, our work takes a constructive step toward understanding MLPs in Transformers. We present a fact-storing MLP construction that achieves optimal margin and fact-storage capacity, provides provable decoding-margin guarantees under arbitrary embeddings, and is usable within Transformer blocks for factual recall at optimal capacity. We also demonstrate an application to modular fact editing, illustrating a path toward robust and modular knowledge manipulation in LLMs.

%% file: sections/section_1_intro/figs/banner.tex
\begin{figure}[t]
    \centering
    \includegraphics[
        width=\linewidth,
    ]{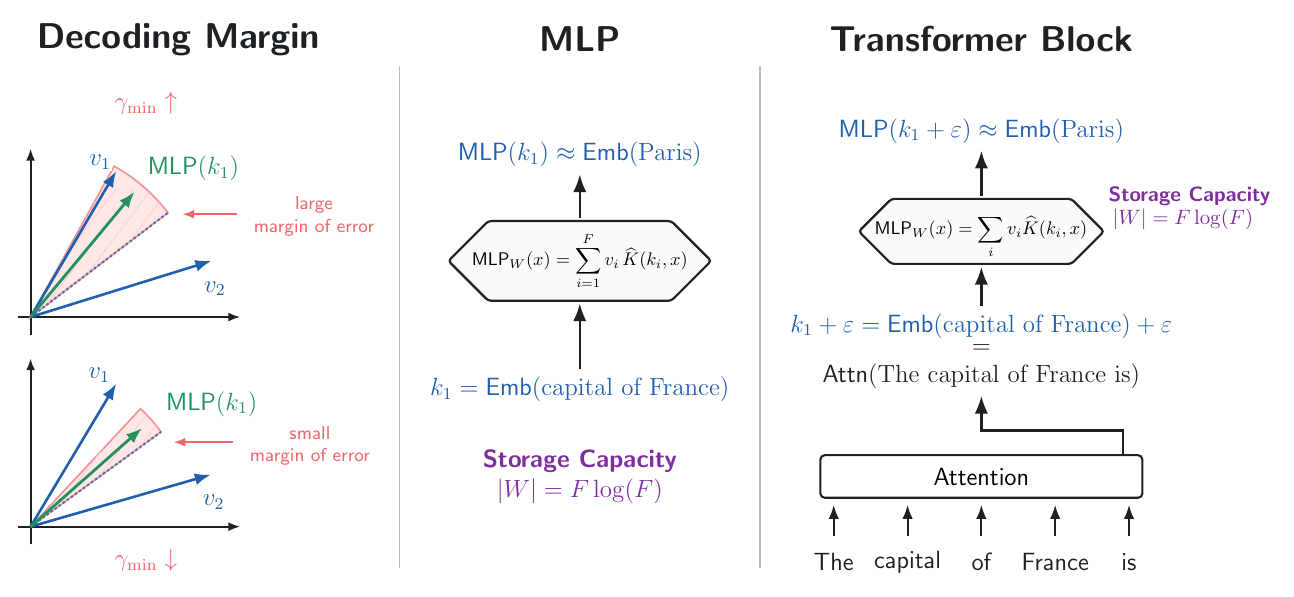}
    \caption{
    \textbf{(A) Decoding margin illustration:} Top plot shows an MLP with large decoding margin, bottom plot with low decoding margin. The MLP with larger decoding margin has a larger ``margin of error'', making it more robust when queried with perturbed versions of $\mathbf{k}_1$. \textbf{(B) MLPs are information-theoretically optimal:} we show that MLPs are Hebbian memories in kernel space and that the storage capacity of MLPs scales at the information-theoretically optimal rate. \textbf{(C) Transformer blocks are information-theoretically optimal:} we show that the storage capacity of Transformer blocks scales at the information-theoretically optimal rate, provided the attention noise remains bounded.
    }
    \label{fig:main-figure}
\end{figure}

%% file: sections_short/section_2_preliminaries/main.tex
\subsection{Formalizing Factual Knowledge}
\label{subsec:definitions_maintext}

\paragraph{Fact sets and storage.}
Given key embeddings $\mathbf{K}\in \mathbb{R}^{|K| \times d}$ and value embeddings $\mathbf{V}\in\mathbb{R}^{|V| \times d}$, a \textit{fact set} is a map $f: [|\mathbf{K}|]\to [|\mathbf{V}|]$. We write $\mathbf{k}_{i}$ and $\mathbf{v}_{i}$ for the $i$th key and value embedding, respectively.

\begin{definition}[Fact storage]
\label{def:fact-storage}
A model $\mathbf{g}_\theta: \mathbb{R}^d\to \mathbb{R}^d$ \textit{stores a fact set} $f: [|\mathbf{K}|]\to [|\mathbf{V}|]$ given embeddings $\mathbf{K}$ and $\mathbf{V}$ if, for all $i \in [|\mathbf{K}|]$ and all $j \neq f(i) \in [|\mathbf{V}|]$,
\begin{equation}
    \label{eq:decoding-criterion-maintext}
    \langle \mathbf{g}_\theta(\mathbf{k}_{i}), \mathbf{v}_{f(i)} \rangle > \langle \mathbf{g}_\theta(\mathbf{k}_{i}), \mathbf{v}_{j} \rangle.
\end{equation}
\end{definition}

Notably, this definition is equivalent to correct softmax decoding in language modeling.

\begin{definition}[Margin]
\label{def:margin}
The margin of $\mathbf{g}_\theta$ on fact $i$ against competitor $j \neq f(i)$ is
\begin{equation}
    \label{eq:margin-maintext}
    \gamma_{i,j} := \langle \mathbf{g}_\theta(\mathbf{k}_{i}), \mathbf{v}_{f(i)} \rangle - \langle \mathbf{g}_\theta(\mathbf{k}_{i}), \mathbf{v}_{j} \rangle,
\end{equation}
and the minimum margin is $\gamma_{\min} := \min_{i, j \neq f(i)} \gamma_{i,j}$.
Note that storing a fact set (in the sense of~\Cref{def:fact-storage}) is equivalent to $\gamma_{\min} > 0$.
\end{definition}

\paragraph{Fact-storage cost and capacity.}
To measure parameter efficiency, we define the smallest parameter budget needed for a model class to store \emph{every} fact set on fixed embeddings.

\begin{definition}[Fact-storage cost and capacity]
\label{def:fact-storage-cost}
The \emph{fact-storage cost} of a model class $\mathbf{g}$ on embeddings $\mathbf{K}$ and $\mathbf{V}$ is the minimum parameter count needed to represent \emph{all} possible fact sets:
\begin{equation}
    W(\mathbf{g}; \mathbf{K}, \mathbf{V}) =
    \min \left\{\#(\theta)\Bigg|\;
    \begin{aligned}
        &\forall f : [|\mathbf{K}|] \to [|\mathbf{V}|], \\
        &\exists\, \theta \; \text{s.t.} \; \mathbf{g}_\theta \text{ stores } f
    \end{aligned}
    \right\}.
    \label{def:complexity}
\end{equation}
The corresponding \emph{fact-storage capacity} is the maximum number of facts storable with a fixed parameter budget.
\end{definition}

\begin{theorem}[Information-theoretic lower bound]
\label{thm: info_bounds_capacity-const}
Assuming a constant number of bits per parameter, the fact-storage cost of embeddings $\mathbf{K}$ and $\mathbf{V}$ for \emph{any} model class $\mathbf{g}$ satisfies
\[
W(\mathbf{g}; \mathbf{K}, \mathbf{V}) = \Omega(|\mathbf{K}|\log [|\mathbf{V}|]).
\]
\end{theorem}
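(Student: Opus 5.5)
This is a counting (pigeonhole) argument: compare the number of fact sets a model must distinguish with the number of distinct parameter settings available at a given budget.

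\emph{Step 1: lower-bound the number of distinguishable fact sets.} There are exactly $|\mathbf{V}|^{|\mathbf{K}|}$ maps $f:[|\mathbf{K}|]\to[|\mathbf{V}|]$. I first argue that a single parameter vector $\theta$ stores at most one of them. By \Cref{def:fact-storage}, if $\mathbf{g}_\theta$ stores $f$, then for every key $i$, $f(i)$ is the unique strict maximizer of $j\mapsto\langle \mathbf{g}_\theta(\mathbf{k}_i),\mathbf{v}_j\rangle$. So $f$ is determined by $\theta$: it is the argmax decoder applied to $\mathbf{g}_\theta$. Two distinct fact sets therefore cannot be stored by the same $\theta$.

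This requires the value embeddings to be pairwise distinct. If $\mathbf{v}_{j}=\mathbf{v}_{j'}$ for some $j\ne j'$, the strict inequality in \eqref{eq:decoding-criterion-maintext} can never hold between them, so no $f$ mapping to $j$ or $j'$ is storable and the cost is vacuous (infinite). I will note this degenerate case and otherwise assume distinct values.

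\emph{Step 2: upper-bound the number of parameter settings.} Under the constant-bits-per-parameter assumption, each parameter takes at most $2^{b}$ values for a constant $b$. A model with $W$ parameters thus realizes at most $2^{bW}$ distinct $\theta$, and hence, by Step 1, stores at most $2^{bW}$ distinct fact sets.

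\emph{Step 3: combine.} If $W = W(\mathbf{g};\mathbf{K},\mathbf{V})$, every fact set is stored by some $\theta$, and the map $f\mapsto\theta_f$ (choosing one such $\theta$ for each $f$) is injective by Step 1. Therefore $2^{bW}\ge |\mathbf{V}|^{|\mathbf{K}|}$, which gives $W\ge \frac{1}{b}|\mathbf{K}|\log_2|\mathbf{V}| = \Omega(|\mathbf{K}|\log|\mathbf{V}|)$.

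\emph{Main obstacle.} The only delicate point is formalizing ``constant number of bits per parameter.'' With real-valued parameters, counting fails: a single real number can encode arbitrarily many bits. I would therefore state explicitly that parameters are drawn from a finite alphabet of size $2^{b}$ with $b=O(1)$, or that $\theta$ is described by $O(W)$ bits. The injectivity in Step 1 is what makes the counting go through and needs no further structure on $\mathbf{g}$, which is why the bound holds for any model class.
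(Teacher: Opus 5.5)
Your proposal is correct and matches the paper's proof: it compares at most $2^{bW}$ parameter settings against $|\mathbf{V}|^{|\mathbf{K}|}$ fact sets to get $W \ge |\mathbf{K}|\log_2|\mathbf{V}|/b$. Your Step 1 usefully spells out the injectivity that the paper leaves implicit, and it does not actually need distinct values: storing both $f$ and $f'$ with $f(i)\neq f'(i)$ would already require two contradictory strict inequalities.
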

See Appendix~\ref{sec:info_theory_bound} for proof.

\subsection{Model Classes}
\label{subsec:model_classes}
In this work, we study two model classes: gated one-hidden-layer MLPs and Hebbian memories~\citep{10.1109/TC.1972.5008975, doi:10.1073/pnas.79.8.2554, bubeck2020networksizeweightssize, cabannes2024scalinglawsassociativememories, nichani2024understandingfactualrecalltransformers}.

\textbf{MLPs.} We consider models $\mathbf{g}_\theta:\mathbb{R}^d\to\mathbb{R}^{d_v}$ of the form
\begin{equation}
\label{eq:gated-mlp-class}
\text{MLP}(\mathbf{x}) =
\mathbf{g}_\theta(\mathbf{x})=\mathbf{B}\!\left((\mathbf{A}\mathbf{x})\odot \sigma(\mathbf{G}\mathbf{x})\right),
\end{equation}
where $\mathbf{x}\in\mathbb{R}^d$, $\mathbf{A},\mathbf{G}\in\mathbb{R}^{m\times d}$, and $\mathbf{B}\in\mathbb{R}^{d_{v}\times m}$.
This family includes SwiGLU-style MLPs~\citep{shazeer2020gluvariantsimprovetransformer} used in modern language models~\citep{yang2025qwen3technicalreport, deepseekai2025deepseekv3technicalreport, dubey2024llama}. Our explicit construction (\Cref{sec:construction}) uses $\sigma=\mathrm{id}$.

\textbf{Hebbian memories.} These linear models are maps $\mathbf{g}_{\mathbf{W}}:\mathbb{R}^d\to\mathbb{R}^{d_v}$ of the form $\mathbf{g}_{\mathbf{W}}(\mathbf{x})=\mathbf{W}\mathbf{x}$, where $\mathbf{x}\in\mathbb{R}^d$ and $\mathbf{W}\in\mathbb{R}^{d_{v}\times d}$. A fact set is stored by taking
\begin{equation}
\label{eq:hebbian-memory-class}
\mathbf{W}=\sum_{j=1}^{|\mathbf{K}|}\mathbf{v}_{f(j)}\mathbf{k}_{j}^\top.
\end{equation}
Section~\ref{sec:3} shows that MLPs can be recast as Hebbian memories in a kernel feature space.

\subsection{Related Work}
\label{sec:related_work}

The two closest works to ours are \citet{nichani2024understandingfactualrecalltransformers} and \citet{zhong2025understandingtransformerperspectiveassociative}. \citet{nichani2024understandingfactualrecalltransformers} gave the first explicit construction of fact-storing MLPs and showed near-optimal fact-storage capacity, up to a polylogarithmic factor, but their analysis is restricted to isotropic embeddings and studies only the separability condition $\gamma_{\min}>0$. \citet{zhong2025understandingtransformerperspectiveassociative} developed a unified associative-memory view of attention and MLPs
, but focused on average retrieval fidelity rather than worst-case margins. In contrast, we derive explicit margin bounds beyond isotropy, identify margin bounded away from zero as the condition for Transformer usability, and show that constructed MLPs can be integrated into Transformer blocks for factual recall.

Additional discussion of probing, editing, and scaling studies of factual knowledge in language models appears in Appendix~\ref{app:related_work}.

%% file: sections_short/section_3_basic_setting/main.tex
We first establish two observations that let us analyze MLP fact storage within Transformers. First, MLPs are equivalent to \emph{Hebbian kernel memories} after whitening the empirical feature covariance. Second, MLPs in Transformers need \emph{margins bounded away from zero} (not just positive separability) because attention layers pass noisy queries to MLPs.

\subsection{MLPs Are Hebbian Kernel Memories}
\label{sec:mlps-are-hebbians}

Our first observation is that any MLP can be rewritten as a \emph{Hebbian kernel memory} on its stored examples. The only gap between the plain Hebbian predictor and the original MLP is the empirical feature covariance $\hat{\mSigma}$, so whitening converts any MLP into a (kernel) Hebbian.

\begin{tcolorbox}[breakable,colback=white,colframe=blue!50!black,boxrule=1.5pt,arc=0mm,left=8pt,right=8pt,top=8pt,bottom=8pt,title={\small\bfseries Key Result: MLPs are Hebbian Memories},colbacktitle=blue!50!black,coltitle=white,titlerule=0.5pt]
\begin{theorem}[MLPs as kernel Hebbians, informal]\label{thm:mlp-hebb-whiten}
For stored examples $(\mathbf{x}_{i},\mathbf{y}_{i})$ with
$\mathbf{y}_{i}=\mathrm{MLP}(\mathbf{x}_{i})$ and
$\mathrm{MLP}(\mathbf{x})=\mathbf{B}\phi(\mathbf{x})$
(where for gated MLPs
$\phi(\mathbf{x})=(\mathbf{A}\mathbf{x})\odot \sigma(\mathbf{G}\mathbf{x})$),
define the empirical feature covariance
\[
\hat{\mSigma}:=\frac1F\sum_{i=1}^F
\phi(\mathbf{x}_{i})\phi(\mathbf{x}_{i})^\top .
\]
Assuming $\hat{\mSigma}$ is invertible, define the whitened Hebbian memory
\[
H_{\mathrm{white}}(\mathbf{z})
:=
\frac1F\sum_{i=1}^F
\mathbf{y}_{i}\,K(\mathbf{x}_{i},\mathbf{z})
\]
induced by the kernel
\[
K(\mathbf{x},\mathbf{z})
:=
\phi(\mathbf{x})^\top\hat{\mSigma}^{-1}\phi(\mathbf{z}).
\]
Then
\[
H_{\mathrm{white}}(\mathbf{z})=\mathrm{MLP}(\mathbf{z})
\qquad \text{for all } \mathbf{z}.
\]
Thus, after feature whitening, the MLP is exactly a Hebbian kernel memory.
\end{theorem}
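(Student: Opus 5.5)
The plan is a direct algebraic computation: substitute the definitions and show that the whitened covariance cancels. The only ingredient is linearity of the Hebbian sum in the kernel.

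\textbf{Step 1: expand the Hebbian memory.} By the definition of $K$ and $\mathbf{y}_i=\mathbf{B}\phi(\mathbf{x}_i)$, for any $\mathbf{z}$ we have
\[
H_{\mathrm{white}}(\mathbf{z}) = \frac1F\sum_{i=1}^F \mathbf{B}\phi(\mathbf{x}_i)\,\phi(\mathbf{x}_i)^\top \hat{\mSigma}^{-1}\phi(\mathbf{z}).
\]
The factor $\phi(\mathbf{x}_i)^\top\hat{\mSigma}^{-1}\phi(\mathbf{z})$ is a scalar, so it may be written to the right of $\mathbf{B}\phi(\mathbf{x}_i)$. This turns each summand into a product of matrices.

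\textbf{Step 2: factor and cancel.} Since $\mathbf{B}$ and $\hat{\mSigma}^{-1}\phi(\mathbf{z})$ do not depend on $i$, linearity lets us pull them out of the sum:
\[
H_{\mathrm{white}}(\mathbf{z}) = \mathbf{B}\Bigl(\frac1F\sum_{i=1}^F\phi(\mathbf{x}_i)\phi(\mathbf{x}_i)^\top\Bigr)\hat{\mSigma}^{-1}\phi(\mathbf{z}) = \mathbf{B}\hat{\mSigma}\hat{\mSigma}^{-1}\phi(\mathbf{z}).
\]
This reduces to $\mathbf{B}\phi(\mathbf{z})=\mathrm{MLP}(\mathbf{z})$.

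\textbf{Step 3: check the generality claim.} Nothing in Steps 1--2 uses the specific form of $\phi$. In particular, the gated feature map $(\mathbf{A}\mathbf{x})\odot\sigma(\mathbf{G}\mathbf{x})$ and any activation $\sigma$ are covered. The identity also holds for every $\mathbf{z}$, not only for the stored $\mathbf{x}_i$.

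\textbf{Main obstacle.} There is no real analytic difficulty; the point to handle carefully is the invertibility hypothesis. $\hat{\mSigma}$ is an $m\times m$ matrix of rank at most $F$, so the assumption forces $F\ge m$ and requires the features $\phi(\mathbf{x}_i)$ to span $\mathbb{R}^m$. In a formal version I would state this explicitly. I would also remark that if $\hat{\mSigma}$ is singular, the pseudoinverse version gives $\mathbf{B}P\phi(\mathbf{z})$, where $P$ is the orthogonal projector onto the span of the features. That expression agrees with the MLP on the stored examples and on any $\mathbf{z}$ whose feature vector lies in that span.
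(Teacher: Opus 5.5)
Your proof is correct and is essentially the paper's argument: the paper likewise forms $\hat{\mathbf{W}}=\frac1F\sum_i\mathbf{y}_i\phi(\mathbf{x}_i)^\top=\mathbf{B}\hat{\mSigma}$ and cancels it against $\hat{\mSigma}^{-1}$. Your side remarks are also correct, though the paper does not discuss them: invertibility forces $F\ge m$, and with a pseudoinverse one gets $\mathbf{B}P\phi(\mathbf{z})$ with $P$ the projector onto the span of the stored features.
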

See Appendix~\ref{app:theory:hebbian_mlps:setup} for formal statement and proof. 
\end{tcolorbox}

This reduction motivates the rest of the paper:
constructing an MLP with desirable margin and storage capacity scaling amounts to designing an effective kernel for Hebbian memories.

\subsection{Margins Govern Transformer Usability}
\label{sec:margins-transformer-usability}

Our second observation is that positive margin at stored keys is not enough for a Transformer block to use a fact-storing MLP reliably.
Intuitively, because the attention mechanism perturbs the query before it reaches the MLP, end-to-end usability requires the decoding margin to be bounded away from zero.

\begin{definition}[Synthetic Sequential Factual Recall (SSFR)]
\label{def:ssfr}
Fix a junk-token vocabulary of size $V_{J}$, a model dimension $d$, and a set of $F$
key-value pairs $\{(\mathbf{k}_{i}, \mathbf{v}_{f(i)})\}_{i=1}^{F}$ where
$f : [F] \to [F]$ is a fact set. Then \emph{SSFR} inputs have the form:
$$
  \underbrace{j_{1}, \ldots, j_{J/2}}_{\text{junk prefix}},\;
  \underbrace{k_i}_{\text{key}},\;
  \underbrace{j_{J/2+1}, \ldots, j_{J}}_{\text{junk suffix}},\;
  \underbrace{q}_{\text{query}} \rightarrow \underbrace{v_{f(i)}}_{\text{value}},
$$
where $j_{t} \overset{\mathrm{i.i.d.}}{\sim} \mathrm{Unif}([V_{J}])$ and $q$ is a fixed query token.
The junk length $J$ controls how difficult it is for attention to isolate the relevant key.
\end{definition}

\paragraph{Empirical verification.}
We pretrain an attention-only Transformer block, freeze it, insert a frozen GD-trained fact-storing MLP, and sweep hidden width (see Appendix~\ref{appendix:ssfr-task} for details).
Crucially, although the MLP stores the fact-set as soon as $\gamma_{\min}$ becomes positive, end-to-end SSFR accuracy lags until the margin is bounded away from zero (Figure~\ref{fig:sketched_k2_sweeps}a).
This observation motivates our theoretical study of Hebbian margin bounds in~\Cref{sec:4}.

%% file: sections_short/section_4_hebbian_kernel_mlps/main.tex
In \cref{sec:3}, we identified decoding margin as a property of interest in MLPs. In this section we turn to study the margin and fact storage capacity scaling of MLPs. We begin by proposing a simple bilinear MLP construction, which we term the \emph{Hebbian MLP}, allowing us to characterize how decoding margin and fact storage capacity scales with the number of facts and MLP parameters. Equipped with our simple construction, we demonstrate that MLPs realize optimal margin scaling (\cref{thm:bilinear-scaling}) and that their fact storage capacity scales at the information-theoretically optimal rate (\cref{cor:optimal-fact-storage-capacity}). Finally, we develop kernel-whitened and data-dependent variants of our MLP construction that realize optimal capacity scaling empirically; at matched fact count, the NTK baseline requires roughly $10$-$104\times$ more parameters than our data-dependent construction.

\paragraph{Decoding margin.} We study the MLP decoding margin when viewed as a Hebbian kernel memory with kernel $K$ (\Cref{thm:mlp-hebb-whiten}). Intuitively, the decoding margin is the ``slack'' a model is allowed in its outputs so that it still decodes to the right values (see \cref{fig:main-figure}a). Our margin analysis follows from decomposing the margin into \emph{signal} and \emph{cross-talk} terms (Appendix~\ref{app:theory:margin_bounds:akav}):
\begin{equation}
\gamma_{i,j}
=
\underbrace{
\langle \mathbf{v}_{f(i)} - \mathbf{v}_{j},\; \mathbf{v}_{f(i)} \rangle \,K(\mathbf{k}_{i}, \mathbf{k}_{i})
}_{\text{signal}}
+
\underbrace{
\sum_{t \neq i}
\langle \mathbf{v}_{f(i)} - \mathbf{v}_{j},\; \mathbf{v}_{f(t)} \rangle \,K(\mathbf{k}_{t}, \mathbf{k}_{i})
}_{\text{cross-talk}}.
\label{eqn:margin-decomposition}
\end{equation}
Intuitively, we wish to develop a kernel $K$ that 1) maximizes the signal-to-cross talk ratio while 2) remaining implementable as a gated MLP.

\subsection{A Bilinear Hebbian MLP Construction}
\label{sec:isotropic-case}
\label{sec:isotropic-embeddings}
\label{sec:sketched-k2}
\label{sec:construction}
Our first step toward understanding the decoding margin and storage-capacity
scaling of MLPs is to develop a simple closed-form gated MLP construction capable of storing a fact set. Our \emph{Hebbian MLP} construction is defined as
\begin{equation}\label{eq:mlp-construction}
\mathrm{MLP}(\mathbf{x}) = \mathbf{B}\bigl(\mathbf{A}\mathbf{x} \odot \mathbf{G}\mathbf{x}\bigr)
\end{equation}
with $\mathbf{A}_{i,j}\sim N(0, \frac{1}{m})$, $\mathbf{G}_{i,j}\sim N(0, \frac{1}{m})$, and $\mathbf{B}=\mathbf{V}^T\mPhi$, where $\mPhi_{i}=\mathbf{A}\mathbf{k}_{i} \odot \mathbf{G}\mathbf{k}_{i}$.
The key component of this construction is that it is equivalent to a Hebbian kernel memory (\Cref{app:theory:hebbian_mlps:mlp_kernels}), using an $m$-dimensional feature map to sketch the exact quadratic kernel $K_{2}(\mathbf{x},\mathbf{z}) = \langle \mathbf{x},\mathbf{z} \rangle^2$, which we can analyze:
\begin{equation}\label{eq:approx-quad-kernel}
H(\mathbf{z})= \sum_{i=1}^{F}\mathbf{v}_i\hat{K}_{2}(\mathbf{k}_i, \mathbf{z}), \quad \hat{K}_{2}(\mathbf{k},\mathbf{z})
=
\sum_{r=1}^m
(\mathbf{A}_{r}^\top \mathbf{k})(\mathbf{A}_{r}^\top \mathbf{z})\,
(\mathbf{G}_{r}^\top \mathbf{k})(\mathbf{G}_{r}^\top \mathbf{z}).
\end{equation}
We provide a pseudocode implementation of our construction in~\Cref{alg:sketched_k2_hebbian_whitening}.

\subsection{Margin Scaling}

\subsubsection{Isotropic Embeddings}

We first show that in the isotropic keys and values setting, our bilinear MLP construction's margin scales at an asymptotically optimal rate:
\begin{tcolorbox}[breakable,colback=white,colframe=blue!50!black,boxrule=1.5pt,arc=0mm,left=8pt,right=8pt,top=8pt,bottom=8pt,title={\small\bfseries Key Result: MLP margin scales at optimal rate},colbacktitle=blue!50!black,coltitle=white,titlerule=0.5pt]
\begin{theorem}[MLP Margin Scaling (Isotropic Embeddings Setting) - Informal]\label{thm:informal-iso-iso-margin}
Under isotropic key and value embeddings, the decoding margin of our bilinear MLP construction (\cref{eq:mlp-construction}) scales as:
\begin{equation}
\gamma_{\min}
\ \geq\
\underbrace{1}_{\text{signal}}
\;-\;
\underbrace{C\sqrt{\frac{F\log(F)}{md}}}_{\text{cross-talk}}.
\label{eq:bilinear-scaling-2term}
\end{equation}
\end{theorem}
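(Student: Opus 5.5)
The plan is to start from the signal/cross-talk decomposition \eqref{eqn:margin-decomposition}, with $K=\hat K_2$ from \eqref{eq:approx-quad-kernel}. Take keys and values i.i.d.\ uniform on $\Sph$ (or Gaussian scaled to unit norm). Normalize so that the signal term concentrates at $1$, and show that the cross-talk is uniformly $O(\sqrt{F\log F/(md)})$ over all $F\cdot(|\mathbf V|-1)$ pairs $(i,j)$.

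\textbf{Step 1: moments of the sketched kernel.} For fixed unit $\mathbf{k},\mathbf{z}$, $\hat K_2(\mathbf{k},\mathbf{z})=\sum_{r}(\mathbf{A}_r^\top\mathbf{k})(\mathbf{A}_r^\top\mathbf{z})(\mathbf{G}_r^\top\mathbf{k})(\mathbf{G}_r^\top\mathbf{z})$ is a sum of $m$ independent products of four Gaussians. Its mean is proportional to $\langle\mathbf{k},\mathbf{z}\rangle^2$, with the constant fixed by the $N(0,1/m)$ scaling; after rescaling $\mathbf{B}$ by this constant, $\E\hat K_2(\mathbf{k},\mathbf{k})=1$. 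The variance of $\hat K_2$ is $O(1/m)$ in this normalization. Concentrating the signal $\hat K_2(\mathbf{k}_i,\mathbf{k}_i)$ around $1$ within $O(\sqrt{\log F/m})$ for all $i$ uses sub-Weibull (order-$1/2$) tail bounds for sums of products of Gaussians. The value factor $\langle\mathbf{v}_{f(i)}-\mathbf{v}_j,\mathbf{v}_{f(i)}\rangle=1-\langle\mathbf{v}_j,\mathbf{v}_{f(i)}\rangle$ equals $1-O(\sqrt{\log F/d})$ w.h.p. Both deviations are absorbed into the stated error, or into the constant when $m,d$ are large.

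\textbf{Step 2: cross-talk.} Write $T_{ij}=\sum_{t\neq i}c_t\,\hat K_2(\mathbf{k}_t,\mathbf{k}_i)$ with $c_t=\langle\mathbf{v}_{f(i)}-\mathbf{v}_j,\mathbf{v}_{f(t)}\rangle$. Condition on the keys, $\mathbf{A}$, $\mathbf{G}$, and on $\mathbf{v}_{f(i)},\mathbf{v}_j$. For $t$ with $f(t)\notin\{f(i),j\}$, the $c_t$ are independent, mean zero, and $O(1/\sqrt d)$-subgaussian, so Hoeffding gives $|T_{ij}|\lesssim\sqrt{\log F/d}\,\bigl(\sum_t\hat K_2(\mathbf{k}_t,\mathbf{k}_i)^2\bigr)^{1/2}$. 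The remaining task is to show $\sum_{t\neq i}\hat K_2(\mathbf{k}_t,\mathbf{k}_i)^2=O(F/m)$. This requires $\E\hat K_2(\mathbf{k}_t,\mathbf{k}_i)^2=\langle\mathbf{k}_t,\mathbf{k}_i\rangle^4+O(1/m)=O(1/d^2+1/m)$, together with a concentration bound on this sum of $F$ weakly dependent terms. The dependence enters through the shared $\mathbf{A}_r,\mathbf{G}_r$ and $\mathbf{k}_i$. I would handle it by conditioning on $\mathbf{k}_i$ and on the sketch, treating the $\mathbf{k}_t$ as i.i.d., and bounding $\hat K_2(\mathbf{k}_t,\mathbf{k}_i)=\mathbf{k}_t^\top M_i\mathbf{k}_t$ with $M_i=\sum_r(\mathbf{A}_r^\top\mathbf{k}_i)(\mathbf{G}_r^\top\mathbf{k}_i)\mathbf{A}_r\mathbf{G}_r^\top$, a quadratic form in $\mathbf{k}_t$. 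Hanson--Wright then gives fluctuations of order $\|M_i\|_F/d$. Combined with $\|M_i\|_F^2\approx d^2/m+d$ w.h.p., this bounds each term by $O(\log F\,(1/m+1/d))$. Summing over $t$ gives a cross-talk of order $\sqrt{F\log F/d}\cdot\sqrt{1/m+1/d}$, which is $\lesssim\sqrt{F\log F/(md)}$ in the regime $m\lesssim d$, the small-parameter regime of interest. Otherwise the exact kernel's own $\sqrt{F/d^2}$ cross-talk term is dominated as well.

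The few $t$ with $f(t)\in\{f(i),j\}$ need separate treatment. For these, $c_t\approx\pm1$, but there are $O(1)$ of them when $f$ is near-injective, and each $\hat K_2$ term is $O(\sqrt{\log F}(1/\sqrt m+1/d))$. A general $f$ needs an extra argument, or an assumption on the preimage sizes. Finally, a union bound over the $F|\mathbf V|=\mathrm{poly}(F)$ pairs converts the failure probabilities into the $\log F$ factor.

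The main obstacle is Step 2's uniform control of $\sum_t\hat K_2(\mathbf{k}_t,\mathbf{k}_i)^2$. The heavy (fourth-order Gaussian) tails of the sketch make naive union bounds lose extra logarithms, so I would first try Hanson--Wright conditionally on the sketch, then separately bound $\|M_i\|_F$ and $\|M_i\|_{\mathrm{op}}$ via matrix concentration.
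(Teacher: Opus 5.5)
Your architecture matches the paper's. You condition on $\mathbf{k}_i$ and the sketch, so that $\hat K(\mathbf{k}_t,\mathbf{k}_i)=\mathbf{k}_t^\top M_i\mathbf{k}_t$ is a quadratic form in i.i.d.\ spherical keys. You combine a spherical Hanson--Wright tail with a square-sum concentration, using Frobenius, operator-norm and trace bounds on $M_i$ uniformly in $i$, to get column energy $O(F\sigma^2)$. The isotropic values then supply the factor $\sqrt{\log F/d}$. Your Hoeffding step is the paper's $\sqrt{\widetilde E_v}\,\kappa^\circ$ bound in one line, and the paper's centering at $\mu=1/d$ only isolates a separate $\sqrt{FL/d^3}$ term.

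\textbf{The gap is your key numerical estimate.} The claim $\|M_i\|_F^2\approx d^2/m+d$ is wrong; it should be $\|M_i\|_F^2\approx 1+d^2/m$. Write $\mathbf{A}_r=\mathbf{a}_r/\sqrt m$ and $\mathbf{G}_r=\mathbf{b}_r/\sqrt m$, and rescale so that $\E\hat K(\mathbf{k},\mathbf{k})=1$. Then $M_i=\frac1m\sum_r X_r$ with $X_r=(\mathbf{a}_r^\top\mathbf{k}_i)(\mathbf{b}_r^\top\mathbf{k}_i)\mathbf{a}_r\mathbf{b}_r^\top$. Here $\E X_r=\mathbf{k}_i\mathbf{k}_i^\top$, which has Frobenius norm $1$, and $\E\|X_r\|_F^2=(d+2)^2$, so $\E\|M_i\|_F^2=1+((d+2)^2-1)/m$. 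This agrees with the second moment you computed yourself, $\E\hat K_{ti}^2=O(1/d^2+1/m)$: the per-entry scale is $1/d^2+1/m$, not $1/m+1/d$.

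\textbf{The error changes the regime in which your argument proves anything.}
\begin{itemize}
\item Your estimate produces a spurious cross-talk term $\sqrt{F\log F}/d$ and forces you into $m\lesssim d$.
\item Your fallback for $m\gtrsim d$ fails: there your own bound exceeds the claimed $\sqrt{F\log F/(md)}$ by a factor $\sqrt{m/d}$.
\item Restricting to $m\lesssim d$ caps $W=md\lesssim d^2$, hence $F\log F\lesssim d^2$. This discards the regime $d\lesssim m\lesssim d^2/L$ covered by \Cref{thm:combined-rkrv}, which is where the $K_2$ sketch is supposed to buy fact counts beyond order $d^2$.
\end{itemize}
With the corrected norm, the exact-kernel part contributes $\sqrt{F\log F/d}\cdot(1/d)=\sqrt{F\log F/d^3}$. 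This is at most $\sqrt{F\log F/(md)}$ exactly when $m\le d^2$, matching the paper's hypothesis $m\le d^2/L$.

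\textbf{Two smaller points.}
\begin{itemize}
\item To keep a single $\sqrt{\log F}$ you must concentrate $\sum_t\hat K_{ti}^2$ as a sum. Your ``bound each term by $O(\log F\cdot\ldots)$, then sum'' would cost an extra $\sqrt{\log F}$. The paper does this with \Cref{lem:square-sum-two-level}. That lemma needs $\|M_i\|_{\mathrm{op}}/d\lesssim 1/d+1/\sqrt{mL}$, proved under $m\gtrsim L^3$, so that the heavy-tail term $r^2L^2$ stays below $\sigma^2L$.
\item The signal deviations $\sqrt{\log F/m}$ and $\sqrt{\log F/d}$ are absorbed into the cross-talk term precisely when $m,d\lesssim F$. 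That is the extra hypothesis of \Cref{cor:informal-iso-iso-margin}, not ``when $m,d$ are large''.
\end{itemize}
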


See \Cref{cor:informal-iso-iso-margin} for formal statement and proof.
\end{tcolorbox}

\begin{remark}[Asymptotic optimality in decoding margin]\label{rem:welch-benchmark}
Our construction implicitly uses a kernel with feature dimension $m$. A rank-limited Welch-style upper bound shows that under isotropic keys and values, no PSD rank-$m$ kernel with near-unit diagonal can asymptotically improve upon the \(\sqrt{F\log\!(F)/(md)}\) term in~\Cref{thm:bilinear-scaling}, up to logarithmic factors. As such, our construction attains the asymptotically optimal decoding margin bound (Appendix~\ref{app:welch-upper-bound-iso}).
\end{remark}

\paragraph{Empirical verification.}
We evaluate the empirical margin scaling against the theoretical scaling as we sweep facts $F$ and MLP hidden-dimension $m$ under isotropic embeddings. Figure~\ref{fig:sketched_k2_sweeps}b validates that our margin bounds closely match the empirical minimum margins ($R^2\geq 0.97$).

\begin{figure*}[t]
\centering
\includegraphics[width=\linewidth]{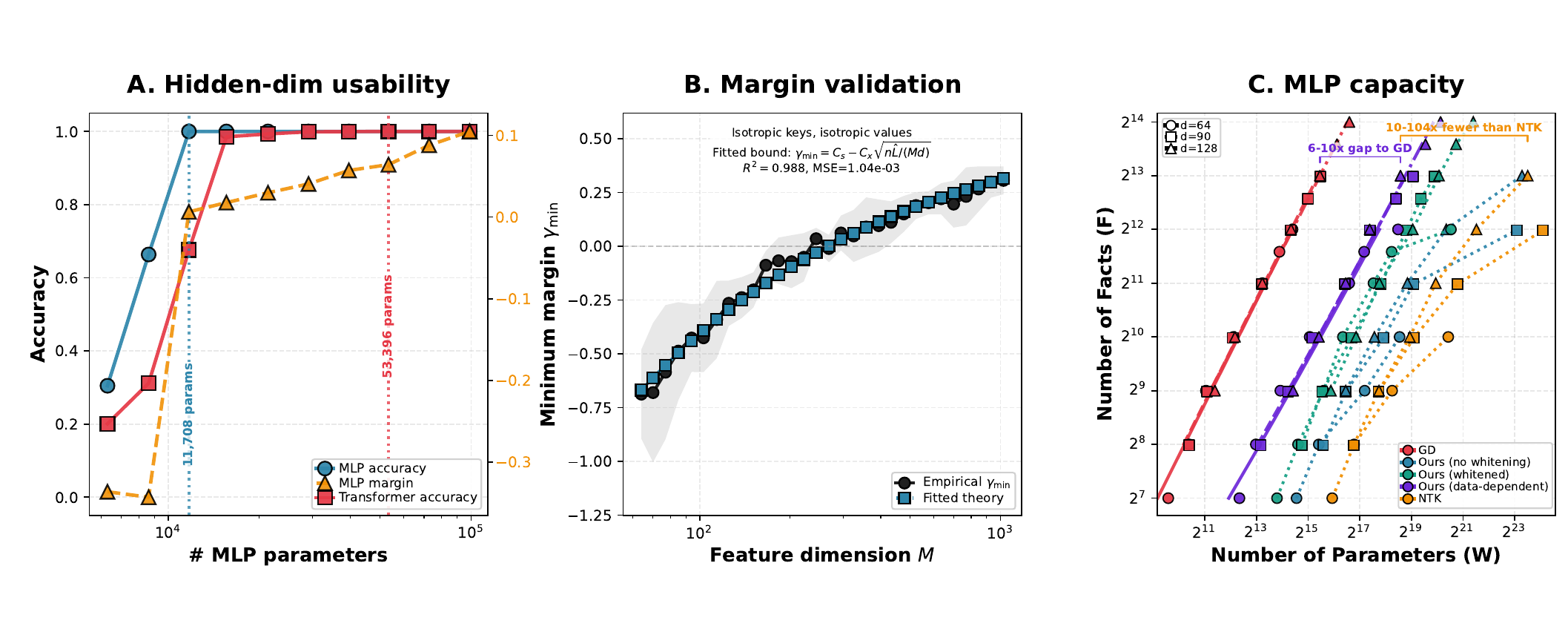}
\caption{\textbf{Decoding margins govern Transformer usability and yield provable capacity bounds.}
\textit{(A)} In a Transformer, inserted fact-storing MLPs become usable only once their decoding margin is bounded away from zero.
\textit{(B)} Under isotropic keys and values, the empirical margin follows our predicted scaling as the MLP hidden dimension $m$ increases.
\textit{(C)} Our construction achieves asymptotically optimal fact-storage capacity scaling under isotropic keys and values.
}
\label{fig:sketched_k2_sweeps}
\end{figure*}

\subsubsection{Beyond isotropic embeddings}

\phantomsection\label{sec:arbk-isov}
\phantomsection\label{sec:isok-arbv}
\phantomsection\label{sec:arbk-arbv}
Our margin decomposition (\cref{eqn:margin-decomposition}) analysis extends beyond isotropic key/value embeddings to arbitrary embedding geometries, like those found in LLMs~\citep{ethayarajh2019contextualcontextualizedwordrepresentations,razzhigaev2024shapelearninganisotropyintrinsic}. To this end, we present the most general margin bound for arbitrary embedding geometries, illustrating how embedding-geometric statistics enter the margin scaling. Furthermore, we provide a full ladder of bounds across the key/value geometry regimes (\cref{tab:2x2}).

\begin{theorem}[MLP Margin Scaling (Arbitrary Embeddings Setting) -- Informal]\label{thm:bilinear-scaling}
Under \emph{arbitrary} key and value embeddings, and in the regime $d\gtrsim\log F$, the decoding margin of our bilinear MLP construction (\cref{eq:mlp-construction}) scales as:
\begin{equation}
\gamma_{\min}
\ \geq\
\underbrace{C \ S_{\mathrm{sig}}}_{\text{signal}}
\;-\;
\underbrace{\sqrt{\frac{F\log(F)}{md}} \ P_{\mathrm{key}}
\ P_{\mathrm{val}}
\ P_{\mathrm{align}} }_{\text{cross-talk}}.
\label{eq:maintext:akav_margin_bound}
\end{equation}
\end{theorem}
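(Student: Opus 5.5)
The plan is to start from the signal/cross-talk decomposition \cref{eqn:margin-decomposition}, with $K=\hat K_2$ the sketched quadratic kernel of \cref{eq:approx-quad-kernel}, and to bound the two terms separately with high probability over $\mathbf{A},\mathbf{G}$. The conclusion then follows from a union bound over all $F(|\mathbf{V}|-1)$ pairs $(i,j)$, which produces the $\log F$ factor. Conditional on the embeddings, $\hat K_2(\mathbf{k},\mathbf{z})=\sum_{r=1}^m (\mathbf{A}_r^\top\mathbf{k})(\mathbf{A}_r^\top\mathbf{z})(\mathbf{G}_r^\top\mathbf{k})(\mathbf{G}_r^\top\mathbf{z})$ is a sum of $m$ i.i.d.\ products of four Gaussians. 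Its mean (under the $N(0,1/m)$ scaling, up to a constant normalization) is $\langle\mathbf{k},\mathbf{z}\rangle^2/m$, and each summand is sub-Weibull of order $1/2$.

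For the \textbf{signal term}, I will show that $\hat K_2(\mathbf{k}_i,\mathbf{k}_i)$ concentrates around its mean $\propto\|\mathbf{k}_i\|^4$ within a constant factor. This uses a Bernstein/Hanson--Wright bound for sums of products of chi-squares, valid once $m\gtrsim\log F$. After that, I define $S_{\mathrm{sig}}:=\min_{i,j\neq f(i)}\langle\mathbf{v}_{f(i)}-\mathbf{v}_j,\mathbf{v}_{f(i)}\rangle\,\|\mathbf{k}_i\|^4$ (suitably normalized), which is the signal term up to the constant $C$.

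For the \textbf{cross-talk term}, I write it as $\sum_{t\neq i} c_t\,\hat K_2(\mathbf{k}_t,\mathbf{k}_i)$ with $c_t=\langle\mathbf{v}_{f(i)}-\mathbf{v}_j,\mathbf{v}_{f(t)}\rangle$, and split it into a mean part and a fluctuation part.

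The mean part is $\sum_{t\neq i}c_t\langle\mathbf{k}_t,\mathbf{k}_i\rangle^2$. This is deterministic given the embeddings, and it is where the geometry enters. I bound it by Cauchy--Schwarz as $\big(\sum_t c_t^2\big)^{1/2}\big(\sum_{t\neq i}\langle\mathbf{k}_t,\mathbf{k}_i\rangle^4\big)^{1/2}$. The key coherence statistic $P_{\mathrm{key}}$ absorbs $\max_i\big(\tfrac{d^2}{F}\sum_{t\neq i}\langle\mathbf{k}_t,\mathbf{k}_i\rangle^4\big)^{1/4}$-type quantities, which equal $O(1)$ under isotropy because $\langle\mathbf{k}_t,\mathbf{k}_i\rangle^2\approx 1/d$. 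The value statistic $P_{\mathrm{val}}$ absorbs $\max_{i,j}\big(\tfrac{d}{F}\sum_t c_t^2\big)^{1/2}$, and $P_{\mathrm{align}}$ captures the correlation between the weights $c_t$ and the key overlaps when Cauchy--Schwarz is loose.

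The fluctuation part is $\sum_r\sum_t c_t(\text{centered product})$, a mean-zero sum over $m$ independent rows. I treat it as a degree-4 Gaussian chaos in $(\mathbf{A},\mathbf{G})$ and apply Hanson--Wright or hypercontractivity. Its variance is $\frac1m\,\mathrm{Var}$ of one row, which is controlled by $\sum_{t,s}c_tc_s\langle\mathbf{k}_t,\mathbf{k}_s\rangle\langle\mathbf{k}_t,\mathbf{k}_i\rangle^{\cdot}\cdots$. In the isotropic case this collapses to $\lesssim F/(d\,m)$ relative to the squared signal. A tail bound at level $\log F$ then gives the $\sqrt{F\log F/(md)}$ rate, with the general-geometry version folding the relevant Gram-matrix norms into $P_{\mathrm{key}}P_{\mathrm{val}}P_{\mathrm{align}}$. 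Tail contributions of higher order (the $(\log F)^2/m$ terms from the heavy chaos tail) are dominated under $d\gtrsim\log F$ and the standing regime, which is where that hypothesis is used.

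The \textbf{main obstacle} is this fluctuation bound. A degree-4 chaos has heavy tails, so a naive moment bound gives $(\log F)^2$ rather than $\sqrt{\log F}$. My first attempt is to condition on $\mathbf{G}$: given $\mathbf{G}$, the sum is a quadratic form in $\mathbf{A}$, so Hanson--Wright applies with Frobenius and operator norms. I then control those norms, which are quadratic in $\mathbf{G}$, by a second Hanson--Wright step plus a union bound, using $d\gtrsim\log F$ to keep the operator-norm term subdominant. The last step is to collect the statistics and define $P_{\mathrm{key}},P_{\mathrm{val}},P_{\mathrm{align}}$ so that each equals $\Theta(1)$ in the isotropic case, which makes the result consistent with \cref{thm:informal-iso-iso-margin}.
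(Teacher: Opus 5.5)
\textbf{There is a genuine gap.} Your proposal proves a different theorem from the one stated, and does not fully close even that one.

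\textbf{The missing idea.} The four statistics in the statement are defined on the \emph{realized} sketched kernel, not on the embeddings alone. Concretely, $(\boldsymbol{K}_i)_t=\hat K(\mathbf{k}_i,\mathbf{k}_t)$ and $K_{\min}^{\mathrm{diag}}=\min_i \hat K(\mathbf{k}_i,\mathbf{k}_i)$. The penalties are $P_{\mathrm{key}}=\sqrt{E_K}/\sqrt{F/m}$, $P_{\mathrm{val}}=\sqrt{E_v}/\sqrt{F/d}$ and $P_{\mathrm{align}}=\kappa/\sqrt{\log(F)/F}$, where $\kappa$ is the largest $|\cos\angle(\boldsymbol{K}_i,\boldsymbol{V}_{i,j})|$. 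With these definitions the theorem is deterministic, and the paper proves it in a few lines:
\begin{itemize}
\item The cross-talk in \cref{eqn:margin-decomposition}, summed over $t\neq i$ with the competitor $t=j$ included, is exactly $\langle \boldsymbol{K}_i,\boldsymbol{V}_{i,j}\rangle$.
\item Its absolute value is therefore $\|\boldsymbol{K}_i\|_2\|\boldsymbol{V}_{i,j}\|_2\,|\cos\angle(\boldsymbol{K}_i,\boldsymbol{V}_{i,j})|\le\sqrt{E_K}\sqrt{E_v}\,\kappa$. This product equals $P_{\mathrm{key}}P_{\mathrm{val}}P_{\mathrm{align}}\sqrt{F\log F/(md)}$ identically, by the normalizations.
\item The signal is at least $K_{\min}^{\mathrm{diag}}V_{\min}$, using $\hat K_{ii}\ge 0$ and the positivity assumption $\langle\mathbf{v}_i-\mathbf{v}_j,\mathbf{v}_i\rangle\ge 0$. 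This equals $(1-\sqrt{\log(F)/d})\,S_{\mathrm{sig}}\ge S_{\mathrm{sig}}/2$ once $d\ge 4\log F$.
\end{itemize}
That last normalization is the only role of $d\gtrsim\log F$. The $\log F$ in the rate is built into $P_{\mathrm{align}}$; it does not come from a union bound. There is no randomness over $\mathbf{A},\mathbf{G}$ to control.

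\textbf{Why your route does not deliver the statement.} You replace these statistics with exact-$K_2$ quantities and run a high-probability analysis over the features. That is a different and stronger theorem, and the sketch leaves three problems.
\begin{itemize}
\item \emph{The bound does not factor.} Your cross-talk bound is a sum of two pieces. One is an $m$-independent mean term $\sum_t c_t\langle\mathbf{k}_t,\mathbf{k}_i\rangle^2$. The other is a fluctuation whose variance involves quantities like $\|\sum_t c_t\mathbf{k}_t\mathbf{k}_t^\top\|_F$ plus $\mathbf{k}_i$-dependent cross terms. Neither piece factors as a key statistic times a value statistic times an alignment factor times $\sqrt{F\log F/(md)}$. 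So ``folding the Gram norms into $P_{\mathrm{key}}P_{\mathrm{val}}P_{\mathrm{align}}$'' renames the bound rather than proving the stated form.
\item \emph{The $\Theta(1)$ promise fails in part of the isotropic regime.} Under isotropy the worst-case mean term is of order $\sqrt{F\log F}/d^{3/2}$. This exceeds $\sqrt{F\log F/(md)}$ as soon as $m\gg d^2$. So your claim that each statistic is $\Theta(1)$ in the isotropic case cannot hold without an extra restriction such as $m\lesssim d^2/\log F$. The paper imposes exactly this in its isotropic theorems, which also carry a separate $\sqrt{F\log F/d^3}$ term.
\item \emph{The hypothesis $d\gtrsim\log F$ is attached to the wrong step.} The $(\log F)^2/m$ heavy-tail contribution of the degree-$4$ chaos is controlled by a lower bound on $m$ (the paper uses $m\gtrsim\log^3 F$), not by a lower bound on $d$.
\end{itemize}

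An embedding-only, high-probability version would be more informative than the paper's statement. To get it, you would need to state it with two cross-talk terms and explicit conditions on $m$. For the statement as given, the deterministic Cauchy--Schwarz rewrite on the realized $\hat K$ is all that is required.
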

See \Cref{cor:bilinear-scaling-simplified} for formal statement and proof, including the formal definitions of the four embedding-geometric statistics $P_{\mathrm{key}}$, $P_{\mathrm{val}}$, $P_{\mathrm{align}}$, and $S_{\mathrm{sig}}$.

\newcommand{\keyact}{\boldsymbol{K}}
\newcommand{\valint}{\boldsymbol{V}}

For a fact set mapping $\vk_i \to \vv_{i}$, define the kernel vectors $\keyact_{i}\in\mathbb R^{F-1}$ and the value interference vectors $\valint_{i,j}\in\mathbb R^{F-1}$ as:
\[
(\keyact_{i})_t \defeq \hat{K}(\vk_i, \vk_t),
\qquad
(\valint_{i,j})_t
\defeq
\langle \vv_i-\vv_j,\vv_t\rangle,
\qquad t\neq i.
\]
From our signal and cross talk decomposition (\cref{eqn:margin-decomposition}), $\keyact_i$ captures which other facts are activated when retrieving fact $i$, while $\valint_{i,j}$ measures which values interfere with distinguishing fact $i$ from competitor $j$. Using these vectors as building blocks, four embedding-geometric statistics enter the isotropic margin bound multiplicatively when generalizing it to arbitrary embedding geometries, with each statistic isolating a distinct source of signal or cross-talk:
\begin{itemize}[leftmargin=0.5cm]
    \item \textbf{Key crowding penalty.}
    \[
    P_{\mathrm{key}}
    \defeq
    \frac{\sqrt{E_K}}{\sqrt{F/m}}
    \qquad
    \left(
    E_K
    \defeq
    \max_{i\in[F]}\|\keyact_{i}\|_2^2
    \right).
    \]
    This measures how much the stored keys overlap under the bilinear featurization,
    relative to the random/isotropic baseline scale $\sqrt{F/m}$.
    Smaller $P_{\mathrm{key}}$ (and thus smaller cross-talk) means the key features are more separated,
    so fewer irrelevant facts are activated by a query.

    \item \textbf{Value crowding penalty.}
    \[
    P_{\mathrm{val}}
    \defeq
    \frac{\sqrt{E_v}}{\sqrt{F/d}}
    \qquad
    \left(
    E_v
    \defeq
    \max_{j\neq i \in [F]}
    \|\valint_{i,j}\|_2^2
    \right).
    \]
    This measures how much the stored value directions overlap,
    relative to the random/isotropic baseline scale $\sqrt{F/d}$.
    Smaller $P_{\mathrm{val}}$ (and thus smaller cross-talk) means the incorrect values are less aligned
    with the correct value margin direction.

    \item \textbf{Key--value alignment penalty.}
    \[
    P_{\mathrm{align}}
    \defeq
    \frac{\kappa}{\sqrt{\log(F)/F}}
    \qquad
    \left(
    \kappa
    \defeq
    \max_{j\neq i \in [F]}\left|\cos\angle\!\left(\keyact_{i},\valint_{i,j}\right)\right|
    \right).
    \]
    This measures alignment between which facts a query activates (kernel column) and which values are most confusable (value interference), relative to the isotropic baseline $\sqrt{\log(F)/F}$.
    Smaller $P_{\mathrm{align}}$ (and thus smaller cross-talk) means key and value errors are orthogonal rather than compounding each other.

    \item \textbf{Signal strength.}
    \[
    S_{\mathrm{sig}}
    \defeq
    \frac{K_{\min}^{\mathrm{diag}}V_{\min}}{(1-\sqrt{\log(F)/d})}
    \qquad
    \left(
    K_{\min}^{\mathrm{diag}}
    \defeq
    \min_{i\in[F]}\hat{K}(\vk_i, \vk_i),
    \quad
    V_{\min}
    \defeq
    \min_{i\neq j}
    \langle \vv_i-\vv_j,\vv_i\rangle
    \right).
    \]
    This measures the strength of the signal relative to the $1-\log(F)/d$ baseline.
    Larger $S_{\mathrm{sig}}$ (and thus stronger decoding signal) means featurized keys have larger norm while value embeddings remain nearly-orthogonal.
\end{itemize}

\paragraph{Empirical verification.} We evaluate our general margin bound (and those from \cref{tab:2x2}) in \cref{fig:app:beta-sweeps}, finding they closely track the empirically observed margin as we vary the key and value crowding ($R^2 \geq 0.95$).

\subsection{Fact-Storage Capacity}
\label{sec:fact-storage-capacity}

Equipped with our margin scaling bounds (\cref{eq:bilinear-scaling-2term,eq:maintext:akav_margin_bound}), we develop storage capacity scaling laws for our bilinear gated MLPs by simply solving for the parameters necessary to make their margin positive.
We find that the storage capacity of our simple MLP construction scales at the information-theoretically optimal rate, under the isotropic embeddings setting, while for non-isotropic embeddings it does so up to penalization factors.

\begin{tcolorbox}[breakable,colback=white,colframe=blue!50!black,boxrule=1.5pt,arc=0mm,left=8pt,right=8pt,top=8pt,bottom=8pt,title={\small\bfseries Key Result: MLPs store facts at an info-theoretically optimal rate.},colbacktitle=blue!50!black,coltitle=white,titlerule=0.5pt]
\begin{corollary}[MLP Fact-storage Capacity (Isotropic Embeddings Setting) - Informal]\label{cor:optimal-fact-storage-capacity}
For isotropic key and value embeddings, our bilinear MLP construction stores $F$ facts using
\[
W = \Theta(md) = \Theta\!\left(F\log(F)\right)
\]
parameters.
Thus MLPs achieve \textbf{information-theoretically optimal} fact-storage capacity.
\end{corollary}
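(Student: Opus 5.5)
The plan is to derive the corollary directly from the isotropic margin bound of \Cref{thm:informal-iso-iso-margin}, together with the lower bound of \Cref{thm: info_bounds_capacity-const}. By \Cref{def:margin}, the MLP stores the fact set $f$ exactly when $\gamma_{\min}>0$. The bilinear construction in \cref{eq:mlp-construction} works for every $f$, since only $\mathbf{B}=\mathbf{V}^\top\mPhi$ depends on it. It is therefore enough to pick $m$ so that the right-hand side of \cref{eq:bilinear-scaling-2term} is positive with high probability over the random $\mathbf{A},\mathbf{G}$ and the isotropic embeddings. A fixed-$f$ high-probability statement then gives existence of parameters for that $f$, which is all that \Cref{def:fact-storage-cost} requires.

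\textbf{Upper bound.} We need $1 - C\sqrt{F\log F/(md)} > 0$. Choosing
\[
md \;\ge\; 4C^2 F\log F
\]
gives $\gamma_{\min}\ge 1/2>0$. The parameter count of the construction is $\#(\mathbf{A})+\#(\mathbf{G})+\#(\mathbf{B}) = md+md+d_v m$, and with $d_v=d$ this is $3md=\Theta(md)$. Taking $md=\Theta(F\log F)$ therefore uses $W=\Theta(F\log F)$ parameters. Some bookkeeping is needed. First, the random matrices $\mathbf{A},\mathbf{G}$ are counted as parameters, and a fixed seed suffices, so this does not affect the constant-bits-per-parameter assumption. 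Second, the theorem's side conditions (for instance $d\gtrsim \log F$, and $m$ at least some polylog so that the sketch concentrates) must be met. We do this by taking $d=\Theta(\log F)$ or larger and $m=\Theta(F\log F/d)$, which makes $m$ large in the relevant regime.

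\textbf{Lower bound and optimality.} In the setting $|\mathbf{K}|=|\mathbf{V}|=F$ (so that $f:[F]\to[F]$), \Cref{thm: info_bounds_capacity-const} gives $W=\Omega(F\log F)$ for any model class. Combined with the upper bound, this yields $W=\Theta(F\log F)$. Equivalently, inverting $W = \Theta(F\log F)$ gives the capacity $F=\Theta(W/\log W)$, using $\log W=\Theta(\log F)$. The construction therefore matches the information-theoretic rate.

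\textbf{Main obstacle.} The delicate step is confirming that the formal version of \Cref{thm:informal-iso-iso-margin} holds uniformly over all $F(F-1)$ pairs $(i,j)$ with the stated $\log F$ dependence, so that the union bound is already absorbed. We must also check that it holds with the realized value embeddings, which are nearly orthogonal only when $d\gtrsim\log F$. If the formal statement carries extra failure probability or polylog factors in the signal term, we would first check that they affect only constants in the regime $d=\Theta(\log F)$ to $d=\Theta(\sqrt{F})$. Otherwise we would restrict the claim to that regime.
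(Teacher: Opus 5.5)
Your route is the paper's: \Cref{cor:optimal-fact-storage-capacity-formal} just sets the bound of \Cref{cor:informal-iso-iso-margin} positive, $md>C^2F\log(F/\delta)$, and compares with \Cref{thm: info_bounds_capacity-const}. Your union-bound worry is already handled, since $L=\log(C_0F^2/\delta)$ absorbs all $F^2$ pairs.

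\textbf{The regime you choose is wrong.} The formal bound, \Cref{thm:combined-rkrv}, assumes $L^3\le m\le d^2/L$, and the corollary adds $m,d\le F$. The upper constraint $m\le d^2/L$, which you never list, is the binding one.
\begin{itemize}
\item The full bound \cref{eq:app:rkrv:sketchedk2_full} contains the term $-C_3\sqrt{L}\sqrt{F/d^3}$, which does not decay with $m$. Each bilinear feature $\mathbf{x}\mapsto(\mathbf a_r^\top \mathbf{x})(\mathbf b_r^\top \mathbf{x})$ is a quadratic form, so $\hat{\mathbf K}$ has rank at most $\binom{d+1}{2}$, and this exact-$K_2$ cross-talk level survives as $m\to\infty$.
\item This term is dominated by the main term $\sqrt{LF/(md)}$ only when $m\lesssim d^2$.
\item With your choice $d=\Theta(\log F)$ and $m=\Theta(F)$, you violate $m\le d^2/L$. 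That term is then of order $\sqrt F/\log F$, so the bound is vacuous, and no increase in width can repair it.
\item Combining $m\le d^2/L$ with the needed $md\gtrsim FL$ gives $d^3\gtrsim FL^2$, i.e.\ $d\gtrsim F^{1/3}\log^{2/3}F$.
\end{itemize}
So the corollary holds only in this regime, not for all $d\gtrsim\log F$. The paper gets this restriction implicitly by inheriting the hypotheses of \Cref{thm:combined-rkrv}.

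\textbf{Drop the claim that the construction works for every $f$ and so meets \Cref{def:fact-storage-cost}.} The margin theorem holds with high probability, jointly over embeddings and features, for one fixed bijective assignment of i.i.d.\ values to keys.
\begin{itemize}
\item It does not give a single embedding on which all $f$ are stored. A union bound over the $F^F$ maps would inflate $L$ to order $F\log F$.
\item For non-injective $f$ the claim is false. Take $f(t)=j$ for every $t\neq i$, with $f(i)\neq j$. The margin of fact $i$ against $j$ is then $(1-\langle\mathbf v_{f(i)},\mathbf v_j\rangle)\bigl(\hat{\mathbf K}_{ii}-\sum_{t\neq i}\hat{\mathbf K}_{ti}\bigr)$. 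The sum has mean $(F-1)/d\gg 1$ when $F\gg d$, so this margin is typically negative.
\end{itemize}
The paper claims only storage of the sampled fact set, and reads ``optimal'' as matching the scaling of the counting bound. That bound still gives $\Omega(F\log F)$ when restricted to permutations, since $\log F!=\Theta(F\log F)$, so you lose nothing by stating the weaker claim.
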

See \Cref{cor:optimal-fact-storage-capacity-formal} for formal statement and proof.
\end{tcolorbox}

\begin{corollary}[Optimal Fact-storage Capacity (Arbitrary Embeddings Setting) - Informal]\label{cor:optimal-fact-storage-capacity-arb}
For arbitrary key and value embeddings, our bilinear MLP construction stores $F$ facts using 
\[
W = \Theta(md) = \Theta\!\left(F\log(F)\left(\frac{ P_{\mathrm{key}} P_{\mathrm{val}} P_{\mathrm{align}} }{S_{\mathrm{sig}}}\right)^2\right)
\]
parameters.
Thus MLPs achieve \emph{information-theoretically optimal} fact-storage capacity up to penalization factors.
\end{corollary}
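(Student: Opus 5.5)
We derive this corollary directly from the arbitrary-embeddings margin bound, \Cref{thm:bilinear-scaling}, by finding the smallest $m$ (equivalently, the smallest parameter count $W=\Theta(md)$) that makes the right-hand side of \cref{eq:maintext:akav_margin_bound} strictly positive. By \Cref{def:margin}, $\gamma_{\min}>0$ is equivalent to storing the fact set in the sense of \Cref{def:fact-storage}.

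\textbf{Step 1: the positivity condition.} Assume $d\gtrsim\log F$ so that \Cref{thm:bilinear-scaling} applies. A sufficient condition for $\gamma_{\min}>0$ is
\[
C\,S_{\mathrm{sig}} \;>\; \sqrt{\frac{F\log F}{md}}\;P_{\mathrm{key}}P_{\mathrm{val}}P_{\mathrm{align}}.
\]
Squaring and rearranging gives
\[
md \;>\; C^{-2}\,F\log F\left(\frac{P_{\mathrm{key}}P_{\mathrm{val}}P_{\mathrm{align}}}{S_{\mathrm{sig}}}\right)^2 .
\]
We therefore choose $m$ as the smallest integer exceeding this quantity divided by $d$. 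Taking a constant multiple gives margin bounded away from zero, which is relevant for \Cref{sec:usability}.

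\textbf{Step 2: the parameter count.} The construction in \cref{eq:mlp-construction} has $\mathbf{A},\mathbf{G}\in\R^{m\times d}$ and $\mathbf{B}\in\R^{d_v\times m}$ with $d_v=d$, so
\[
W=3md=\Theta(md)=\Theta\!\left(F\log F\,(P_{\mathrm{key}}P_{\mathrm{val}}P_{\mathrm{align}}/S_{\mathrm{sig}})^2\right).
\]
The $\Theta$ in the statement describes the sufficient (upper-bound) scaling delivered by the construction. Matching it against \Cref{thm: info_bounds_capacity-const} with $|\mathbf{V}|=F$, which gives $\Omega(F\log F)$, shows that the only possible gap is the squared penalty ratio. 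This is the precise meaning of ``optimal up to penalization factors''; in the isotropic case all statistics are $\Theta(1)$, recovering \Cref{cor:optimal-fact-storage-capacity}.

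\textbf{Main obstacle.} The penalty statistics are not fixed constants independent of $m$. In particular, $P_{\mathrm{key}}$, $P_{\mathrm{align}}$ and $K^{\mathrm{diag}}_{\min}$ are defined through the random sketched kernel $\hat K$, whose fluctuations themselves depend on $m$. Solving for $m$ is therefore formally a fixed-point condition.

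To handle this, I would use the formal version of the theorem (\Cref{cor:bilinear-scaling-simplified}) to replace $\hat K$-dependent quantities by their high-probability bounds in terms of the exact quadratic kernel $K_2$. These bounds hold for $m\gtrsim\log F$, because sketch concentration on the $F^2$ key pairs follows from a union bound. With that substitution the statistics become $m$-independent up to constants. The remaining checks are that the chosen $m$ satisfies $m\gtrsim\log F$, which is immediate since $F/d\ge \log F/d$ in the regime of interest, and that the failure probabilities are absorbed into constants.
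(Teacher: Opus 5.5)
\textbf{Steps 1--2.} These are exactly the paper's argument. The formal version, \Cref{cor:optimal-fact-storage-capacity-arb-formal}, solves the positivity condition of \Cref{cor:bilinear-scaling-simplified} for $md$ (with $C=\tfrac12$, from $d\ge 4\log F$), treating $P_{\mathrm{key}},P_{\mathrm{val}},P_{\mathrm{align}},S_{\mathrm{sig}}$ as given numbers. Your reading of the $\Theta$ as a sufficient, upper-bound scaling is the right one. Had you stopped there, you would have reproduced the paper's proof.

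\textbf{The gap is the remedy in your ``main obstacle'' paragraph, which would fail for two reasons.}

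\emph{First, sketch concentration does not give constant-factor replacement off the diagonal.} Entrywise concentration at $m\gtrsim\log F$ (\Cref{lem:chaos-centered-conditional} plus a union bound) only gives $|\hat K(\mathbf{k}_t,\mathbf{k}_i)-K_2(\mathbf{k}_t,\mathbf{k}_i)|\lesssim\sqrt{\log(F)/m}$, roughly.
\begin{itemize}
\item On the diagonal this is a relative error, so $K^{\mathrm{diag}}_{\min}$ and $S_{\mathrm{sig}}$ can be replaced.
\item Off-diagonal $K_2$ entries can be $O(1/d)$ or smaller. Replacing $\hat K$ by $K_2$ ``up to constants'' in $E_K$ or $\kappa$ would therefore need $m\gtrsim d^2\log F$.
\item In the capacity regime $m\asymp F\log F/d\lesssim d^2$, the sketch noise is the dominant part of the column energy. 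It contributes $\asymp F/m$ to $E_K$ (the $\sigma^2\asymp 1/m$ of \Cref{lem:EK-iso-keys}), versus about $F/d^2$ from exact $K_2$ with isotropic keys.
\end{itemize}

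\emph{Second, even a valid substitution would not make $P_{\mathrm{key}}$ independent of $m$.} By definition $P_{\mathrm{key}}=\sqrt{E_K\,m/F}$ carries an explicit $\sqrt m$, so an $m$-independent $E_K$ makes $P_{\mathrm{key}}\propto\sqrt m$.

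In fact $P_{\mathrm{key}}P_{\mathrm{val}}P_{\mathrm{align}}\sqrt{F\log F/(md)}=\sqrt{E_K}\sqrt{E_v}\,\kappa$ identically; this is the exact identity in the proof of \Cref{cor:bilinear-scaling-simplified}. The inequality you ``solve for $md$'' is therefore just $\sqrt{E_K}\sqrt{E_v}\,\kappa<C\,K^{\mathrm{diag}}_{\min}V_{\min}$, where $m$ enters only through $\hat K$. As $m$ grows, $\hat K\to K_2$ and the bound tends to its exact-$K_2$ version. When the exact-$K_2$ column energy dominates the sketch term, no width makes the bound positive if that limit is negative. So the fixed point cannot be removed by making the statistics $m$-independent.

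\textbf{What you can defensibly claim.} Take the paper's reading: the penalty statistics are those realized at the chosen width. The formula is then a genuine $F\log F$ capacity law only where those statistics stay $O(1)$ as $m$ varies, for example isotropic keys with $m\lesssim d^2/\log F$, where $P_{\mathrm{key}}=\Theta(1)$. You should either drop the fix and state the corollary conditionally in this sense, or add an explicit assumption that the sketch term dominates $E_K$.
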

See \Cref{cor:optimal-fact-storage-capacity-arb-formal} for formal statement and proof.

\Cref{cor:optimal-fact-storage-capacity} counts real-valued parameters; Appendix~\ref{app:theory:margin_bounds:bit_complexity} shows that, under bounded precision, our construction incurs only an extra logarithmic factor in total bit complexity.

\paragraph{Empirical verification.}
Figure~\ref{fig:sketched_k2_sweeps}c compares the storage scaling of our construction to gradient descent-trained (GD) MLPs and the NTK construction from~\citet{nichani2024understandingfactualrecalltransformers}. Notably, we include two more closed-form variants of our construction in our empirical analysis:
\begin{itemize}[leftmargin=0.5cm]
    \phantomsection\label{kernel-whitened-construction}
    \item \textbf{Kernel-whitened construction.} Our kernel-whitened MLP construction whitens the sketched quadratic kernel $\hat{K}$ in our vanilla MLP construction with its empirical covariance. This construction is motivated by the observations that 1) the key crowding penalty $P_{\mathrm{key}}$ governs the cross-talk magnitude scale in our signal and cross talk decomposition (\cref{eq:maintext:akav_margin_bound}) and 2) whitening the sketched quadratic kernel $\hat{K}$ with its empirical covariance (Appendix~\ref{app:theory:margin_bounds:kernel_whitening_crowding}) reduces $P_{\mathrm{key}}$ (\Cref{lem:whiten_min_frob}). To our knowledge, our \emph{kernel-whitened} construction is the first closed-form MLP construction to empirically achieve the optimal fact-storage capacity scaling $W \,= \Theta\Big(F\log\!\Big(\frac{F}{\delta}\Big)\Big)$.
    \phantomsection\label{data-dependent-construction}
    \item \textbf{Data-dependent construction.} Our data-dependent MLP closed-form construction solves for each weight matrix in our vanilla MLP construction via a least-squares objective, as opposed to initializing them randomly. Intuitively, the least squares objective we solve for each matrix leverages the key and value geometry to maximize the construction's margin (Appendix~\ref{app:theory:margin_bounds:data_dependent_construction}). This approach improves storage capacity without gradient descent: at matched fact count, the NTK baseline requires roughly $10$-$104\times$ more parameters than our data-dependent construction, while our data-dependent construction only requires about $6$-$10\times$ more parameters than GD.
\end{itemize}

%% file: sections_short/section_5_transformer_integration/main.tex
In \Cref{sec:hebbian-kernel-mlps}, we characterized the margin scaling of \emph{Hebbian MLPs}. In this section we leverage our margin scaling bounds to study the storage capacity scaling of Transformer blocks, where MLP inputs are no longer exact keys but instead are noisy attention outputs. Crucially, we demonstrate that the fact storage capacity of Transformer blocks scales at the information-theoretically optimal rate, provided the attention noise remains bounded. Further, we empirically show that our construction remains usable within Transformer blocks for factual recall tasks; at matched fact count, the NTK baseline requires roughly $15$-$63\times$ more parameters than Transformer blocks using our data-dependent construction. Finally, we show that fact-storing MLPs unlock a new capability: modular, zero-shot fact editing within a Transformer by \emph{swapping out its MLP}.

\begin{figure*}[t]
\centering
\includegraphics[width=\linewidth]{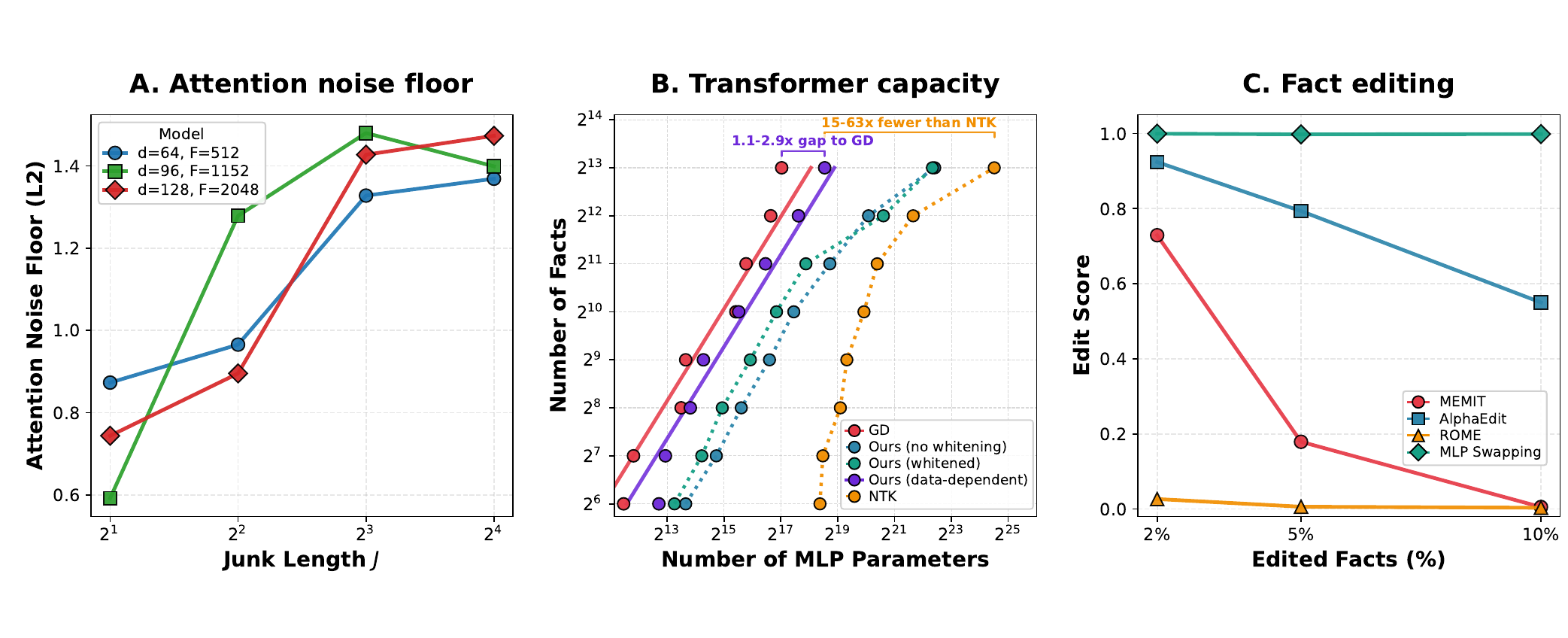}
\caption{\textbf{Transformer blocks achieve information-theoretic optimal fact-storage capacity and enable modular fact editing.}
\textit{(A)} Attention noise ceiling $\varepsilon_{\mathrm{attn}}$ scales with junk context length $J$ in SSFR.
\textit{(B)} Under bounded attention noise, Transformer blocks with our MLP construction achieve information-theoretic optimal fact-storage capacity scaling.
\textit{(C)} \textsc{MLP Swapping} achieves near-perfect fact-editing score ($>\!0.99$) at up to $10\%$ edited facts, more than 40\% better than existing fact-editing baselines.
}
\label{fig:ssfr_main_fig}
\end{figure*}

\subsection{MLPs in Transformers Achieve Optimal Fact-Storage Capacity}
\label{subsec:transformer-capacity}
\label{subsec:attention-noise}

We start by investigating why MLPs need positive margin in Transformers.
Unlike in the standalone setting, attention does not query the MLP with the exact stored key.  We quantify the worst-case deviation formally with an \emph{attention noise ceiling}, which we define as the maximum noise an attention layer can produce when querying an MLP for a fact:

\begin{definition}[Attention noise ceiling -- informal]
\label{def:attn-noise}
Let $Q_i \subset \mathbb{R}^d$ be the set of all possible queries an attention layer can produce when querying the MLP for the fact corresponding to the key $\mathbf{k}_i$. The \emph{attention noise ceiling} is defined as
\[
  \varepsilon_{\mathrm{attn}}
  := \max_{i \in [F]}\; \max_{\mathbf{q}\in Q_i} \|\mathbf{q} - \mathbf{k}_{i}\|_{2}.
\]
\end{definition}

Intuitively, we find that the attention noise ceiling increases with the number of distractor tokens in the sequence being processed by a Transformer block (Figure~\ref{fig:ssfr_main_fig}). We next show that Transformer usability reduces to whether the MLP margin can handle perturbations at this scale.

\paragraph{MLPs achieve optimal fact-storage capacity in Transformers.}
We now present our main result. Equipped with our margin scaling analysis in MLPs, we demonstrate that Transformer blocks can store facts at an information-theoretic optimal rate, provided bounded attention noise ceiling:

\begin{tcolorbox}[breakable,colback=white,colframe=blue!50!black,boxrule=1.5pt,arc=0mm,left=8pt,right=8pt,top=8pt,bottom=8pt,title={\small\bfseries Key Result: Transformer blocks can store facts at an info-theory optimal rate.},colbacktitle=blue!50!black,coltitle=white,titlerule=0.5pt]
\begin{theorem}[Transformer Block Fact-storage Capacity (Isotropic Embeddings) - Informal]
\label{thm:noise-robust-capacity}
A Transformer block equipped with a fact-storing bilinear MLP, with non-trivial margin $\gamma_{\min}>c_0>0$, for constant $c_0$, can store $F$ facts using
\[
  W = \Theta(md)
    = \Theta\!\left(F\log (F)\right)
\]
MLP parameters, provided the attention layer in the block satisfies the \textit{attention noise ceiling}
\begin{equation}
  \varepsilon_{\mathrm{attn}}
  \lesssim
  \frac{c_0}
  {L_{\mathrm{bil}}\!\left(
    \sqrt{\frac{F\log(F)}{d}}
  \right)} .
  \label{eq:eps-cap}
\end{equation}
where $L_{\mathrm{bil}}$ is the Lipschitz constant of the MLP. 
\end{theorem}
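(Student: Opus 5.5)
The proof combines the isotropic margin bound of \Cref{thm:informal-iso-iso-margin} with a Lipschitz perturbation argument that transfers the margin from exact keys to noisy attention outputs. Since the statement is informal, the plan is to fix the constants so that the MLP has margin at least $2c_0$ at the stored keys, and then show that attention noise below the ceiling \cref{eq:eps-cap} removes at most $c_0$ of that margin.

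\textbf{Step 1 (margin at exact keys).} By \Cref{thm:informal-iso-iso-margin}, with high probability over $\mathbf{A},\mathbf{G}$ and the isotropic embeddings, $\gamma_{\min}\ge 1 - C\sqrt{F\log(F)/(md)}$. Take $md = C' F\log F$ with $C'$ large enough that the cross-talk is at most $1-2c_0$; this needs $c_0<1/2$, which costs nothing since $c_0$ is a constant. Then $\gamma_{\min}\ge 2c_0$, and the parameter count is $W=\Theta(md)=\Theta(F\log F)$. The matching lower bound follows from \Cref{thm: info_bounds_capacity-const} with $|\mathbf{V}|=F$, so the count is tight.

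\textbf{Step 2 (Lipschitz transfer).} Take any query $\mathbf{q}\in Q_i$ and write $\Delta:=\mathrm{MLP}(\mathbf{q})-\mathrm{MLP}(\mathbf{k}_i)$. The perturbed margin against competitor $j$ is $\gamma_{i,j}+\langle \Delta,\mathbf{v}_{f(i)}-\mathbf{v}_j\rangle$. By Cauchy--Schwarz, $|\langle\Delta,\mathbf{v}_{f(i)}-\mathbf{v}_j\rangle|\le \|\Delta\|\,\|\mathbf{v}_{f(i)}-\mathbf{v}_j\|$. For isotropic unit values, $\|\mathbf{v}_{f(i)}-\mathbf{v}_j\|\le 2$. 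Define $L_{\mathrm{bil}}$ as the Lipschitz constant of the MLP on a ball of radius $1+\varepsilon_{\mathrm{attn}}$ around the keys, so that $\|\Delta\|\le L_{\mathrm{bil}}\varepsilon_{\mathrm{attn}}$. The perturbed margin is then at least $2c_0-2L_{\mathrm{bil}}\varepsilon_{\mathrm{attn}}\ge c_0$ as long as $\varepsilon_{\mathrm{attn}}\le c_0/(2L_{\mathrm{bil}})$. Since the MLP is quadratic, it is only locally Lipschitz, which is why we restrict to this ball.

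\textbf{Step 3 (bounding $L_{\mathrm{bil}}$; the main obstacle).} We need $L_{\mathrm{bil}}\lesssim\sqrt{F\log(F)/d}$, which matches the argument appearing in \cref{eq:eps-cap}. The Jacobian is
\[
J(\mathbf{x})=\mathbf{B}\bigl(\mathrm{diag}(\mathbf{G}\mathbf{x})\mathbf{A}+\mathrm{diag}(\mathbf{A}\mathbf{x})\mathbf{G}\bigr),
\]
and the quantity that matters is the component of $J(\mathbf{x})\mathbf{u}$ along value differences. A crude operator-norm bound is too lossy. The plan is instead to write $\langle \mathbf{v}_{f(i)}-\mathbf{v}_j, J(\mathbf{x})\mathbf{u}\rangle=\sum_t \langle \mathbf{v}_{f(i)}-\mathbf{v}_j,\mathbf{v}_{f(t)}\rangle\,\partial_{\mathbf{u}}\hat K_2(\mathbf{k}_t,\mathbf{x})$ and split the sum into two parts:
\begin{itemize}[leftmargin=0.5cm]
\item The $t=i$ term is $O(1)$, because $\hat K_2$ concentrates around $K_2$ and $|\partial_{\mathbf{u}}K_2(\mathbf{k}_i,\mathbf{x})|\le 2\|\mathbf{x}\|$.
\item The remaining $F-1$ terms behave like a sum of roughly independent terms. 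The value inner products are of size $1/\sqrt d$, and the derivatives $2\langle\mathbf{k}_t,\mathbf{x}\rangle\langle\mathbf{k}_t,\mathbf{u}\rangle$ are of size $O(1/d)$ plus sketch noise of size $O(1/\sqrt m)$.
\end{itemize}

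Applying sub-exponential concentration to these chaos-type sums, with a union bound over $i,j$ and an $\epsilon$-net over the directions $\mathbf{u}$, should give $L_{\mathrm{bil}}\lesssim \sqrt{F\log F/d}$ up to constants. The $\epsilon$-net is the delicate part: a net over all of $\mathbb{R}^d$ costs a factor of $d$ inside the logarithm. If that loss cannot be avoided, the fallback is to state the ceiling directly in terms of $L_{\mathrm{bil}}$, as the informal theorem already does, and use only the generic bound above.
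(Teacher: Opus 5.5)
Your Step 1 matches the paper (clean margin from \Cref{thm:combined-rkrv} with $md\asymp F\log F$), and Step 2 is a valid deterministic estimate. Step 3, however, rests on a misreading of the ceiling, and it targets a bound that is neither proved nor true in general.

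\textbf{What $L_{\mathrm{bil}}$ means.} In \cref{eq:eps-cap} (formally \cref{eq:capacity-noise}), $L_{\mathrm{bil}}$ \emph{multiplies} $\sqrt{F\log F/d}$. It is the query-Lipschitz constant of each individual kernel entry, $|\hat K(\mathbf{k}_t,\mathbf{q})-\hat K(\mathbf{k}_t,\mathbf{q}')|\le L_{\mathrm{bil}}\|\mathbf{q}-\mathbf{q}'\|_2$ uniformly in $t$ (\cref{eq:Lipk}). The factor $\sqrt{F\log F/d}$ is not a Lipschitz constant; it is the aggregation of $F$ bounded kernel perturbations against isotropic value inner products. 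If $L_{\mathrm{bil}}$ literally meant the MLP's Lipschitz constant, your Step 2 would already finish the proof whenever $F\log F\gtrsim d$, since the stated ceiling is then at most of order $c_0/L_{\mathrm{bil}}$. That is a sign this reading is not the intended one.

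\textbf{Why the Step 3 target fails.} You aim to show the value-projected MLP Lipschitz constant is $\lesssim\sqrt{F\log F/d}$ with no kernel-Lipschitz factor. This breaks already at the $t=i$ term. Pointwise concentration of $\hat K$ around $K_2$ does not control its gradient. At $\mathbf{x}=\mathbf{k}_i$, $\nabla_{\mathbf{x}}\hat K(\mathbf{k}_i,\mathbf{x})=2A(\mathbf{k}_i)\mathbf{k}_i$. Its component orthogonal to $\mathbf{k}_i$ is the Gaussian block $2N_{1\perp}$ of \Cref{lem:bilinear-column-matrix-regularity}, of norm $\asymp\sqrt{d/m}$: your per-direction sketch noise $1/\sqrt m$, accumulated over $d$ directions. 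In the admissible regime $m\asymp L^3$, $F\asymp dL^2$, $d\gtrsim L^2$ of \cref{eq:capacity-regime}, this is $\sqrt d/L^{3/2}$. That exceeds $\sqrt{F\log F/d}\asymp L^{3/2}$ by a factor $\sqrt d/L^3$. So the per-entry Lipschitz constant has to remain a separate factor, exactly as the theorem is written.

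\textbf{Why the fallback does not help.} Relating the MLP-level constant to the kernel-level $L_{\mathrm{bil}}$ is precisely the missing step. Done by Cauchy--Schwarz, it gives $\|(\langle\mathbf{v}_{f(i)}-\mathbf{v}_j,\mathbf{v}_{f(t)}\rangle)_t\|_2\,\sqrt F\,L_{\mathrm{bil}}\epsilon\approx(F/\sqrt d)\,L_{\mathrm{bil}}\epsilon$. That is a $\sqrt{F/\log F}$ loss, the same loss as ignoring the coupling factor $\kappa$ in the clean analysis.

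\textbf{The missing idea.} The paper's step (\Cref{thm:noised-lip-isovalues}) is much simpler: do not linearize and do not take a net. Take a noisy query $\tilde{\mathbf{k}}_i$ that is fixed and independent of the values, and set $\Delta_{t,i}=\hat K(\mathbf{k}_t,\tilde{\mathbf{k}}_i)-\hat K(\mathbf{k}_t,\mathbf{k}_i)$, so $|\Delta_{t,i}|\le L_{\mathrm{bil}}\epsilon$. The margin change is then exactly $\sum_t\Delta_{t,i}\langle\mathbf{v}_i-\mathbf{v}_j,\mathbf{v}_t\rangle$.
\begin{itemize}
\item The terms $t\in\{i,j\}$ contribute at least $-2(1+\mu_v)L_{\mathrm{bil}}\epsilon$.
\item Conditionally on $(\mathbf{v}_i,\mathbf{v}_j)$, the rest is a sum of independent centered sub-Gaussians of scale $\le 2/\sqrt d$ with weights bounded by $L_{\mathrm{bil}}\epsilon$. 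Hence it is $\ge-2\sqrt2\,L_{\mathrm{bil}}\epsilon\sqrt{FL/d}$ outside probability $\delta/(C_0F^2)$.
\item A union bound over pairs, together with $FL\ge d$, gives $\gamma_{\min}(\tilde{\mathbf{k}})\ge\gamma_{\min}(\mathbf{k})-CL_{\mathrm{bil}}\epsilon\sqrt{FL/d}$.
\item Then $\epsilon\le\frac{c_0}{2CL_{\mathrm{bil}}}\sqrt{d/(FL)}$ leaves margin $c_0/2$.
\end{itemize}

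\textbf{Salvaging uniformity.} Wanting uniformity over all $\mathbf{q}\in Q_i$ is legitimate; the paper's fixed-query argument does not by itself provide it. Your version is salvageable once $L_{\mathrm{bil}}$ is kept as a factor. Since $\hat K(\mathbf{k}_t,\mathbf{x})=\mathbf{x}^\top A(\mathbf{k}_t)\mathbf{x}$, the change at $\mathbf{q}=\mathbf{k}_i+\mathbf{e}$ is $2\mathbf{e}^\top M\mathbf{k}_i+\mathbf{e}^\top M\mathbf{e}$ with $M=\sum_t c_tA(\mathbf{k}_t)$; the two $O(1)$-coefficient terms are treated separately. A Rademacher matrix-series (matrix Khintchine) bound for the conditionally symmetric sub-Gaussian $c_t$ costs only $\log d$, not $d$. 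It gives $\|M\|_{\mathrm{op}}\lesssim\max_t\|A(\mathbf{k}_t)\|_{\mathrm{op}}\sqrt{F\log(dF/\delta)/d}$ uniformly over the ball. Here $2(1+\epsilon)\max_t\|A(\mathbf{k}_t)\|_{\mathrm{op}}$ is exactly $L_{\mathrm{bil}}$ on the ball of radius $1+\epsilon$.
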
 See Appendix~\ref{app:theory:transformers_perturbation} for formal statement and proof.
\end{tcolorbox}
\Cref{cor:capacity-new} presents the formal statement and proof. We note that this result can be easily extended to arbitrary embedding geometries, incurring the same penalization factors from \cref{eq:maintext:akav_margin_bound}.

\paragraph{Empirical verification.}
We produce GD, NTK, and our constructed MLPs, freeze their parameters, then insert them into a 1-layer Transformer and train on the SSFR task.
Figure~\ref{fig:ssfr_main_fig}b validates the predicted asymptotically optimal capacity scaling for Transformer blocks using our construction.
Appendix~\ref{app:theory:transformers_perturbation} reports a complementary per-key margin diagnostic; in particular, \Cref{fig:margin-violin} shows that the usable-key fraction inferred from per-key margins closely tracks end-to-end Transformer accuracy.
Among the constructions we evaluate, our data-dependent construction only requires at most $3\times$ more parameters than GD MLPs at matched fact count. Relative to the NTK baseline, our data-dependent construction requires $15$-$63\times$ fewer parameters at matched fact count. See Appendix~\ref{app:theory:transformers_perturbation} for further diagnostics.

\subsection{Fact Editing via \textsc{MLP Swapping}}
\label{sec:fact-editing}

Having demonstrated that fact-storing MLPs are usable inside Transformers, we now use GD-trained fact-storing MLPs to show a simple proof-of-concept method for \emph{zero-shot fact editing}. We call this procedure \textsc{MLP Swapping}: to edit the model's facts, we construct a new MLP storing the revised fact set and swap it into the Transformer, with \emph{no further tuning of the Transformer's parameters}.

We evaluate on a synthetic author-book language-modeling task (Appendix~\ref{appendix:fact-editing}) using a one-layer Transformer trained to store book--author facts through a frozen fact-storing MLP.
After training, we edit a subset of stored facts and evaluate two metrics. The first is the standard fact-editing score~\citep{memit}, which jointly captures edit efficacy (edited facts predict the new values), specificity (unedited facts stay correct), and paraphrase generalization (edits transfer to paraphrased prompts). The second is the non-fact PPL ratio, measuring post-edit versus pre-edit perplexity on non-fact tokens.
\textsc{MLP Swapping} achieves near-perfect fact-editing score across the edit fractions we test.
At $10\%$ edited facts, \textsc{MLP Swapping} achieves score $0.999$---a $44.9$-percentage-point gain over the strongest baseline, AlphaEdit~\citep{fang2025alphaeditnullspaceconstrainedknowledge} (score $0.550$)---while achieving a non-fact PPL ratio of only $1.02$ (Figure~\ref{fig:ssfr_main_fig}c).

Appendix~\ref{appendix:fact-editing} further shows that \textsc{MLP Swapping} also works for a data-dependent constructed Hebbian MLP.
\textsc{MLP Swapping} with Hebbian MLPs keeps the edit score above \(0.98\) through up to \(10\%\) edited facts, while the strongest baseline reaches only \(0.847\)---a 13.3-percentage-point gain (Figure~\ref{fig:fact-editing-constructed-data-dependent}).

%% file: sections_short/section_6_conclusion/main.tex
\section{Discussion}

Our work presents a stepping stone toward understanding MLPs within Transformers from a constructive lens.
We present an MLP construction that achieves optimal margin and fact-storage capacity, provides provable margin guarantees for arbitrary embeddings, and remains usable within Transformer blocks for factual recall.
We also show an application of modular fact-storing MLPs in fact editing, illustrating a path toward robust, modular knowledge manipulation in LLMs.

Our analysis currently applies to constructed MLPs in a single-layer Transformer setting.
Extending to MLPs in pretrained LLMs---for example, by understanding how the geometries of real LLM embeddings affect the kernels MLPs learn---would provide a principled lens for investigating how trained MLPs store knowledge.
Furthermore, moving beyond the single-layer setting would let us study multi-hop recall and more realistic editing scenarios.

%% file: sections_short/impact_statement.tex
\section*{Impact Statement}
This paper presents work whose goal is to advance the field of machine learning. There are many potential societal consequences of our work, none of which we feel must be specifically highlighted here.

%% file: shared/acknowledgements.tex
\section*{Acknowledgements}
\label{sec:acknowledgements}

The authors thank Yasa Baig, Kelly Buchanan, Mayee Chen, Vivien Cheng, Catherine Deng, Owen Dugan, Rajat Vadiraj Dwaraknath, Neel Guha, Junmiao Hu, Ishan Khare, Hermann Kumbong, Eshaan Nichani, Jon Saad-Falcon, Thanawat Sornwanee, Stuart Sul, Alex Waitz, John Winnicki, Morris Yau, Michael Zhang, and Dylan Zinsley for their helpful feedback and discussion.

The authors gratefully acknowledge the support of NIH under No. U54EB020405 (Mobilize), NSF under Nos. CCF2247015 (Hardware-Aware), CCF1763315 (Beyond Sparsity), CCF1563078 (Volume to Velocity), and 1937301 (RTML); US DEVCOM ARL under Nos. W911NF-23-2-0184 (Long-context) and W911NF-21-2-0251 (Interactive Human-AI Teaming); ONR under Nos. N000142312633 (Deep Signal Processing); Stanford HAI under No. 247183; NXP, Xilinx, LETI-CEA, Intel, IBM, Microsoft, NEC, Toshiba, TSMC, ARM, Hitachi, BASF, Accenture, Ericsson, Qualcomm, Analog Devices, Google Cloud, Salesforce, Total, the HAI-GCP Cloud Credits for Research program, the Stanford Data Science Initiative (SDSI), and members of the Stanford DAWN project: Meta, Google, and VMWare. The U.S. Government is authorized to reproduce and distribute reprints for Governmental purposes notwithstanding any copyright notation thereon. Any opinions, findings, and conclusions or recommendations expressed in this material are those of the authors and do not necessarily reflect the views, policies, or endorsements, either expressed or implied, of NIH, ONR, or the U.S. Government.
JL is supported by the Department of Energy Computational Science Graduate Fellowship under Award Number DE-SC0023112.
AR's research is supported by NSF grant CCF\#2247014.

%% file: appendix_short/main.tex
\input{appendix/main}

\newpage

\input{appendix_short/related_work}

%% file: appendix/main.tex
\section*{Appendix}

\startcontents[appendix]
\printcontents[appendix]{}{1}{\setcounter{tocdepth}{3}}
\newpage

\section{Experiments}
\label{app:expt}
\input{appendix/experiments}

\newpage

\section{Theory}
\label{app:extended_theory}
\input{appendix/theory}

\stopcontents[appendix]

%% file: appendix/experiments.tex
\subsection{Model Definitions}
\label{app:exp:mlp-variants}
\input{appendix/experiments/mlp_variants}

\subsection{Experimental details for~\Cref{sec:3}}
\label{app:exp:section3}
\input{appendix/experiments/section_3_basic_setting}

\subsection{Experimental details for~\Cref{sec:4}}
\label{app:exp:section4}
\input{appendix/experiments/section_4_hebbian_kernel_mlps}

\subsection{Experimental details for~\Cref{sec:5}}
\label{app:exp:section5}
\input{appendix/experiments/section_5_transformer_integration}

\subsection{Additional Empirical Results}
\label{app:exp:additional-results}
\input{appendix/experiments/additional_empirical_results}

%% file: appendix/experiments/mlp_variants.tex
\subsubsection{MLP Variants}
\label{app:exp:mlp-variant-definitions}

Across the standalone fact-storage-capacity experiments (\Cref{sec:fact-storage-capacity,app:exp:section4:capacity}) and the Transformer-block capacity experiments (\Cref{appendix:lm-scaling}), we compare the following MLP variants at matched hidden width $m$ on the same synthetic fact sets:

\begin{description}
    \item[\textbf{GD (gradient-descent trained).}] Our default GD baseline is a bilinear gated-identity MLP, i.e.\ the bilinear specialization of the gated MLP family in \Cref{eq:gated-mlp-class}.
    Unless stated otherwise, we train these models for
    $10{,}000$ epochs using Adam with initial learning rate $10^{-3}$ and a
    cosine-annealing schedule down to $10^{-6}$.

    \item[\textbf{NTK.}] Our NTK baseline is the degree-$1$ Hermite weight construction of~\citet{nichani2024understandingfactualrecalltransformers}, which uses a gated ReLU MLP architecture.
    A self-contained description is given in \Cref{app:theory:hebbian_mlps:ntk_construction}.

    \item[\textbf{Our construction.}] Our closed-form construction is a Hebbian MLP with
    sketched-$K_{2}$ kernel, i.e. the bilinear random-feature construction described in
    \Cref{sec:sketched-k2,alg:sketched_k2_hebbian_whitening}.
    We study three variants:
    \begin{itemize}
        \item \textbf{Unwhitened.} This is the raw sketched-$K_{2}$ Hebbian construction,
        with the bilinear random-feature map used directly in the final Hebbian readout.
        \item \textbf{Whitened.} This uses the same bilinear feature map, but applies the
        full readout whitening procedure from
        \Cref{app:theory:margin_bounds:kernel_whitening} with ridge parameter $10^{-6}$.
        \item \textbf{Data-dependent.} This keeps the same bilinear architecture, but performs least squares solves to obtain the random feature vectors within the Hebbian kernel, as described in
        \Cref{app:theory:margin_bounds:data_dependent_construction}, then applies whitening with ridge parameter $10^{-6}$.
    \end{itemize}
\end{description}

\subsubsection{Transformer Setups}
\label{app:exp:transformer-setups}
\label{app:ssfr-training-setup}
We describe details about the modified 1-layer, 1-head GPT-style Transformer architecture used across all of our SSFR experiments unless otherwise specified.

We freeze the input and output embeddings, and when an experiment includes a fact-storing MLP, that MLP uses the same embeddings as the Transformer.
Positional encodings are disabled throughout.
We use RMSNorm before the attention layer, and use unit-RMSNorm (which projects to the unit sphere) before the MLP and before the final Transformer output.
We train with AdamW using learning rate $2\times 10^{-4}$, batch size $1{,}280$, and $4{,}000$ iterations.

Our experiments include two different training protocols:
\begin{itemize}
    \item In the \emph{pretrained attention} setting, we first train attention alone on a dummy SSFR task using the identity map, so that it learns to query the MLP when faced with distractor junk tokens in its context.
    After training, we freeze the attention layer, then train a frozen fact-storing MLP on a fresh random-permutation fact set using the same embeddings, insert it into the Transformer, and evaluate the combined model on SSFR with that fact set.
    \item In the \emph{inserted MLP} setting, we first construct or train a frozen fact-storing MLP on a random-permutation fact set, then insert it into the Transformer block. We then train the attention layer around that frozen MLP on SSFR using the same fact set.
\end{itemize}

%% file: appendix/experiments/section_3_basic_setting.tex
\subsubsection{Synthetic Sequential Factual Recall (SSFR)}
\label{appendix:ssfr-task}

We instantiate SSFR using the contextual single-token recall task
\[
\mathcal{S}_{\mathrm{SSFR}}[f]
:=
\left\{
\operatorname{concat}(j^{\mathrm{pre}},\, k,\, j^{\mathrm{suf}},\, q,\, f(k))
\;:\;
k\in\mathcal{S}_{k},\;
j^{\mathrm{pre}}\in\mathcal{J}_{\mathrm{pre}},\;
j^{\mathrm{suf}}\in\mathcal{J}_{\mathrm{suf}}
\right\},
\]
where $f:\mathcal{S}_{k}\to\mathcal{S}_{v}$ is a bijection defining the fact set, $q$ is a dedicated query token, and $j^{\mathrm{pre}},j^{\mathrm{suf}}$ are junk-token strings that do not belong to the fact vocabulary. Keys and values are treated as single tokens.

Unless otherwise specified, the default SSFR configuration used in our experiments uses a junk vocabulary of size $9$ and junk-prefixes and junk-suffixes of length $9$.
Transformer architecture and training details are summarized in Appendix~\ref{app:exp:transformer-setups}.

\subsubsection{Margins Govern Transformer Usability Sweep}
\label{app:exp:section3:hidden-width}
The hidden-width sweep in Figure~\ref{fig:sketched_k2_sweeps}a uses the \emph{pretrained attention} setting from Appendix~\ref{app:exp:transformer-setups} and uses GD-trained fact-storing MLPs.
We use $d_{\mathrm{model}}=128$,
$F=2048$, $J=V_J=9$, and sweep across $16$ logarithmically spaced hidden widths in $[16,256]$, with four seeds per width.
For each width we report the standalone MLP margin and accuracy together with the accuracy of the combined Transformer block on the random fact set.

%% file: appendix/experiments/section_4_hebbian_kernel_mlps.tex
\subsubsection{MLP Fact-Storage Capacity}
\label{app:exp:section4:capacity}
We estimate the MLP fact-storage capacity in Figure~\ref{fig:sketched_k2_sweeps}c by binary-searching over hidden width $m$ at fixed embedding dimension $d$ and fact count $F$.
We sweep over
\[
d\in\{64,90,128\},
\qquad
\alpha:=F/d^2\in\{1/32,1/16,1/8,1/4,1/2,3/4,1\}.
\]
For each $(\text{method},d,\alpha)$ tuple, we generate a random-permutation fact set and sample key and value embeddings from the unit sphere.
We then binary-search for the smallest hidden width $m$ that attains 100\% fact storage on the sampled fact set and report the corresponding parameter count $W$.
We compare GD-trained bilinear MLPs, our sketched-$K_2$ Hebbian construction (with and without whitening, and our data-dependent kernel variant), and the NTK baseline from~\citet{nichani2024understandingfactualrecalltransformers}, as described in Appendix~\ref{app:exp:mlp-variants}.

\subsubsection{Margin Sweeps}
\label{app:exp:section4:margins}
We use the same bilinear sketched-$K_2$ construction for the isotropic margin sweeps in Figure~\ref{fig:sketched_k2_sweeps}b, as well as for the arbitrary-geometry comparisons summarized in Table~\ref{tab:2x2}.

\paragraph{Isotropic keys and values.}
\label{app:exp:section4:iso-iso}
For the isotropic keys and isotropic values sweeps supporting Appendix~\ref{app:theory:margin_bounds:rkrv}, we sample unit-norm spherical keys and values from a random-permutation fact set, vary either $F$ or $m$, and construct the MLP at each point.
We then compare the measured minimum margin with the theoretical prediction (\Cref{eq:app:rkrv:sketchedk2_full}).
The $F$-sweep (exact $K_2$) uses $d=32$, $m=512$, $F\in[32,2048]$ (30 log-spaced points, 5 seeds).
The $F$-sweep (bilinear sketched-$K_2$) uses $d=64$, $m=512$, $F\in[32,512]$ (30 log-spaced points, 5 seeds).
The $m$-sweep (bilinear bilinear sketched-$K_2$) uses $d=64$, $F=256$, $m\in[64,1024]$ (30 log-spaced points, 5 seeds).

\begin{figure}[h]
\centering
\includegraphics[width=0.60\linewidth]{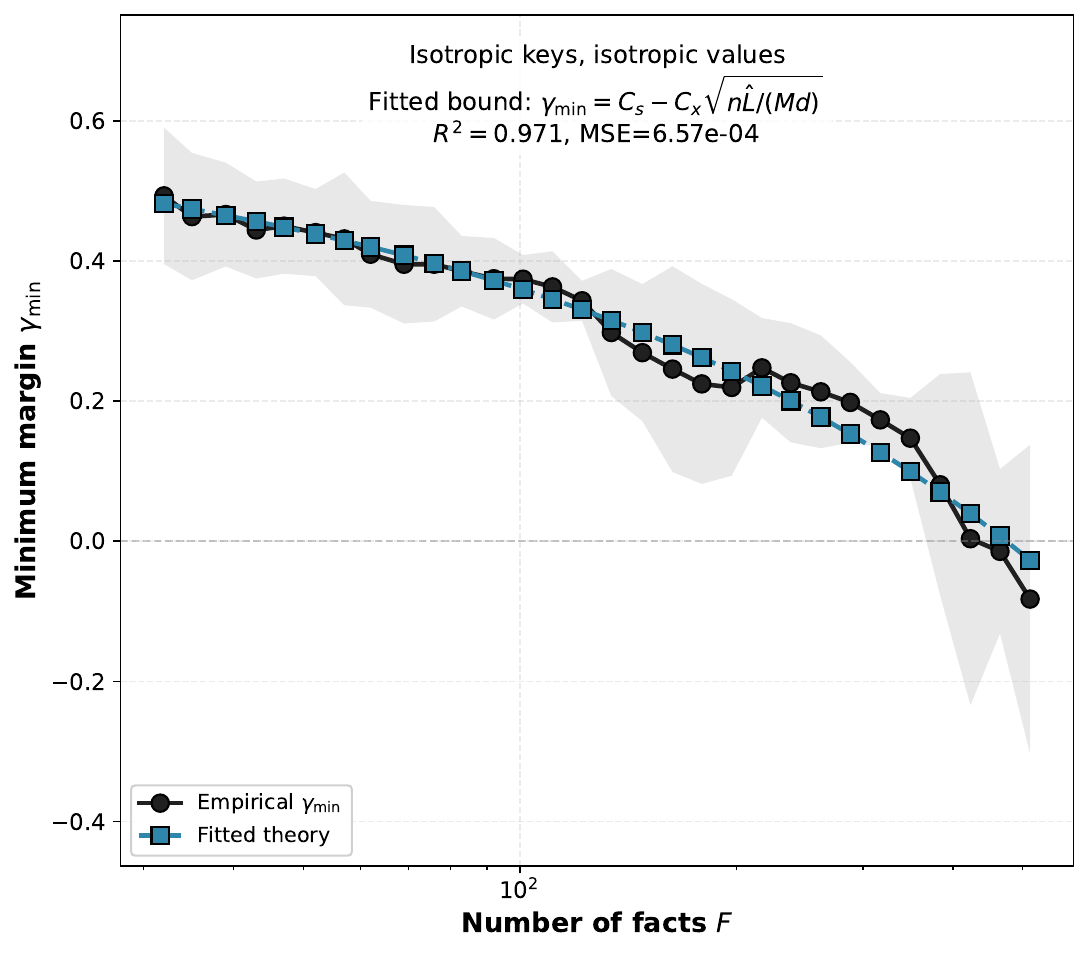}
\caption{\textbf{Margin scaling with number of facts (isotropic setting).}
Minimum margin of bilinear sketched-$K_2$ MLPs decreases with the number of stored facts $F$, following the predicted scaling from \Cref{eq:app:rkrv:sketchedk2_full}. The empirical margin closely tracks the theoretical bound across the $F$-sweep with $d=64$, $m=512$.}
\label{fig:app:rf-f-sweep}
\end{figure}

\paragraph{Arbitrary keys, isotropic values.}
\label{app:exp:section4:anisotropic-keys}
For the arbitrary-key sweep supporting Appendix~\ref{app:theory:margin_bounds:akrv}, we start from spherical keys and apply a rank-1 spike transform
\[
\mathbf{k}_{i}'
=
\frac{\mathbf{k}_{i} + \beta \langle \mathbf{k}_{i}, \mathbf{u}\rangle \mathbf{u}}
{\|\mathbf{k}_{i} + \beta \langle \mathbf{k}_{i}, \mathbf{u}\rangle \mathbf{u}\|_2},
\]
where $\mathbf{u}\in\mathbb{S}^{d-1}$ is fixed within a sweep and $\beta\ge 0$ controls the anisotropy strength.
As $\beta$ increases, the keys crowd along $\mathbf{u}$, increasing the key-side quantities entering the bound, especially $E_{K}$.
Values remain isotropic, sampled uniformly from the unit sphere.
At each $\beta$, we construct the MLP, measure $\gamma_{\min}$, and compare it against the theorem, plug-in, and heuristic predictions for the resulting key geometry (\Cref{eq:app:akrv:thm}).
We use $d=64$, $F=128$, $m=512$, $\beta\in[0,5]$.

\paragraph{Isotropic keys, arbitrary values.}
\label{app:exp:section4:anisotropic-values}
For the arbitrary-value sweep supporting Appendix~\ref{app:theory:margin_bounds:rkav}, we keep the keys isotropic and apply the same rank-1 spike transform to the values:
\[
\mathbf{v}_{i}'
=
\frac{\mathbf{v}_{i} + \beta \langle \mathbf{v}_{i}, \mathbf{u}\rangle \mathbf{u}}
{\|\mathbf{v}_{i} + \beta \langle \mathbf{v}_{i}, \mathbf{u}\rangle \mathbf{u}\|_2}.
\]
Again $\mathbf{u}$ is fixed within a sweep and $\beta$ is the control parameter. This changes the value-side quantities $V_{\min}$, $B_{Y}$, and $E_{v}$ while preserving unit norm.
Keys remain isotropic, sampled uniformly from the unit sphere.
At each $\beta$, we rebuild the MLP, measure the empirical decoding margin, and compare it against the corresponding theorem, plug-in, and heuristic predictions (\Cref{eq:app:rkav:thm}).
We use $d=64$, $F=128$, $m=512$, $\beta\in[0,10]$.

\paragraph{Arbitrary keys and values.}
\label{app:exp:section4:anisotropic-keys-values}
For the fully structured sweep supporting Appendix~\ref{app:theory:margin_bounds:akav}, we apply the same rank-1 spike model to both keys and values, using the same spike strength $\beta$ and fixed direction $\mathbf{u}$:
\[
\mathbf{e}_{i}'
=
\frac{\mathbf{e}_{i} + \beta \langle \mathbf{e}_{i}, \mathbf{u}\rangle \mathbf{u}}
{\|\mathbf{e}_{i} + \beta \langle \mathbf{e}_{i}, \mathbf{u}\rangle \mathbf{u}\|_2},
\qquad
\mathbf{e}_{i}\in\{\mathbf{k}_{i},\mathbf{v}_{i}\}.
\]
This setting makes key crowding and value interference vary coherently, which is the regime where the coupling factor $\kappa$ emerges.
For each $\beta$, we recompute the geometric summary statistics entering the deterministic bound and compare the measured margin against the theorem, plug-in, and heuristic predictions, focusing on the composite cross-talk scale $\sqrt{E_K}\sqrt{E_v}\,\kappa$ (\Cref{eq:app:akav:thm}).
We use $d=64$, $F=128$, $m=512$, $\beta\in[0,3.35]$.

\begin{figure*}[t]
\centering
\begin{subfigure}[t]{0.32\linewidth}
\centering
\includegraphics[width=\linewidth]{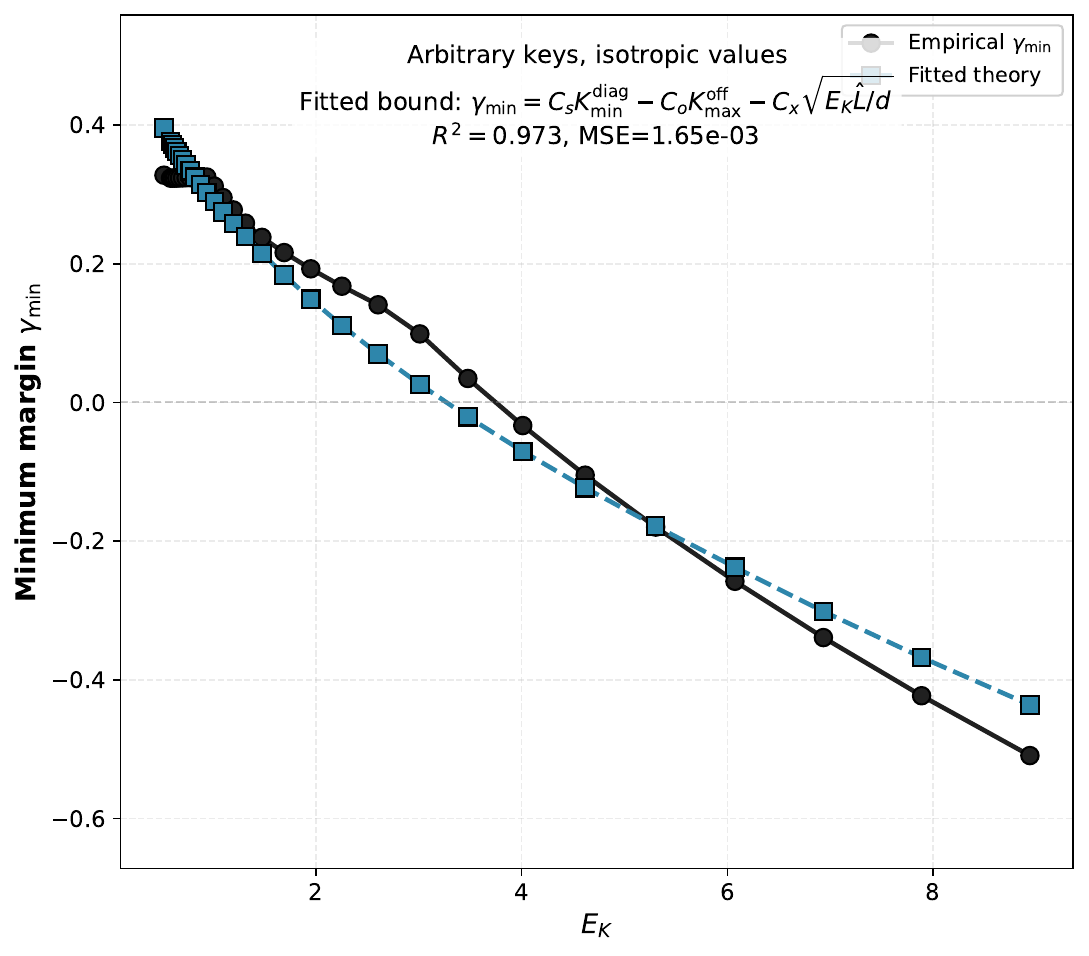}
\caption{Arbitrary keys, isotropic values ($\beta$-sweep). As $\beta$ increases, key crowding grows and the measured margin tracks the theoretical bound via key-geometry terms $K_{\min}^{\mathrm{diag}}$, $K_{\max}^{\mathrm{off}}$, and $E_K$.}
\label{fig:app:beta-sweep-akrv}
\end{subfigure}\hfill
\begin{subfigure}[t]{0.32\linewidth}
\centering
\includegraphics[width=\linewidth]{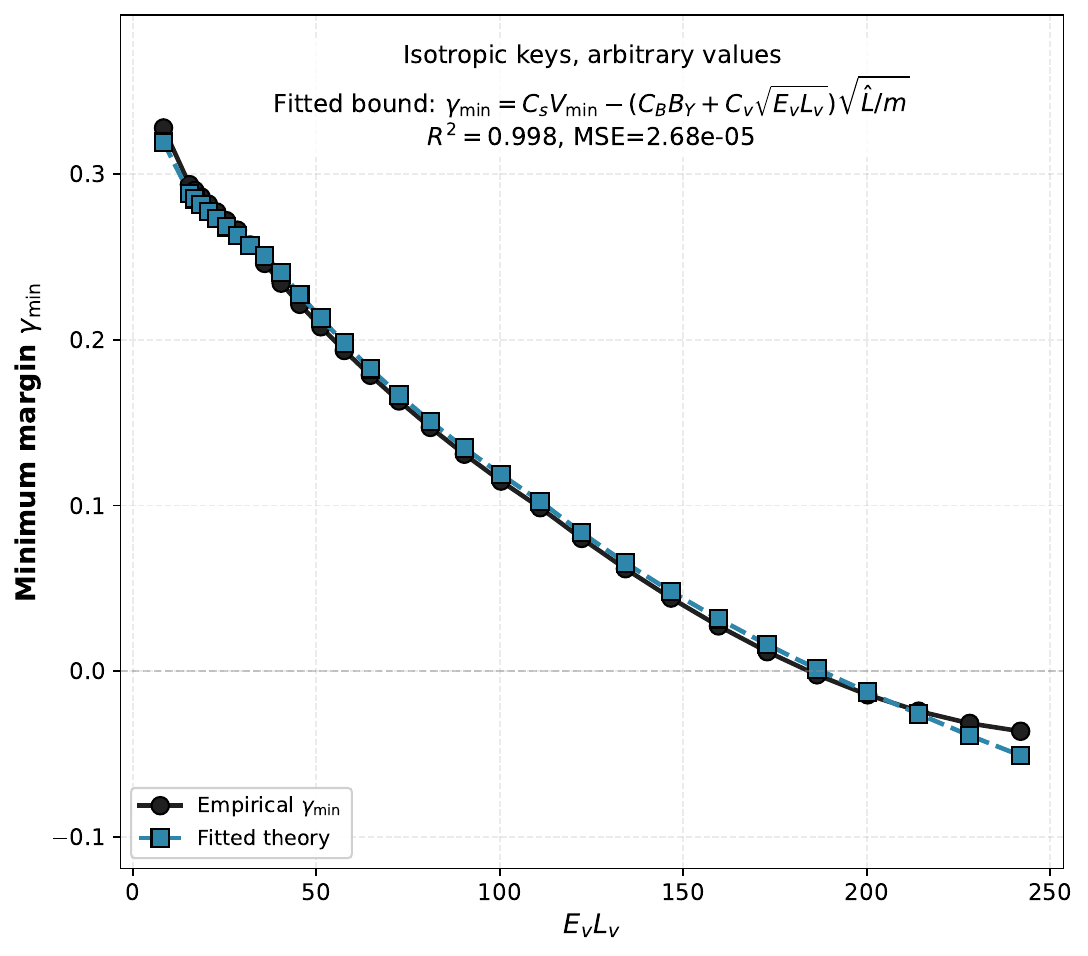}
\caption{Isotropic keys, arbitrary values ($\beta$-sweep). Increasing $\beta$ concentrates values and the margin closely follows the bound through value-geometry terms $V_{\min}$, $B_Y$, and $E_v$.}
\label{fig:app:beta-sweep-rkav}
\end{subfigure}\hfill
\begin{subfigure}[t]{0.32\linewidth}
\centering
\includegraphics[width=\linewidth]{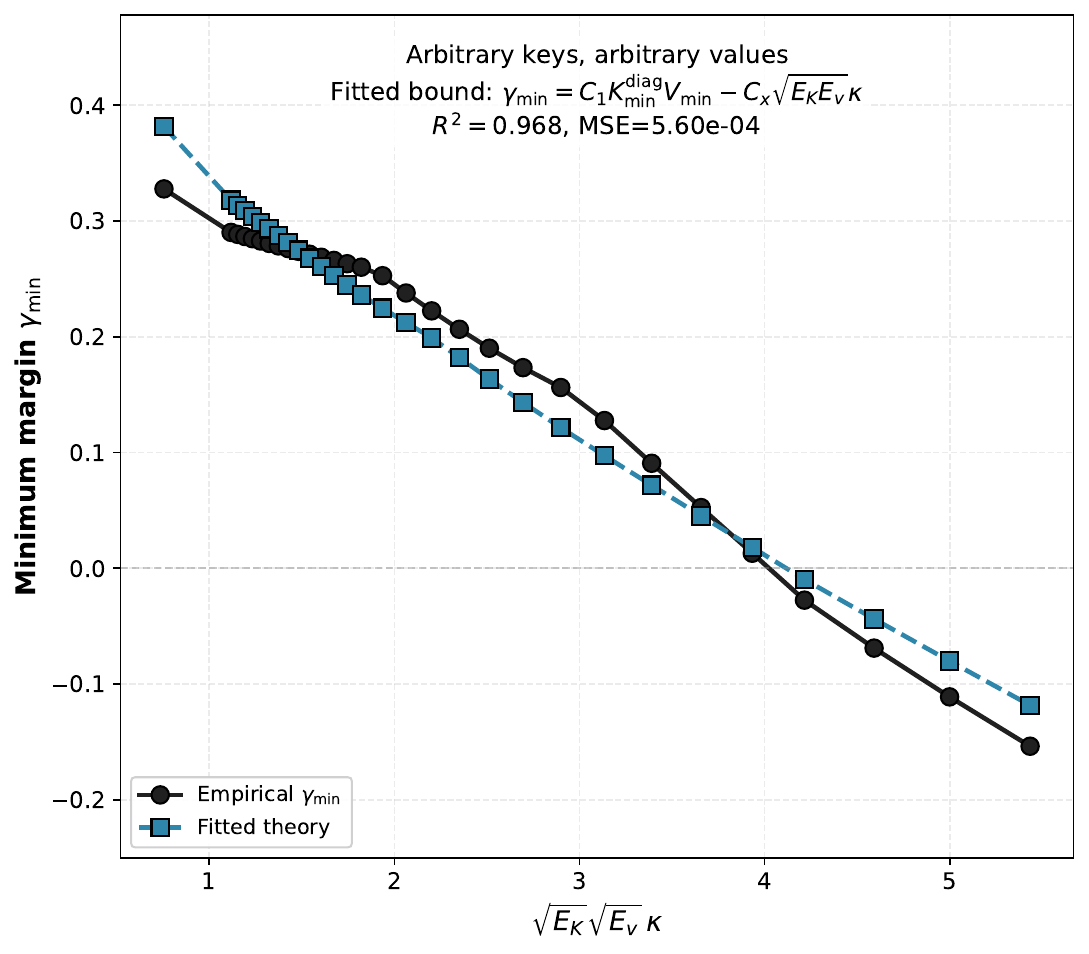}
\caption{Arbitrary keys and values ($\beta$-sweep). Both geometries are spiked simultaneously; the margin degradation is governed by the composite cross-talk scale $\sqrt{E_K}\sqrt{E_v}\,\kappa$, matching the deterministic bound.}
\label{fig:app:beta-sweep-akav}
\end{subfigure}
\caption{\textbf{Margin $\beta$-sweeps across arbitrary key/value geometry regimes.}
Each panel applies a rank-1 spike transform with strength $\beta$ to the keys (left), values (middle), or both (right), using $F=128$ facts.
In all three cases the theoretical bound tracks the empirically measured minimum margin ($R^2\geq0.95$), validating the margin decomposition of \Cref{eqn:margin-decomposition} and the geometric summary statistics of \Cref{tab:2x2}.
The arbitrary keys and values plot uses the raw summary-statistic form of \Cref{thm:combined-deterministic}, which is equivalent to the penalty-statistic general margin/capacity bound in \Cref{cor:optimal-fact-storage-capacity-arb-formal} of the main text.}
\label{fig:app:beta-sweeps}
\end{figure*}

%% file: appendix/experiments/section_5_transformer_integration.tex
\subsubsection{Attention-Noise Sweep}
\label{app:exp:section5:attn-noise}
The attention-noise experiment in Figure~\ref{fig:ssfr_main_fig}a uses the \emph{pretrained attention} Transformer training setup (Appendix~\ref{app:exp:transformer-setups}) and isolates the attention module during the attention-only pretraining phase.
At evaluation, for each stored key and junk context sampled, we measure the $\ell_2$ deviation between the attention output at the query position and the corresponding stored key embedding. We then aggregate these deviations to estimate the attention noise floor.

We vary junk length
\(
J\in\{2,4,8,16\}
\)
and couple the junk vocabulary size to the length ($V_J=J$).
We run our sweeps using model sizes
\[
(d_{\mathrm{model}},F)\in\{(64,512),(96,1152),(128,2048)\}.
\]
and average over four seeds.

\subsubsection{Noisy-Margin Diagnostic}
\label{app:exp:section5:noisy-margin}
The noisy-margin curve in Figure~\ref{fig:ssfr_main_fig}b
measures how the minimum margin of our bilinear construction changes under perturbed key queries.
We use isotropic keys and values with $d=64$, $F=256$, and hidden width $m=512$.
For each stored key, we perturb it via
\[
\widetilde{\mathbf{k}}_i=\mathbf{k}_i+\epsilon \mathbf{u}_i,
\qquad
\mathbf{u}_i\sim\mathrm{Unif}(S^{d-1}),
\]
and sweep $\epsilon$ logarithmically from $0.01$ to $2.0$ over $15$ values.
For each sweep point, we report the empirical noisy minimum margin and the predicted linear degradation term, averaged over three seeds.

\begin{figure}[h]
\centering
\includegraphics[width=0.60\linewidth]{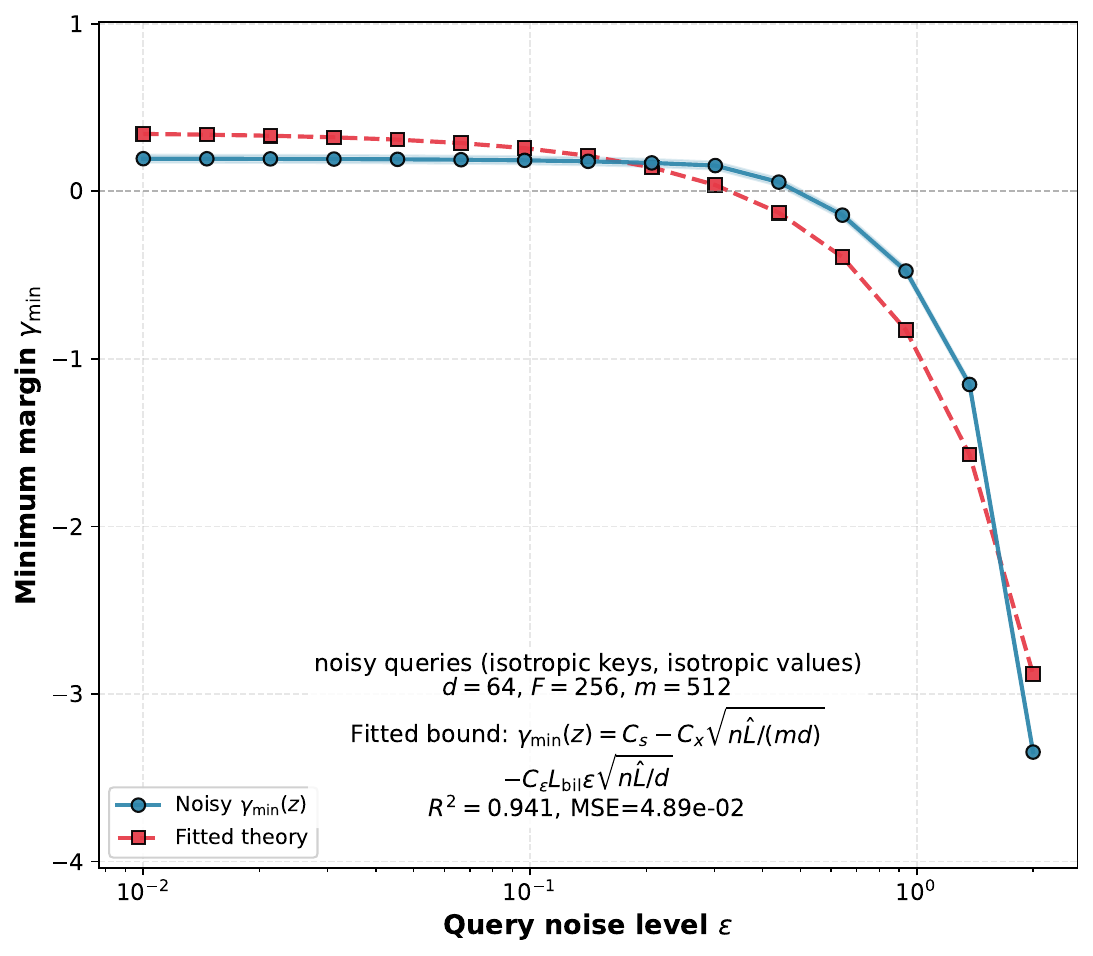}
\caption{\textbf{Margin degradation under noisy queries.}
Bilinear MLP minimum margin decreases linearly with the magnitude of key perturbations $\epsilon$, validating the noisy-margin degradation bound. The empirical margin (solid line) closely follows the predicted linear degradation term (dashed line) with $d=64$, $F=256$, $m=512$.}
\label{fig:app:noisy-margin-sweep}
\end{figure}

\subsubsection{Transformer Capacity Sweep}
\label{appendix:lm-scaling}

Our Transformer capacity plots use the SSFR task from Appendix~\ref{appendix:ssfr-task} and the \emph{inserted MLP} training setup from Appendix~\ref{app:exp:transformer-setups}. In our experiments, we compare GD, our unwhitened construction, our whitened construction, our data-dependent construction, and the NTK baseline. Unless otherwise stated, we use embedding dimension \(d=128\), junk length \(J=9\), junk vocabulary size \(V_J=9\), \(4{,}000\) training epochs, and one seed.

We evaluate Transformer capacity using two complementary success criteria:
\begin{itemize}
    \item \textbf{Training accuracy.} In this variant, we evaluate a Transformer's accuracy on SSFR using the MLP and the fact set it is trained with. This criterion assesses whether the inserted MLP is well-behaved enough that it can be used inside a Transformer at all. Crucially, this criterion is unable to distinguish between when a Transformer is learning to query its fact-storing MLP to perform factual recall and when it is simply using its attention parameters to memorize the fact set.
    
    \item \textbf{Evaluation (fact-adaptive) accuracy.} In this variant, we train a Transformer using an MLP $MLP_A$ storing a given fact set $A$, but during evaluation, we generate a new fact set $B$ and fact-storing MLP, $MLP_B$, that stores it. We then insert $MLP_B$ into the pretrained Transformer, with no additional training, and evaluate the Transformer's end-to-end accuracy on SSFR \emph {using fact set $B$}.
    This criterion assesses whether the Transformer is performing factual recall by learning to query its fact-storing MLP, while being prevented from introducing additional fact-set-dependent computation.
\end{itemize}

In practice, we find that we need to make two changes to the standard GPT-style architecture to ensure that Transformers trained on a given fact set will learn a fact-adaptive solution: (i) we disable the residual connection after the attention layer to prevent extra signal propagation, and (ii) we freeze the value and output projections to identity matrices to hinder memorization.

For the main-text Transformer-capacity plot in Figure~\ref{fig:main-figure}c, we define Transformer capacity as the smallest model that achieves 100\% fact-adaptive accuracy.
We sweep
\[
F\in\{2^6,2^7,\dots,2^{13}\}=\{64,128,256,512,1024,2048,4096,8192\}
\]
and binary-search over hidden width \(m\in[1,65536]\) with precision \(16\) for each method.

\subsubsection{Per-Key Margin Diagnostic}
\label{app:exp:section5:margin-violin}
The violin plot in \Cref{fig:margin-violin} visualizes the full distribution of per-key margins rather than only the minimum margin, as in the other margin sweeps.
For this experiment, we use the \emph{pretrained attention} setting from Appendix~\ref{app:exp:transformer-setups}.
We use $d=128$, $F=2048$, $J=V_J=9$, $16$ logarithmically spaced hidden widths in $[16,256]$, and four seeds.
For each width, we pool the per-key margins across seeds and compare their distribution against the corresponding combined Transformer accuracy and standalone MLP accuracy.

\begin{figure}[h]
\centering
\includegraphics[width=0.62\linewidth]{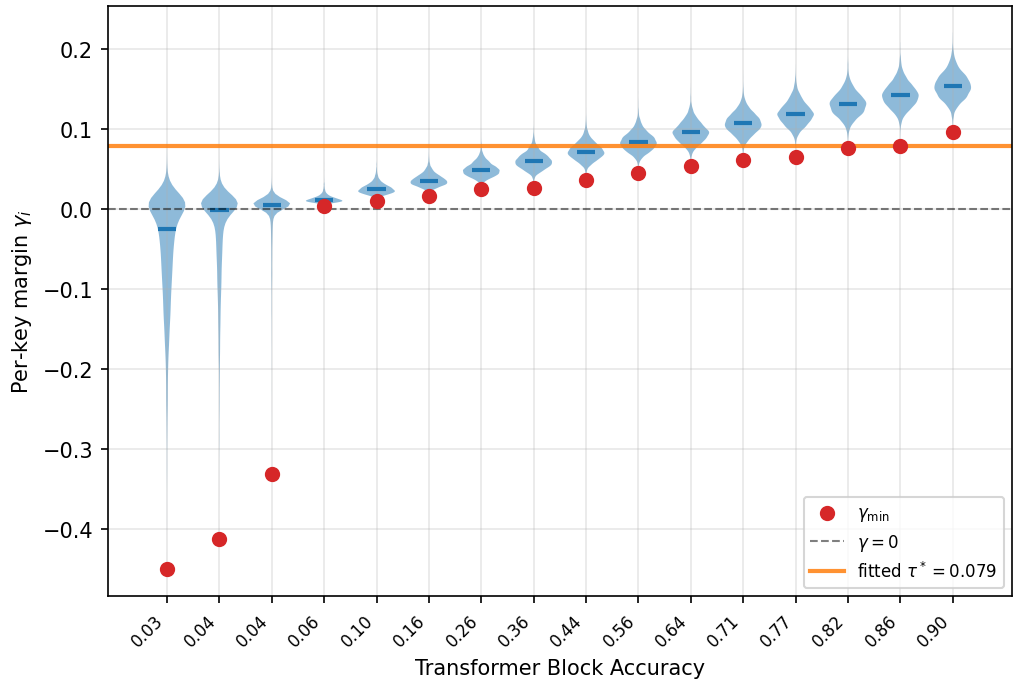}
\caption{\textbf{Per-key margin distributions is predictable of end-to-end Transformer accuracy.}
The usable-key fraction inferred from per-key margins is predictable of the combined
attention+MLP Transformer accuracy across the tested models.}
\label{fig:margin-violin}
\end{figure}

\subsubsection{Fact Editing}
\label{appendix:lm-task}
\label{appendix:fact-editing}

\paragraph{Language Modeling Task.}
We introduce a simple language modeling (LM) task to evaluate a Transformer's ability to perform next-token prediction while recalling factual information. In this task, the model is presented with a natural-language sentence expressing a \((\textit{book}, \textit{author})\) relation and is required to predict each subsequent token in the sequence. We curate this dataset using author-book relations from the Goodreads Book Graph Dataset \citep{authors_dataset}.

Formally, let \(f : S_k \to S_v\) be the authors \textit{fact set}, where
\(S_k = \{\text{``It''},\ \text{``1984''},\ \text{``And Then There Were None''},\ \ldots\}\) is the set of book titles (keys) and
\(S_v = \{\text{``Stephen King''},\ \text{``George Orwell''},\ \text{``Agatha Christie''},\ \ldots\}\) is the set of corresponding authors (values).
To simplify analysis, we select exactly one book per author.
Let \(J = \{(\text{``The author of''},\ \text{``is''}),\ (\text{``Who is the author of''},\ \text{``? It is''}),\ \ldots\}\) denote the set of natural-language template prefix--suffix pairs.
The LM task given \(f\) can then be defined as:
\[
    \mathcal{S}_{LM}[f] = \{\text{concat}(t_{\text{prefix}},\ k,\ t_{\text{suffix}}, f(k)) \ |\ (t_{\text{prefix}}, t_{\text{suffix}}) \in J,\ k \in S_k \}.
\]

For example, given the sequence:
\[
\underbrace{\text{The author of}}_{\text{template prefix}}\ 
\underbrace{1984}_{\text{key}}\ 
\underbrace{\text{is}}_{\text{template suffix}}\ 
\underbrace{\text{George Orwell}}_{\text{value}}
\]
from \(\mathcal{S}_{LM}[f]\), the model’s task is to perform next-token prediction \textit{at every position} in the sentence. This LM task allows us to study factual recall in a more natural language modeling setting, complementing the SSFR setup.

\paragraph{Model Parameterization.}
For this experiment, we use a one-layer Transformer architecture with one head and the following non-standard design choices.
We use this parameterization because it was the most
realistic and simplest Transformer variant in our sweep that was able to achieve near-perfect accuracy on the book--author task when trained with an inserted fact-storing MLP.

The model uses frozen tied input/output embeddings, no attention or
MLP residual connection, RoPE positional encoding, freezes the RMSNorm before the MLP, uses an RMSNorm before attention and the language-modeling head, and sets value and output projections of the attention layer frozen to identity matrices. The model hidden dimension is \(d_{\mathrm{model}}=256\).

The attention layer keeps standard causal softmax attention, but uses learned nonlinear query and key projections. If \(z_t\) is the attention-normalized residual stream, then, omitting biases,
\[
  q_t = W_{Q,2}\,\mathrm{GELU}(W_{Q,1}\,\mathrm{LN}(z_t)),
  \qquad
  k_t = W_{K,2}\,\mathrm{GELU}(W_{K,1}\,\mathrm{LN}(z_t)),
  \qquad
  v_t = z_t,
\]
and the attention output is
\[
  a_t=\sum_{s\le t}
  \mathrm{softmax}_{s}\!\left(\frac{q_t^\top k_s}{\sqrt{d_{\mathrm{model}}}}\right)
  v_s,
\]
with the output projection fixed to the identity.

The feedforward layer is a two-expert module. Let \(x_t\) denote the residual stream entering the feedforward block at token position \(t\), and let \(\tilde{x}_t\) be the normalized MLP input after the block's pre-MLP RMSNorm.
The feedforward output is
\[
  \mathrm{FFN}(x_t,\tilde{x}_t)
  =
  \alpha_t f_{\mathrm{fact}}(\tilde{x}_t)
  +
  (1-\alpha_t) f_{\mathrm{aux}}(x_t),
  \qquad
  \alpha_t=\sigma(g_2(\mathrm{GELU}(g_1(x_t)))).
\]
Here \(f_{\mathrm{fact}}\) is the inserted fact-storing MLP, held fixed during Transformer training, and \(f_{\mathrm{aux}}(x)=xUV\) is a trainable rank-\(8\) low-rank linear expert. The router is a two-layer scalar sigmoid MLP that forms a linear combination of the fact expert and auxiliary expert outputs. Intuitively, our goal is for the model to learn to use the fact-storing MLP to store the book--author relations and the auxiliary expert to learn the natural-language sentence formats.

Book titles and author names are added to the model tokenizer as atomic tokens before training. For the fact MLP, the key embedding representing each book is the normalized version of the corresponding atomic book token.

\paragraph{GD MLP Setup.}
In our fact-editing experiment, we use GD-trained fact-storing MLPs (see Appendix~\ref{app:exp:mlp-variant-definitions} for the bilinear gated architecture and optimizer settings), but we train the MLP with a MSE objective under arg-max decoding:
\[
L_{MLP}(\mathbf{K}, \mathbf{V}, f)
\propto
\sum_{i=1}^{|\mathbf{K}|}
\left\lVert
MLP(\mathbf{k}_i) - \mathbf{v}_{f(i)}
\right\rVert_2^2.
\]
The inserted fact expert is a gated MLP with hidden width \(h=512\),
trained by GD for up to \(10{,}000\) epochs with learning rate \(10^{-3}\),
minimum learning rate \(10^{-6}\), and early stopping once the objective falls
below \(10^{-7}\).
Crucially, we train this MLP with MSE rather than cross entropy because MSE
matches the full author-value embedding vectors, not only the nearest-token
classifier. We find that a parameter-matched cross-entropy GD MLP reaches \(100\%\) fact
classification accuracy when evaluated standalone, but when inserted into the Transformer the model reaches only about \(90\%\) accuracy.

\paragraph{Training Setup.}
We train on \(16{,}384\) book-author facts with \(16\) rephrases per fact. We
initialize embeddings with Kaiming-uniform initialization and train the
Transformer with AdamW using learning rate \(2\times10^{-4}\), weight decay
\(0.1\), batch size \(32\), \(18\) epochs, and \(8192\) optimizer steps per epoch.
The trained base reaches near-perfect standalone MLP accuracy \(0.9998\) and Transformer value-token accuracy \(0.9984\).

\paragraph{Evaluation.}
We divide the \(16{,}384\) stored facts into a \emph{preserved} set whose answers should remain unchanged and an \emph{altered} set whose answers are replaced by a fresh permutation of author values.
We edit \(328\), \(819\), or \(1638\) facts, corresponding to \(2\%\),
\(5\%\), and \(10\%\) of the fact set. For altered facts, we evaluate the
new answer on both the edited training template and held-out rephrases; for
preserved facts, we evaluate the original answer.

We report four fact-editing quantities.
\emph{Efficacy} is author-token accuracy on altered facts under the edited labels.
\emph{Paraphrase} is accuracy on held-out rephrasings of the altered facts under the edited labels.
\emph{Specificity} is accuracy on preserved facts under the original labels.
\emph{Score} is the harmonic mean of efficacy, paraphrase, and specificity.
We additionally report the \emph{non-fact PPL ratio}:
\[
  r_{\mathrm{NF}}
  =
  \frac{\mathrm{PPL}_{\mathrm{post,NF}}}{\mathrm{PPL}_{\mathrm{pre,NF}}}
  =
  \exp\!\left(
    \mathrm{CE}_{\mathrm{post,NF}}-\mathrm{CE}_{\mathrm{pre,NF}}
  \right),
\]
where the CE is averaged over all next-token positions except the author-value tokens, across preserved and altered prompts before and after editing.
For example, \(r_{\mathrm{NF}}=1\) means the edit leaves non-fact-token
language-modeling loss unchanged.

\paragraph{Baselines.}
We compare four editing methods:
\begin{itemize}
    \item Our method, \textsc{MLP Swapping}, constructs a replacement GD fact expert for the complete post-edit fact set and swaps that expert into the frozen Transformer, with no update to the surrounding Transformer.
    \item MEMIT~\citep{memit} applies a multi-edit linear residual update to the fact expert's output projection, using altered-fact keys and a regularized solve.
    \item ROME~\citep{rome} applies a rank-one residual update for each edit, adapted to our one-layer setting and applied at the final prompt token.
    \item AlphaEdit~\citep{fang2025alphaeditnullspaceconstrainedknowledge} first estimates a preserve-key subspace and then projects the residual edit into the approximate nullspace of that preserve subspace before updating the fact expert.
\end{itemize}
For the weight-update editors, residuals are computed from a single templated prompt per fact and applied to the inserted MLP output immediately upstream of the logits. We omit random prefix contexts and the ROME KL term because this synthetic dataset has a single relation and a unique author per book.

For each weight-update baseline and edit fraction, we sweep method-specific
hyperparameters and report the score-best configuration. AlphaEdit uses the best targeted grid setting \(\texttt{train\_steps}=100\), \(\texttt{lr}=0.05\), \(\texttt{clip\_norm}=\text{None}\), and \(\texttt{singular\_value\_tolerance}=10\) at all three edit fractions.
MEMIT uses \(\texttt{lr}=0.05\), \(\lambda=150\), and \(\texttt{clip\_norm}=0.75\), with \(100\) steps at \(2\%\) and \(5\%\) edits and \(25\) steps at \(10\%\) edits.
ROME uses \(\texttt{lr}=0.05\), \(\texttt{wd}=1.5\times10^{-3}\), and \(\texttt{early\_stopping\_loss}=5\times10^{-2}\), with \(10\) steps at \(2\%\) edits and \(100\) steps at \(5\%\) and \(10\%\) edits.

\begin{table}[h]
\centering
\scriptsize
\setlength{\tabcolsep}{5pt}
\begin{tabular}{lccccc}
\toprule
Method & Efficacy & Paraphrase & Specificity & Score & \(r_{\mathrm{NF}}\) \\
\midrule
\multicolumn{6}{l}{\(2.00\%\) edited facts} \\
\quad \textsc{MLP Swapping} & \textbf{1.000} & \textbf{1.000} & \textbf{0.999} & \textbf{1.000} & \emph{1.016} \\
\quad AlphaEdit & \emph{0.896} & \emph{0.899} & 0.981 & \emph{0.924} & \textbf{1.000} \\
\quad MEMIT & 0.643 & 0.645 & \emph{0.990} & 0.729 & \textbf{1.000} \\
\quad ROME & \textbf{1.000} & \textbf{1.000} & 0.009 & 0.026 & 1.238 \\
\addlinespace
\multicolumn{6}{l}{\(5.00\%\) edited facts} \\
\quad \textsc{MLP Swapping} & \textbf{0.998} & \textbf{0.998} & \textbf{0.998} & \textbf{0.998} & \emph{1.057} \\
\quad AlphaEdit & 0.747 & 0.745 & 0.909 & \emph{0.794} & \textbf{1.001} \\
\quad MEMIT & 0.128 & 0.126 & \emph{0.986} & 0.179 & \textbf{1.001} \\
\quad ROME & \emph{0.991} & \emph{0.991} & 0.002 & 0.006 & 1.642 \\
\addlinespace
\multicolumn{6}{l}{\(10.00\%\) edited facts} \\
\quad \textsc{MLP Swapping} & \textbf{0.998} & \textbf{0.999} & \textbf{0.999} & \textbf{0.999} & 1.021 \\
\quad AlphaEdit & 0.482 & 0.482 & 0.766 & \emph{0.550} & \emph{1.010} \\
\quad MEMIT & 0.004 & 0.003 & \emph{0.995} & 0.005 & \textbf{1.002} \\
\quad ROME & \emph{0.834} & \emph{0.832} & 0.001 & 0.003 & 2.405 \\
\bottomrule
\end{tabular}
\caption{Fact-editing metrics for baselines and \textsc{MLP Swapping} on the GD MLP base model.
\(r_{\mathrm{NF}}\) is the non-fact-token perplexity ratio.
Bold marks the best value and italics mark the second-best value within each edit fraction and metric; lower is better
for \(r_{\mathrm{NF}}\), and higher is better for all other metrics.
Ties at the displayed precision share the same marking.}
\label{tab:fact-editing-updated-h512}
\end{table}

\begin{figure}[h]
\centering
\includegraphics[width=0.68\linewidth]{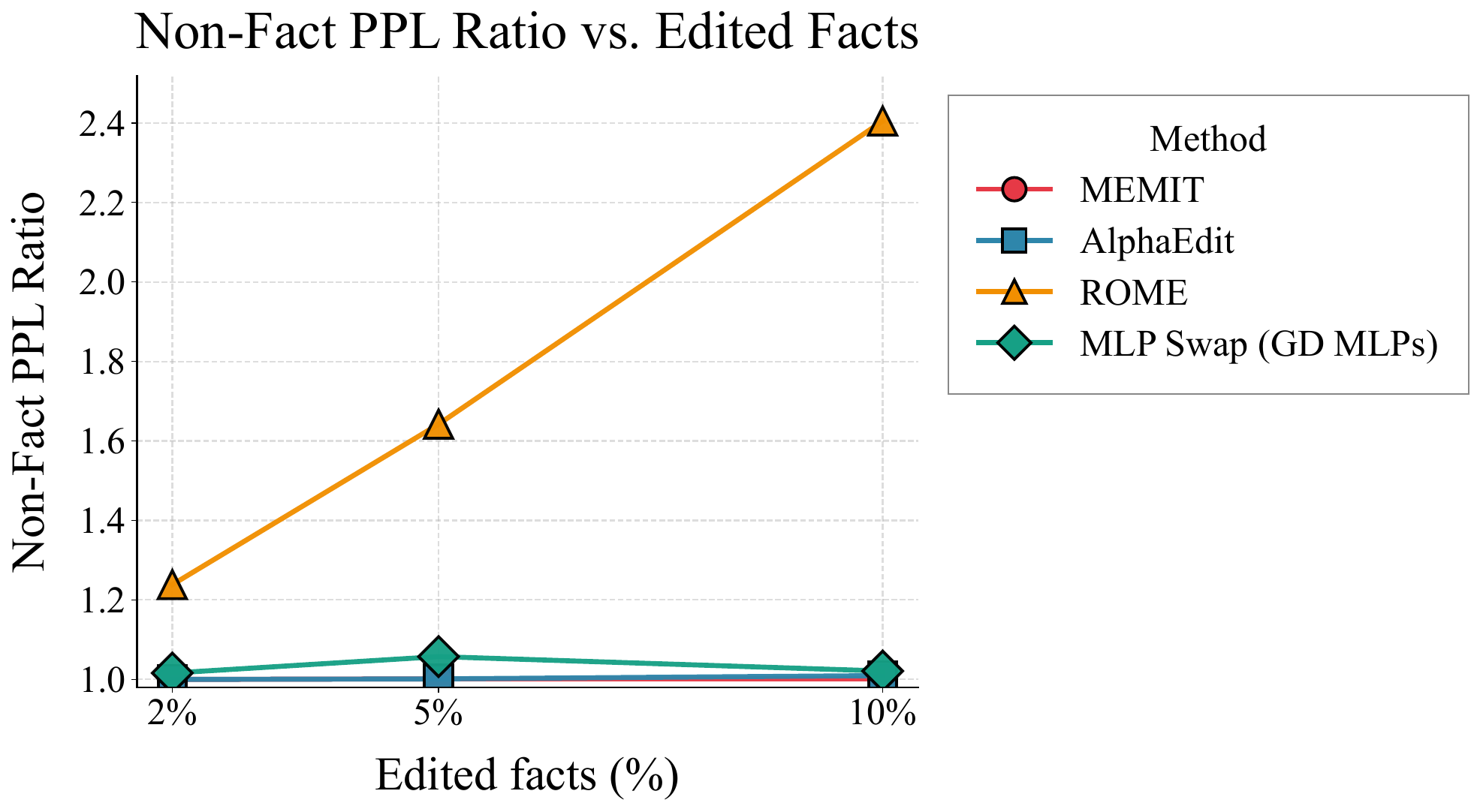}
\caption{Non-fact-token perplexity ratio for the
fact-editing setup of Figure~\ref{fig:ssfr_main_fig}c. AlphaEdit and MEMIT have
nearly unchanged non-fact loss but weaker edit scores at larger edit fractions;
ROME edits target prompts while severely damaging non-fact loss and specificity.
\textsc{MLP Swapping} keeps near-perfect edit scores with a small non-fact PPL
increase, at most \(1.06\times\).}
\label{fig:fact-editing-nonfact-ppl-ratio}
\end{figure}

\paragraph{Fact editing with constructed MLPs.}
We also repeat the fact-editing experiment from Figure~\ref{fig:ssfr_main_fig}c using the \emph{data-dependent Hebbian MLP construction}. Because the data-dependent construction has slightly worse storage capacity scaling than GD MLPs, we find we need to increase the hidden dimension to \(h=1024\) to achieve near-perfect MLP and Transformer accuracy. Other than this change, our experimental setup is the same as described above for GD MLPs.
We retune the method-specific hyperparameters for each of the fact-editing baselines.
We show that \textsc{MLP Swapping} remains effective even with constructed Hebbian MLPs in Figure~\ref{fig:fact-editing-constructed-data-dependent}.
Swapping out a \(h=1024\) data-dependent constructed MLP keeps the edit score above \(0.98\) through up to \(10\%\) edited facts, while the strongest tuned local-editing baseline reaches only \(0.847\) at \(10\%\). The \textsc{MLP Swapping} non-fact PPL ratio remains below \(1.11\times\).

\begin{figure}[h]
\centering
\includegraphics[width=0.68\linewidth]{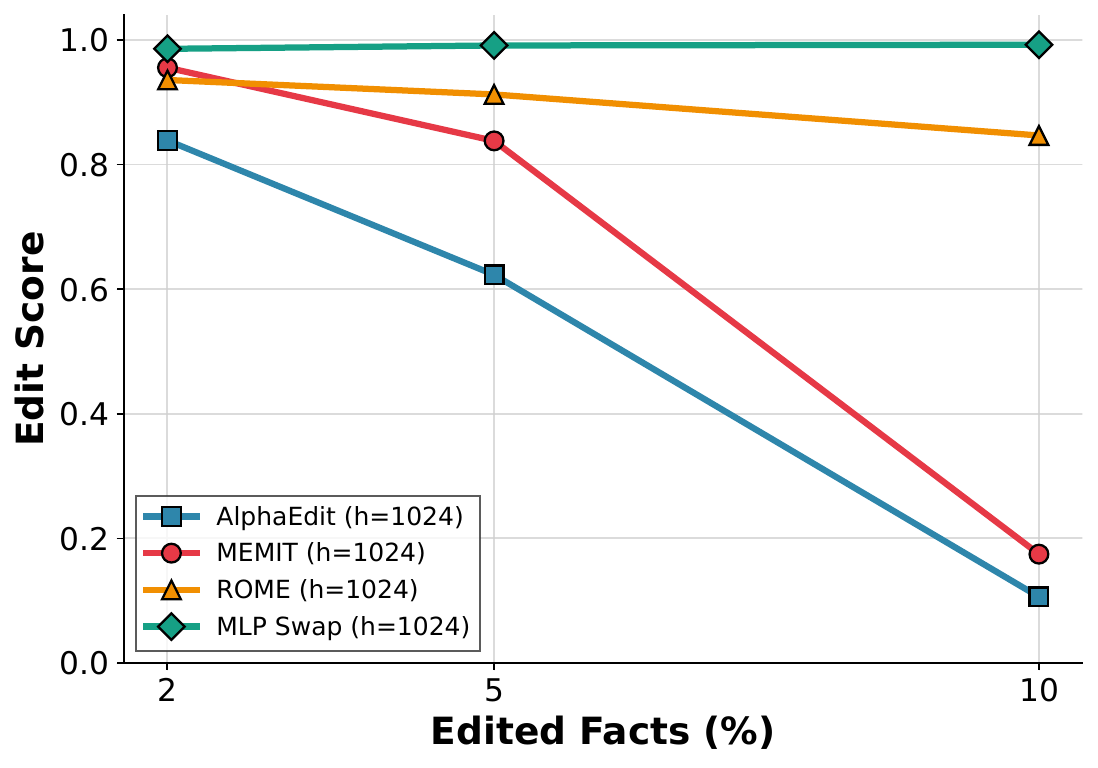}\\[0.5em]
\includegraphics[width=0.68\linewidth]{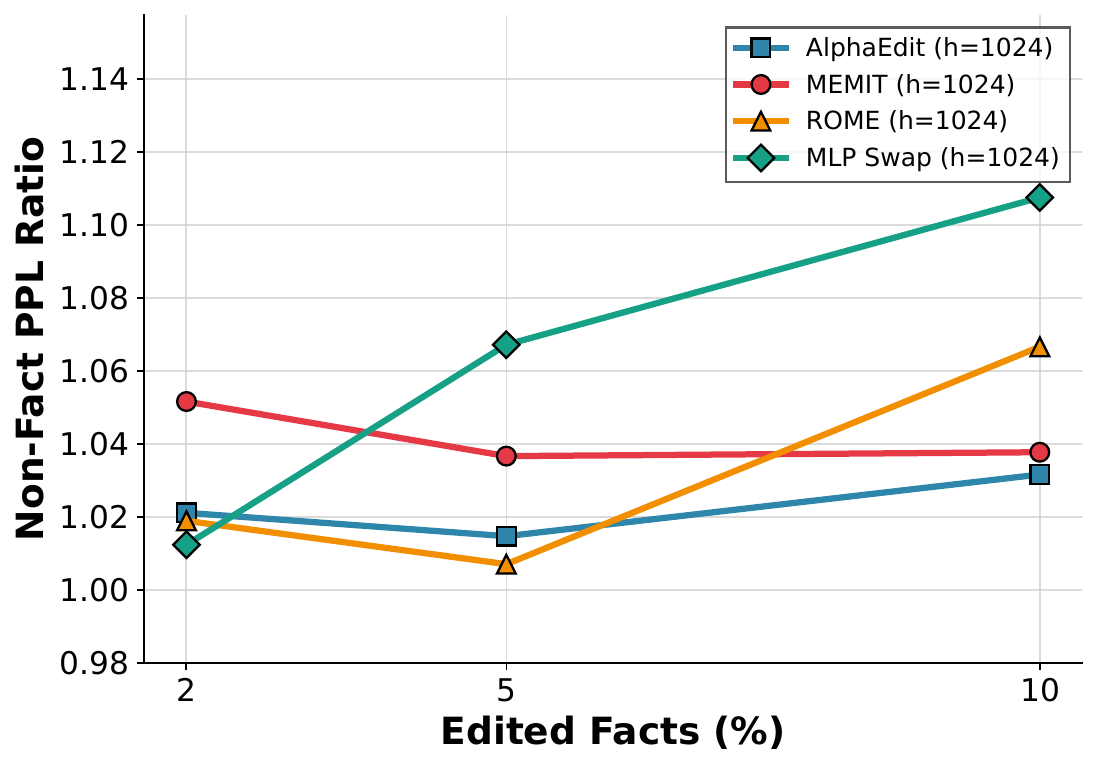}
\caption{Fact-editing score (top) and non-fact-token perplexity ratio (bottom) for the \(h=1024\) data-dependent constructed-MLP setting. \textsc{MLP Swapping} achieves an edit score above \(0.98\) through up to \(10\%\) edited facts---13 percentage points higher than the next highest baseline---all while non-fact PPL ratio remains below \(1.11\times\).}
\label{fig:fact-editing-constructed-data-dependent}
\end{figure}

%% file: appendix/experiments/additional_empirical_results.tex
\subsubsection{Training-Accuracy Transformer Capacity}
\label{app:exp:additional:train99-capacity}

In~\Cref{fig:appendix:transformer-capacity-train99}, we rerun the main-text Transformer capacity experiment from Figure~\ref{fig:main-figure}c, but with a 99\% \emph{training accuracy} criterion instead of fact-adaptive accuracy (as defined in \Cref{appendix:lm-scaling}).
Unlike the evaluation accuracy plot, we fix the hidden dimension \(m\in\{44,88,176,352,704,1408\}\) and binary-search for the maximum number of facts \(F \in [1, 65536] \) that the Transformer is able to store.

\begin{figure}[t]
\centering
\includegraphics[width=0.78\linewidth]{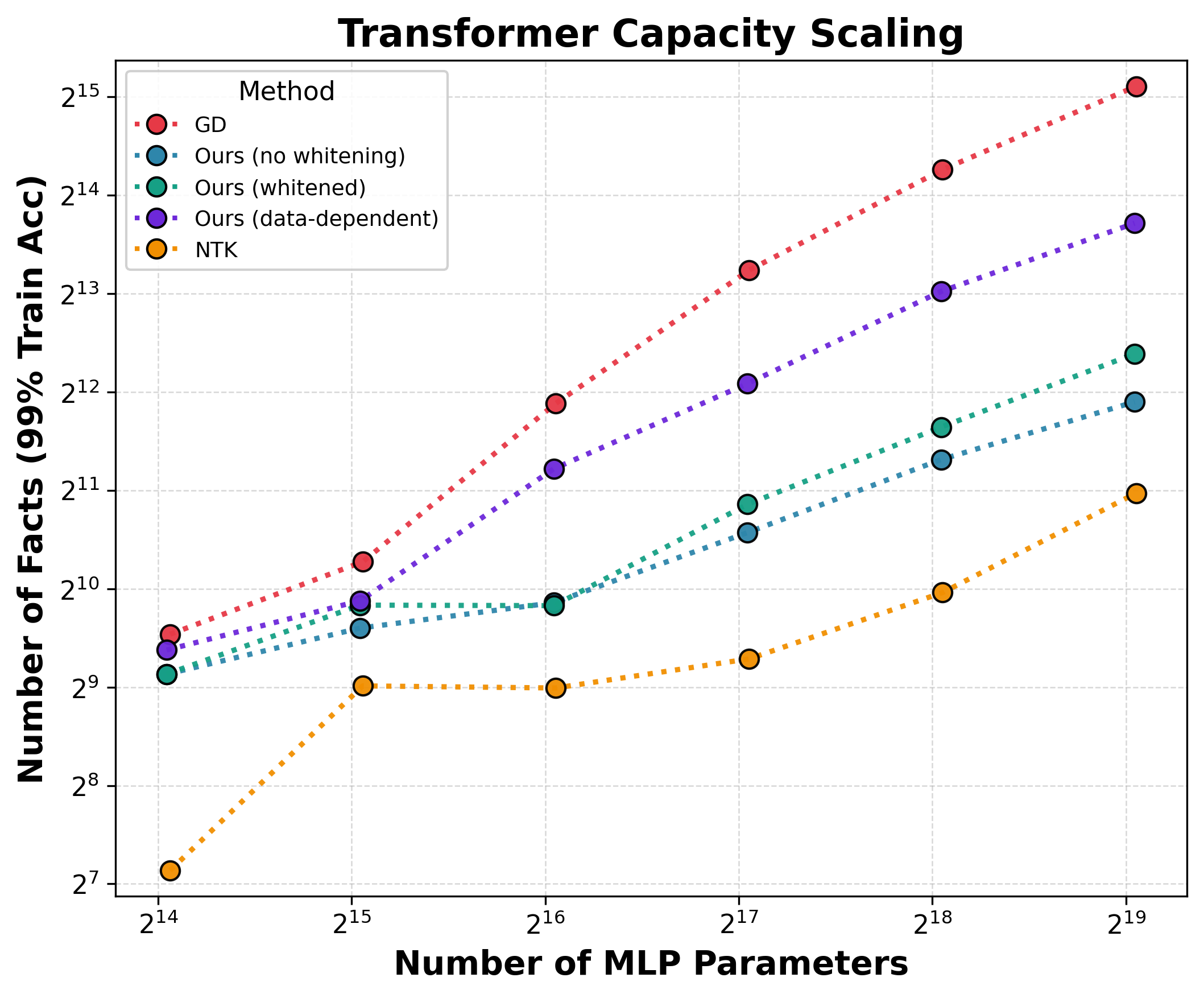}
\caption{\textbf{Transformer fact-storage capacity under a 99\% training accuracy criterion.}
Transformer train-accuracy capacity is $2$-$8\times$ higher than fact-adaptive capacity for our constructions and fixes the asymptotics for NTK -- this suggests that attention parameters are learning to participate in storing the fact set.}
\label{fig:appendix:transformer-capacity-train99}
\end{figure}

Relative to the fact-adaptive frontier in Figure~\ref{fig:main-figure}c, the train-side plot stores roughly \(2\)–\(8\times\) more facts at a fixed parameter budget for GD MLPs and for all of our construction variants. The gaps close the most for the weakest constructions, especially NTK and our unwhitened method; in particular, the suboptimal fixed-\(d\) asymptotics of the NTK construction are avoided under the train-accuracy criterion.
This suggests that without the stricter fact-adaptive criterion, the surrounding attention layer uses its parameters to help store the fact set, rather than solely relying upon the inserted fact-storing MLP.

\subsubsection{Anisotropic MLP Capacity}
\label{app:exp:additional:anisotropic-mlp-capacity}

\Cref{fig:appendix:anisotropic-mlp-capacity} uses the same MLP capacity protocol as Appendix~\ref{app:exp:section4:capacity}, but evaluates the different MLP families on \emph{anisotropic} keys and values.
We apply the rank-1 spike model from Appendix~\ref{app:exp:section4:anisotropic-keys-values} with \(\beta=1.5\).
We then estimate the resulting fact-storage frontier for \(d\in\{64,90,128\}\).

\begin{figure}[t]
\centering
\includegraphics[width=0.86\linewidth]{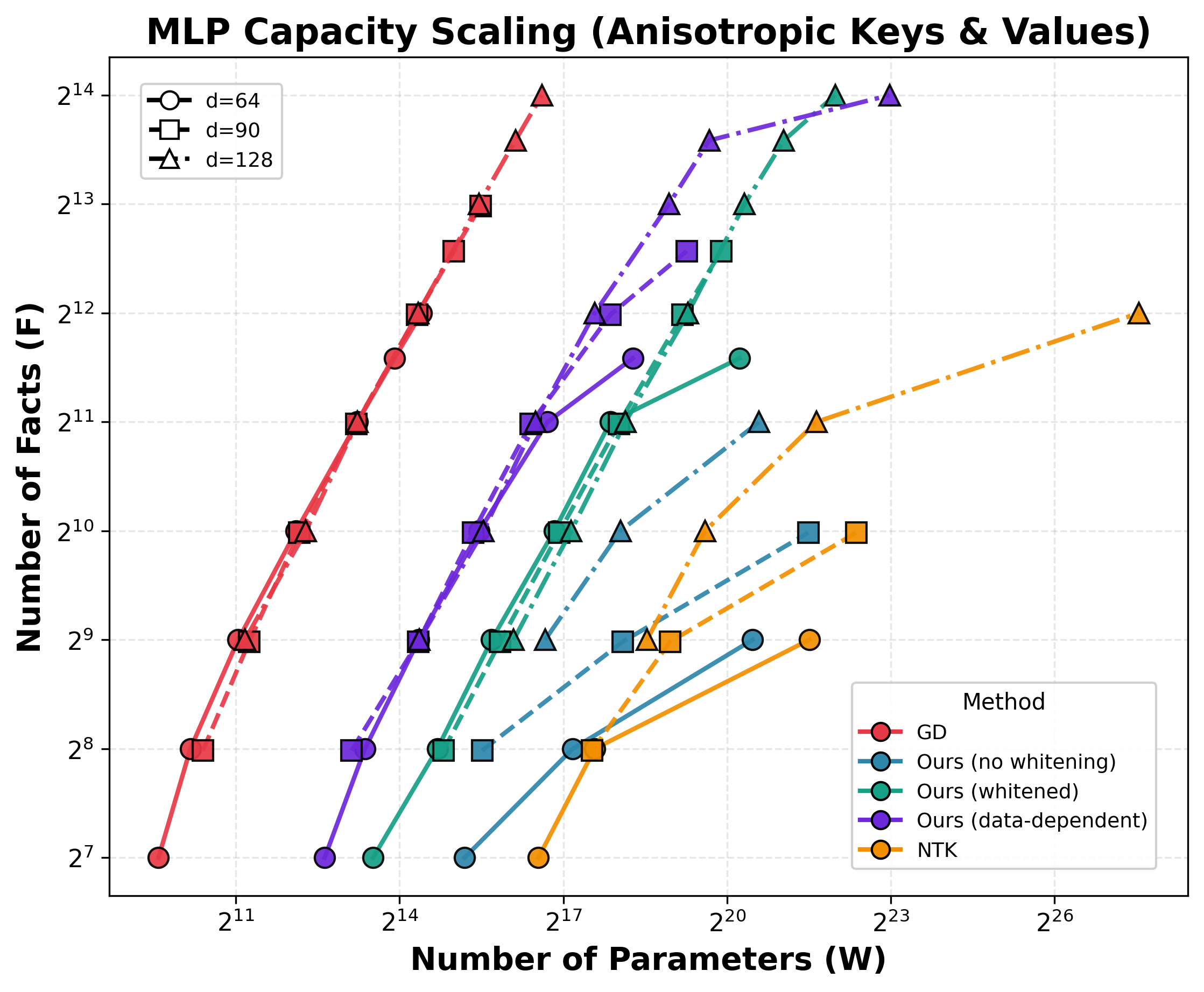}
\caption{\textbf{MLP fact-storage capacity under anisotropic keys and values.}
Our data-dependent construction achieves asymptotically optimal fact-storage capacity (only $4$-$8\times$ worse than GD), while NTK fails to achieve the desired \(W=\Theta(F\log F)\) scaling under anisotropy.}
\label{fig:appendix:anisotropic-mlp-capacity}
\end{figure}

Although anisotropic embeddings degrade the attainable fact-storage capacity for all methods relative to the isotropic setting (Figure~\ref{fig:sketched_k2_sweeps}c), our data-dependent construction remains only \(4\)–\(8\times\) worse than GD, as in the isotropic case.
On the other hand, the NTK and unwhitened constructions struggle to store facts at the \(W=\Theta(F\log F)\) scaling once both keys and values are anisotropic. We find that whitening improves fact-storage capacity substantially more in the anisotropic case.

\subsubsection{Margin and Capacity Scaling Under LLM Embeddings}
\label{app:exp:additional:llm-embeddings}

In this section, we explore the margin scaling and MLP and Transformer Block capacity scaling when using our construction on embeddings sampled from an intermediate LM layer. Concretely, we replace the synthetic spherical key/value embeddings used in Sections~\ref{sec:4}~and~\ref{sec:5} with paired \((\mathbf{x}, \mathbf{y})\) embeddings captured from a real language model. The intent is to check that the margin bounds and the predicted storage-capacity scaling remain meaningful when the keys and values come from those that an intermediate layer of an LM presents to its MLP blocks.

\paragraph{LM embeddings.} We stream the train split of WikiText through Qwen3-0.6B-Base and, at the middle decoder block (layer 14 of 28, the ``mid layer''), record per-token MLP inputs \(\mathbf{x}\) (post-RMSNorm hidden state) and outputs \(\mathbf{y}\) (pre-residual), keeping \(N=500{,}000\) pairs. A factset of size \(F\) is then built by uniformly sampling \(F\) of these \((\mathbf{x}_i, \mathbf{y}_i)\) pairs, taking \(\mathbf{x}_i\) as the input and \(\mathbf{y}_i\) as the output embedding under the identity mapping.

\paragraph{Margin scaling.} \Cref{fig:appendix:llm-margin} shows the Section~\ref{sec:4} margin scaling sweep on the captured Qwen3 mid-layer embeddings. We construct our bilinear random-feature Hebbian variant with \(d=M=1024\), sweeping \(F\) from \(4\) to \(128\) over five seeds per point. In both the arbitrary-keys/values and the isotropic regimes, the fitted bound tracks the empirical minimum margin \(\gamma_{\min}\) closely (\(R^2\geq0.95\)), suggesting that our margin scaling laws continue to hold on real, anisotropic LM embeddings.

\begin{figure}[t]
\centering
\includegraphics[width=0.48\linewidth]{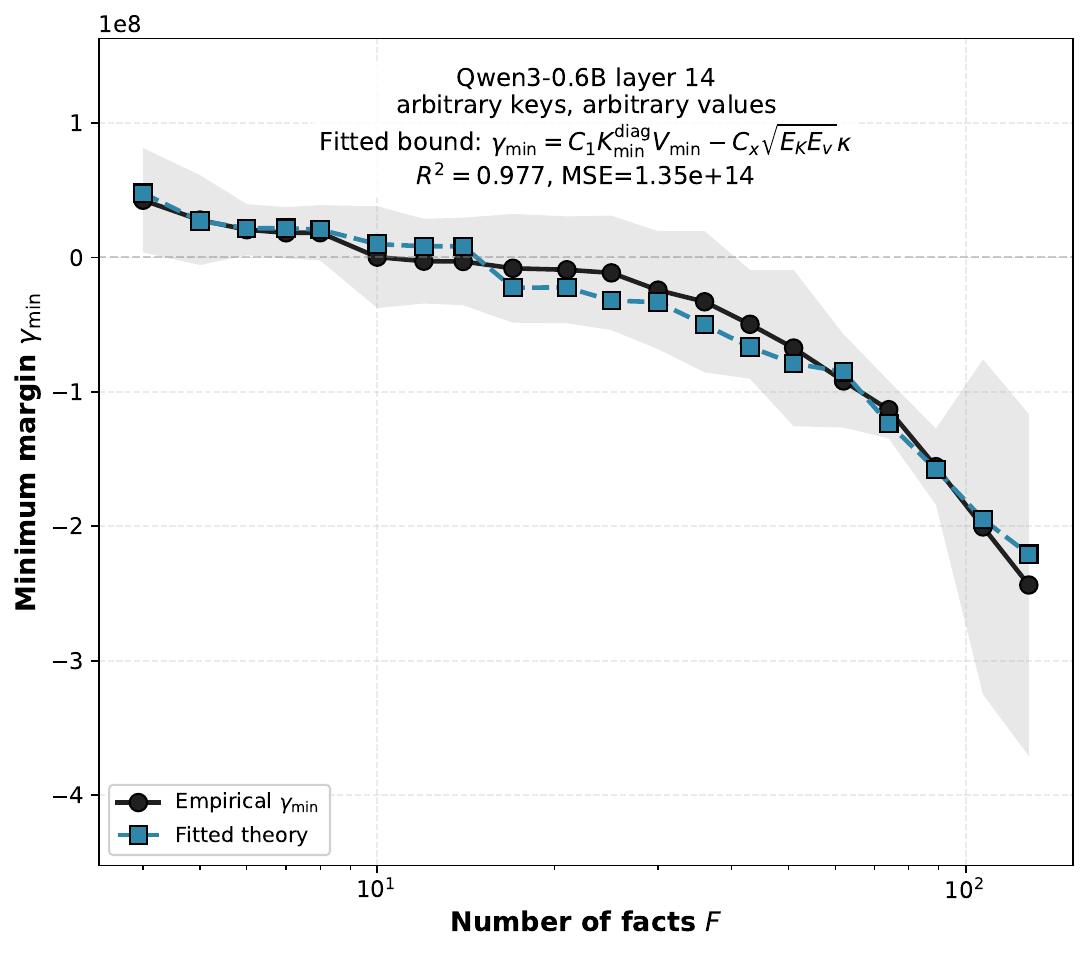}\hfill
\includegraphics[width=0.48\linewidth]{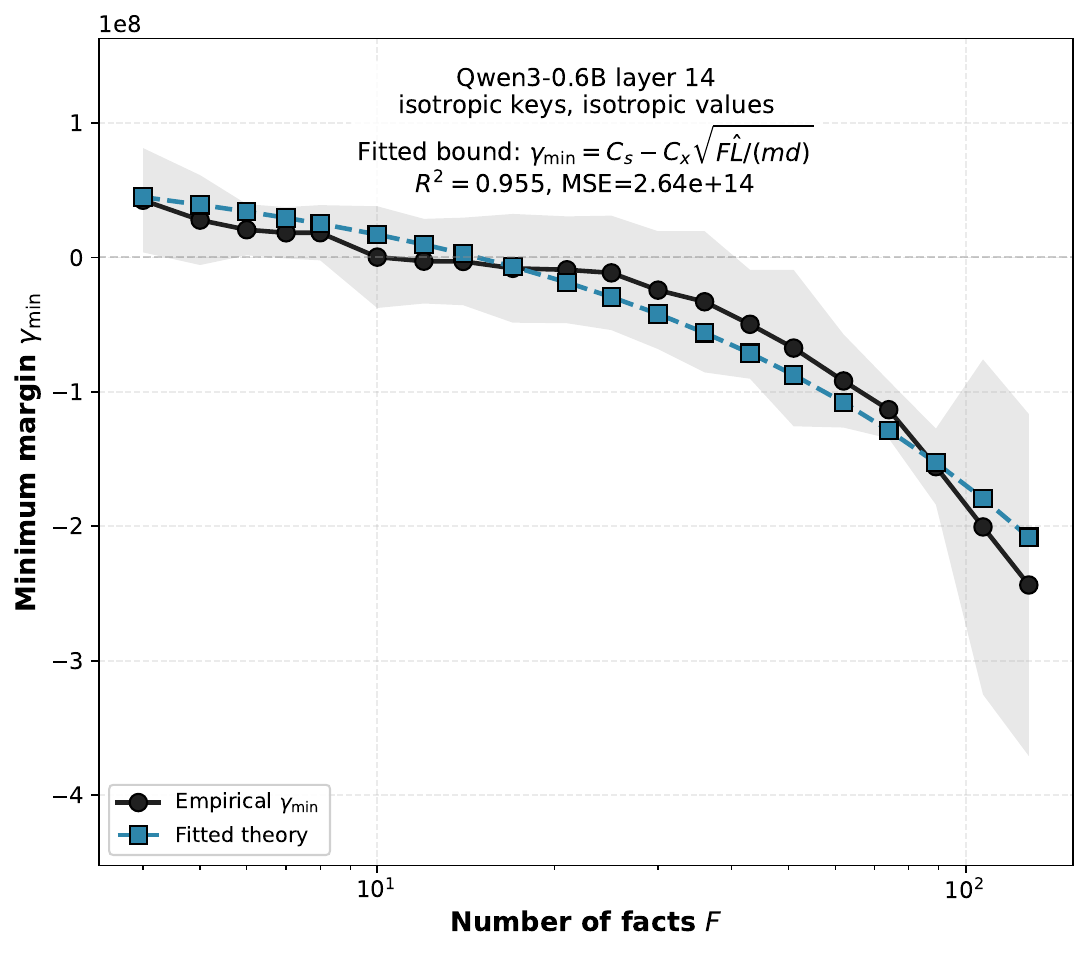}
\caption{\textbf{Bilinear-RF margin scaling on Qwen3-0.6B mid-layer embeddings.}
Left: arbitrary keys and values, where the measured margin tracks the deterministic bound through the key/value-geometry terms $K_{\min}^{\mathrm{diag}}$, $E_K$, $E_v$ and the composite cross-talk scale $\sqrt{E_K}\sqrt{E_v}\,\kappa$. Right: the isotropic-key / isotropic-value bound, where the margin follows the single cross-talk term $\sqrt{FL/(md)}$. In both regimes the bound tracks the empirically measured minimum margin $\gamma_{\min}$ ($R^2\geq0.95$), suggesting that our margin scaling laws hold for LM embeddings.}
\label{fig:appendix:llm-margin}
\end{figure}

\paragraph{MLP storage capacity.} \Cref{fig:appendix:llm-mlp-capacity} reproduces the standalone-MLP fact-storage capacity scaling of Section~\ref{sec:4}, but with the random key/value embeddings replaced by the captured Qwen3 mid-layer embeddings. Following the protocol in Appendix~\ref{app:exp:section4:capacity}, we binary-search over MLP hidden width \(W\) for the minimum width that achieves a \(98\%\) per-fact recall threshold using our whitened bilinear-RF Hebbian construction for \(F \in \{2^9, 2^{10}, \dots, 2^{14}\}\). Notably, the facts \(F\) vs.\ parameters \(W\) scaling follows the predicted \(W \approx \Theta(F\log F)\) capacity scaling on LM embeddings.

\begin{figure}[t]
\centering
\includegraphics[width=0.62\linewidth]{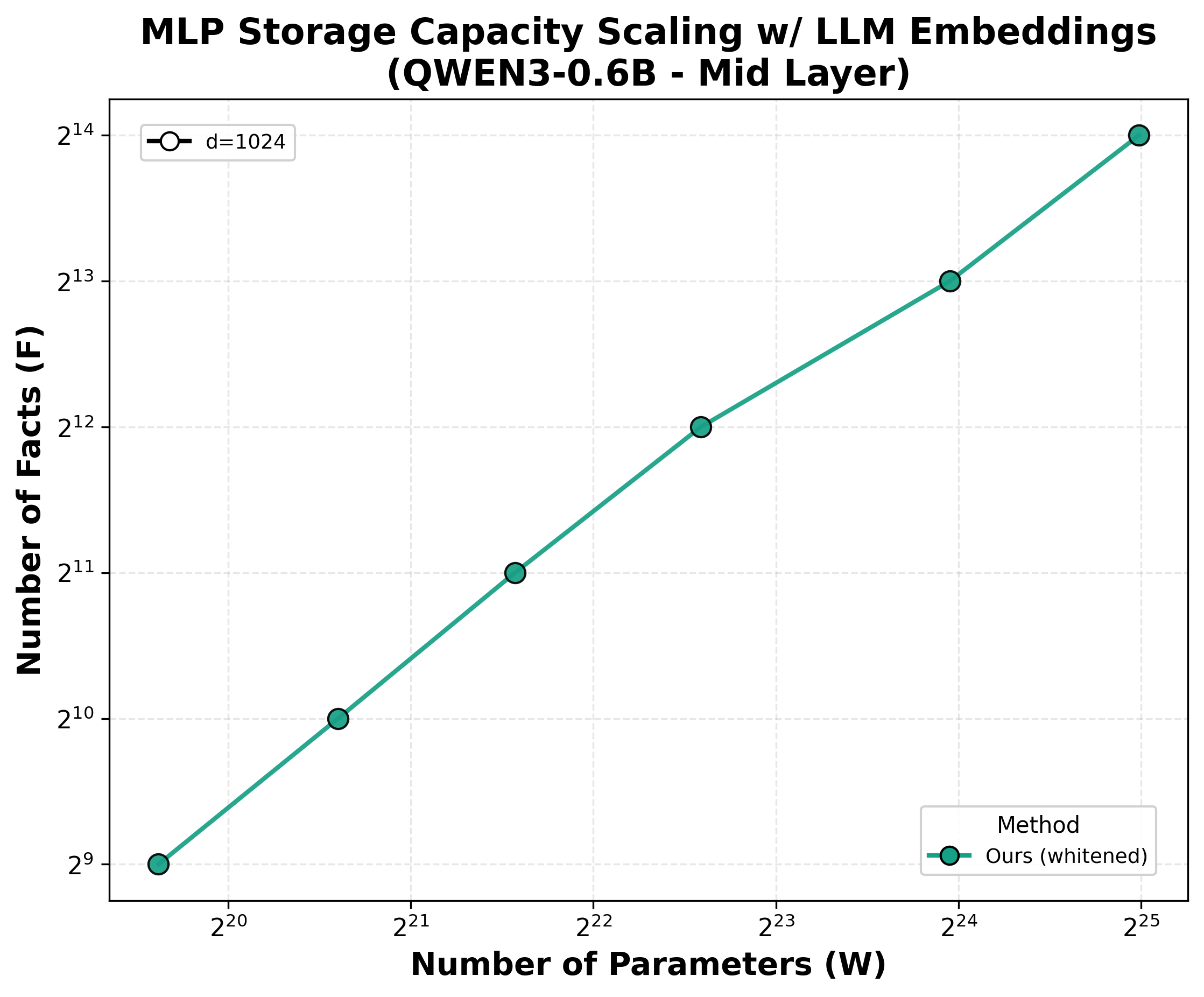}
\caption{\textbf{Standalone-MLP fact-storage capacity on Qwen3-0.6B mid-layer embeddings.}
Our whitened bilinear-RF construction achieves the expected \(W \approx \Theta(F\log F)\) capacity scaling on LM embeddings.}
\label{fig:appendix:llm-mlp-capacity}
\end{figure}

\paragraph{Transformer block storage capacity.} For the Transformer block (\Cref{fig:appendix:llm-transformer-capacity}), we use the same insert-then-train-attention hidden-width capacity sweep as the main-text Figure~\ref{fig:main-figure}c: a frozen whitened-construction MLP is inserted into a single Transformer block whose input and output embeddings are the captured Qwen3 \(\mathbf{x}\) and \(\mathbf{y}\), and we binary-search for the smallest hidden width reaching the success threshold. Here, we require \(100\%\) SSFR \emph{training} accuracy on the trained fact set rather than fact-adaptive evaluation accuracy. Moreover, we keep the post-attention residual (rather than disabling it), leave the value/output projections trainable (rather than freezing them to identity), and evaluate on the same inserted MLP (rather than swapping in an eval-MLP for a held-out fact set). Notably, as in the MLP case, the facts \(F\) vs.\ parameters \(W\) scaling follows the predicted \(W \approx \Theta(F\log F)\) capacity scaling on LM embeddings.

\begin{figure}[t]
\centering
\includegraphics[width=0.62\linewidth]{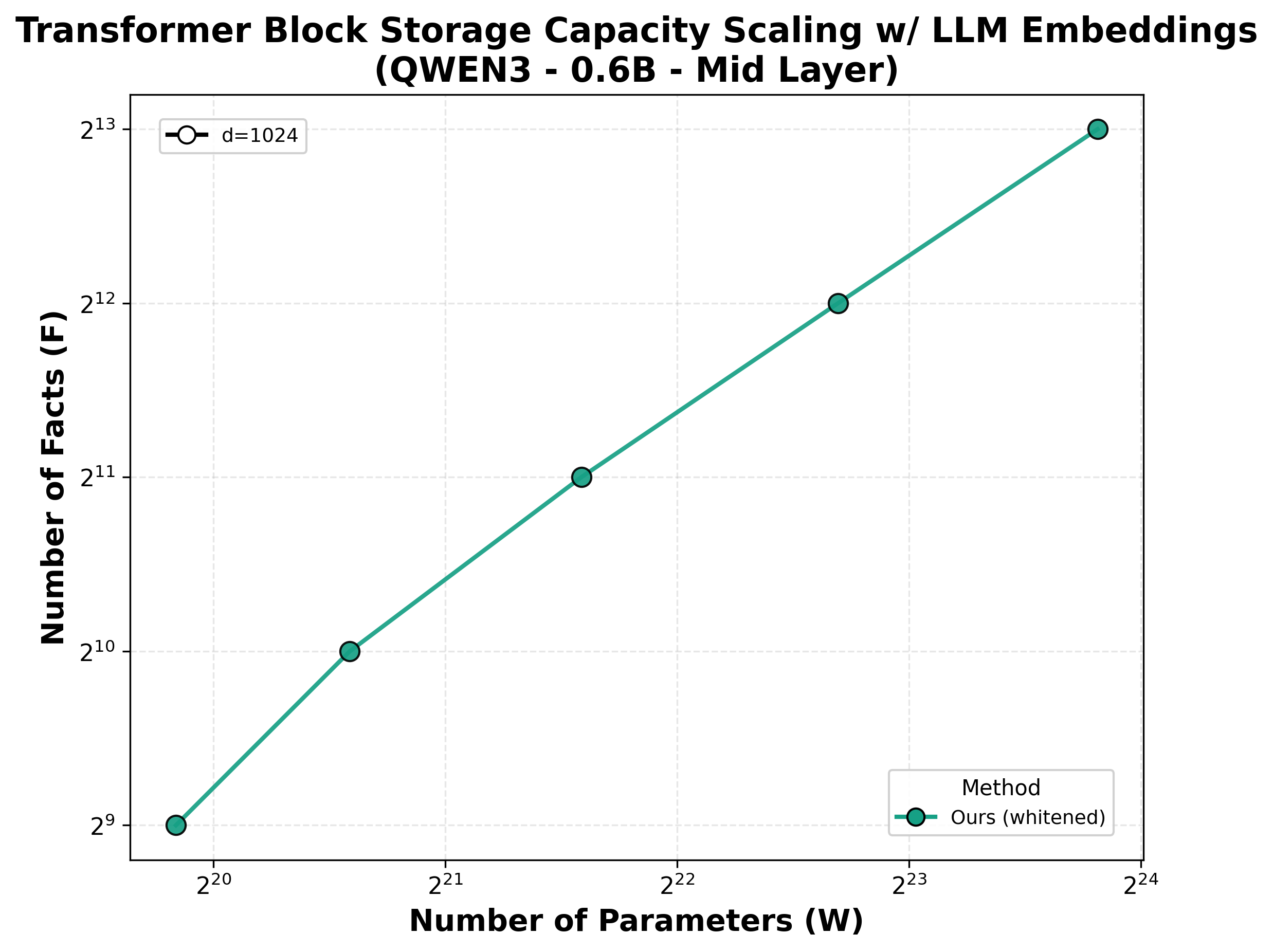}
\caption{\textbf{Transformer-block fact-storage capacity on Qwen3-0.6B mid-layer embeddings (training-accuracy criterion).}
Our whitened bilinear-RF construction, inserted into a full Transformer block, retains the expected \(W \approx \Theta(F\log F)\) scaling on LM embeddings.}
\label{fig:appendix:llm-transformer-capacity}
\end{figure}

%% file: appendix/theory.tex
\subsection{Information-Theoretic Lower Bound on Fact Storage}
\label{sec:info_theory_bound}
\label{app:theory:preliminaries}
\input{appendix/theory/00_fact_storage_lower_bound}

\subsection{Hebbian MLP Construction}
\label{app:theory:hebbian_mlps}
\input{appendix/theory/02_hebbian_mlps}

\subsection{NTK Baseline}
\label{app:theory:hebbian_mlps:ntk_construction}
\input{appendix/theory/02_hebbian_mlps/04_ntk_construction}

\input{appendix/theory/03_margin_bounds_full}

\newpage

\subsection{Hebbian MLPs within Transformers}
\label{app:theory:transformers_perturbation}
\input{appendix/theory/04_transformers_perturbation}

%% file: appendix/theory/00_fact_storage_lower_bound.tex
We prove the counting lower bound used in
\Cref{thm: info_bounds_capacity-const}.

\begin{proof}[Proof of \Cref{thm: info_bounds_capacity-const}]
Let $b$ be the constant number of bits used to store each trainable parameter.
A model with $W$ trainable parameters has at most $2^{bW}$ distinct parameter
settings, and therefore can realize at most $2^{bW}$ distinct input-output
behaviors on the fixed key set $\mathbf{K}$.

On the other hand, the number of possible fact sets on $\mathbf{K}$ and
$\mathbf{V}$ is
\[
    \left|\{f : [|\mathbf{K}|] \to [|\mathbf{V}|]\}\right|
    =
    |\mathbf{V}|^{|\mathbf{K}|},
\]
since each key can be assigned any value independently.

If a model class $\mathbf{g}$ stores every fact set using $W$ parameters, then
the number of realizable behaviors must be at least the number of fact sets:
\[
    2^{bW}
    \ge
    |\mathbf{V}|^{|\mathbf{K}|}.
\]
Taking logarithms gives
\[
    bW
    \ge
    |\mathbf{K}|\log_{2}|\mathbf{V}|.
\]
Because $b$ is constant,
\[
    W
    \ge
    \frac{|\mathbf{K}|\log_{2}|\mathbf{V}|}{b}
    =
    \Omega\!\left(|\mathbf{K}|\log |\mathbf{V}|\right).
\]
\end{proof}

%% file: appendix/theory/02_hebbian_mlps.tex
\subsubsection{MLPs Are Hebbians}
\label{app:theory:hebbian_mlps:setup}
\input{appendix/theory/02_hebbian_mlps/01_setup}

\subsubsection{Bilinear MLP Featurization Induces the $K_{2}$ Kernel}
\label{app:theory:hebbian_mlps:mlp_kernels}
\input{appendix/theory/02_hebbian_mlps/02_mlp_kernels}

\subsubsection{Sketched-$K_{2}$ Construction}
\label{app:hebbian-mlps-construction}
\input{appendix/theory/02_hebbian_mlps/03_construction}

\subsubsection{Kernel Whitening}
\label{app:theory:margin_bounds:kernel_whitening}
\input{appendix/theory/03_margin_bounds/06_kernel_whitening}

\subsubsection{Data-Dependent Construction}
\label{app:theory:margin_bounds:data_dependent_construction}
\input{appendix/theory/03_margin_bounds/07_data_dependent_construction}

\subsubsection{Bit Complexity}
\label{app:theory:margin_bounds:bit_complexity}
\input{appendix/theory/03_margin_bounds/07_bit_complexity}

%% file: appendix/theory/02_hebbian_mlps/01_setup.tex
\phantomsection\label{app:theory:proof:mlp-hebb-whiten}
We formally restate and prove~\Cref{thm:mlp-hebb-whiten}.

\begin{theorem}[MLPs are Hebbians]
Fix a feature map $\phi:\R^d\to\R^m$ and $\mathbf{B}\in\R^{d_{v}\times m}$, and define
the MLP
\[
\mathrm{MLP}(\mathbf{x}):=\mathbf{B}\phi(\mathbf{x}).
\]
For the gated architecture used in this paper, the feature map is
\[
\phi(\mathbf{x})=(\mathbf{A}\mathbf{x})\odot \sigma(\mathbf{G}\mathbf{x}).
\]
Given stored inputs $\mathbf{x}_{1},\dots,\mathbf{x}_{F}$, let $\mathbf{y}_{i}:=\mathrm{MLP}(\mathbf{x}_{i})$ and define the
empirical feature covariance
\[
\hat\Sigma:=\frac1F\sum_{i=1}^F \phi(\mathbf{x}_{i})\phi(\mathbf{x}_{i})^\top.
\]
Assume $\hat\Sigma$ is invertible, and define the whitened kernel
\[
K(\mathbf{x},\mathbf{z}):=\phi(\mathbf{x})^\top\hat\Sigma^{-1}\phi(\mathbf{z})
\]
and the corresponding whitened kernel Hebbian memory
\[
H_{\mathrm{white}}(\mathbf{z}):=\frac1F\sum_{i=1}^F \mathbf{y}_{i}\,K(\mathbf{x}_{i},\mathbf{z}).
\]
Then
\[
H_{\mathrm{white}}(\mathbf{z})=\mathrm{MLP}(\mathbf{z})\qquad\text{for all } \mathbf{z}\in\R^d.
\]
In other words, the MLP is exactly a kernel Hebbian memory
with whitened kernel $K$.
\end{theorem}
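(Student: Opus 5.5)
The plan is a direct algebraic verification: substitute the definitions of $\mathbf{y}_{i}$ and $K$ into $H_{\mathrm{white}}$ and collapse the sum using the definition of $\hat\Sigma$. Only linearity and the invertibility hypothesis are needed; no property of the particular feature map $\phi$ (gated or otherwise) is used.

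First, since $\mathbf{y}_{i}=\mathbf{B}\phi(\mathbf{x}_{i})$ and $K(\mathbf{x}_{i},\mathbf{z})=\phi(\mathbf{x}_{i})^\top\hat\Sigma^{-1}\phi(\mathbf{z})$ is a scalar, each summand can be rewritten as a matrix product:
\[
\mathbf{y}_{i}\,K(\mathbf{x}_{i},\mathbf{z})=\mathbf{B}\,\phi(\mathbf{x}_{i})\phi(\mathbf{x}_{i})^\top\hat\Sigma^{-1}\phi(\mathbf{z}).
\]
Second, the factors $\mathbf{B}$ on the left and $\hat\Sigma^{-1}\phi(\mathbf{z})$ on the right do not depend on $i$, so they can be pulled out of the average. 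This gives
\[
H_{\mathrm{white}}(\mathbf{z})=\mathbf{B}\Bigl(\frac1F\sum_{i=1}^F\phi(\mathbf{x}_{i})\phi(\mathbf{x}_{i})^\top\Bigr)\hat\Sigma^{-1}\phi(\mathbf{z})=\mathbf{B}\,\hat\Sigma\,\hat\Sigma^{-1}\phi(\mathbf{z}).
\]
Third, invertibility gives $\hat\Sigma\hat\Sigma^{-1}=I_m$, so the right-hand side equals $\mathbf{B}\phi(\mathbf{z})=\mathrm{MLP}(\mathbf{z})$ for every $\mathbf{z}\in\R^d$.

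There is no real obstacle here. The one point worth flagging is that invertibility of $\hat\Sigma$ requires $F\ge m$ and that the features $\phi(\mathbf{x}_{i})$ span $\R^m$.

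If one wants the identity without that assumption, I would replace $\hat\Sigma^{-1}$ with the pseudoinverse $\hat\Sigma^{+}$. Then $\hat\Sigma\hat\Sigma^{+}$ is the orthogonal projector onto $\mathrm{span}\{\phi(\mathbf{x}_{i})\}$, and the identity holds for all $\mathbf{z}$ whose features $\phi(\mathbf{z})$ lie in that span. In particular, it holds at the stored inputs themselves.
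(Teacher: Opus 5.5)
Your proposal is correct and matches the paper's proof: both collapse $\frac1F\sum_i \mathbf{y}_{i}\phi(\mathbf{x}_{i})^\top$ into $\mathbf{B}\hat\Sigma$ and cancel it against $\hat\Sigma^{-1}$ (the paper merely names this matrix $\hat{\mathbf{W}}$ first). Your side remarks are also correct: invertibility of $\hat\Sigma$ forces $F\ge m$ with spanning features, and with $\hat\Sigma^{+}$ the identity still holds whenever $\phi(\mathbf{z})\in\mathrm{span}\{\phi(\mathbf{x}_{i})\}$, in particular at the stored inputs.
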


\begin{proof}
Let
\[
\hat{\mathbf{W}}:=\frac1F\sum_{i=1}^F \mathbf{y}_{i}\phi(\mathbf{x}_{i})^\top
\]
denote the Hebbian weight matrix on the stored examples. Since
$\mathbf{y}_{i}=\mathrm{MLP}(\mathbf{x}_{i})=\mathbf{B}\phi(\mathbf{x}_{i})$, we have
\[
\hat{\mathbf{W}}
=
\frac1F\sum_{i=1}^F \mathbf{B}\phi(\mathbf{x}_{i})\phi(\mathbf{x}_{i})^\top
=
\mathbf{B}\left(\frac1F\sum_{i=1}^F \phi(\mathbf{x}_{i})\phi(\mathbf{x}_{i})^\top\right)
=
\mathbf{B}\hat\Sigma.
\]

Now expand the whitened kernel Hebbian memory:
\begin{align*}
H_{\mathrm{white}}(\mathbf{z})
&=
\frac1F\sum_{i=1}^F \mathbf{y}_{i}\,K(\mathbf{x}_{i},\mathbf{z}) \\
&=
\frac1F\sum_{i=1}^F \mathbf{y}_{i}\,\phi(\mathbf{x}_{i})^\top\hat\Sigma^{-1}\phi(\mathbf{z}) \\
&=
\left(\frac1F\sum_{i=1}^F \mathbf{y}_{i}\phi(\mathbf{x}_{i})^\top\right)\hat\Sigma^{-1}\phi(\mathbf{z}) \\
&=
\hat{\mathbf{W}}\hat\Sigma^{-1}\phi(\mathbf{z}) \\
&=
\mathbf{B}\hat\Sigma\hat\Sigma^{-1}\phi(\mathbf{z}) \\
&=
\mathbf{B}\phi(\mathbf{z}) \\
&=
\mathrm{MLP}(\mathbf{z}).
\end{align*}
Therefore $H_{\mathrm{white}}(\mathbf{z})=\mathrm{MLP}(\mathbf{z})$ for all $\mathbf{z}\in\R^d$, so the
MLP is exactly a kernel Hebbian memory with whitened kernel $K$.
\end{proof}

%% file: appendix/theory/02_hebbian_mlps/02_mlp_kernels.tex
We first prove that our bilinear MLP construction converges to a Hebbian MLP with quadratic ($K_{2}$) kernel when the number of random features $m$ scales as $\Theta(d^2)$:

\begin{lemma}[Bilinear MLP featurization induces sketched $K_{2}$ kernel.] \label{lem:bilinear-sketched-k2}
Let rows $(\mathbf{A}_{r},\mathbf{G}_{r})_{r=1}^m$ of matrices $\mathbf{A},\mathbf{G}$ be i.i.d.\ standard Gaussian vectors in $\R^d$, and define the
bilinear feature map
\[
g(\mathbf{x})
\defeq
\frac{1}{\sqrt m}\big((\mathbf{A}_{r}^\top \mathbf{x})(\mathbf{G}_{r}^\top \mathbf{x})\big)_{r=1}^m
\in \R^m.
\]
Then the kernel induced by this feature map is
\[
\hat K(\mathbf{x},\mathbf{z})
\defeq
\ip{g(\mathbf{x})}{g(\mathbf{z})}
=
\frac{1}{m}\sum_{r=1}^m
(\mathbf{A}_{r}^\top \mathbf{x})(\mathbf{A}_{r}^\top \mathbf{z})\,
(\mathbf{G}_{r}^\top \mathbf{x})(\mathbf{G}_{r}^\top \mathbf{z}).
\]
and satisfies
\[
\E[\hat K(\mathbf{x},\mathbf{z})] = \ip{\mathbf{x}}{\mathbf{z}}^2 = K_{2}(\mathbf{x},\mathbf{z}),
\]
so $\hat K$ is an unbiased random-feature sketch of the exact quadratic kernel $K_{2}$.
\end{lemma}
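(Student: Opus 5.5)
The identity for $\hat K$ is definitional, so the plan is mostly about the expectation. First expand the inner product coordinatewise. Since $g(\mathbf{x})_r = m^{-1/2}(\mathbf{A}_r^\top\mathbf{x})(\mathbf{G}_r^\top\mathbf{x})$, the product $g(\mathbf{x})_r g(\mathbf{z})_r$ equals $\frac1m(\mathbf{A}_r^\top\mathbf{x})(\mathbf{A}_r^\top\mathbf{z})(\mathbf{G}_r^\top\mathbf{x})(\mathbf{G}_r^\top\mathbf{z})$ after regrouping the four scalar factors. Summing over $r$ gives the stated formula for $\hat K(\mathbf{x},\mathbf{z})$.

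Next I take expectations. By linearity and because the rows are i.i.d., it suffices to compute the expectation of a single summand $Z_r := (\mathbf{A}_r^\top\mathbf{x})(\mathbf{A}_r^\top\mathbf{z})(\mathbf{G}_r^\top\mathbf{x})(\mathbf{G}_r^\top\mathbf{z})$. Then $\E[\hat K] = \frac1m\sum_r \E[Z_r] = \E[Z_1]$. The factor $(\mathbf{A}_r^\top\mathbf{x})(\mathbf{A}_r^\top\mathbf{z})$ depends only on $\mathbf{A}_r$, and $(\mathbf{G}_r^\top\mathbf{x})(\mathbf{G}_r^\top\mathbf{z})$ depends only on $\mathbf{G}_r$. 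The key structural point is independence of $\mathbf{A}_r$ and $\mathbf{G}_r$, so the expectation factorizes:
\[
\E[Z_r]=\E\big[(\mathbf{A}_r^\top\mathbf{x})(\mathbf{A}_r^\top\mathbf{z})\big]\;\E\big[(\mathbf{G}_r^\top\mathbf{x})(\mathbf{G}_r^\top\mathbf{z})\big].
\]

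Each factor is a second moment of a standard Gaussian vector. We have $\E[\mathbf{x}^\top\mathbf{A}_r\mathbf{A}_r^\top\mathbf{z}] = \mathbf{x}^\top\E[\mathbf{A}_r\mathbf{A}_r^\top]\mathbf{z} = \mathbf{x}^\top I_d\,\mathbf{z}=\ip{\mathbf{x}}{\mathbf{z}}$, and the $\mathbf{G}$ factor is identical. Multiplying gives $\ip{\mathbf{x}}{\mathbf{z}}^2 = K_2(\mathbf{x},\mathbf{z})$, which establishes unbiasedness. All moments involved are finite since they are polynomial moments of Gaussians, so linearity and factorization are justified.

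There is no real obstacle. The one point needing care is that the independence of $\mathbf{A}$ and $\mathbf{G}$ is essential. With a single shared Gaussian ($\mathbf{G}=\mathbf{A}$), Isserlis' theorem would give $\|\mathbf{x}\|^2\|\mathbf{z}\|^2+2\ip{\mathbf{x}}{\mathbf{z}}^2$ rather than $K_2$. I would also note the normalization: the main-text construction draws entries with variance $1/m$, and the resulting per-entry scaling $1/m^2$ on $\hat K$ is absorbed into the $\frac{1}{\sqrt m}$ convention used here. The statement is therefore about the standard-Gaussian, $1/\sqrt m$-normalized version. Concentration around $K_2$ as $m$ grows (the motivating $m=\Theta(d^2)$ remark) is not part of the lemma, and I would defer it to the later margin analysis.
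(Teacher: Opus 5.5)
Your proof is correct and follows the paper's argument step for step: expand the inner product, use linearity and i.i.d.\ rows to reduce to one summand, factor that summand's expectation via independence of $\mathbf{A}_r$ and $\mathbf{G}_r$, and evaluate each factor with $\E[\mathbf{A}_r\mathbf{A}_r^\top]=I_d$. One aside concerns your normalization remark, which is slightly off: entries of variance $1/m$ give exactly $\frac{1}{m}\hat K$ (with expectation $\ip{\mathbf{x}}{\mathbf{z}}^2/m$), and reproducing $g$ exactly requires $\mathbf{A}=m^{-1/4}\bar{\mathbf{A}}$, as in the bit-complexity appendix — though this positive rescaling is irrelevant both to the lemma and to decoding.
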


\begin{proof}
The kernel identity follows by expanding the inner product of the feature vectors:
\[
\ip{g(\mathbf{x})}{g(\mathbf{z})}
=
\frac{1}{m}\sum_{r=1}^m
(\mathbf{A}_{r}^\top \mathbf{x})(\mathbf{G}_{r}^\top \mathbf{x})(\mathbf{A}_{r}^\top \mathbf{z})(\mathbf{G}_{r}^\top \mathbf{z})
=
\frac{1}{m}\sum_{r=1}^m
(\mathbf{A}_{r}^\top \mathbf{x})(\mathbf{A}_{r}^\top \mathbf{z})\,
(\mathbf{G}_{r}^\top \mathbf{x})(\mathbf{G}_{r}^\top \mathbf{z}).
\]

For the expectation, the pairs $(\mathbf{A}_{r},\mathbf{G}_{r})$ are i.i.d., so it suffices to compute one summand.
Using independence of $\mathbf{A}_{r}$ and $\mathbf{G}_{r}$ and the Gaussian covariance identity,
\[
\E[(\mathbf{A}_{r}^\top \mathbf{x})(\mathbf{A}_{r}^\top \mathbf{z})] = \ip{\mathbf{x}}{\mathbf{z}},
\qquad
\E[(\mathbf{G}_{r}^\top \mathbf{x})(\mathbf{G}_{r}^\top \mathbf{z})] = \ip{\mathbf{x}}{\mathbf{z}}.
\]
Therefore
\[
\E[\hat K(\mathbf{x},\mathbf{z})]
=
\E[(\va_{1}^\top \mathbf{x})(\va_{1}^\top \mathbf{z})]\,
\E[(\vb_{1}^\top \mathbf{x})(\vb_{1}^\top \mathbf{z})]
=
\ip{\mathbf{x}}{\mathbf{z}}^2.
\]
\end{proof}
For fixed $\mathbf{x},\mathbf{z}$, standard concentration bounds for Gaussian chaos terms~\citep{vershynin2018high} imply that the sketching error scales as $\bigl|\hat K(\mathbf{x},\mathbf{z})-K_{2}(\mathbf{x},\mathbf{z})\bigr| = O(m^{-1/2})$ with high probability.
In Appendix~\ref{app:theory:margin_bounds:rkrv}, we show that exact off-diagonal $K_{2}$ entries have size $\Theta(1/d)$ with high probability in the isotropic keys and values setting; these off-diagonal terms are precisely what drive the cross-talk contribution in the margin decomposition. Consequently, taking $m\gtrsim d^2$ suffices to make the sketching noise smaller than the natural off-diagonal $K_{2}$ scale, so that the sketching makes the margin no worse than the margin with the exact-$K_{2}$ kernel.

%% file: appendix/theory/02_hebbian_mlps/03_construction.tex
\Cref{alg:sketched_k2_hebbian_whitening} provides a pseudocode implementation of our the full bilinear MLP construction used in our experiments.

\begin{algorithm}[t]
\caption{Sketched-$K_{2}$ Hebbian Construction with Optional Whitening}
\label{alg:sketched_k2_hebbian_whitening}
\begin{algorithmic}[1]
\REQUIRE Keys $\{\mathbf{k}_{i}\}_{i=1}^F\subset\mathbb{R}^d$, value embeddings $\{\mathbf{v}_{j}\}_{j=1}^F\subset\mathbb{R}^d$, mapping $f:[F]\to[F]$
\REQUIRE Codes $\{\mathbf{c}_{j}\}_{j=1}^F\subset\mathbb{R}^{d_c}$ (default $\mathbf{c}_{j}=\mathbf{v}_{j}$), feature width $m$
\REQUIRE Optional whitening: mode $\in\{\texttt{none},\texttt{diag},\texttt{full}\}$ (default \texttt{none}), ridge $\lambda>0$
\ENSURE Feature map $g$, readout $\mathbf{B}\in\mathbb{R}^{d_{c}\times m}$, predictor $\hat{\mathbf{c}}(\mathbf{x})=\mathbf{B}\,g(\mathbf{x})$
\STATE Sample rows $\mathbf{A}_{r},\mathbf{G}_{r}\stackrel{i.i.d.}{\sim}\mathcal{N}(0,I_{d})$ for $r=1,\dots,m$
\STATE Define $g(\mathbf{x})=\frac{1}{\sqrt{m}}\big((\mathbf{A}_{r}^\top \mathbf{x})(\mathbf{G}_{r}^\top \mathbf{x})\big)_{r=1}^m$
\STATE Form $\mathbf{C}_{f}\in\mathbb{R}^{F\times d_c}$ with row $i$ equal to $\mathbf{c}_{f(i)}^\top$
\STATE Form $\mPhi\in\mathbb{R}^{F\times m}$ with row $i$ equal to $g(\mathbf{k}_{i})^\top$
\STATE Compute raw Hebbian readout $\mathbf{B}_{0}\gets \frac{1}{F}\mathbf{C}_{f}^\top\mPhi$
\IF{mode is \texttt{none}}
  \STATE $\mathbf{B}\gets \mathbf{B}_{0}$
\ELSIF{mode is \texttt{diag}}
  \STATE $\Sigma\gets\frac{1}{F}\mPhi^\top\mPhi$
  \FOR{$r=1,\dots,m$}
    \STATE $\mathbf{B}_{:,r}\gets \mathbf{B}_{0,:,r}/(\Sigma_{rr}+\lambda)$
  \ENDFOR
\ELSIF{mode is \texttt{full}}
  \IF{$m\le F$}
    \STATE $\Sigma\gets\frac{1}{F}\mPhi^\top\mPhi$
    \STATE $\mathbf{B}\gets \mathbf{B}_{0}(\Sigma+\lambda I_{m})^{-1}$
  \ELSE
    \STATE $\mathbf{B}\gets \mathbf{C}_{f}^\top(\mPhi\mPhi^\top+\lambda I_{F})^{-1}\mPhi$ \COMMENT{dual branch}
  \ENDIF
\ENDIF
\STATE Return $\hat{\mathbf{c}}(\mathbf{x})=\mathbf{B}\,g(\mathbf{x})$ and retrieval scores $s_{j}(\mathbf{x})=\langle \mathbf{v}_{j},\hat{\mathbf{c}}(\mathbf{x})\rangle$
\end{algorithmic}
\end{algorithm}

%% file: appendix/theory/03_margin_bounds/06_kernel_whitening.tex
Here, we describe the kernel whitening procedure used in~\Cref{sec:4} to improve our Hebbian MLP's fact storage capacity. For the sketched-$K_{2}$ feature map, let
\[
\mPhi=
\begin{bmatrix}
g(\vk_{1})^\top\\
\vdots\\
g(\vk_{F})^\top
\end{bmatrix}\in\mathbb{R}^{F\times m},
\qquad
\mathbf{C}_{f}=
\begin{bmatrix}
\vc_{f(1)}^\top\\
\vdots\\
\vc_{f(F)}^\top
\end{bmatrix}\in\mathbb{R}^{F\times d_c},
\]
with $g(\mathbf{x})=\frac{1}{\sqrt m}\big((\mathbf{A}_{r}^\top \mathbf{x})(\mathbf{G}_{r}^\top \mathbf{x})\big)_{r=1}^m$ as in
\Cref{thm:bilinear-scaling}. The raw Hebbian readout is
\begin{equation}
\mathbf{B}_{0}=\frac{1}{F}\mathbf{C}_{f}^\top\mPhi,
\qquad
\hat{\mSigma}=\frac{1}{F}\mPhi^\top\mPhi.
\label{eq:app:whitening:raw}
\end{equation}

The whitened construction replaces the raw Hebbian readout by the full ridge-whitened readout indexed by $\lambda\ge 0$,
\begin{equation}
\mathbf{B}_{\lambda}
\;=\;
\mathbf{B}_{0}(\hat{\mSigma}+\lambda I_{m})^{-1}.
\label{eq:app:whitening:full_primal}
\end{equation}
This rescales the bilinear features according to their empirical covariance, mitigating the feature imbalance that appears at finite width. Unless stated otherwise, our construction uses $\lambda=10^{-6}$ by default.

When $m>n$, we instead perform the corresponding dual solve for numerical stability:
\begin{equation}
\mathbf{B}_{\lambda}
=
\mathbf{C}_{f}^\top(\mPhi\mPhi^\top+\lambda I_{n})^{-1}\mPhi.
\label{eq:app:whitening:full_dual}
\end{equation}
The primal and dual forms are equivalent up to scaling.

\paragraph{Whitening to reduce key crowding $E_{K}$.} \label{app:theory:margin_bounds:kernel_whitening_crowding}
Let $g:\mathbb{R}^d\to\mathbb{R}^m$, $\mPhi\in\mathbb{R}^{n\times m}$ and $\hat{\mSigma}$ be as defined above.
The unwhitened Gram matrix on the stored keys is $\mathbf{K}_{\mathrm{raw}} := \mPhi\mPhi^\top$.
Full whitening replaces this by the \emph{preconditioned} Gram matrix
\[
\mathbf{K}_{\mathrm{white}} \;:=\; \mPhi \hat{\mSigma}^{-1}\mPhi^\top
\;=\; \big(\mPhi\hat{\mSigma}^{-1/2}\big)\big(\mPhi\hat{\mSigma}^{-1/2}\big)^\top ,
\]
equivalently using whitened features $\tilde g(\cdot)=\hat{\mSigma}^{-1/2}g(\cdot)$.

Recall the key-crowding statistic:
\[
E_{K}(i)\;:=\;\sum_{t\neq i} K_{ti}^2,
\qquad
E_{K}\;:=\;\max_{i} E_{K}(i).
\]
Since $E_{K}(i)\le \sum_{t} K_{ti}^2$ and $\max_{i}(\cdot)\le \sum_{i}(\cdot)$, we always have
\begin{equation}\label{eq:EK_le_frob}
E_{K} \;\le\; \|\mathbf{K}\|_{F}^2 .
\end{equation}

\begin{lemma}[Whitening minimizes upper bound on $E_{K}$]
\label{lem:whiten_min_frob}
Assume $\hat{\mSigma}\succ 0$. For any PSD preconditioner $\mathbf{M}\succeq 0$, define
$\mathbf{K}_{\mathbf{M}} := \mPhi \mathbf{M} \mPhi^\top$ (i.e.\ using features $\mathbf{M}^{1/2}g(\cdot)$).
Among all $\mathbf{M}\succeq 0$ with fixed average self-kernel
\begin{equation}\label{eq:trace_norm}
\frac1F \mathrm{tr}(\mathbf{K}_{\mathbf{M}}) \;=\; \mathrm{tr}(\mathbf{M}\hat{\mSigma}) \;=\; m ,
\end{equation}
the choice $\mathbf{M}^\star=\hat{\mSigma}^{-1}$ (full whitening) minimizes $\|\mathbf{K}_{\mathbf{M}}\|_{F}^2$.
Consequently, by \eqref{eq:EK_le_frob}, whitening minimizes an explicit upper bound on $E_{K}$:
\[
E_{K}(\mathbf{K}_{\mathrm{white}})\;\le\;\|\mathbf{K}_{\mathrm{white}}\|_{F}^2
\;=\;\min_{\substack{\mathbf{M}\succeq 0:\\ \mathrm{tr}(\mathbf{M}\hat{\mSigma})=m}}\;\|\mathbf{K}_{\mathbf{M}}\|_{F}^2 .
\]
\end{lemma}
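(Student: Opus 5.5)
The plan is to reduce the problem to an elementary trace inequality by a change of variables that absorbs $\hat{\mSigma}$. I will only use $\hat{\mSigma}\succ 0$ (which forces $m\le F$) and the definition $\hat{\mSigma}=\frac1F\mPhi^\top\mPhi$.

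\emph{Step 1: rewrite the constraint and the objective.} By cyclicity of the trace,
\[
\tfrac1F\mathrm{tr}(\mathbf{K}_{\mathbf{M}})=\tfrac1F\mathrm{tr}(\mathbf{M}\mPhi^\top\mPhi)=\mathrm{tr}(\mathbf{M}\hat{\mSigma}),
\]
which verifies the identity in \eqref{eq:trace_norm}. For the objective, since $\mathbf{K}_{\mathbf{M}}$ is symmetric,
\[
\|\mathbf{K}_{\mathbf{M}}\|_F^2=\mathrm{tr}(\mPhi\mathbf{M}\mPhi^\top\mPhi\mathbf{M}\mPhi^\top)=\mathrm{tr}(\mathbf{M}\,\mPhi^\top\mPhi\,\mathbf{M}\,\mPhi^\top\mPhi)=F^2\,\mathrm{tr}(\mathbf{M}\hat{\mSigma}\mathbf{M}\hat{\mSigma}).
\]

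\emph{Step 2: change variables.} Set $\mathbf{N}:=\hat{\mSigma}^{1/2}\mathbf{M}\hat{\mSigma}^{1/2}$. Because $\hat{\mSigma}^{1/2}$ is invertible, $\mathbf{M}\mapsto\mathbf{N}$ is a bijection of the PSD cone of $m\times m$ matrices onto itself. In the new variable:
\begin{itemize}
\item the constraint becomes $\mathrm{tr}(\mathbf{N})=m$;
\item the objective becomes $F^2\,\mathrm{tr}(\mathbf{N}^2)=F^2\|\mathbf{N}\|_F^2$.
\end{itemize}

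\emph{Step 3: apply Cauchy--Schwarz.} Let $\lambda_1,\dots,\lambda_m\ge 0$ be the eigenvalues of $\mathbf{N}$. Then
\[
m^2=\Big(\sum_{k}\lambda_k\Big)^2\le m\sum_k\lambda_k^2=m\|\mathbf{N}\|_F^2,
\]
so $\|\mathbf{K}_{\mathbf{M}}\|_F^2\ge F^2 m$ for every feasible $\mathbf{M}$. Equality holds iff all $\lambda_k$ are equal, i.e.\ $\mathbf{N}=I_m$. This corresponds to $\mathbf{M}^\star=\hat{\mSigma}^{-1}$, which is feasible because $\mathrm{tr}(\hat{\mSigma}^{-1}\hat{\mSigma})=m$. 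Hence full whitening attains the minimum, and $\mathbf{K}_{\mathbf{M}^\star}=\mathbf{K}_{\mathrm{white}}$ with $\|\mathbf{K}_{\mathrm{white}}\|_F^2=F^2m$.

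\emph{Step 4: conclude for $E_K$.} The bound on $E_K$ follows immediately from \eqref{eq:EK_le_frob} applied to $\mathbf{K}_{\mathrm{white}}$.

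There is no real obstacle. The only care needed is in Step 2: the substitution must preserve positive semidefiniteness, and the Cauchy--Schwarz bound is taken over the $m$ eigenvalues, so that the dimension factor in the inequality is exactly $m$, which is what the normalization \eqref{eq:trace_norm} is designed to match.
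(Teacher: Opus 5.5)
Your proof is correct and takes essentially the same route as the paper. Both arguments reduce $\|\mathbf{K}_{\mathbf{M}}\|_F^2$ to $F^2\|\hat{\mSigma}^{1/2}\mathbf{M}\hat{\mSigma}^{1/2}\|_F^2$ (you via trace cyclicity, the paper via $\|\mathbf{A}\mathbf{A}^\top\|_F=\|\mathbf{A}^\top\mathbf{A}\|_F$), substitute $\mathbf{N}=\hat{\mSigma}^{1/2}\mathbf{M}\hat{\mSigma}^{1/2}$ under the constraint $\mathrm{tr}(\mathbf{N})=m$, and finish with Cauchy--Schwarz on the $m$ eigenvalues, where equality holds iff $\mathbf{N}=I_m$, i.e.\ $\mathbf{M}=\hat{\mSigma}^{-1}$.
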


\begin{proof}
Write $\bar{\mPhi}:=\mPhi/\sqrt{F}$ so that $\hat{\mSigma}=\bar{\mPhi}^\top \bar{\mPhi}$ and
$\mathbf{K}_{\mathbf{M}} = F\,\bar{\mPhi} \mathbf{M} \bar{\mPhi}^\top$.
Using $\|\mathbf{A}\mathbf{A}^\top\|_{F}=\|\mathbf{A}^\top \mathbf{A}\|_{F}$ with $\mathbf{A}=\bar{\mPhi} \mathbf{M}^{1/2}$,
\[
\|\mathbf{K}_{\mathbf{M}}\|_{F}^2
= F^2 \|\bar{\mPhi} \mathbf{M} \bar{\mPhi}^\top\|_{F}^2
= F^2 \|\mathbf{M}^{1/2}\hat{\mSigma} \mathbf{M}^{1/2}\|_{F}^2.
\]
Let $\mathbf{A} := \hat{\mSigma}^{1/2} \mathbf{M} \hat{\mSigma}^{1/2}\succeq 0$. Then
$\mathrm{tr}(\mathbf{A})=\mathrm{tr}(\mathbf{M}\hat{\mSigma})=m$ and $\|\mathbf{M}^{1/2}\hat{\mSigma} \mathbf{M}^{1/2}\|_{F}=\|\mathbf{A}\|_{F}$,
so minimizing $\|\mathbf{K}_{\mathbf{M}}\|_{F}^2$ under \eqref{eq:trace_norm} is equivalent to minimizing $\|\mathbf{A}\|_{F}^2$
subject to $\mathbf{A}\succeq 0$ and $\mathrm{tr}(\mathbf{A})=m$.
If $\{\lambda_{r}\}_{r=1}^m$ are the eigenvalues of $\mathbf{A}$, then $\|\mathbf{A}\|_{F}^2=\sum_{r} \lambda_{r}^2$
and $\sum_{r} \lambda_{r}=m$. By Cauchy--Schwarz,
$\sum_{r} \lambda_{r}^2 \ge (\sum_{r} \lambda_{r})^2/m = m$, with equality iff all $\lambda_{r}=1$,
i.e.\ $\mathbf{A}=I_{m}$. Thus $\hat{\mSigma}^{1/2} \mathbf{M} \hat{\mSigma}^{1/2}=I_{m}$, so $\mathbf{M}=\hat{\mSigma}^{-1}$.
\end{proof}

%% file: appendix/theory/03_margin_bounds/07_data_dependent_construction.tex
Our data-dependent construction used in~\Cref{sec:4} keeps the same bilinear architecture as the sketched-$K_{2}$ MLP, but refines the bilinear feature factors using the fact set and key/value embeddings. We describe our procedure here.

Let
\[
\mathbf{K}\in\mathbb{R}^{n\times d},
\qquad
\mathbf{C}_{f}\in\mathbb{R}^{n\times d_c},
\]
where the rows of $\mathbf{K}$ are the stored keys and the rows of $\mathbf{C}_{f}$ are the corresponding target codes.
For random feature matrices $\mathbf{A},\mathbf{G}\in\mathbb{R}^{m\times d}$, define
\[
\mPhi(\mathbf{A},\mathbf{G})\;:=\;(\mathbf{K}\mathbf{A}^\top)\odot(\mathbf{K}\mathbf{G}^\top)\in\mathbb{R}^{n\times m}.
\]
We initialize $\mathbf{A}_{0}$ and $\mathbf{G}_{0}$ with the same random bilinear sketch used in the sketched-$K_{2}$ construction, and form the corresponding Hebbian
\[
\mathbf{B}_{0} \;:=\; \frac{1}{n}\mathbf{C}_{f}^\top \mPhi(\mathbf{A}_{0},\mathbf{G}_{0})\in\mathbb{R}^{d_{c}\times m}.
\]

The data-dependent kernel is obtained by performing two least-squares solves:
\begin{align}
\mathbf{G}_{1}
&\in
\operatorname*{argmin}_{\mathbf{G}\in\mathbb{R}^{m\times d}}
\bigl\|
\mathbf{C}_{f}-\mPhi(\mathbf{A}_{0},\mathbf{G})\mathbf{B}_{0}^\top
\bigr\|_{F}^2,
\\
\mathbf{A}_{1}
&\in
\operatorname*{argmin}_{\mathbf{A}\in\mathbb{R}^{m\times d}}
\bigl\|
\mathbf{C}_{f}-\mPhi(\mathbf{A},\mathbf{G}_{1})\mathbf{B}_{0}^\top
\bigr\|_{F}^2.
\end{align}
Note that each subproblem is linear because $\mPhi(\mathbf{A},\mathbf{G})$ is linear in either factor once the other is held fixed.

After these two updates, we discard the intermediate readout $\mathbf{B}_{0}$ and form the learned feature matrix
\[
\mPhi_{1} \;:=\; \mPhi(\mathbf{A}_{1},\mathbf{G}_{1}).
\]
We then replace $\mathbf{B}_{0}$ with the full ridge-whitened readout from \Cref{eq:app:whitening:full_primal}:
\[
\mathbf{B}_{\lambda}
\;=\;
\frac{1}{n}\mathbf{C}_{f}^\top \mPhi_{1}
\Bigl(\frac{1}{n}\mPhi_{1}^\top\mPhi_{1}+\lambda I_{m}\Bigr)^{-1}.
\]

%% file: appendix/theory/03_margin_bounds/07_bit_complexity.tex
We now extend the real-valued parameter-count statement of \Cref{cor:optimal-fact-storage-capacity} to a bounded-precision \emph{bit complexity} theorem.
The proof has three steps:
\textbf{(i)} positive margin implies robustness to output perturbations,
\textbf{(ii)} the bilinear MLP is Lipschitz in its parameters on the stored keys,
and \textbf{(iii)} sufficiently fine parameter quantization therefore preserves all margins.

\paragraph{Setup.}
Recall from \Cref{eq:mlp-construction} that our bilinear MLP has the form
\begin{equation}
g_{\boldsymbol{\theta}}(\mathbf{x})=\mathbf{B}\bigl((\mathbf{A}\mathbf{x})\odot(\mathbf{G}\mathbf{x})\bigr),
\qquad
\boldsymbol{\theta}=(\mathbf{A},\mathbf{G},\mathbf{B}),
\label{eq:bilinear_mlp_bits}
\end{equation}
with $\mathbf{A},\mathbf{G}\in\mathbb{R}^{m\times d}$ and $\mathbf{B}\in\mathbb{R}^{d\times m}$.
Let $P:=\dim(\boldsymbol{\theta})$ denote the total number of scalar parameters, so
$P\asymp md$ up to an absolute constant factor.

For a stored fact set $f:[F]\to[|V|]$, define the minimum margin
\[
\gamma_{\min}(\boldsymbol{\theta})
:=
\min_{i\in[F]}
\min_{j\neq f(i)}
\bigl\langle
g_{\boldsymbol{\theta}}(\mathbf{k}_{i}),\, \mathbf{v}_{f(i)}-\mathbf{v}_{j}
\bigr\rangle.
\]
We also write
\[
R_{K}:=\max_{i\in[F]}\|\mathbf{k}_{i}\|_{2},
\qquad
R_{V}:=\max_{a\in[|V|]}\|\mathbf{v}_{a}\|_{2}.
\]
Note that in the isotropic and unit-norm key/value regime of \Cref{sec:isotropic-case} and Appendix~\ref{app:theory:margin_bounds:rkrv}, one has $R_{K}\le 1$ and $R_{V}\le 1$.

\begin{lemma}[Margin robustness under output perturbations]
\label{lem:margin_robustness_bits}
Let $\boldsymbol{\theta}^\star$ be any parameter vector such that
$\gamma_{\min}(\boldsymbol{\theta}^\star)\ge \gamma_{0}>0$.
Assume that for some $\widetilde{\boldsymbol{\theta}}$,
\[
\max_{i\in[F]}
\bigl\|
g_{\widetilde{\boldsymbol{\theta}}}(\mathbf{k}_{i})-g_{\boldsymbol{\theta}^\star}(\mathbf{k}_{i})
\bigr\|_{2}
\le
\frac{\gamma_0}{4R_V}.
\]
Then $\widetilde{\boldsymbol{\theta}}$ stores the same fact set in the sense of
\Cref{def:fact-storage}, and in fact
\[
\gamma_{\min}(\widetilde{\boldsymbol{\theta}})\ge \frac{\gamma_0}{2}.
\]
\end{lemma}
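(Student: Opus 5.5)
The plan is to work one fact and one competitor at a time, writing the perturbed margin as the original margin plus an error term. Fix $i\in[F]$ and a competitor $j\neq f(i)$. Abbreviate $\mathbf{y}^\star:=g_{\boldsymbol{\theta}^\star}(\mathbf{k}_{i})$, $\widetilde{\mathbf{y}}:=g_{\widetilde{\boldsymbol{\theta}}}(\mathbf{k}_{i})$ and $\boldsymbol{\delta}:=\widetilde{\mathbf{y}}-\mathbf{y}^\star$. By linearity of the inner product,
\[
\bigl\langle \widetilde{\mathbf{y}},\,\mathbf{v}_{f(i)}-\mathbf{v}_{j}\bigr\rangle
=
\bigl\langle \mathbf{y}^\star,\,\mathbf{v}_{f(i)}-\mathbf{v}_{j}\bigr\rangle
+
\bigl\langle \boldsymbol{\delta},\,\mathbf{v}_{f(i)}-\mathbf{v}_{j}\bigr\rangle .
\]
The first term is at least $\gamma_{\min}(\boldsymbol{\theta}^\star)\ge\gamma_0$ by the definition of the minimum margin.

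Next, bound the error term from below using Cauchy--Schwarz and the triangle inequality:
\[
\bigl\langle \boldsymbol{\delta},\,\mathbf{v}_{f(i)}-\mathbf{v}_{j}\bigr\rangle
\ge
-\|\boldsymbol{\delta}\|_{2}\bigl(\|\mathbf{v}_{f(i)}\|_{2}+\|\mathbf{v}_{j}\|_{2}\bigr)
\ge
-\|\boldsymbol{\delta}\|_{2}\cdot 2R_{V}.
\]
The hypothesis gives $\|\boldsymbol{\delta}\|_{2}\le \gamma_0/(4R_V)$, so the error term is at least $-\gamma_0/2$. Adding the two bounds, the perturbed margin for the pair $(i,j)$ is at least $\gamma_0-\gamma_0/2=\gamma_0/2$.

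Since $i$ and $j$ were arbitrary, taking the minimum over all pairs gives $\gamma_{\min}(\widetilde{\boldsymbol{\theta}})\ge\gamma_0/2>0$. By the equivalence noted after \Cref{def:margin}, a positive minimum margin means exactly that $\widetilde{\boldsymbol{\theta}}$ stores $f$ in the sense of \Cref{def:fact-storage}.

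There is no real obstacle here. The only point needing care is the factor of $2$: the competitor difference $\mathbf{v}_{f(i)}-\mathbf{v}_{j}$ can have norm up to $2R_V$, which is why the hypothesis uses $4R_V$ in the denominator. This also quietly assumes $R_V>0$, which is automatic whenever the fact set is nontrivial.
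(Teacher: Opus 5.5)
Your proof is correct and follows essentially the same route as the paper's. Like the paper, you split the perturbed margin into the clean margin plus an error inner product, and bound the error below by $-\|g_{\widetilde{\boldsymbol{\theta}}}(\mathbf{k}_i)-g_{\boldsymbol{\theta}^\star}(\mathbf{k}_i)\|_2\cdot 2R_V\ge-\gamma_0/2$ using Cauchy--Schwarz and the triangle inequality, then minimize over all pairs to get $\gamma_{\min}(\widetilde{\boldsymbol{\theta}})\ge\gamma_0/2>0$.
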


\begin{proof}
Fix any stored key $\mathbf{k}_{i}$ and any competitor $j\neq f(i)$.
Then
\begin{align*}
\Bigl\langle
g_{\widetilde{\boldsymbol{\theta}}}(\mathbf{k}_{i}),\, \mathbf{v}_{f(i)}-\mathbf{v}_{j}
\Bigr\rangle
&=
\Bigl\langle
g_{\boldsymbol{\theta}^\star}(\mathbf{k}_{i}),\, \mathbf{v}_{f(i)}-\mathbf{v}_{j}
\Bigr\rangle
+
\Bigl\langle
g_{\widetilde{\boldsymbol{\theta}}}(\mathbf{k}_{i})-g_{\boldsymbol{\theta}^\star}(\mathbf{k}_{i}),\, \mathbf{v}_{f(i)}-\mathbf{v}_{j}
\Bigr\rangle \\
&\ge
\gamma_{0}
-
\bigl\|
g_{\widetilde{\boldsymbol{\theta}}}(\mathbf{k}_{i})-g_{\boldsymbol{\theta}^\star}(\mathbf{k}_{i})
\bigr\|_{2}
\,
\|\mathbf{v}_{f(i)}-\mathbf{v}_{j}\|_{2} \\
&\ge
\gamma_{0}
-
\frac{\gamma_0}{4R_V}
\bigl(\|\mathbf{v}_{f(i)}\|_{2}+\|\mathbf{v}_{j}\|_{2}\bigr) \\
&\ge
\gamma_{0}-\frac{\gamma_0}{4R_V}(2R_{V})
=
\frac{\gamma_0}{2}.
\end{align*}
Since this holds for every $i$ and every $j\neq f(i)$, we obtain $\gamma_{\min}(\widetilde{\boldsymbol{\theta}})\ge \gamma_{0}/2>0$, which implies that $\widetilde{\boldsymbol{\theta}}$ stores the same fact set.
\end{proof}

\begin{lemma}[Bilinear MLPs are Lipschitz in their parameters on stored keys]
\label{lem:bilinear_param_lipschitz_bits}
Fix any $\mathbf{x}\in\mathbb{R}^d$.
Let $\boldsymbol{\theta}=(\mathbf{A},\mathbf{G},\mathbf{B})$ and $\boldsymbol{\theta}'=(\mathbf{A}',\mathbf{G}',\mathbf{B}')$, and define
\[
M:=
\max\Bigl\{
\|\mathbf{A}\|_{\mathrm{op}},\|\mathbf{G}\|_{\mathrm{op}},\|\mathbf{B}\|_{\mathrm{op}},
\|\mathbf{A}'\|_{\mathrm{op}},\|\mathbf{G}'\|_{\mathrm{op}},\|\mathbf{B}'\|_{\mathrm{op}}
\Bigr\}.
\]
Then
\[
\bigl\|
g_{\boldsymbol{\theta}}(\mathbf{x})-g_{\boldsymbol{\theta}'}(\mathbf{x})
\bigr\|_{2}
\le
3M^2\|\mathbf{x}\|_{2}^2
\|\boldsymbol{\theta}-\boldsymbol{\theta}'\|_{2},
\]
where
\[
\|\boldsymbol{\theta}-\boldsymbol{\theta}'\|_{2}^2
:=
\|\mathbf{A}-\mathbf{A}'\|_{F}^2+\|\mathbf{G}-\mathbf{G}'\|_{F}^2+\|\mathbf{B}-\mathbf{B}'\|_{F}^2.
\]
Consequently, on any parameter region on which the operator norms of $\mathbf{A},\mathbf{G},\mathbf{B}$ are uniformly bounded by $M$, the map $\boldsymbol{\theta}\mapsto g_{\boldsymbol{\theta}}(\mathbf{k}_{i})$ is $3M^2R_{K}^2$-Lipschitz for every stored key $\mathbf{k}_{i}$ satisfying $\|\mathbf{k}_{i}\|_{2}\le R_{K}$.
\end{lemma}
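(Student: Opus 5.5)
The plan is a direct telescoping argument. Write $\mathbf{h}=(\mathbf{A}\mathbf{x})\odot(\mathbf{G}\mathbf{x})$ and $\mathbf{h}'=(\mathbf{A}'\mathbf{x})\odot(\mathbf{G}'\mathbf{x})$, and decompose
\[
g_{\boldsymbol{\theta}}(\mathbf{x})-g_{\boldsymbol{\theta}'}(\mathbf{x})
=(\mathbf{B}-\mathbf{B}')\mathbf{h}
+\mathbf{B}'\bigl(((\mathbf{A}-\mathbf{A}')\mathbf{x})\odot(\mathbf{G}\mathbf{x})\bigr)
+\mathbf{B}'\bigl((\mathbf{A}'\mathbf{x})\odot((\mathbf{G}-\mathbf{G}')\mathbf{x})\bigr).
\]
Each term changes exactly one factor, so each can be bounded separately and the three bounds added.

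Two elementary facts handle the bounds. First, for vectors $\mathbf{u},\mathbf{w}$, $\|\mathbf{u}\odot\mathbf{w}\|_2\le\|\mathbf{u}\|_\infty\|\mathbf{w}\|_2\le\|\mathbf{u}\|_2\|\mathbf{w}\|_2$. Second, $\|\mathbf{M}\mathbf{x}\|_2\le\|\mathbf{M}\|_{\mathrm{op}}\|\mathbf{x}\|_2\le\|\mathbf{M}\|_F\|\mathbf{x}\|_2$.

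Applying these term by term:
\begin{itemize}
\item The first term is at most $\|\mathbf{B}-\mathbf{B}'\|_{\mathrm{op}}\|\mathbf{A}\mathbf{x}\|_2\|\mathbf{G}\mathbf{x}\|_2\le M^2\|\mathbf{x}\|_2^2\|\mathbf{B}-\mathbf{B}'\|_F$.
\item The second term is at most $\|\mathbf{B}'\|_{\mathrm{op}}\|\mathbf{A}-\mathbf{A}'\|_F\|\mathbf{x}\|_2\cdot M\|\mathbf{x}\|_2\le M^2\|\mathbf{x}\|_2^2\|\mathbf{A}-\mathbf{A}'\|_F$.
\item The third term is, symmetrically, at most $M^2\|\mathbf{x}\|_2^2\|\mathbf{G}-\mathbf{G}'\|_F$.
\end{itemize}
Summing, and using that each of $\|\mathbf{A}-\mathbf{A}'\|_F$, $\|\mathbf{G}-\mathbf{G}'\|_F$, $\|\mathbf{B}-\mathbf{B}'\|_F$ is at most $\|\boldsymbol{\theta}-\boldsymbol{\theta}'\|_2$, gives the constant $3M^2\|\mathbf{x}\|_2^2$. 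Cauchy--Schwarz would even give $\sqrt3$, but $3$ suffices.

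The consequence then follows directly. On a region where all the operator norms are bounded by $M$, apply the bound to each stored key $\mathbf{k}_i$ with $\|\mathbf{k}_i\|_2\le R_K$.

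The only step needing care is the choice of intermediate parameters in the telescoping, so that every factor not being differenced is controlled by $M$ through operator norms. The decomposition above mixes primed and unprimed parameters (the middle term uses $\mathbf{G}$, the last uses $\mathbf{A}'$), and both are covered because $M$ bounds both parameter sets. The Hadamard-product bound is the one nonstandard inequality, and it is immediate.
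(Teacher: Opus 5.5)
Your proof is correct and is essentially the paper's argument: the same three-term telescoping decomposition, bounded with the Hadamard inequality $\|\mathbf{u}\odot\mathbf{w}\|_2\le\|\mathbf{u}\|_2\|\mathbf{w}\|_2$ and operator-norm bounds. The only cosmetic difference is the last step. You bound each Frobenius difference by $\|\boldsymbol{\theta}-\boldsymbol{\theta}'\|_2$, whereas the paper uses $a+b+c\le 3\sqrt{a^2+b^2+c^2}$; both give the constant $3$, and, as you note, Cauchy--Schwarz would sharpen it to $\sqrt{3}$.
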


\begin{proof}
Write
\begin{align*}
g_{\boldsymbol{\theta}}(\mathbf{x})-g_{\boldsymbol{\theta}'}(\mathbf{x})
&=
\mathbf{B}(\mathbf{A}\mathbf{x}\odot \mathbf{G}\mathbf{x})-\mathbf{B}'(\mathbf{A}'\mathbf{x}\odot \mathbf{G}'\mathbf{x}) \\
&=
(\mathbf{B}-\mathbf{B}')(\mathbf{A}\mathbf{x}\odot \mathbf{G}\mathbf{x})
+
\mathbf{B}'\Bigl((\mathbf{A}\mathbf{x}\odot \mathbf{G}\mathbf{x})-(\mathbf{A}'\mathbf{x}\odot \mathbf{G}'\mathbf{x})\Bigr) \\
&=
(\mathbf{B}-\mathbf{B}')(\mathbf{A}\mathbf{x}\odot \mathbf{G}\mathbf{x})
+
\mathbf{B}'\Bigl(((\mathbf{A}-\mathbf{A}')\mathbf{x})\odot \mathbf{G}\mathbf{x}\Bigr)
+
\mathbf{B}'\Bigl(\mathbf{A}'\mathbf{x}\odot ((\mathbf{G}-\mathbf{G}')\mathbf{x})\Bigr).
\end{align*}
Using
$\|\mathbf{u}\odot \mathbf{v}\|_{2} \le \|\mathbf{u}\|_{2}\|\mathbf{v}\|_\infty \le \|\mathbf{u}\|_{2}\|\mathbf{v}\|_{2}$
together with
$\|\mathbf{T}\mathbf{x}\|_{2}\le \|\mathbf{T}\|_{\mathrm{op}}\|\mathbf{x}\|_{2}$
gives
\begin{align*}
\|(\mathbf{B}-\mathbf{B}')(\mathbf{A}\mathbf{x}\odot \mathbf{G}\mathbf{x})\|_{2}
&\le
\|\mathbf{B}-\mathbf{B}'\|_{\mathrm{op}}\|\mathbf{A}\mathbf{x}\|_{2}\|\mathbf{G}\mathbf{x}\|_{2}
\le
M^2\|\mathbf{x}\|_{2}^2\|\mathbf{B}-\mathbf{B}'\|_{F},\\
\|\mathbf{B}'(((\mathbf{A}-\mathbf{A}')\mathbf{x})\odot \mathbf{G}\mathbf{x})\|_{2}
&\le
\|\mathbf{B}'\|_{\mathrm{op}}\|(\mathbf{A}-\mathbf{A}')\mathbf{x}\|_{2}\|\mathbf{G}\mathbf{x}\|_{2}
\le
M^2\|\mathbf{x}\|_{2}^2\|\mathbf{A}-\mathbf{A}'\|_{F},\\
\|\mathbf{B}'(\mathbf{A}'\mathbf{x}\odot ((\mathbf{G}-\mathbf{G}')\mathbf{x}))\|_{2}
&\le
\|\mathbf{B}'\|_{\mathrm{op}}\|\mathbf{A}'\mathbf{x}\|_{2}\|(\mathbf{G}-\mathbf{G}')\mathbf{x}\|_{2}
\le
M^2\|\mathbf{x}\|_{2}^2\|\mathbf{G}-\mathbf{G}'\|_{F}.
\end{align*}
Summing the three bounds and using
$a+b+c\le 3\sqrt{a^2+b^2+c^2}$ yields
\[
\|g_{\boldsymbol{\theta}}(\mathbf{x})-g_{\boldsymbol{\theta}'}(\mathbf{x})\|_{2}
\le
3M^2\|\mathbf{x}\|_{2}^2
\sqrt{
\|\mathbf{A}-\mathbf{A}'\|_{F}^2+\|\mathbf{G}-\mathbf{G}'\|_{F}^2+\|\mathbf{B}-\mathbf{B}'\|_{F}^2
}
=
3M^2\|\mathbf{x}\|_{2}^2\|\boldsymbol{\theta}-\boldsymbol{\theta}'\|_{2}.
\]
\end{proof}

\begin{theorem}[Bounded-bit implementation]
\label{thm:bounded_bit_implementation}
Fix a fact set and a real-valued parameter vector $\boldsymbol{\theta}^\star$ for a bilinear MLP of the form in \Cref{eq:bilinear_mlp_bits}.
Assume:
\begin{enumerate}
    \item[(i)] \textbf{Positive margin:}
    \[
    \gamma_{\min}(\boldsymbol{\theta}^\star)\ge \gamma_{0}>0.
    \]
    \item[(ii)] \textbf{Bounded dynamic range:}
    every coordinate of $\boldsymbol{\theta}^\star$ lies in $[-C_{\mathrm{rng}},C_{\mathrm{rng}}]$.
    \item[(iii)] \textbf{Parameter Lipschitzness on stored keys:}
    there exists $L_\theta>0$ such that for all
    $\boldsymbol{\theta},\boldsymbol{\theta}'\in[-C_{\mathrm{rng}},C_{\mathrm{rng}}]^P$,
    \begin{equation}
    \max_{i\in[F]}
    \|g_{\boldsymbol{\theta}}(\mathbf{k}_{i})-g_{\boldsymbol{\theta}'}(\mathbf{k}_{i})\|_{2}
    \le
    L_\theta\|\boldsymbol{\theta}-\boldsymbol{\theta}'\|_{2}.
    \label{eq:param_lipschitz_assumption_bits}
    \end{equation}
\end{enumerate}
Then there exists a floating-point parameter vector $\widetilde{\boldsymbol{\theta}}$
that stores the same fact set and satisfies
\[
\gamma_{\min}(\widetilde{\boldsymbol{\theta}})\ge \frac{\gamma_0}{2}.
\]
Moreover, one may choose $\widetilde{\boldsymbol{\theta}}$ by rounding each coordinate
of $\boldsymbol{\theta}^\star$ to a binary floating-point representation with $t$ mantissa bits, where
\[
t
=
O\!\left(
\log\!\left(
\frac{4C_{\mathrm{rng}}L_\theta R_{V}\sqrt{P}}{\gamma_0}
\right)
\right).
\]
In this case, each coordinate can be encoded using at most
\[
O\!\left(
\log\log C_{\mathrm{rng}}
+
\log\!\left(
\frac{4C_{\mathrm{rng}}L_\theta R_{V}\sqrt{P}}{\gamma_0}
\right)
\right)
\]
bits. Hence, the total number of bits needed to encode the quantized parameter vector
\(\widetilde{\boldsymbol{\theta}}\) is at most
\begin{equation}
\mathrm{Bits}(\widetilde{\boldsymbol{\theta}})
\le
P\cdot
O\!\left(
\log\log C_{\mathrm{rng}}
+
\log\!\left(
\frac{4C_{\mathrm{rng}}L_\theta R_{V}\sqrt{P}}{\gamma_0}
\right)
\right).
\label{eq:total_bits_bits}
\end{equation}
\end{theorem}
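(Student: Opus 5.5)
The plan is to combine \Cref{lem:margin_robustness_bits} with assumption (iii), and to choose the quantization so that the worst-case output perturbation on the stored keys is at most $\gamma_0/(4R_V)$. By \Cref{lem:margin_robustness_bits}, any $\widetilde{\boldsymbol{\theta}}$ with
\[
\max_{i}\|g_{\widetilde{\boldsymbol{\theta}}}(\mathbf{k}_{i})-g_{\boldsymbol{\theta}^\star}(\mathbf{k}_{i})\|_{2}\le \gamma_0/(4R_V)
\]
satisfies $\gamma_{\min}(\widetilde{\boldsymbol{\theta}})\ge\gamma_0/2$, and so stores the same fact set. By (iii), it is therefore enough to have $\widetilde{\boldsymbol{\theta}}\in[-C_{\mathrm{rng}},C_{\mathrm{rng}}]^P$ with
\[
\|\widetilde{\boldsymbol{\theta}}-\boldsymbol{\theta}^\star\|_{2}\le \gamma_0/(4L_\theta R_V).
\]
This will follow from a coordinatewise error bound $\delta$ with $\sqrt{P}\,\delta\le\gamma_0/(4L_\theta R_V)$. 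So the target per-coordinate accuracy is
\[
\delta:=\frac{\gamma_0}{4L_\theta R_V\sqrt{P}}.
\]

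Next comes the quantization itself. I round each coordinate $\theta^\star_\ell$ to the nearest binary floating-point number with $t$ mantissa bits. For $|\theta^\star_\ell|\le C_{\mathrm{rng}}$ this gives relative error at most $2^{-t}$, hence absolute error at most $2^{-t}C_{\mathrm{rng}}$. Coordinates that are tiny are handled either by allowing rounding to $0$ or by noting that the absolute error is still at most $2^{-t}C_{\mathrm{rng}}$ once subnormals or zero are permitted. Choosing
\[
t=\left\lceil\log_2\!\left(\frac{C_{\mathrm{rng}}}{\delta}\right)\right\rceil=O\!\left(\log\frac{4C_{\mathrm{rng}}L_\theta R_V\sqrt{P}}{\gamma_0}\right)
\]
makes every coordinate error at most $\delta$. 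Round-to-nearest never increases magnitude past the nearest representable value at or below $C_{\mathrm{rng}}$, up to a factor of $1+2^{-t}$. The small overshoot can be absorbed by rounding toward zero, which keeps $\widetilde{\boldsymbol{\theta}}$ inside the box on which (iii) holds.

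For the bit count, each coordinate needs $t$ mantissa bits, one sign bit, and an exponent. The exponent ranges over $[\log_2\delta-1,\log_2 C_{\mathrm{rng}}]$, so storing it takes $O(\log\log C_{\mathrm{rng}}+\log t)$ bits. The $\log t$ term is dominated by $t$. Summing over the $P$ coordinates gives \eqref{eq:total_bits_bits}.

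The main obstacle is the exponent accounting for very small coordinates. Their range could blow up the exponent width, and rounding them to zero, or to the smallest exponent level $\lfloor\log_2\delta\rfloor$, is what keeps it at $O(\log\log C_{\mathrm{rng}}+\log t)$. A second point to check is that the rounded vector stays in the box $[-C_{\mathrm{rng}},C_{\mathrm{rng}}]^P$ required by (iii), which the round-toward-zero choice above ensures.
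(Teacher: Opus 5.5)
Your proposal is correct and follows essentially the same route as the paper: use \Cref{lem:margin_robustness_bits} and assumption (iii) to reduce to the parameter-space target $\|\widetilde{\boldsymbol{\theta}}-\boldsymbol{\theta}^\star\|_2\le\gamma_0/(4L_\theta R_V)$, meet it by $t$-bit mantissa rounding with per-coordinate error $O(2^{-t})C_{\mathrm{rng}}$, and then count mantissa and exponent bits. Your two extra checks tidy up points the paper's proof passes over, at the cost of at most one extra mantissa bit: rounding toward zero keeps $\widetilde{\boldsymbol{\theta}}$ in the box where (iii) is assumed, and flushing coordinates below $\delta$ to zero keeps the exponent range bounded (the paper's claim $|e|=O(\log C_{\mathrm{rng}})$ silently ignores very small coordinates).
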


\begin{proof}
Let $\widetilde{\boldsymbol{\theta}}$ be obtained by rounding each coordinate of
$\boldsymbol{\theta}^\star$ to a binary floating-point representation with $t$ mantissa bits.
For standard floating-point rounding, each coordinate incurs relative error at most
$O(2^{-t})$, and hence absolute error at most
\[
O(2^{-t})\,C_{\mathrm{rng}}.
\]
Therefore
\[
\|\widetilde{\boldsymbol{\theta}}-\boldsymbol{\theta}^\star\|_{2}
\le
O(2^{-t})\,C_{\mathrm{rng}}\sqrt{P}.
\]
Choosing
\[
t
=
O\!\left(
\log\!\left(
\frac{4C_{\mathrm{rng}}L_\theta R_{V}\sqrt{P}}{\gamma_0}
\right)
\right)
\]
ensures that
\[
\|\widetilde{\boldsymbol{\theta}}-\boldsymbol{\theta}^\star\|_{2}
\le
\frac{\gamma_0}{4L_\theta R_V}.
\]
Applying the Lipschitz assumption (\Cref{eq:param_lipschitz_assumption_bits}),
\[
\max_{i\in[F]}
\|g_{\widetilde{\boldsymbol{\theta}}}(\mathbf{k}_{i})-g_{\boldsymbol{\theta}^\star}(\mathbf{k}_{i})\|_{2}
\le
L_\theta\|\widetilde{\boldsymbol{\theta}}-\boldsymbol{\theta}^\star\|_{2}
\le
\frac{\gamma_0}{4R_V}.
\]
Lemma~\ref{lem:margin_robustness_bits} therefore implies that
$\widetilde{\boldsymbol{\theta}}$ stores the same fact set and that
$\gamma_{\min}(\widetilde{\boldsymbol{\theta}})\ge \gamma_{0}/2$.

It remains to count bits.
Each nonzero floating-point coordinate can be written in the form
\[
\pm m\,2^e,
\]
where the mantissa $m$ is represented to $t$ bits of precision.
Since every coordinate lies in $[-C_{\mathrm{rng}},C_{\mathrm{rng}}]$, the exponent satisfies
$|e|=O(\log C_{\mathrm{rng}})$, and hence the exponent can be encoded using
$O(\log\log C_{\mathrm{rng}})$ bits.
Thus each coordinate can be encoded using at most
\[
O\!\left(
\log\log C_{\mathrm{rng}}
+
t
\right)
=
O\!\left(
\log\log C_{\mathrm{rng}}
+
\log\!\left(
\frac{4C_{\mathrm{rng}}L_\theta R_{V}\sqrt{P}}{\gamma_0}
\right)
\right)
\]
bits.
Multiplying by the \(P\) coordinates yields~\Cref{eq:total_bits_bits}.
\end{proof}

Next, we specialize the previous bit complexity theorem to our sketched bilinear MLP construction (\Cref{alg:sketched_k2_hebbian_whitening}) by bounding its dynamic range and Lipschitz constant. In the isotropic unit-norm regime of \Cref{sec:isotropic-case} and Appendix~\ref{app:theory:margin_bounds:rkrv}, they reduce to $C_{\mathrm{rng}},L_\theta\le d^{O(1)}$, yielding an explicit bit-complexity bound.

\begin{proposition}[Dynamic range and stored-key Lipschitzness for the unwhitened sketched-$K_{2}$ construction]
\label{prop:unwhitened_k2_norm_bounds}
Let \(\mathbf{a}_1,\dots,\mathbf{a}_m,\mathbf{b}_1,\dots,\mathbf{b}_m \in \mathbb{R}^d\) be sampled i.i.d. from \(N(0,\mathbf{I}_d)\), and consider the sketched-$K_{2}$ feature map
\[
g(\mathbf{x})=\frac{1}{\sqrt m}\bigl((\mathbf{a}_r^\top \mathbf{x})(\mathbf{b}_r^\top \mathbf{x})\bigr)_{r=1}^m.
\]
Let \(\bar{\mathbf{A}},\bar{\mathbf{G}}\in\mathbb{R}^{m\times d}\) be the matrices whose rows are \(\mathbf{a}_r^\top\) and \(\mathbf{b}_r^\top\), respectively, and realize \(g\) in bilinear MLP form by setting
\[
\mathbf{A}=m^{-1/4}\bar{\mathbf{A}},
\qquad
\mathbf{G}=m^{-1/4}\bar{\mathbf{G}}.
\]
Let the raw Hebbian readout be
\[
\mathbf{B}_0=\frac{1}{F}\mathbf{C}_f^\top \boldsymbol{\Phi},
\]
where the rows of \(\boldsymbol{\Phi}\in\mathbb{R}^{F\times m}\) are \(g(\mathbf{k}_i)^\top\)
and the rows of \(\mathbf{C}_f\in\mathbb{R}^{F\times d}\) are the value embeddings \(\mathbf{v}_{f(i)}^\top\).

Assume
\[
m,\;F,\;\delta^{-1}\le d^{O(1)}.
\]
Then, with probability at least \(1-\delta\) over the draw of
\(\{\mathbf{a}_r,\mathbf{b}_r\}_{r=1}^m\), the resulting bilinear MLP
\[
\mathbf{x}\mapsto \mathbf{B}_{0}\bigl((\mathbf{A}\mathbf{x})\odot(\mathbf{G}\mathbf{x})\bigr)
\]
has parameter dynamic range and stored-key parameter Lipschitz constant bounded by
\[
C_{\mathrm{rng}},\;L_\theta \le d^{O(1)}\,\poly(R_{K},R_{V}).
\]
where one may take
\[
C_{\mathrm{rng}} \le d^{O(1)}\max\{1,R_VR_{K}^2\},
\qquad
L_\theta \le d^{O(1)}\,R_{K}^2\max\{1,R_{V}^2R_{K}^4\}.
\]
\end{proposition}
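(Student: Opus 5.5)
We plan to reduce everything to high-probability bounds on the Gaussian matrices $\bar{\mathbf{A}},\bar{\mathbf{G}}$, and then to substitute those bounds into \Cref{lem:bilinear_param_lipschitz_bits}. The first step is a single good event $\mathcal{E}$ of probability at least $1-\delta$. We take a union bound over the $2md$ Gaussian entries and the standard operator-norm concentration for Gaussian matrices. On $\mathcal{E}$ we then have $\max_{r,\ell}|\bar{\mathbf{A}}_{r\ell}|,\max_{r,\ell}|\bar{\mathbf{G}}_{r\ell}|\le \sqrt{2\log(4md/\delta)}$ and $\|\bar{\mathbf{A}}\|_{\mathrm{op}},\|\bar{\mathbf{G}}\|_{\mathrm{op}}\le \sqrt m+\sqrt d+\sqrt{2\log(4/\delta)}$. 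The assumption $m,F,\delta^{-1}\le d^{O(1)}$ turns all of these into $d^{O(1)}$ bounds, and it also turns any $\sqrt{\log(\cdot)}$ factor into $d^{O(1)}$. The rescaling $m^{-1/4}$ only improves them. So the entries and operator norms of $\mathbf{A},\mathbf{G}$ are at most $d^{O(1)}$.

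Next we bound $\mathbf{B}_0$. For each stored key,
\[
\|g(\mathbf{k}_i)\|_2 \le \tfrac{1}{\sqrt m}\|\bar{\mathbf{A}}\mathbf{k}_i\|_2\|\bar{\mathbf{G}}\mathbf{k}_i\|_2 \le \tfrac{1}{\sqrt m}\|\bar{\mathbf{A}}\|_{\mathrm{op}}\|\bar{\mathbf{G}}\|_{\mathrm{op}}R_K^2\le d^{O(1)}R_K^2 .
\]
This uses the inequality $\|\mathbf{u}\odot\mathbf{v}\|_2\le\|\mathbf{u}\|_2\|\mathbf{v}\|_2$ from the proof of \Cref{lem:bilinear_param_lipschitz_bits}. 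Since $\mathbf{B}_0=\frac1F\sum_i \mathbf{v}_{f(i)}g(\mathbf{k}_i)^\top$ is an average of rank-one matrices, the triangle inequality gives $\|\mathbf{B}_0\|_{\mathrm{op}}\le\|\mathbf{B}_0\|_F\le \max_i\|\mathbf{v}_{f(i)}\|_2\|g(\mathbf{k}_i)\|_2\le d^{O(1)}R_VR_K^2$. Every entry of $\mathbf{B}_0$ obeys the same bound. Taking the maximum over the entries of $\mathbf{A}$, $\mathbf{G}$ and $\mathbf{B}_0$ gives $C_{\mathrm{rng}}\le d^{O(1)}\max\{1,R_VR_K^2\}$.

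For $L_\theta$ we apply \Cref{lem:bilinear_param_lipschitz_bits}. That lemma needs a uniform operator-norm bound $M$ on the whole parameter region $[-C_{\mathrm{rng}},C_{\mathrm{rng}}]^P$ used in \Cref{thm:bounded_bit_implementation}, and this is the one point needing care. The true parameters satisfy the sharper bounds above, but a generic point of the box only has entries bounded by $C_{\mathrm{rng}}$. We therefore use the crude bound $\|\mathbf{T}\|_{\mathrm{op}}\le\|\mathbf{T}\|_F\le\sqrt{md}\,C_{\mathrm{rng}}$. Since $\sqrt{md}\le d^{O(1)}$, this gives $M\le d^{O(1)}\max\{1,R_VR_K^2\}$.

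The lemma then gives $L_\theta=3M^2R_K^2\le d^{O(1)}R_K^2\max\{1,R_V^2R_K^4\}$, which is exactly the claimed form. Combining the two bounds, both quantities are $d^{O(1)}\poly(R_K,R_V)$.

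The main obstacle is the bookkeeping rather than any deep estimate. We must make sure that the logarithmic factors from the Gaussian tails, and the $\sqrt{md}$ factor from passing from the box to operator norms, are absorbed into $d^{O(1)}$. That absorption is exactly what the polynomial assumption on $m,F,\delta^{-1}$ guarantees.
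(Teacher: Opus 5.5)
Your proposal is correct and follows the paper's proof essentially step for step: Gaussian operator-norm bounds for $\bar{\mathbf{A}},\bar{\mathbf{G}}$, then the estimate $\|g(\mathbf{k}_i)\|_2\le\|\mathbf{A}\|_{\mathrm{op}}\|\mathbf{G}\|_{\mathrm{op}}R_K^2$, then the bound $\|\mathbf{B}_0\|_{\mathrm{op}}\le R_V\max_i\|g(\mathbf{k}_i)\|_2$ (you get it from the triangle inequality over rank-one terms, the paper from $\|\mathbf{C}_f\|_{\mathrm{op}}\|\boldsymbol{\Phi}\|_{\mathrm{op}}/F$), entries bounded by norms, and finally \Cref{lem:bilinear_param_lipschitz_bits}. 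Where you are more careful than the paper is in taking $M=\sqrt{md}\,C_{\mathrm{rng}}$, which makes the Lipschitz constant valid on the whole box $[-C_{\mathrm{rng}},C_{\mathrm{rng}}]^P$ required by assumption (iii) of \Cref{thm:bounded_bit_implementation}; the paper instead takes $M$ from the realized parameters' operator norms, which only certifies Lipschitzness where those norms stay below $M$, and your fix costs only a $d^{O(1)}$ factor while giving the same final bound.
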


\begin{proof}
Standard Gaussian random matrix bounds~\citep{vershynin2018high} imply that with probability at least $1-\delta$ over the draw of \(\{\mathbf{a}_r,\mathbf{b}_r\}_{r=1}^m\),
\[
\|\mathbf{A}\|_{\mathrm{op}},\|\mathbf{G}\|_{\mathrm{op}}
\lesssim
m^{-1/4}\Bigl(\sqrt m+\sqrt d+\sqrt{\log(1/\delta)}\Bigr).
\]
Under the assumption $m,\delta^{-1}\le d^{O(1)}$, it follows that
\[
\|\mathbf{A}\|_{\mathrm{op}},\|\mathbf{G}\|_{\mathrm{op}} \le d^{O(1)}.
\]

Next, since each value embedding has norm at most $R_{V}$,
\[
\|\mathbf{C}_{f}\|_{\mathrm{op}}\le \|\mathbf{C}_{f}\|_{F} \le \sqrt{F}\,R_{V}.
\]
Also,
\[
\|\boldsymbol{\Phi}\|_{\mathrm{op}}
\le
\|\boldsymbol{\Phi}\|_{F}
\le
\sqrt{F}\,\max_{i\in[F]}\|g(\mathbf{k}_{i})\|_{2}.
\]
For every stored key $\mathbf{k}_{i}$,
\[
\|g(\mathbf{k}_{i})\|_{2}
=
\bigl\|(\mathbf{A}\mathbf{k}_{i})\odot(\mathbf{G}\mathbf{k}_{i})\bigr\|_{2}
\le
\|\mathbf{A}\mathbf{k}_{i}\|_{2}\,\|\mathbf{G}\mathbf{k}_{i}\|_{2}
\le
\|\mathbf{A}\|_{\mathrm{op}}\|\mathbf{G}\|_{\mathrm{op}}R_{K}^2.
\]
Therefore
\[
\|\boldsymbol{\Phi}\|_{\mathrm{op}}
\le
\sqrt{F}\,\|\mathbf{A}\|_{\mathrm{op}}\|\mathbf{G}\|_{\mathrm{op}}R_{K}^2,
\]
and hence
\[
\|\mathbf{B}_{0}\|_{\mathrm{op}}
\le
\frac{1}{F}\|\mathbf{C}_{f}\|_{\mathrm{op}}\|\boldsymbol{\Phi}\|_{\mathrm{op}}
\le
R_{V}\|\mathbf{A}\|_{\mathrm{op}}\|\mathbf{G}\|_{\mathrm{op}}R_{K}^2
\le
d^{O(1)}R_VR_{K}^2
\]
with probability at least \(1-\delta\).

Now define
\[
M:=\max\{\|\mathbf{A}\|_{\mathrm{op}},\|\mathbf{G}\|_{\mathrm{op}},\|\mathbf{B}_{0}\|_{\mathrm{op}}\}.
\]
From the previous bounds,
\[
M \le d^{O(1)}\max\{1,R_VR_{K}^2\}
\]
with probability at least \(1-\delta\).

To pass from operator norms to coordinate bounds, note that for any matrix $\mathbf{T}$,
\[
|T_{ab}| = |\mathbf{e}_{a}^\top \mathbf{T} \mathbf{e}_{b}| \le \|\mathbf{T}\|_{\mathrm{op}}.
\]
Hence every scalar entry of $\mathbf{A},\mathbf{G},\mathbf{B}_{0}$ is bounded by $M$, and so one may take
\[
C_{\mathrm{rng}}\le M \le d^{O(1)}\max\{1,R_VR_{K}^2\}.
\]

Finally, \Cref{lem:bilinear_param_lipschitz_bits} yields
\[
L_\theta \le 3M^2R_{K}^2
\le
d^{O(1)}\,R_{K}^2\max\{1,R_{V}^2R_{K}^4\},
\]
which proves the claim.
\end{proof}

\begin{corollary}[Bit complexity in the isotropic regime for the unwhitened construction]
\label{cor:bit_complexity_isotropic}
Assume the isotropic key/value regime of
\Cref{cor:optimal-fact-storage-capacity}, and consider the sketched-$K_{2}$ construction of~\Cref{alg:sketched_k2_hebbian_whitening}.
Choose the width $m$ so that the corresponding real-valued construction
satisfies
\[
P\asymp md \asymp F\log(F/\delta)
\]
and has constant slack margin
\[
\gamma_{\min}(\boldsymbol{\theta}^\star)\ge c_{0}>0
\]
with high probability.
Assume further that $F,\delta^{-1}\le d^C$ for an absolute constant $C$.
Then with high probability the same fact set can be stored using
\begin{equation}
\mathrm{Bits}(\widetilde{\boldsymbol{\theta}})
=
O\!\Bigl(
F\log(F/\delta)\log d
\Bigr).
\label{eq:bit_complexity_isotropic_bits}
\end{equation}
\end{corollary}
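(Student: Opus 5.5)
The plan is to combine \Cref{thm:bounded_bit_implementation} with \Cref{prop:unwhitened_k2_norm_bounds}, conditioning on the high-probability event on which the real-valued construction $\boldsymbol{\theta}^\star=(\mathbf{A},\mathbf{G},\mathbf{B}_0)$ both has margin $\gamma_{\min}(\boldsymbol{\theta}^\star)\ge c_0$ and satisfies the norm bounds of \Cref{prop:unwhitened_k2_norm_bounds}. A union bound over these two failure events (each of probability at most $\delta$, after rescaling $\delta$ by a constant) gives the "with high probability" conclusion. In the isotropic unit-norm regime $R_K,R_V\le 1$. Hence \Cref{prop:unwhitened_k2_norm_bounds} gives $C_{\mathrm{rng}}\le d^{O(1)}$ and $L_\theta\le d^{O(1)}$. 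We need the proposition's hypothesis $m,F,\delta^{-1}\le d^{O(1)}$. The bounds on $F$ and $\delta^{-1}$ are assumed directly. For $m$, note that $m\asymp F\log(F/\delta)/d\le d^{O(1)}$ follows since $F,\delta^{-1}\le d^C$.

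Next we apply \Cref{thm:bounded_bit_implementation} with $\gamma_0=c_0$, which is constant. First, $\log\log C_{\mathrm{rng}}=O(\log\log d)$. Second,
\[
\log\!\left(\frac{4C_{\mathrm{rng}}L_\theta R_V\sqrt{P}}{c_0}\right)\le \log\!\left(d^{O(1)}\sqrt{P}\right)+O(1).
\]
Since $P\asymp md\le d^{O(1)}$, this is $O(\log d)$. So each coordinate costs $O(\log d)$ bits. Multiplying by $P\asymp F\log(F/\delta)$ yields \Cref{eq:bit_complexity_isotropic_bits}. The quantized $\widetilde{\boldsymbol{\theta}}$ keeps margin at least $c_0/2>0$ and so stores the same fact set.

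One subtlety needs care. \Cref{thm:bounded_bit_implementation}(iii) requires Lipschitzness over the whole box $[-C_{\mathrm{rng}},C_{\mathrm{rng}}]^P$. The proposition, however, bounds $L_\theta$ only via operator norms near $\boldsymbol{\theta}^\star$. I would handle this by applying \Cref{lem:bilinear_param_lipschitz_bits} locally, since the proof of \Cref{thm:bounded_bit_implementation} only uses Lipschitzness between $\boldsymbol{\theta}^\star$ and $\widetilde{\boldsymbol{\theta}}$. Because $\|\widetilde{\boldsymbol{\theta}}-\boldsymbol{\theta}^\star\|_2\le 1$ after rounding, the operator norms of the rounded matrices are at most $M+1$. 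The local Lipschitz constant $3(M+1)^2R_K^2$ is therefore still $d^{O(1)}$.

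I expect this step, together with checking that $m\le d^{O(1)}$ under the stated assumptions, to be the main point requiring care. The rest is bookkeeping of logarithms.
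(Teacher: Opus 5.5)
Your proposal is correct and takes the same route as the paper. You invoke \Cref{prop:unwhitened_k2_norm_bounds} with $R_K,R_V\le 1$ (checking $m\le d^{O(1)}$) to get $C_{\mathrm{rng}},L_\theta\le d^{O(1)}$, then apply \Cref{thm:bounded_bit_implementation} with $\gamma_0=c_0$ to get $O(\log d)$ bits per coordinate times $P\asymp F\log(F/\delta)$. Your extra point about hypothesis (iii) is a refinement the paper skips: its proposition only controls Lipschitzness through operator norms at $\boldsymbol{\theta}^\star$, not over the whole box. Your local fix via \Cref{lem:bilinear_param_lipschitz_bits}, with operator norms at most $M+1$, is valid. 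Alternatively, every matrix in the box has operator norm at most $C_{\mathrm{rng}}\sqrt{md}=d^{O(1)}$, so the global box constant is also $d^{O(1)}$.
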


\begin{proof}
By the isotropic real-valued capacity result, the unwhitened construction stores
$F$ facts using
\[
P\asymp md\asymp F\log(F/\delta)
\]
parameters, up to an absolute constant factor.
In the isotropic unit-norm regime, $R_{K},R_{V}\le 1$.
Since $F \leq d^{O(1)}$, \Cref{prop:unwhitened_k2_norm_bounds} yields
\[
C_{\mathrm{rng}},L_\theta \le d^{O(1)}
\]
with high probability.
Applying \Cref{thm:bounded_bit_implementation},
\[
\mathrm{Bits}(\widetilde{\boldsymbol{\theta}})
\le
P\,
O\!\left(
\log\log C_{\mathrm{rng}}
+
\log\!\left(
\frac{4C_{\mathrm{rng}}L_\theta R_{V}\sqrt{P}}{c_0}
\right)
\right).
\]
Since $C_{\mathrm{rng}},L_\theta,P\le d^{O(1)}$ under the standing assumption
$F,\delta^{-1}\le d^C$, the quantity inside the outer \(O(\cdot)\) is \(O(\log d)\).
Combining this with $P\asymp F\log(F/\delta)$ yields
\Cref{eq:bit_complexity_isotropic_bits}.
\end{proof}

%% file: appendix/theory/02_hebbian_mlps/04_ntk_construction.tex
For completeness, we describe the NTK baseline we implement from
\citet{nichani2024understandingfactualrecalltransformers}.

Let $\mathbf{K}\in\mathbb{R}^{F\times d}$ be the matrix whose $i$th row is $\mathbf{k}_{i}^\top$, and let
$\mathbf{C}_{f}\in\mathbb{R}^{F\times d_c}$ be the matrix whose $i$th row is $\mathbf{c}_{f(i)}^\top$,
where typically $\mathbf{c}_{j}=\mathbf{v}_{j}$. Given hidden width $m$, sample random gate directions
$\mathbf{w}_{r}\stackrel{i.i.d.}{\sim}\mathcal{N}(0,I_{d})$ and random output directions
$\mathbf{p}_{r}\in\mathbb{R}^{d_c}$ with $\|\mathbf{p}_{r}\|_{2}=1$ for $r=1,\dots,m$. Writing
$\mathbf{W}_{\mathrm{gate}}\in\mathbb{R}^{m\times d}$ for the matrix with rows $\mathbf{w}_{r}^\top$ and
$\mathbf{P}=[\mathbf{p}_{1},\dots,\mathbf{p}_{m}]\in\mathbb{R}^{d_{c}\times m}$, define the degree-$1$ Hermite feature
matrix
\[
\mathbf{H} \;:=\; \mathrm{He}_{1}(\mathbf{K}\mathbf{W}_{\mathrm{gate}}^\top)
\;=\;
\mathbf{K}\mathbf{W}_{\mathrm{gate}}^\top \in \mathbb{R}^{F\times m},
\]
where $\mathrm{He}_{1}(t)=t$ is applied entrywise. The up-projection is then chosen as
\[
\mathbf{W}_{\mathrm{up}}
\;:=\;
\frac{1}{m}\big(\mathbf{H}\odot(\mathbf{C}_f\mathbf{P})\big)^\top \mathbf{K} \in \mathbb{R}^{m\times d}.
\]

The NTK MLP construction is then
\[
\hat{\mathbf{c}}_{\mathrm{NTK}}(\mathbf{x})
\;=\;
\mathbf{P}\Big(\sigma(\mathbf{W}_{\mathrm{gate}}\mathbf{x})\odot(\mathbf{W}_{\mathrm{up}}\mathbf{x})\Big).
\]
Throughout this work, we take $\sigma=\mathrm{ReLU}$.

Note that \citet{nichani2024understandingfactualrecalltransformers}'s construction requires choosing a Hermite degree $k$.
This choice plays a role analogous to our kernel choice: it determines which degree polynomial interactions are emphasized by the construction.
In the most favorable case $k=1$, the fit uses the linear features $\mathrm{He}_{1}(\mathbf{K}\mathbf{W}_{\mathrm{gate}}^\top)=\mathbf{K}\mathbf{W}_{\mathrm{gate}}^\top$, and the realized finite-width model remains a gated bilinear MLP.
Thus, in expectation over the random features, the $k=1$ baseline captures quadratic interactions, like how our sketched bilinear construction approximates a quadratic kernel.
This result, and the fact that the capacity bounds in~\citet{nichani2024understandingfactualrecalltransformers} degrade exponentially with $k$, makes using $k=1$ the fairest comparison to our sketched-$K_{2}$ construction.
The main difference is that the NTK construction folds the target codes into the hidden coefficients through $\mathbf{C}_f\mathbf{P}$, whereas our construction uses an explicit bilinear random-feature map followed by a Hebbian readout.

%% file: appendix/theory/03_margin_bounds_full.tex
\newcommand{\vX}{\mathbf{X}}
\newcommand{\vY}{\mathbf{Y}}

\renewcommand{\paragraph}[1]{\par\medskip\noindent\textbf{#1}\enspace}

\newpage

\subsection{Margin Bounds Setup}
\label{app:theory:margin_bounds}
We lay out the setup we'll use throughout the appendix to prove our margin bounds.
\subsection*{Stored items and kernel}
We store $F$ key--value items $\{(\vk_{t},\vv_{t})\}_{t=1}^F$ with keys $\vk_{t}\in\R^d$ and values $\vv_{t}\in\R^d$.
A feature map $\phi:\R^d\to\R^p$ induces a kernel $K:\R^d\times \R^d \to \R$
\[
\hat{\matK}_{ti} \defeq K(\vk_{t}, \vk_{i})=\ip{\phi(\vk_{t})}{\phi(\vk_{i})}\qquad (t,i\in[F]).
\]
We write $\hat{\matK}\in\R^{F\times F}$ for the Gram matrix.

\subsection*{Codes and retrieval output}
Each index $t\in[F]$ has an associated code vector $\vc_{t}\in\R^{d}$.
Given a stored query at index $i$ (i.e. query key $\vk_{i}$), the retrieval output is
\begin{equation}
 \vy_{i} \defeq \sum_{t=1}^F \vc_{t}\,\hat{\matK}_{ti}\in\R^{d}.
 \label{eq:yi}
\end{equation}
Unless otherwise noted, for simplicity, we assume $\vc_{t}=\vv_{t}$ for all the upcoming theorems and their proofs.

\subsection*{Pairwise margin}
For a stored index $i$ and a competitor $j\neq i$, we define the pairwise margin
\begin{equation}
\gamma_{ij} \defeq \ip{\vv_{i}-\vv_j}{\vy_i}.
\label{eq:margin}
\end{equation}
We write $\gamma_{\min}\defeq\min_{i\neq j}\gamma_{ij}$ for the worst-case (minimum) pairwise margin.
Expanding, we see that the pairwise margin decomposes into signal and cross-talk components.

\begin{equation}
\gamma_{ij}
= \underbrace{\hat{\matK}_{ii}\,\ip{\vv_{i}-\vv_j}{\vc_i}}_{\text{signal }}
+ \underbrace{\sum_{t\neq i}\hat{\matK}_{ti}\,\ip{\vv_{i}-\vv_j}{\vc_t}}_{\text{cross-talk }}.
\label{eq:signal-crosstalk}
\end{equation}

\subsection*{Margin convention (absorbing the competitor)}
For the pairwise margin $\gamma_{ij}$, the single term $t=j$ inside the cross-talk sum in \eqref{eq:signal-crosstalk}
corresponds to the specific \emph{competitor} item. For simplicity of our proofs, in the rest of the appendix, we re-define the signal and cross-talk by absorbing the competitor term into the signal:

\begin{equation}
\gamma_{ij} = \underbrace{\widetilde{s}_{ij}}_{\text{signal}} + \underbrace{\widetilde{z}_{ij}}_{\text{cross-talk}}
\label{eq:new-signal-crosstalk}
\end{equation}
with
\begin{equation}
    \widetilde{s}_{ij} \defeq \hat{\matK}_{ii}\,\ip{\vv_{i}-\vv_j}{\vc_i} + \hat{\matK}_{ji}\,\ip{\vv_{i}-\vv_j}{\vc_j}, \qquad
    \widetilde{z}_{ij} \defeq \sum_{t\notin\{i,j\}}\hat{\matK}_{ti}\,\ip{\vv_{i}-\vv_j}{\vc_t}
    \label{eq:new-signal-crosstalk-sij-zij}
\end{equation}

\subsection{Embedding Geometric Summary Statistics Definitions}\label{app:emb-geo-defns}
We collect here the formal definitions of all geometric summary statistics that enter our margin bounds.
Throughout, $\hat{\matK}$ denotes the kernel Gram matrix with entries $\hat{\matK}_{ti} = K(\vk_t, \vk_i)$, and $\vv_1,\dots,\vv_F \in \R^d$ are the stored value vectors with associated Hebbian codes $\vc_1,\dots,\vc_F \in \R^d$.

\paragraph{Key-geometry statistics.}

\begin{definition}[Diagonal and off-diagonal kernel energies]\label{def:Kdiag}
\[
K_{\min}^{\mathrm{diag}} \;\defeq\; \min_{i \in [F]}\, \hat{\matK}_{ii},
\qquad
K_{\max}^{\mathrm{off}} \;\defeq\; \max_{i \neq j}\, |\hat{\matK}_{ij}|.
\]
$K_{\min}^{\mathrm{diag}}$ is the minimum kernel self-similarity; $K_{\max}^{\mathrm{off}}$ is the maximum off-diagonal kernel entry.
\end{definition}

\begin{definition}[Kernel column energy]\label{def:Ecol}
\[
E_K \;\defeq\; \max_{i \in [F]}\, \sum_{t \neq i} \hat{\matK}_{ti}^2.
\]
$E_K$ measures the worst-case squared $\ell_2$ energy of an off-diagonal kernel column, capturing key-embedding crowding: it grows when keys cluster in embedding space and kernel overlaps are large.
\end{definition}

\paragraph{Value-geometry statistics.}

\begin{definition}[Value separability]\label{def:Vmin}
\[
V_{\min} \;\defeq\; \min_{i \neq j}\, \langle \vv_i - \vv_j,\, \vv_i \rangle,
\qquad
V_{\max} \;\defeq\; \max_{i \neq j}\, |\langle \vv_i - \vv_j,\, \vv_j \rangle|.
\]
$V_{\min}$ is the signal-side value separability floor; $V_{\max}$ is the cross-talk-side value interaction ceiling.
\end{definition}

For each pair $i \neq j$, let $\vY^{(ij)} \defeq \bigl(\langle \vv_i - \vv_j,\, \vc_t \rangle\bigr)_{t \notin \{i,j\}} \in \R^{F-2}$ denote the vector of value-difference inner products with all non-target, non-competitor codes, and let $\vone \in \R^{F-2}$ denote the all-ones vector.

\begin{definition}[Mean competitor alignment]\label{def:BY}
\[
B_Y
\defeq
\max_{i\neq j}
\left|
\sum_{t\notin\{i,j\}}
\left\langle \vv_i-\vv_j,\vc_t\right\rangle
\right|
=
\max_{i\neq j}
\left|
\left\langle \vone,\vY^{(ij)}\right\rangle
\right|.
\]
$B_Y$ controls the bias contribution to cross-talk arising from the mean of the kernel off-diagonal entries (equal to $1/d$ under isotropic keys).
\end{definition}

\begin{definition}[Value-difference energy]\label{def:EY}
\[
E_v \;\defeq\; \max_{i \in [F]}\max_{j \neq i}\, \sum_{t \neq i} \langle \vv_i - \vv_j,\, \vc_t \rangle^2.
\]
$E_v$ measures the worst-case squared $\ell_2$ energy of the value-difference inner-product vector, capturing value-embedding interference: it grows when value embeddings cluster and non-target codes align with the target direction.
\end{definition}

\begin{definition}[Value sparsity]\label{def:Lv}
Recall $\vY^{(ij)}$ from \cref{def:BY}. Define
\[
L_v \;\defeq\; \max_{i \neq j}\, \frac{\|\vY^{(ij)}\|_1^2}{\|\vY^{(ij)}\|_2^2}
\]
$L_v \in [1, F-2]$ is an effective sparsity parameter: it equals $1$ when the energy of $\vY^{(ij)}$ is concentrated on a single coordinate and equals $F-2$ when it is spread uniformly. It amplifies cross-talk when value-difference inner products have heavy-tailed distributions across competitors.
\end{definition}

\paragraph{Key--value coupling.}

\begin{definition}[Coupling factor]\label{def:kappa}
For stored index $i$ and competitor $j \neq i$, let $E_K(i) \defeq \sum_{t \neq i} \hat{\matK}_{ti}^2$ and $E_v(i,j) \defeq \sum_{t \neq i} \langle \vv_i - \vv_j, \vc_t \rangle^2$ be the pairwise energies. Define the pairwise coupling
\[
\kappa^{ij}
\;\defeq\;
\frac{\displaystyle\left|\sum_{t \neq i} \hat{\matK}_{ti}\,\langle \vv_i - \vv_j,\, \vc_t \rangle\right|}{\sqrt{E_K(i)}\,\sqrt{E_v(i,j)}}
\;\in\; [0,1],
\]
\[
\kappa \;\defeq\; \max_{i \in [F]}\,\max_{j \neq i}\; \kappa^{ij}.
\]
$\kappa$ quantifies the worst-case alignment between the kernel column pattern and the value-interference pattern: $\kappa = 0$ when the two are orthogonal and $\kappa = 1$ when they are perfectly aligned.
\end{definition}

\subsection{Cross-Talk Bounds}
We start by providing upper bounds for the cross-talk term $\widetilde{z}_{ij}$.
\paragraph{Cross-talk as inner product.} First, we define cross-talk as an inner product. Concretely, fix $(i,j)$ with $j\neq i$ and define the off-diagonal kernel column
\begin{equation}
\vX^{(ij)} \defeq \big(\hat{\matK}_{ti}\big)_{t\notin \{i,j\}}\in\R^{F-2}.
\label{eq:Xdef}
\end{equation}
And for the value/code side:
\begin{equation}
\vY^{(ij)} \defeq \big(\ip{\vv_{i}-\vv_j}{\vc_t})_{t\notin \{ i,j\}}\in\R^{F-2},
\label{eq:Ydef}
\end{equation}
Then the cross-talk term is the inner product
\begin{equation}
 \widetilde{z}_{ij}
 \defeq \sum_{t\notin\{i,j\}}\hat{\matK}_{ti}\,\ip{\vv_{i}-\vv_j}{\vc_t}
 =\ip{\vX^{(ij)}}{\vY^{(ij)}}.
 \label{eq:Zij}
\end{equation}
We consider the largest (worst case) possible cross-talk:
\begin{equation}
 \widetilde{z}_{\max}\ \defeq\ \max_{i\neq j}|\widetilde{z}_{ij}|.
 \label{eq:zmax}
\end{equation}
Note that $\widetilde{z}_{ij}\le \widetilde{z}_{\max}$ for all $i\neq j$.

\paragraph{Cross-talk summary statistics.} For each $(i,j)$ define the (squared) energies
\begin{equation}
\widetilde{E}_{v}(i,j) \defeq \norm{\vY^{(ij)}}_{2}^2 = \sum_{t\notin\{i,j\}}\ip{\vv_{i}-\vv_j}{\vc_t}^2,
\qquad
\widetilde{E}_{K}(i,j) \defeq \norm{\vX^{(ij)}}_{2}^2 = \sum_{t\notin\{i,j\}}\hat{\matK}_{ti}^2.
\label{eq:EvEk}
\end{equation}
Define the coupling factor
\begin{equation}
\widetilde{\kappa}^{ij} \defeq \frac{|\ip{\vX^{(ij)}}{\vY^{(ij)}}|}{\norm{\vX^{(ij)}}_{2}\,\norm{\vY^{(ij)}}_2}\in[0,1],
\label{eq:kappaij}
\end{equation}
(setting $\widetilde{\kappa}^{ij}=0$ if a norm is zero).
Now define the worst-case summary statistics
\begin{equation}
\widetilde{E}_{v} \defeq \max_{i\neq j} \widetilde{E}_{v}(i,j),
\qquad
\widetilde{E}_{K} \defeq \max_{i\neq j} \widetilde{E}_{K}(i,j),
\qquad
\widetilde{\kappa} \defeq \max_{i\neq j} \widetilde{\kappa}^{ij}.
\label{eq:summary-stats}
\end{equation}

\subsubsection{Arbitrary Keys, Arbitrary Values}
\label{app:deterministic}

\begin{theorem}[Cross-talk bound --- arbitrary keys, arbitrary values]
\label{lem:deterministic}
With the summary statistics (\eqref{eq:summary-stats}),
\begin{equation}
\widetilde{z}_{\max}\ \le\ \sqrt{\widetilde{E}_K}\,\sqrt{\widetilde{E}_v}\,\widetilde{\kappa}.
\label{eq:deterministic-bound-app}
\end{equation}
\end{theorem}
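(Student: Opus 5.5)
The bound follows almost immediately from the definition of the coupling factor, so the plan is to unpack definitions and take maxima in the right order. Fix any pair $(i,j)$ with $j\neq i$. By \eqref{eq:Zij}, the cross-talk is the inner product $\widetilde{z}_{ij}=\ip{\vX^{(ij)}}{\vY^{(ij)}}$.

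\textbf{Step 1 (the pair $(i,j)$).} The argument splits into two cases.
\begin{itemize}
\item If either $\vX^{(ij)}$ or $\vY^{(ij)}$ is zero, then $\widetilde{z}_{ij}=0$. The claimed bound has a nonnegative right-hand side, so this case is trivial. This is exactly why the convention $\widetilde{\kappa}^{ij}=0$ is harmless.
\item Otherwise, the definition \eqref{eq:kappaij} rearranges exactly to
\[
|\widetilde{z}_{ij}|=\widetilde{\kappa}^{ij}\,\norm{\vX^{(ij)}}_2\,\norm{\vY^{(ij)}}_2=\widetilde{\kappa}^{ij}\sqrt{\widetilde{E}_K(i,j)}\sqrt{\widetilde{E}_v(i,j)},
\]
using \eqref{eq:EvEk}. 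Cauchy--Schwarz guarantees $\widetilde{\kappa}^{ij}\in[0,1]$, so this is well defined.
\end{itemize}

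\textbf{Step 2 (pass to maxima).} All three factors are nonnegative. Each can therefore be bounded by its worst-case summary statistic from \eqref{eq:summary-stats}: $\widetilde{\kappa}^{ij}\le\widetilde{\kappa}$, $\widetilde{E}_K(i,j)\le\widetilde{E}_K$ and $\widetilde{E}_v(i,j)\le\widetilde{E}_v$. Monotonicity of the square root and of products of nonnegative reals then gives $|\widetilde{z}_{ij}|\le\sqrt{\widetilde{E}_K}\sqrt{\widetilde{E}_v}\,\widetilde{\kappa}$ for every pair. Taking the maximum over $i\neq j$, as in \eqref{eq:zmax}, yields \eqref{eq:deterministic-bound-app}.

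\textbf{Main obstacle.} There is no substantive obstacle. The only point needing care is that the three maxima may be attained at different pairs $(i,j)$. Because we bound each factor separately and every factor is nonnegative, decoupling the maxima only loosens the bound and never invalidates it. If a crude version without $\widetilde{\kappa}$ is wanted, it follows from $\widetilde{\kappa}\le1$, which is plain Cauchy--Schwarz.
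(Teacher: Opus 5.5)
Your proposal is correct and follows the paper's proof essentially step for step. Both arguments dispose of the degenerate case where a norm vanishes, rewrite the definition of $\widetilde{\kappa}^{ij}$ as the exact identity $|\widetilde{z}_{ij}|=\sqrt{\widetilde{E}_K(i,j)}\,\sqrt{\widetilde{E}_v(i,j)}\,\widetilde{\kappa}^{ij}$, and then bound each nonnegative factor by its worst-case statistic before maximizing over $i\neq j$.
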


\begin{proof}
Fix $i\in[F]$ and $j\neq i$.
If $\|\vX^{(ij)}\|_{2}=0$ or $\|\vY^{(ij)}\|_{2}=0$, then $\widetilde{z}_{ij}=0$ and $\widetilde{\kappa}^{ij}=0$, so the claim is trivial.
Otherwise, by definition of $\widetilde{\kappa}^{ij}$ in \eqref{eq:kappaij},
\[
|\widetilde{z}_{ij}| = |\ip{\vX^{(ij)}}{\vY^{(ij)}}| = \|\vX^{(ij)}\|_{2}\,\|\vY^{(ij)}\|_{2}\,\widetilde{\kappa}^{ij}
= \sqrt{\widetilde{E}_{K}(i,j)}\,\sqrt{\widetilde{E}_{v}(i,j)}\,\widetilde{\kappa}^{ij}.
\]
Using $\widetilde{E}_{K}(i,j)\le \widetilde{E}_{K}$, $\widetilde{E}_{v}(i,j)\le \widetilde{E}_{v}$, and $\widetilde{\kappa}^{ij}\le\widetilde{\kappa}$ and taking $\max_{i\neq j}$ yields \eqref{eq:deterministic-bound-app}.
\end{proof}

\subsubsection{Arbitrary Keys, Isotropic Values}
\begin{theorem}[Cross-talk bound --- arbitrary keys, isotropic values]
\label{app:regime1}
Assume $\vv_{1},\dots,\vv_{F}$ are i.i.d.\ uniform on $\Sph\subset\R^{d}$.
Treat the kernel matrix $\hat{\matK}$ as arbitrary.
Fix $\delta\in(0,1)$ and let
\[
L\ \defeq\ \log\Big(\frac{C_{0} F^2}{\delta}\Big)
\]
for a sufficiently large absolute constant $C_{0}$.
Then with probability at least $1-\delta$ (over the values), the following hold:
\begin{align*}
\widetilde{E}_{v}
&\ \le\ C_{1}\,\frac{(F-2)+L}{d-1},\\
\widetilde{\kappa}
&\ \le\ C_{2}\,\sqrt{\frac{L}{F-2}}.
\end{align*}
Consequently, combining these summary-statistic bounds with Theorem~\ref{lem:deterministic} yields
\[
\widetilde{z}_{\max}
\ \le\
 C_{3}\,\sqrt{\widetilde{E}_K}\,\sqrt{\frac{L}{d-1}}\,\sqrt{1+\frac{L}{F-2}}.
\]
In particular, if $F-2\ge L$ and $d\ge 2$, then
\[
\widetilde{z}_{\max}
\ \le\
 C_{4}\,\sqrt{\widetilde{E}_K}\,\sqrt{\frac{L}{d}}.
\]
\end{theorem}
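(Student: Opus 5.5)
The plan is to condition on the pair $(\vv_i,\vv_j)$ and exploit the independence of the remaining values. The kernel matrix $\hat{\matK}$ is treated as fixed and arbitrary, but independent of the values, so the vector $\vX^{(ij)}$ is deterministic throughout. Since $\vc_t=\vv_t$, each coordinate of $\vY^{(ij)}$ is $\ip{\vu}{\vv_t}$ with $\vu\defeq\vv_i-\vv_j$ and $\|\vu\|_2\le 2$. Conditional on $(\vv_i,\vv_j)$, the vectors $\{\vv_t\}_{t\notin\{i,j\}}$ are i.i.d. uniform on $\Sph$ and independent of $\vu$. Hence the $F-2$ coordinates of $\vY^{(ij)}$ are i.i.d. and centered, with variance $\|\vu\|_2^2/d$. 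By the standard sub-Gaussianity of one-dimensional marginals of the uniform sphere, they are sub-Gaussian with proxy $C\|\vu\|_2^2/(d-1)$. Every estimate below is made for a fixed pair $(i,j)$ with failure probability $\delta/F^2$, and then a union bound over the at most $F^2$ ordered pairs is taken. This is why $L=\log(C_0F^2/\delta)$ appears.

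\textbf{Step 1 (energy $\widetilde{E}_v$).} The quantity $\widetilde{E}_v(i,j)=\sum_{t}\ip{\vu}{\vv_t}^2$ is a sum of $F-2$ independent sub-exponential variables. Their mean is $\|\vu\|_2^2/d$ and their $\psi_1$-norm is $O(\|\vu\|_2^2/(d-1))$. Bernstein's inequality then gives $\widetilde{E}_v(i,j)\le C\|\vu\|_2^2\,((F-2)+L)/(d-1)$ with conditional probability at least $1-\delta/(3F^2)$. Using $\|\vu\|_2^2\le 4$, integrating out the conditioning, and taking the union bound yields the first claim.

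\textbf{Step 2 (coupling $\widetilde\kappa$).} For the numerator, write $\ip{\vX^{(ij)}}{\vY^{(ij)}}=\ip{\vu}{\sum_t \hat{\matK}_{ti}\vv_t}$. This is a weighted sum of independent sub-Gaussian variables with deterministic weights $\hat{\matK}_{ti}$. Hoeffding's inequality gives $|\ip{\vX^{(ij)}}{\vY^{(ij)}}|\le C\|\vu\|_2\|\vX^{(ij)}\|_2\sqrt{L/(d-1)}$. For the denominator, I use the lower tail of Bernstein's inequality from Step 1. When $F-2\ge C'L$, it gives $\|\vY^{(ij)}\|_2^2\ge c\|\vu\|_2^2(F-2)/d$. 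Dividing the two bounds, the factors $\|\vu\|_2$ and $\|\vX^{(ij)}\|_2$ cancel, leaving $\widetilde\kappa^{ij}\le C\sqrt{L/(F-2)}$ (the ratio $d/(d-1)\le 2$ is absorbed into the constant). When $F-2<C'L$, the trivial bound $\widetilde\kappa^{ij}\le 1\le \sqrt{C'}\sqrt{L/(F-2)}$ already suffices. The event $\vu=0$ has probability zero.

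\textbf{Step 3 (combine).} Substituting both bounds into Theorem~\ref{lem:deterministic} gives
\[
\widetilde z_{\max}\le C\sqrt{\widetilde E_K}\sqrt{\tfrac{(F-2)+L}{d-1}}\sqrt{\tfrac{L}{F-2}}=C_3\sqrt{\widetilde E_K}\sqrt{\tfrac{L}{d-1}}\sqrt{1+\tfrac{L}{F-2}}.
\]
If $F-2\ge L$, the last factor is at most $\sqrt2$, and $d-1\ge d/2$ for $d\ge 2$, which gives the final form. As a sanity check, the numerator bound of Step 2 alone already yields $\widetilde z_{\max}\le C\sqrt{\widetilde E_K}\sqrt{L/d}$ directly, which is consistent with this.

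The main obstacle I expect is Step 2's denominator. It requires a \emph{lower} deviation bound on $\|\vY^{(ij)}\|_2$ that holds uniformly over all pairs, and it forces the case split on $F-2$ versus $L$. A secondary technical point is making sure that the conditioning on $(\vv_i,\vv_j)$ legitimately decouples $\vu$ from the remaining $\vv_t$. I will also track the sphere sub-Gaussian constants carefully, since these are what produce $d-1$ rather than $d$.
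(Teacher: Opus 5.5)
Your proposal is correct and follows essentially the same route as the paper. In both, you condition on $(\vv_i,\vv_j)$, use Bernstein for the upper tail of $\widetilde E_v$, and control $\widetilde\kappa$ by a sub-Gaussian bound on the numerator plus a Bernstein lower tail on $\|\vY^{(ij)}\|_2$ with the case split on whether $F-2\ge C'L$, then union bound over pairs and plug into Theorem~\ref{lem:deterministic}. Your sanity-check remark is also valid and slightly sharper: the numerator bound alone already gives $\widetilde z_{\max}\le C\sqrt{\widetilde E_K}\sqrt{L/(d-1)}$ without the hypothesis $F-2\ge L$.
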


\begin{proof}
The proof combines two auxiliary concentration bounds via a union bound.

\paragraph{Concentration of $\widetilde{E}_{v}$ (invoke Lemma~\ref{lem:Ev-iso-values}).}
Apply Lemma~\ref{lem:Ev-iso-values} with failure probability $\delta/2$.
This yields an event $\mathcal{E}_{1}$ such that
$\Pbb(\mathcal{E}_{1}^c)\le \delta/2$ and on $\mathcal{E}_{1}$,
\(
\widetilde{E}_{v} \le C\frac{(F-2)+\widetilde L}{d-1}
\)
with $\widetilde L=\log(\tfrac{C_{0} F^2}{\delta/2})$.
Since $\widetilde L=\log(\tfrac{2C_{0} F^2}{\delta})\le \log(\tfrac{C_{0}' F^2}{\delta})$ for $C_{0}'=2C_{0}$,
we may rewrite the bound using $L=\log(\tfrac{C_{0}' F^2}{\delta})$ after adjusting the leading constant.

\paragraph{Concentration of $\widetilde{\kappa}$ (invoke Lemma~\ref{lem:kappa-iso-values}).}
Apply Lemma~\ref{lem:kappa-iso-values} with failure probability $\delta/2$.
This yields an event $\mathcal{E}_{2}$ such that
$\Pbb(\mathcal{E}_{2}^c)\le \delta/2$ and on $\mathcal{E}_{2}$,
\(
\widetilde{\kappa} \le C\sqrt{\tfrac{\widetilde L}{F-2}},
\)
with the same logarithmic factor $\widetilde L$ as above.
Again we rewrite in terms of $L=\log(\tfrac{C_{0} F^2}{\delta})$ by enlarging the absolute constant.

\paragraph{Union bound and plug-in.}
By a union bound,
\[
\Pbb(\mathcal{E}_{1}\cap\mathcal{E}_{2})\ \ge\ 1-\Pbb(\mathcal{E}_{1}^c)-\Pbb(\mathcal{E}_{2}^c)\ \ge\ 1-\delta.
\]
On $\mathcal{E}_{1}\cap\mathcal{E}_{2}$, plug the bounds on $\widetilde{E}_{v}$ and $\widetilde{\kappa}$ into
Theorem~\ref{lem:deterministic} to obtain the stated cross-talk bound.
If additionally $F-2\ge L$, then $\sqrt{1+L/(F-2)}\le \sqrt{2}$ and hence
\[
\widetilde{z}_{\max}\ \le\ C\,\sqrt{\widetilde{E}_K}\,\sqrt{\frac{L}{d-1}}\ \le\ C\,\sqrt{\widetilde{E}_K}\,\sqrt{\frac{L}{d}}
\]
which is exactly the final bound stated in the theorem.
\end{proof}

\subsubsection{Isotropic Keys, Arbitrary Values (Bilinear Kernel)}
\begin{lemma}[Mean$+$residual deterministic cross-talk decomposition]
\label{lem:mean-residual-app}
Fix a pair $(i,j)$ with $j\neq i$.
For any scalar $\mu\in\R$, define the \emph{centered} truncated column
\[
\vX^{\circ(ij)} \defeq \vX^{(ij)}-\mu\,\vone.
\]
Then
\[
\widetilde{z}_{ij}=\ip{\vX^{(ij)}}{\vY^{(ij)}}
= \mu\,\ip{\vone}{\vY^{(ij)}} + \ip{\vX^{\circ(ij)}}{\vY^{(ij)}}.
\]
Consequently, define
\[
B_{Y} \defeq \max_{i\neq j}\big|\ip{\vone}{\vY^{(ij)}}\big|,
\qquad
E_{K}^\circ(i,j) \defeq \|\vX^{\circ(ij)}\|_{2}^2,
\qquad
\kappa^{\circ(ij)} \defeq \frac{|\ip{\vX^{\circ(ij)}}{\vY^{(ij)}}|}{\|\vX^{\circ(ij)}\|_{2}\,\|\vY^{(ij)}\|_2},
\]
with the convention $\kappa^{\circ(ij)}=0$ if $\|\vX^{\circ(ij)}\|_{2}\,\|\vY^{(ij)}\|_{2}=0$.
Also define the worst-case centered energy and centered coupling
\[
E_{K}^\circ \defeq \max_{i\neq j}E_{K}^\circ(i,j),
\qquad
\kappa^\circ \defeq \max_{i\neq j}\kappa^{\circ(ij)}.
\]
Recalling $\widetilde{E}_{v}$ from \eqref{eq:summary-stats}, we have the deterministic bound
\[
\widetilde{z}_{\max}\ \le\ |\mu|\,B_{Y}\ +\ \sqrt{E_{K}^\circ}\,\sqrt{\widetilde{E}_v}\,\kappa^\circ.
\]
Moreover, an ``effective coupling'' form follows by taking the maximum
\emph{before} separating the kernel-side summary statistics:
\[
\widetilde{z}_{\max}
\ \le\
|\mu|\,B_{Y}
\ +\
\sqrt{\widetilde{E}_v}\,\max_{i\neq j}\Big(\sqrt{E_{K}^\circ(i,j)}\,\kappa^{\circ(ij)}\Big).
\]
Defining
\[
\kappa_{\mathrm{eff}}^\circ
\ \defeq\
\max_{i\neq j}\Big(\sqrt{E_{K}^\circ(i,j)}\,\kappa^{\circ(ij)}\Big)
\ =\
\max_{i\neq j}\frac{|\ip{\vX^{\circ(ij)}}{\vY^{(ij)}}|}{\|\vY^{(ij)}\|_2}
\]
we have the deterministic bound
\[
\widetilde{z}_{\max}\ \le\ |\mu|\,B_{Y}\ +\ \sqrt{\widetilde{E}_v}\,\kappa_{\mathrm{eff}}^\circ.
\]
\end{lemma}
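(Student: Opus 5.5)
The plan is a purely deterministic argument: split each column into its mean and a centered residual, bound each piece separately, and take worst cases over pairs. No probabilistic input is needed, and all steps are elementary.

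\textbf{Step 1 (identity).} Fix $(i,j)$ with $j\neq i$ and any scalar $\mu$. Write $\vX^{(ij)}=\mu\vone+\vX^{\circ(ij)}$. Bilinearity of the inner product then gives
\[
\ip{\vX^{(ij)}}{\vY^{(ij)}}=\mu\ip{\vone}{\vY^{(ij)}}+\ip{\vX^{\circ(ij)}}{\vY^{(ij)}}.
\]
By \eqref{eq:Zij}, the left side is exactly $\widetilde{z}_{ij}$.

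\textbf{Step 2 (first bound).} Take absolute values and apply the triangle inequality. The mean term is at most $|\mu|\,|\ip{\vone}{\vY^{(ij)}}|\le|\mu|B_Y$.

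For the residual term, mirror the proof of Theorem~\ref{lem:deterministic}. If $\|\vX^{\circ(ij)}\|_2\|\vY^{(ij)}\|_2=0$, the term vanishes. Otherwise, the definition of $\kappa^{\circ(ij)}$ gives
\[
|\ip{\vX^{\circ(ij)}}{\vY^{(ij)}}|=\sqrt{E_K^\circ(i,j)}\,\|\vY^{(ij)}\|_2\,\kappa^{\circ(ij)}.
\]
Bound each factor by its worst case: $E_K^\circ(i,j)\le E_K^\circ$, $\|\vY^{(ij)}\|_2^2=\widetilde{E}_v(i,j)\le\widetilde{E}_v$, and $\kappa^{\circ(ij)}\le\kappa^\circ$. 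Taking $\max_{i\neq j}$ of the resulting bound on $|\widetilde{z}_{ij}|$ yields the first displayed inequality.

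\textbf{Step 3 (effective coupling).} This time keep the product $\sqrt{E_K^\circ(i,j)}\,\kappa^{\circ(ij)}$ unseparated. Bound only $\|\vY^{(ij)}\|_2\le\sqrt{\widetilde{E}_v}$, then maximize over pairs; this gives the second inequality.

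The identity $\sqrt{E_K^\circ(i,j)}\,\kappa^{\circ(ij)}=|\ip{\vX^{\circ(ij)}}{\vY^{(ij)}}|/\|\vY^{(ij)}\|_2$ follows by cancelling $\|\vX^{\circ(ij)}\|_2$. The degenerate cases need a convention. When $\vX^{\circ(ij)}=0$ both sides are $0$. When $\vY^{(ij)}=0$ the ratio should be read as $0$, and in that case the pair contributes $0$ to the residual term anyway. With these conventions the final form $\widetilde{z}_{\max}\le|\mu|B_Y+\sqrt{\widetilde{E}_v}\,\kappa_{\mathrm{eff}}^\circ$ is just a restatement of Step 3.

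\textbf{Main obstacle.} There is no real obstacle. The only care needed is handling the zero-norm conventions consistently, and noting that $\mu$ must be fixed independently of $(i,j)$ so that the $|\mu|B_Y$ term factors out of the maximum.
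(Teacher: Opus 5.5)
Your proof is correct and follows the paper's argument: substitute $\mathbf{X}^{(ij)}=\mu\mathbf{1}+\mathbf{X}^{\circ(ij)}$, apply the triangle inequality, and bound the residual using the definition of $\kappa^{\circ(ij)}$ together with $\|\mathbf{Y}^{(ij)}\|_2\le\sqrt{\widetilde{E}_v}$. The only difference is the order: the paper proves the effective-coupling bound first and then derives the separated bound from $\kappa_{\mathrm{eff}}^\circ\le\sqrt{E_K^\circ}\,\kappa^\circ$, whereas you prove the separated form first.
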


\begin{proof}
The decomposition follows by substituting
\(\vX^{(ij)}=\mu\,\vone+\vX^{\circ(ij)}\)
into \(\widetilde{z}_{ij}=\ip{\vX^{(ij)}}{\vY^{(ij)}}\).

For the bound, fix $(i,j)$. By the triangle inequality,
\[
|\widetilde{z}_{ij}|\ \le\ |\mu|\,\big|\ip{\vone}{\vY^{(ij)}}\big|\ +\ \big|\ip{\vX^{\circ(ij)}}{\vY^{(ij)}}\big|.
\]
By definition of the pairwise coupling $\kappa^{\circ(ij)}$,
\(
\big|\ip{\vX^{\circ(ij)}}{\vY^{(ij)}}\big|
\le
\|\vX^{\circ(ij)}\|_{2}\,\|\vY^{(ij)}\|_{2}\,\kappa^{\circ(ij)}.
\)
Since $\|\vY^{(ij)}\|_{2}\le \sqrt{\widetilde{E}_v}$ and $\|\vX^{\circ(ij)}\|_{2}=\sqrt{E_{K}^\circ(i,j)}$, we obtain
\[
|\widetilde{z}_{ij}|
\ \le\
|\mu|\,B_{Y}
\ +\
\sqrt{\widetilde{E}_v}\,\sqrt{E_{K}^\circ(i,j)}\,\kappa^{\circ(ij)}.
\]
Taking $\max_{i\neq j}$ gives
\[
\widetilde{z}_{\max}
\ \le\
|\mu|\,B_{Y}
\ +\
\sqrt{\widetilde{E}_v}\,\max_{i\neq j}\Big(\sqrt{E_{K}^\circ(i,j)}\,\kappa^{\circ(ij)}\Big)
\ =\
|\mu|\,B_{Y}\ +\ \sqrt{\widetilde{E}_v}\,\kappa_{\mathrm{eff}}^\circ.
\]
Finally, since $\sqrt{E_{K}^\circ(i,j)}\le \sqrt{E_{K}^\circ}$ and $\kappa^{\circ(ij)}\le \kappa^\circ$ for all $(i,j)$,
we also have
\(
\kappa_{\mathrm{eff}}^\circ\le \sqrt{E_{K}^\circ}\,\kappa^\circ,
\)
which yields the stated bound
\(
\widetilde{z}_{\max}\le |\mu|\,B_{Y}+\sqrt{E_{K}^\circ}\,\sqrt{\widetilde{E}_v}\,\kappa^\circ.
\)
\end{proof}

\begin{theorem}[Cross-talk bound --- isotropic keys, arbitrary values (bilinear kernel)]
\label{app:regime2}
Assume keys $\vk_{1},\dots,\vk_{F}$ are i.i.d.\ uniform on $\Sph$ and independent of the feature weights
$(\va_{r},\vb_{r})_{r=1}^m$, as defined in the bilinear random features setup (\cref{lem:bilinear-sketched-k2}).
Fix \emph{deterministic} values $\{\vv_{t}\}_{t=1}^F\subset\R^{d}$ and codes $\{\vc_{t}\}_{t=1}^F\subset\R^{d}$.
Fix $\delta\in(0,1)$ and set $L\defeq \log\big(\tfrac{C_{0} F^2}{\delta}\big)$.
Assume $m\le d^2/L$ and let $\mu=1/d$.
Let $\sigma^2\defeq \E\big[(\hat{\matK}_{ti}-\mu)^2\big]=\Theta(\tfrac{1}{d^2}+\tfrac{1}{m})$.
Let $L_v$ be as in \cref{def:Lv}.
Then with probability at least $1-\delta$ (over keys and features),
\begin{equation}
\widetilde{z}_{\max}
\ \le\ \frac{1}{d}\,B_{Y}
\ +\ C\,\sqrt{\widetilde{E}_v}\left(\sigma\sqrt{L}+\frac{L^2}{m}\right)\sqrt{L_v}.
\label{eq:regime2-cross-talk-keff}
\end{equation}
In particular, under $m\le d^2/L$ one has $\sigma\asymp 1/\sqrt{m}$, so
\[
\widetilde{z}_{\max}
\ \le\ \frac{1}{d}\,B_{Y}
\ +\ C\,\sqrt{\widetilde{E}_v}\left(\sqrt{\frac{L}{m}}+\frac{L^2}{m}\right)\sqrt{L_v}.
\]
\end{theorem}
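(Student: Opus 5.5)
The plan is to start from the mean$+$residual decomposition of \Cref{lem:mean-residual-app} with $\mu=1/d$. This gives, for every pair $(i,j)$,
\[
|\widetilde z_{ij}|\le \tfrac1d\,|\ip{\vone}{\vY^{(ij)}}| + |\ip{\vX^{\circ(ij)}}{\vY^{(ij)}}|.
\]
The first term is bounded by $\tfrac1d B_Y$ deterministically, so all the work is in the residual term. Since the values and codes are deterministic, $\vY^{(ij)}$ is a fixed vector. The only randomness is in $\vX^{\circ(ij)}=(\hat{\matK}_{ti}-1/d)_{t\notin\{i,j\}}$. I will use the crude but robust bound
\[
|\ip{\vX^{\circ(ij)}}{\vY^{(ij)}}|\le \|\vX^{\circ(ij)}\|_\infty\,\|\vY^{(ij)}\|_1\le \|\vX^{\circ(ij)}\|_\infty\,\sqrt{L_v}\,\|\vY^{(ij)}\|_2 .
\]
The second inequality is exactly the definition of $L_v$ (\cref{def:Lv}), and $\|\vY^{(ij)}\|_2\le\sqrt{\widetilde E_v}$. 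This explains the factor $\sqrt{\widetilde E_v}\sqrt{L_v}$ in \eqref{eq:regime2-cross-talk-keff}. What remains is a uniform entrywise bound
\[
\max_{t\neq i}|\hat{\matK}_{ti}-1/d|\lesssim \sigma\sqrt L+\frac{L^2}{m}
\]
holding with probability $1-\delta$ over keys and features.

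For the entrywise bound, I split $\hat{\matK}_{ti}-1/d=(\hat{\matK}_{ti}-\ip{\vk_t}{\vk_i}^2)+(\ip{\vk_t}{\vk_i}^2-1/d)$, using that $\E\ip{\vk_t}{\vk_i}^2=1/d$ for independent uniform unit vectors.

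The key part is handled by sub-Gaussian concentration of $\ip{\vk_t}{\vk_i}$ on the sphere. This gives $|\ip{\vk_t}{\vk_i}|\lesssim\sqrt{L/d}$, hence $|\ip{\vk_t}{\vk_i}^2-1/d|\lesssim L/d$. I need this to be at most $\sigma\sqrt L$. Since $\sigma\gtrsim 1/\sqrt m\ge\sqrt{L}/d$ when $m\le d^2/L$, we have $L/d\le\sqrt L\cdot\sqrt L/d\lesssim \sigma\sqrt L$, so this term is absorbed.

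For the sketch part, I condition on the keys. Then $\hat{\matK}_{ti}=\frac1m\sum_r X_r$ with $X_r=(\va_r^\top\vk_t)(\va_r^\top\vk_i)(\vb_r^\top\vk_t)(\vb_r^\top\vk_i)$, a product of four Gaussians, so each $X_r$ is sub-Weibull with tail parameter $1/2$. Its variance is $O(1)$ for unit keys, so the sum's variance is $O(1/m)$. A Bernstein-type inequality for $\alpha=1/2$ sub-Weibull sums (or hypercontractivity of degree-4 Gaussian chaos) yields, with probability $1-\delta/F^2$,
\[
\Bigl|\frac1m\sum_r X_r-\ip{\vk_t}{\vk_i}^2\Bigr|\lesssim \sqrt{\frac{L}{m}}+\frac{L^2}{m}.
\]
The first term reflects the variance and is $\lesssim\sigma\sqrt L$. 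The second term comes from the heavy tail: the $\psi_{1/2}$ norm contributes $\log^{2}(1/\delta')/m$.

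A union bound over the $\le F^2$ pairs $(t,i)$, with $L=\log(C_0F^2/\delta)$, makes both estimates hold simultaneously. Substituting back gives \eqref{eq:regime2-cross-talk-keff}. For the "in particular" statement, I will verify $\sigma^2=\Theta(1/d^2+1/m)$ by computing $\Var(\ip{\vk_t}{\vk_i}^2)=\Theta(1/d^2)$ plus the sketch variance $\Theta(1/m)$. Under $m\le d^2/L$ the $1/m$ term dominates, so $\sigma\asymp 1/\sqrt m$.

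The main obstacle is obtaining the correct tail for the degree-4 chaos with the exact $L^2/m$ heavy-tail term. The summands are not sub-exponential, so the standard Bernstein inequality fails. I would first try the sub-Weibull concentration inequality of Kuchibhotla–Chakrabortty (generalized Bernstein–Orlicz), which gives precisely $\sqrt{L/m}+L^{2}/m$. As a fallback I would truncate each Gaussian factor at $\sqrt L$, apply Bernstein to the bounded part, and control the truncation event by a union bound over $r\le m$.
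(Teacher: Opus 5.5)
Your proposal is correct and follows essentially the same route as the paper. The chain is the mean--residual reduction of \Cref{lem:mean-residual-app} with $\mu=1/d$, then the $\ell_\infty$--$\ell_1$ H\"older step combined with the definition of $L_v$ (this is \Cref{lem:keff-iso-keys}), then a union-bounded entrywise bound on $|\hat{\matK}_{ti}-\mu|$ (\Cref{lem:Koff-iso-keys} and \Cref{lem:chaos-centered-mu}). That entrywise bound splits into a sub-Weibull$(1/2)$ Bernstein bound for the degree-4 chaos and a geometric term $|\ip{\vk_t}{\vk_i}^2-1/d|$, which is absorbed via $m\le d^2/L$. Your only deviation is cosmetic: you bound the geometric term by spherical sub-Gaussianity instead of the paper's chi-square (Laurent--Massart) representation, and it is absorbed into $\sigma\sqrt{L}$ in the same way.
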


\begin{proof}
We use the effective-coupling reduction from Lemma~\ref{lem:mean-residual-app} and then invoke
Lemma~\ref{lem:keff-iso-keys}.

\paragraph{Deterministic reduction to $\kappa_{\mathrm{eff}}^\circ$.}
Lemma~\ref{lem:mean-residual-app} (with $\mu=1/d$) gives
\[
\widetilde{z}_{\max}
\ \le\
\frac{1}{d}\,B_{Y}
\ +\
\sqrt{\widetilde{E}_v}\,\kappa_{\mathrm{eff}}^\circ.
\]

\paragraph{Concentration of $\kappa_{\mathrm{eff}}^\circ$ (invoke Lemma~\ref{lem:keff-iso-keys}).}
Apply Lemma~\ref{lem:keff-iso-keys} with failure probability $\delta$ to obtain, with probability at least $1-\delta$,
\[
\kappa_{\mathrm{eff}}^\circ
\ \le\
C\left(\sigma\sqrt{L}+\frac{L^2}{m}\right)\sqrt{L_v}.
\]
Substitute this bound into the deterministic reduction to obtain \eqref{eq:regime2-cross-talk-keff}.
\end{proof}

\subsubsection{Isotropic Keys, Isotropic Values (Bilinear Kernel)}

\begin{theorem}[Cross-talk bound --- isotropic keys, isotropic values (bilinear kernel)]
\label{app:isoiso}
Assume \(F\ge3\). Keys \(\vk_1,\dots,\vk_F\) are i.i.d. uniform on \(\Sph\).
Values \(\vv_1,\dots,\vv_F\) are i.i.d. uniform on \(\Sph\) and independent of the
keys and features. The kernel is the bilinear random-feature kernel from
\cref{lem:bilinear-sketched-k2} with \(m\) features.

Fix \(\delta\in(0,1)\), and define
\[
L\defeq \log\left(\frac{C_0F^2}{\delta}\right).
\]
Let
\[
\sigma^2\defeq
\E[(\hat{\matK}_{ti}-1/d)^2]
=
\Theta\left(\frac1{d^2}+\frac1m\right),
\qquad t\neq i.
\]
Assume
\[
m\le \frac{d^2}{L},
\qquad
L^3\le c_0\sigma^2m^2.
\]
Then, with probability at least \(1-\delta\), the following hold simultaneously:
\begin{align*}
B_Y
&\ \le\ C_1\sqrt{\frac{FL}{d}},\\
\widetilde E_v
&\ \le\ C_2\,\frac{(F-2)+L}{d-1},\\
\sqrt{E_K^\circ}
&\ \le\ C_3\,\sigma\sqrt{(F-2)+L},\\
\kappa^\circ
&\ \le\ C_4\sqrt{\frac{L}{F-2}}.
\end{align*}
Consequently,
\[
\widetilde z_{\max}
\le
\frac1dB_Y+
\sqrt{E_K^\circ}\sqrt{\widetilde E_v}\,\kappa^\circ.
\]
In particular, if \(F-2\ge L\), then
\[
\widetilde z_{\max}
\le
C_6\sqrt L\sqrt{\frac{F}{d^3}}
+
C_7\sqrt L\sqrt{\frac{F}{md}}.
\]
\end{theorem}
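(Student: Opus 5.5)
The deterministic part is already available. Lemma~\ref{lem:mean-residual-app} with $\mu=1/d$ gives $\widetilde z_{\max}\le \frac1d B_Y+\sqrt{E_K^\circ}\sqrt{\widetilde E_v}\,\kappa^\circ$. So the work is to establish the four high-probability bounds, each with failure probability $\delta/4$, and then take a union bound. Since values are independent of keys and features, I condition on keys and features when handling value-side quantities. I condition on values when handling key-side quantities.

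\textbf{Value-side bounds.} The bound on $\widetilde E_v$ is exactly Lemma~\ref{lem:Ev-iso-values}, invoked as in the proof of Theorem~\ref{app:regime1}. For $B_Y$, fix $(i,j)$ and write $\ip{\vone}{\vY^{(ij)}}=\ip{\vv_i-\vv_j}{\mathbf{s}}$ with $\mathbf{s}=\sum_{t\notin\{i,j\}}\vv_t$. Conditionally on $\vv_i,\vv_j$, the vector $\mathbf{s}$ is a sum of $F-2$ independent isotropic unit vectors. Hence $\ip{\vv_i-\vv_j}{\mathbf{s}}$ is a sum of independent, mean-zero, sub-Gaussian variables with variance proxy $O(1/d)$. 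Hoeffding gives $|\cdot|\lesssim\sqrt{FL/d}$ with probability $1-\delta/(4F^2)$, and a union bound over the $\le F^2$ pairs finishes this bound.

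\textbf{Key-side bounds.} For $E_K^\circ$, fix $i$ and condition on $\vk_i$ and the features $(\va_r,\vb_r)$. The entries $\hat{\matK}_{ti}-1/d$ for $t\ne i$ are then i.i.d. in $\vk_t$, with second moment $\approx\sigma^2$ after averaging over features. Each entry is a degree-two polynomial of a uniform spherical vector and therefore has sub-exponential tails. Bernstein, applied to the squared deviations (degree-four chaos tails), gives $\sum_t(\hat{\matK}_{ti}-\mu)^2\lesssim\sigma^2((F-2)+L)$. Two technical points arise here. First, the conditional mean given the features differs from $1/d$ by $O(\sqrt{L/m}/d)$-type fluctuations. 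Second, the tail parameter scales like $L/m$. The assumptions $m\le d^2/L$ and $L^3\le c_0\sigma^2m^2$ are exactly what absorb these corrections into $\sigma^2$.

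For $\kappa^\circ$, I condition on keys and features, so $\vX^{\circ(ij)}$ is fixed. The ratio $\ip{\vX^\circ}{\vY}/(\|\vX^\circ\|\|\vY\|)$ has $\vY_t=\ip{\vv_i-\vv_j}{\vv_t}$. Conditionally on $\vv_i,\vv_j$, these are i.i.d. sub-Gaussian with scale $\|\vv_i-\vv_j\|/\sqrt d$. Hence $\ip{\vX^\circ}{\vY}\lesssim\|\vX^\circ\|\|\vv_i-\vv_j\|\sqrt{L/d}$, while $\|\vY\|\gtrsim\|\vv_i-\vv_j\|\sqrt{(F-2)/d}$ by a lower-tail bound. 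Dividing gives $\sqrt{L/(F-2)}$. This is the argument of Lemma~\ref{lem:kappa-iso-values}, and I would invoke that lemma directly, since it treats the kernel as arbitrary.

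\textbf{Assembly and main obstacle.} Plugging in with $F-2\ge L$, the first term is $\frac1d\sqrt{FL/d}=\sqrt L\sqrt{F/d^3}$. The second term is $\sigma\sqrt F\cdot\sqrt{F/d}\cdot\sqrt{L/F}=\sigma\sqrt{FL/d}$. Using $\sigma\lesssim 1/d+1/\sqrt m$, the $1/d$ part merges into the first term and the $1/\sqrt m$ part yields $\sqrt L\sqrt{F/(md)}$. I expect the main obstacle to be the $E_K^\circ$ step: the Gaussian-chaos concentration of the centered sketched kernel column, where the heavier tails and the feature-induced mean shift must be controlled by the two width assumptions.
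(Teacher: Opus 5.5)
Your proposal is correct and follows essentially the same route as the paper: the mean--residual reduction of Lemma~\ref{lem:mean-residual-app} with $\mu=1/d$, followed by a union bound over the four concentration estimates. As in the paper, $E_K^\circ$ is handled by conditioning on $\vk_i$ and the features, writing $\hat{\matK}_{ti}=\vk_t^\top A(\vk_i)\vk_t$, splitting $\hat{\matK}_{ti}-\mu$ into a trace-free spherical quadratic form plus a feature-induced mean shift, and applying a two-level square-sum bound; $\kappa^\circ$ comes from running the arbitrary-kernel argument of Lemma~\ref{lem:kappa-iso-values} on the centered columns.

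Two details in the $E_K^\circ$ step need tightening. First, after conditioning on the features you need the \emph{conditional} second moment, not its feature average, to be $O(\sigma^2)$ uniformly in $i$ with high probability; the paper gets this from $\|A(\vk_i)-\vk_i\vk_i^\top\|_F^2\lesssim d^2/m$. Second, the mean shift $\operatorname{tr}A(\vk_i)/d-1/d$ is of order $\sqrt{L/(dm)}$, not $\sqrt{L/m}/d$; it is still absorbed, since your width assumptions force $d\gtrsim L^2$.
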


\begin{proof}
\paragraph{Concentration and union bound.}
Apply Lemma~\ref{lem:BY-iso-values}, Lemma~\ref{lem:Ev-iso-values},
Lemma~\ref{lem:EK-iso-keys}, and Lemma~\ref{lem:kappa-centered-iso-values}, each
with failure probability \(\delta/4\). Increasing \(C_0\) if necessary lets all four
events be written with the same
\[
L=\log\left(\frac{C_0F^2}{\delta}\right).
\]
A union bound gives simultaneous validity with probability at least \(1-\delta\).

\paragraph{Mean--residual reduction.}
On this event, Lemma~\ref{lem:mean-residual-app} with \(\mu=1/d\) gives
\[
\widetilde z_{\max}
\le
\frac1dB_Y+
\sqrt{E_K^\circ}\sqrt{\widetilde E_v}\,\kappa^\circ.
\]

\paragraph{Mean term.}
The mean term satisfies
\[
\frac1dB_Y
\le
C\sqrt{\frac{FL}{d^3}}
=
C\sqrt L\sqrt{\frac{F}{d^3}}.
\]

\paragraph{Residual term.}
Assume now \(F-2\ge L\). By Lemma~\ref{lem:EK-iso-keys},
\[
\sqrt{E_K^\circ}\le C\sigma\sqrt{F-2}.
\]
Therefore
\[
\sqrt{E_K^\circ}\sqrt{\widetilde E_v}\,\kappa^\circ
\le
C\sigma\sqrt{F-2}
\sqrt{\frac{(F-2)+L}{d-1}}
\sqrt{\frac{L}{F-2}}.
\]
Canceling \(\sqrt{F-2}\),
\[
\sqrt{E_K^\circ}\sqrt{\widetilde E_v}\,\kappa^\circ
\le
C\sigma\sqrt L
\sqrt{\frac{(F-2)+L}{d-1}}.
\]
Since \(F-2\ge L\),
\[
(F-2)+L\le2(F-2)\le2F.
\]
Also \(d-1\asymp d\) after adjusting constants. Hence
\[
\sqrt{E_K^\circ}\sqrt{\widetilde E_v}\,\kappa^\circ
\le
C\sigma\sqrt{\frac{FL}{d}}.
\]
Finally, \(m\le d^2/L\) implies
\[
\frac1{d^2}\le \frac1{mL}\le \frac1m,
\]
so
\[
\sigma^2=\Theta\left(\frac1{d^2}+\frac1m\right)
\le
\frac{C}{m}.
\]
Thus
\[
\sigma\sqrt{\frac{FL}{d}}
\le
C\sqrt{\frac{FL}{md}}
=
C\sqrt L\sqrt{\frac{F}{md}}.
\]

\paragraph{Combine.}
Combining the mean and residual estimates gives the final bound.
\end{proof}

\subsection{Signal Bounds}
We now provide lower bounds for the signal term $\widetilde{s}_{ij}$.

\paragraph{Signal as difference of weighted kernel values.}
Recall that under the ``absorbing the competitor'' convention (\eqref{eq:new-signal-crosstalk}), the two-term signal is
\begin{equation}
\widetilde{s}_{ij}
=\hat{\matK}_{ii}\,\ip{\vv_{i}-\vv_j}{\vv_i}
+\hat{\matK}_{ji}\,\ip{\vv_{i}-\vv_j}{\vv_j}
=\hat{\matK}_{ii}\,\ip{\vv_{i}-\vv_j}{\vv_i}
-\hat{\matK}_{ji}\,\ip{\vv_{j}-\vv_i}{\vv_j}.
\label{eq:sij-two-term}
\end{equation}
Since $\hat{\matK}$ is a Gram matrix, we always have
\(
\hat{\matK}_{ii}=\|\phi(\vk_{i})\|_{2}^2\ge 0.
\)
Further, define the smallest (worst-case) signal by
\begin{equation}
\widetilde{s}_{\min}\ \defeq\ \min_{i\neq j}\widetilde{s}_{ij}.
\label{eq:smin}
\end{equation}

\paragraph{Signal summary statistics.}
Define the value-side inner products
\begin{equation}
V_{ij}\ \defeq\ \ip{\vv_{i}-\vv_j}{\vv_i},
\qquad
B_{ij}\ \defeq\ \ip{\vv_{i}-\vv_j}{\vv_j},
\qquad (i\neq j).
\label{eq:AijBij}
\end{equation}
Across our bounds below we will assume, unless stated otherwise,
\begin{equation}
V_{ij}\ \ge\ 0
\qquad\text{for all }i\neq j.
\label{eq:Apos}
\end{equation}
This holds automatically when all values share a common norm:
if $\|\vv_{i}\|_{2}=\|\vv_{j}\|_{2}=r$ then by Cauchy--Schwarz
\(
\ip{\vv_i}{\vv_j}\le \|\vv_{i}\|_{2}\|\vv_{j}\|_{2}=r^2,
\)
hence
\(
V_{ij}=r^2-\ip{\vv_i}{\vv_j}\ge 0.
\)

Recall from \cref{def:Kdiag,def:Vmin} the kernel and value-side extrema:
\begin{equation}
K_{\min}^{\mathrm{diag}}\ =\ \min_{i\in[F]} \hat{\matK}_{ii},
\qquad
K_{\max}^{\mathrm{off}}\ =\ \max_{i\neq j}|\hat{\matK}_{ij}|,
\label{eq:KminKmax}
\end{equation}
\begin{equation}
V_{\min}\ =\ \min_{i\neq j}\ip{\vv_{i}-\vv_j}{\vv_i},
\qquad
V_{\max}\ =\ \max_{i\neq j}\big|\ip{\vv_{i}-\vv_j}{\vv_j}\big|.
\label{eq:AminBmax}
\end{equation}

\subsubsection{Arbitrary Keys, Arbitrary Values}
\label{app:signal-det-arb-arb}

\begin{theorem}[Deterministic signal lower bound --- arbitrary keys, arbitrary values]
Under the positivity condition (\eqref{eq:Apos}), and with the summary statistics (\cref{def:Kdiag,def:Vmin}),
\[
\widetilde{s}_{\min}\ \ge\ K_{\min}^{\mathrm{diag}}\,V_{\min}\ -\ K_{\max}^{\mathrm{off}}\,V_{\max}.
\]
\end{theorem}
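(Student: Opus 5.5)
We bound each of the two terms of the signal in \eqref{eq:sij-two-term} separately, for a fixed pair $i\neq j$, and then take the minimum over pairs. The statement is deterministic, so no probabilistic step is needed; the whole argument uses only the positivity assumption \eqref{eq:Apos}, the Gram structure of $\hat{\matK}$, and the extremal statistics of \cref{def:Kdiag,def:Vmin}.

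\textbf{First term.} Write $\widetilde{s}_{ij}=\hat{\matK}_{ii}V_{ij}+\hat{\matK}_{ji}B_{ij}$ using the notation of \eqref{eq:AijBij}. Since $\hat{\matK}$ is a Gram matrix, $\hat{\matK}_{ii}\ge K_{\min}^{\mathrm{diag}}\ge 0$. By \eqref{eq:Apos}, $V_{ij}\ge V_{\min}\ge 0$. Because both factors are nonnegative, the lower bounds multiply, giving $\hat{\matK}_{ii}V_{ij}\ge K_{\min}^{\mathrm{diag}}V_{\min}$. This is the one place where nonnegativity matters: without it, multiplying two lower bounds would not be valid. I would state this explicitly.

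\textbf{Second term.} Here I use the elementary inequality $x\ge -|x|$, which gives $\hat{\matK}_{ji}B_{ij}\ge -|\hat{\matK}_{ji}|\,|B_{ij}|$. Since $j\neq i$, this is an off-diagonal entry, so $|\hat{\matK}_{ji}|\le K_{\max}^{\mathrm{off}}$. Also $|B_{ij}|=|\langle \vv_i-\vv_j,\vv_j\rangle|\le V_{\max}$ by definition. Hence $\hat{\matK}_{ji}B_{ij}\ge -K_{\max}^{\mathrm{off}}V_{\max}$.

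\textbf{Combining.} Adding the two bounds gives $\widetilde{s}_{ij}\ge K_{\min}^{\mathrm{diag}}V_{\min}-K_{\max}^{\mathrm{off}}V_{\max}$ for every pair $i\neq j$. The right-hand side does not depend on $(i,j)$, so taking the minimum over $i\neq j$ yields the claimed bound on $\widetilde{s}_{\min}$ from \eqref{eq:smin}.

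There is no real obstacle in this argument. The only care needed is in the first term: justifying the product of the two lower bounds via nonnegativity, and checking that the index convention matches, namely that $V_{\max}$ is defined with $\vv_j$ in the second slot, which is exactly $B_{ij}$.
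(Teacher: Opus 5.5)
Your proposal is correct and follows the paper's proof essentially step for step. You use the same split $\widetilde{s}_{ij}=\hat{\matK}_{ii}V_{ij}+\hat{\matK}_{ji}B_{ij}$, the same order-preserving product of lower bounds justified by nonnegativity, and the same bound $xy\ge -|x|\,|y|$ on the competitor term; if anything, your explicit check that both $K_{\min}^{\mathrm{diag}}\ge 0$ and $V_{\min}\ge 0$ are needed for the product step is slightly more careful than the paper, which only cites $V_{ij}\ge 0$.
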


\begin{proof}
Fix a pair $(i,j)$ with $i\neq j$.
From \eqref{eq:sij-two-term} and \eqref{eq:AijBij},
\[
\widetilde{s}_{ij}=\hat{\matK}_{ii}\,V_{ij}+\hat{\matK}_{ji}\,B_{ij}.
\]

By definition, $\hat{\matK}_{ii}\ge K_{\min}^{\mathrm{diag}}$.
Also $V_{ij}\ge V_{\min}$ by definition of $V_{\min}$.
Under the positivity assumption (\eqref{eq:Apos}), we have $V_{ij}\ge 0$ for all $i\neq j$, and therefore
multiplying the inequalities $\hat{\matK}_{ii}\ge K_{\min}^{\mathrm{diag}}$ and $V_{ij}\ge V_{\min}$ preserves order:
\[
\hat{\matK}_{ii}\,V_{ij}\ \ge\ K_{\min}^{\mathrm{diag}}\,V_{\min}.
\]

For the second term, we use $xy\ge -|x|\,|y|$:
\[
\hat{\matK}_{ji}\,B_{ij}\ \ge\ -|\hat{\matK}_{ji}|\,|B_{ij}|.
\]
By definition $|\hat{\matK}_{ji}|\le K_{\max}^{\mathrm{off}}$ and $|B_{ij}|\le V_{\max}$, hence
\[
\hat{\matK}_{ji}\,B_{ij}\ \ge\ -K_{\max}^{\mathrm{off}}\,V_{\max}.
\]

Adding the two bounds yields
\(
\widetilde{s}_{ij}\ge K_{\min}^{\mathrm{diag}}V_{\min}-K_{\max}^{\mathrm{off}}V_{\max},
\)
resulting in the claimed bound for $\widetilde{s}_{\min}$.
\end{proof}

\subsubsection{Arbitrary Keys, Isotropic Values}
\begin{theorem}[Signal lower bound --- arbitrary keys, isotropic values]
\label{app:signal-arbkeys-isovalues}
Assume $\vv_{1},\dots,\vv_{F}$ are i.i.d.\ uniform on $\Sph\subset\R^d$, and treat the kernel matrix $\hat{\matK}$ as arbitrary (or condition on it).
Fix $\delta\in(0,1)$ and let
\[
L\ \defeq\ \log\Big(\frac{C_{0} F^2}{\delta}\Big),
\qquad
\varepsilon_{v}\ \defeq\ C_{1}\sqrt{\frac{L}{d-1}}.
\]
Then with probability at least $1-\delta$ (over the values),
\[
\widetilde{s}_{\min}
\ \ge\
K_{\min}^{\mathrm{diag}}\,(1-\varepsilon_{v})
\ -\
K_{\max}^{\mathrm{off}}\,(1+\varepsilon_{v}),
\]
where $K_{\min}^{\mathrm{diag}}$ and $K_{\max}^{\mathrm{off}}$ are the kernel extrema (\cref{def:Kdiag}).
\end{theorem}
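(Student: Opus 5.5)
The plan is to reduce to the deterministic signal bound of Appendix~\ref{app:signal-det-arb-arb} and then control the two value-side extrema $V_{\min}$ and $V_{\max}$ with high probability over the isotropic values. Since the values lie on $\Sph$, we have $\|\vv_i\|_2=1$, so positivity \eqref{eq:Apos} holds automatically. Moreover,
\[
V_{ij}=1-\ip{\vv_i}{\vv_j},
\qquad
B_{ij}=\ip{\vv_i}{\vv_j}-1 .
\]
Hence $V_{\min}=1-\max_{i\neq j}\ip{\vv_i}{\vv_j}$ and $V_{\max}\le 1+\max_{i\neq j}|\ip{\vv_i}{\vv_j}|$. Everything therefore reduces to a single bound on the maximal pairwise coherence $\rho:=\max_{i\neq j}|\ip{\vv_i}{\vv_j}|$.

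\textbf{Step 1: a single pair.} For $i\neq j$, condition on $\vv_j$. By rotation invariance, $\ip{\vv_i}{\vv_j}$ is distributed as the first coordinate of a uniform point on $\Sph$. Standard spherical concentration (e.g.\ L\'evy's lemma, or the sub-Gaussian tail of a sphere coordinate with variance proxy $\asymp 1/(d-1)$~\citep{vershynin2018high}) then gives
\[
\Pbb\big(|\ip{\vv_i}{\vv_j}|>t\big)\le 2\exp\big(-c(d-1)t^2\big).
\]

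\textbf{Step 2: union bound.} Take a union bound over the at most $F^2$ ordered pairs and choose $t=\varepsilon_v=C_1\sqrt{L/(d-1)}$ with $L=\log(C_0F^2/\delta)$. With $C_1$ large enough relative to $c$, the failure probability is at most $2F^2\exp(-cC_1^2L)\le\delta$, and on the complementary event $\rho\le\varepsilon_v$.

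\textbf{Step 3: plug into the deterministic bound.} On this event $V_{\min}\ge 1-\varepsilon_v$ and $V_{\max}\le 1+\varepsilon_v$. The arbitrary-keys/arbitrary-values signal theorem then yields
\[
\widetilde{s}_{\min}\ge K_{\min}^{\mathrm{diag}}(1-\varepsilon_v)-K_{\max}^{\mathrm{off}}(1+\varepsilon_v).
\]
Because the kernel matrix is treated as arbitrary, that is, fixed or conditioned on and independent of the values, the probability statement is purely over the values.

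The only delicate point is the constant bookkeeping in Step 2, namely matching the tail exponent to the choice of $C_0$ and $C_1$. If $\varepsilon_v>1$, the bound is still valid but vacuous, so no regime assumption on $d$ is needed.
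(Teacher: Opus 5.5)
Your proposal is correct and follows the paper's proof essentially step for step. The paper also bounds $\max_{i\neq j}|\ip{\vv_i}{\vv_j}|\le\varepsilon_v$ using a per-pair spherical sub-Gaussian tail and a union bound over at most $F^2$ pairs (Lemma~\ref{lem:max-pairwise-ip-values}), deduces $V_{\min}\ge 1-\varepsilon_v$ and $V_{\max}\le 1+\varepsilon_v$, and plugs these into the deterministic arbitrary-keys/arbitrary-values signal bound; your explicit check that the positivity condition holds automatically for unit-norm values is a point the paper leaves implicit.
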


\begin{proof}
The proof relies on a single auxiliary concentration bound.

\paragraph{Concentration of the value-side extrema.}
Apply Lemma~\ref{lem:max-pairwise-ip-values} with failure probability $\delta$.
This yields an event $\mathcal{E}_{v}$ such that
$\Pbb(\mathcal{E}_{v}^c)\le \delta$ and on $\mathcal{E}_{v}$,
\[
\max_{i\neq j}\big|\ip{\vv_i}{\vv_j}\big|\ \le\ \varepsilon_{v}.
\]
Hence, on $\mathcal{E}_{v}$,
\[
V_{\min}\ \ge\ 1-\varepsilon_{v},
\qquad
V_{\max}\ \le\ 1+\varepsilon_{v}.
\]

\paragraph{Plug-in.}
On $\mathcal{E}_{v}$, the deterministic signal lower bound gives
\[
\widetilde{s}_{\min}\ \ge\ K_{\min}^{\mathrm{diag}}V_{\min}-K_{\max}^{\mathrm{off}}V_{\max}
\ \ge\ K_{\min}^{\mathrm{diag}}(1-\varepsilon_{v})-K_{\max}^{\mathrm{off}}(1+\varepsilon_{v}),
\]
which is exactly the claimed bound.
\end{proof}

\subsubsection{Isotropic Keys, Arbitrary Values (Bilinear Kernel)}
\begin{theorem}[Signal lower bound --- isotropic keys, arbitrary values (bilinear kernel)]
\label{app:signal-isokeys-arbvalues}
Assume keys $\vk_{1},\dots,\vk_{F}$ are i.i.d.\ uniform on $\Sph$, the kernel is the bilinear random-feature kernel from \cref{lem:bilinear-sketched-k2} with $m$ features, and the values $\{\vv_{t}\}_{t=1}^F$ are deterministic and satisfy the positivity condition (\eqref{eq:Apos}).
Let $V_{\min}$ and $V_{\max}$ be as in \cref{def:Vmin}.
Fix $\delta\in(0,1)$ and set $L\defeq \log(\tfrac{C_{0} F^2}{\delta})$.
Let $\mu\defeq 1/d$ and let $\sigma^2\defeq \E[(\hat{\matK}_{ij}-\mu)^2]=\Theta(\tfrac{1}{d^2}+\tfrac{1}{m})$
(see Lemma~\ref{lem:bilinear-mean-var}).
Then with probability at least $1-\delta$ (over keys and features),
\[
\widetilde{s}_{\min}
\ \ge\
\Big(1-C_{1}\sqrt{\tfrac{L}{m}}\Big)\,V_{\min}
\ -\
\Big(\mu+C_{2}(\sigma\sqrt{L}+L^2/m)\Big)\,V_{\max}.
\]
Moreover, if $L\ge 1$ and $m \geq L^3$,
then the off-diagonal coefficient simplifies and
\[
\widetilde{s}_{\min}
\ \ge\
\Big(1-C_{1}\sqrt{\tfrac{L}{m}}\Big)\,V_{\min}
\ -\
C_{3}\,\sigma\sqrt{L}\,V_{\max}.
\]
\end{theorem}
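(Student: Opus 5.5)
The plan is to reduce to the deterministic signal bound of \Cref{app:signal-det-arb-arb} and then control the two kernel extrema $K_{\min}^{\mathrm{diag}}$ and $K_{\max}^{\mathrm{off}}$ by concentration over keys and features. Since the values are deterministic and satisfy \eqref{eq:Apos}, that theorem gives
\[
\widetilde{s}_{\min}\ \ge\ K_{\min}^{\mathrm{diag}}V_{\min}-K_{\max}^{\mathrm{off}}V_{\max}.
\]
It therefore suffices to show that, with probability at least $1-\delta$, two bounds hold simultaneously:
\[
K_{\min}^{\mathrm{diag}}\ge 1-C_1\sqrt{L/m},
\qquad
K_{\max}^{\mathrm{off}}\le \mu+C_2(\sigma\sqrt{L}+L^2/m).
\]
We split the failure probability $\delta/2$ over the $F$ diagonal events and $\delta/2$ over the $\binom F2$ off-diagonal events. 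That split is why $L=\log(C_0F^2/\delta)$ appears.

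\textbf{Diagonal.} For a unit key $\vk_i$ independent of the features, $\hat{\matK}_{ii}=\frac1m\sum_r(\va_r^\top\vk_i)^2(\vb_r^\top\vk_i)^2$. This is an average of $m$ i.i.d.\ products of two independent $\chi^2_1$ variables, each with mean $1$. Each summand $g^2h^2$ is nonnegative and sub-Weibull of order $1/2$. We only need a lower tail, and for a nonnegative variable with bounded second moment the lower tail is sub-Gaussian (e.g.\ by Bernstein's inequality for the lower tail of nonnegative summands, or Maurer's inequality). This gives $\hat{\matK}_{ii}\ge 1-C\sqrt{\log(F/\delta)/m}$ per $i$, and a union bound over $i$ finishes this part. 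The diagonal is thus the easy side.

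\textbf{Off-diagonal.} This is the main obstacle. Fix $t\neq i$ and condition on the keys. Then $\hat{\matK}_{ti}=\frac1m\sum_r X_r$ with $X_r=(\va_r^\top\vk_t)(\va_r^\top\vk_i)(\vb_r^\top\vk_t)(\vb_r^\top\vk_i)$. Conditionally these are i.i.d.\ with mean $\ip{\vk_t}{\vk_i}^2$. Each $X_r$ is a product of two independent Gaussian quadratic forms, so it is sub-Weibull with $\|X_r\|_{\psi_{1/2}}\lesssim 1$.

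A sub-Weibull Bernstein inequality (e.g.\ Kuchibhotla--Chakrabortty, or truncation at level $L^2$ plus Bernstein) gives the bound
\[
\Bigl|\hat{\matK}_{ti}-\ip{\vk_t}{\vk_i}^2\Bigr|\lesssim \sqrt{\Var(X_r)L/m}+L^2/m .
\]
The extra $L^2/m$ term is exactly the heavy-tail correction appearing in the statement.

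It remains to handle the key randomness. By concentration on the sphere, $\ip{\vk_t}{\vk_i}^2\le \mu+C\sqrt L/d$ and $|\ip{\vk_t}{\vk_i}|\lesssim\sqrt{L/d}$ uniformly over pairs. Since $\sigma^2\gtrsim 1/d^2$, the fluctuation $\sqrt L/d$ is absorbed into $C\sigma\sqrt L$. On that event $\Var(X_r)\lesssim 1$, which yields the $1/\sqrt m\lesssim\sigma$ contribution. Combining the two parts gives $|\hat{\matK}_{ti}|\le\mu+C_2(\sigma\sqrt L+L^2/m)$ after a union bound over pairs.

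I would cite this as the off-diagonal extremum lemma presumably underlying Lemma~\ref{lem:keff-iso-keys}. If it is unavailable, I would carry out the truncation argument directly.

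\textbf{Simplification.} Finally, if $L\ge1$ and $m\ge L^3$, then $L^2/m\le\sqrt{L/m}\le\sigma\sqrt L$ up to constants, using $\sigma\gtrsim1/\sqrt m$. Also $\mu=1/d\le\sigma\le\sigma\sqrt L$, using $\sigma\gtrsim 1/d$. Both extra terms are therefore absorbed into $C_3\sigma\sqrt L$, which gives the second display.
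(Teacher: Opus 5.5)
\textbf{Where it departs from the paper.} Your architecture matches the paper's. It uses the deterministic bound $\widetilde{s}_{\min}\ge K_{\min}^{\mathrm{diag}}V_{\min}-K_{\max}^{\mathrm{off}}V_{\max}$ and a one-sided Bernstein lower tail for the diagonal. It then bounds $|\hat{\matK}_{ti}-\rho_{ti}^2|$ by a sub-Weibull$(1/2)$ Bernstein inequality, where $\rho_{ti}=\langle\vk_t,\vk_i\rangle$, and finishes with a union bound and the final absorption step. You justify that absorption more explicitly than the paper, including $L^2/m\le\sqrt{L/m}$ from $m\ge L^3$.

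\textbf{The gap: the key-randomness step.} Your claim that $\rho_{ti}^2\le\mu+C\sqrt{L}/d$ uniformly over pairs is false. Since $\rho^2$ is distributed like $g_1^2/\|g\|_2^2$, the quantity $\rho^2-\mu$ behaves like $(g_1^2-1)/d$. That is sub-exponential at scale $1/d$, so the correct uniform bound is $|\rho_{ti}^2-\mu|\le C(\sqrt{L}+L)/d\asymp L/d$. This is also all that your own estimate $|\rho_{ti}|\lesssim\sqrt{L/d}$ gives. Concretely, for fixed $i$ the $F-1$ variables $\rho_{ti}^2$ are i.i.d., and their maximum is of order $\log F/d$, not $\sqrt{\log F}/d$.

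\textbf{What the fix requires.} Absorbing $L/d$ into $C\sigma\sqrt{L}$ needs $\sigma\gtrsim\sqrt{L}/d$. Given $\sigma^2=\Theta(1/d^2+1/m)$, this means $m\lesssim d^2/L$. That is exactly the hypothesis of the lemma you propose to cite, \Cref{lem:Koff-iso-keys}, proved via \Cref{lem:chaos-centered-mu}. There, $m\le d^2/L$ is used precisely to get $L/d\le\sqrt{L/m}\lesssim\sigma\sqrt{L}$.

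\textbf{The hypothesis cannot be dropped.} The printed statement omits $m\le d^2/L$, but the result is false without it. Let $m\to\infty$ at fixed $d$ and $F$; then $\hat{\matK}_{ti}\to\rho_{ti}^2$ and $\sigma\asymp 1/d$. Take orthonormal values $\vv_i=e_i$ with $F\le d$, so that $V_{\min}=V_{\max}=1$ and $\widetilde{s}_{ij}=\hat{\matK}_{ii}-\hat{\matK}_{ji}$. Then $\widetilde{s}_{\min}\to 1-\max_{i\neq j}\rho_{ij}^2$, which is $1-\Omega(\log F/d)$ with high probability. The claimed lower bound tends to $1-O(\sqrt{L}/d)$, so it is violated once $\log F$ is large compared with $C_2^2$.

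You should therefore add the hypothesis $m\le d^2/L$, as \Cref{thm:combined-rkav,thm:combined-rkrv} do wherever this theorem is used. Then carry the $L/d$ geometric fluctuation explicitly and absorb it through that hypothesis. With this change the rest of your argument goes through.
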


\begin{proof} The proof combines two auxiliary concentration bounds via a union bound.
\paragraph{Concentration of the diagonal term.}
Apply Lemma~\ref{lem:Kdiag-lower-bilinear} with failure probability $\delta/2$.
This yields an event $\mathcal{E}_{1}$ such that $\Pbb(\mathcal{E}_{1}^c)\le \delta/2$ and on $\mathcal{E}_{1}$,
\[
K_{\min}^{\mathrm{diag}}\ \ge\ 1-C\sqrt{\frac{L}{m}},
\]
after adjusting the absolute constant hidden in $L=\log(\tfrac{C_{0} F^2}{\delta})$.

\paragraph{Concentration of the off-diagonal term.}
Apply Lemma~\ref{lem:Koff-iso-keys} with failure probability $\delta/2$.
This yields an event $\mathcal{E}_{2}$ such that $\Pbb(\mathcal{E}_{2}^c)\le \delta/2$ and on $\mathcal{E}_{2}$,
\[
K_{\max}^{\mathrm{off}}\ \le\ \mu+C\Big(\sigma\sqrt{L}+\frac{L^2}{m}\Big),
\]
again after adjusting the absolute constant hidden in $L$.

\paragraph{Union bound and plug-in.}
By a union bound,
\[
\Pbb(\mathcal{E}_{1}\cap\mathcal{E}_{2})\ \ge\ 1-\Pbb(\mathcal{E}_{1}^c)-\Pbb(\mathcal{E}_{2}^c)\ \ge\ 1-\delta.
\]
On $\mathcal{E}_{1}\cap\mathcal{E}_{2}$, the deterministic signal lower bound gives
\[
\widetilde{s}_{\min}\ \ge\ K_{\min}^{\mathrm{diag}}V_{\min}-K_{\max}^{\mathrm{off}}V_{\max}.
\]
Substituting the bounds for $K_{\min}^{\mathrm{diag}}$ and $K_{\max}^{\mathrm{off}}$ gives, after renaming absolute constants,
\[
\widetilde{s}_{\min}
\ \ge\
\Big(1-C_{1}\sqrt{\tfrac{L}{m}}\Big)V_{\min}
\ -\
\Big(\mu+C_{2}(\sigma\sqrt{L}+L^2/m)\Big)V_{\max},
\]
which is exactly the stated lower bound.

\paragraph{Simplification of the off-diagonal coefficient.}
If \(L\ge 1\) and $m \geq L^3$, then
\(
\sigma^2=\Theta(\tfrac{1}{d^2}+\tfrac{1}{m})
\)
implies
\(
\sigma\gtrsim 1/d=\mu,
\)
so
\(
\mu\le C\sigma\le C\sigma\sqrt{L}.
\)
Hence
\[
\mu+C\Big(\sigma\sqrt{L}+\frac{L^2}{m}\Big)\ \le\ C\sigma\sqrt{L},
\]
and substituting this into the previous bound yields the refined inequality.
\end{proof}

\subsubsection{Isotropic Keys, Isotropic Values (Bilinear Kernel)}
\begin{theorem}[Signal lower bound --- isotropic keys, isotropic values (bilinear kernel)]
\label{app:signal-isokeys-isovalues}
Assume keys $\vk_{1},\dots,\vk_{F}$ are i.i.d.\ uniform on $\Sph$, values $\vv_{1},\dots,\vv_{F}$ are i.i.d.\ uniform on $\Sph$ and independent of the keys and features, and the kernel is the bilinear random-feature kernel from \cref{lem:bilinear-sketched-k2} with $m$ features.
Fix $\delta\in(0,1)$ and set $L\defeq \log(\tfrac{C_{0} F^2}{\delta})$.
Assume $m\le d^2/L$.
Let $\mu\defeq 1/d$ and $\sigma^2\defeq \E[(\hat{\matK}_{ij}-\mu)^2]=\Theta(\tfrac{1}{d^2}+\tfrac{1}{m})$.
Then with probability at least $1-\delta$ (over keys, features, and values),
\[
\widetilde{s}_{\min}
\ \ge\
\Big(1-C_{1}\sqrt{\tfrac{L}{m}}\Big)\,(1-\varepsilon_{v})
\ -\
\Big(\mu+C_{2}(\sigma\sqrt{L}+L^2/m)\Big)\,(1+\varepsilon_{v}),
\qquad
\varepsilon_{v}=C_{3}\sqrt{\tfrac{L}{d-1}}.
\]
Moreover, if \(L\ge 1\) and $m \geq L^3$, then
\[
\widetilde{s}_{\min}
\ \ge\
\Big(1-C_{1}\sqrt{\tfrac{L}{m}}\Big)\,(1-\varepsilon_{v})
\ -\
C_{4}\,\sigma\sqrt{L}\,(1+\varepsilon_{v}),
\qquad
\varepsilon_{v}=C_{3}\sqrt{\tfrac{L}{d-1}}.
\]
\end{theorem}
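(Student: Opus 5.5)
The plan is to mirror the proofs of \Cref{app:signal-arbkeys-isovalues} and \Cref{app:signal-isokeys-arbvalues}. We combine the value-side concentration from the former with the key-side concentration from the latter, and then feed everything into the deterministic signal lower bound $\widetilde{s}_{\min}\ge K_{\min}^{\mathrm{diag}}V_{\min}-K_{\max}^{\mathrm{off}}V_{\max}$ of \Cref{app:signal-det-arb-arb}. That deterministic bound needs the positivity condition \eqref{eq:Apos}. Here it holds automatically, because all values lie on $\Sph$ and so share a common norm.

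\textbf{Step 1: three high-probability events.} We fix three events, each with failure probability $\delta/3$:
\begin{itemize}
\item $\mathcal{E}_1$: by Lemma~\ref{lem:Kdiag-lower-bilinear}, $K_{\min}^{\mathrm{diag}}\ge 1-C\sqrt{L/m}$ over keys and features.
\item $\mathcal{E}_2$: by Lemma~\ref{lem:Koff-iso-keys}, $K_{\max}^{\mathrm{off}}\le \mu+C(\sigma\sqrt L+L^2/m)$.
\item $\mathcal{E}_3$: by Lemma~\ref{lem:max-pairwise-ip-values}, $\max_{i\neq j}|\ip{\vv_i}{\vv_j}|\le\varepsilon_v$ over the values.
\end{itemize}
Replacing $\delta$ by $\delta/3$ only changes $L$ by an additive constant, which we absorb by enlarging $C_0$.

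\textbf{Step 2: value extrema.} Since the values are independent of keys and features, $\mathcal{E}_3$ holds with the stated probability even after conditioning on keys and features. Unit norms give $V_{ij}=1-\ip{\vv_i}{\vv_j}$ and $B_{ij}=\ip{\vv_i}{\vv_j}-1$. On $\mathcal{E}_3$ this yields $V_{\min}\ge 1-\varepsilon_v$ and $V_{\max}\le 1+\varepsilon_v$.

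\textbf{Step 3: first bound.} A union bound gives all three events simultaneously with probability at least $1-\delta$. On this event we plug the bounds into the deterministic inequality. Multiplying the lower bounds in the first term is legitimate once $1-C_1\sqrt{L/m}\ge 0$. If instead $1-C_1\sqrt{L/m}<0$, the claimed right-hand side is negative and smaller than what we have shown, so we handle that case by noting $K_{\min}^{\mathrm{diag}}\ge 0$ and $V_{\min}\ge 0$. Either way this gives the first displayed bound.

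\textbf{Step 4: simplified bound.} When $L\ge1$ and $m\ge L^3$, we reduce the off-diagonal coefficient exactly as in \Cref{app:signal-isokeys-arbvalues}.
\begin{itemize}
\item Since $\sigma\gtrsim 1/d=\mu$, we have $\mu\le C\sigma\le C\sigma\sqrt L$.
\item Since $m\ge L^3$, we have $L^2/m\le 1/\sqrt{m}\cdot L^2/m^{1/2}\lesssim \sqrt{L/m}$.
\item That quantity is $\lesssim\sigma\sqrt L$, because $\sigma\gtrsim 1/\sqrt m$.
\end{itemize}
Hence $\mu+C_2(\sigma\sqrt L+L^2/m)\le C_4\sigma\sqrt L$, which gives the second display.

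The main obstacle is bookkeeping rather than new estimates. Two points need care: making sure the key-side lemmas hold under the extra randomness of the values, which is fine by independence and conditioning, and the sign issue in multiplying lower bounds in Step 3. The assumption $m\le d^2/L$ is not strictly needed for the signal bound. It is retained for compatibility with the cross-talk theorem \Cref{app:isoiso}, so the two results can be combined under one event.
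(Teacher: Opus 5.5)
This matches the paper's proof: you use the same three events (Lemmas~\ref{lem:Kdiag-lower-bilinear}, \ref{lem:Koff-iso-keys}, \ref{lem:max-pairwise-ip-values}, each at level $\delta/3$), a union bound, substitution into the deterministic signal bound, and the same absorption of $\mu$ and $L^2/m$ into $\sigma\sqrt{L}$ when $L\ge 1$, $m\ge L^3$. Two small fixes are needed. First, contrary to your closing remark, $m\le d^2/L$ is genuinely used in your own Step~1, as a hypothesis of Lemma~\ref{lem:Koff-iso-keys}; it is what absorbs the $L/d$ fluctuation of $\ip{\vk_i}{\vk_j}^2$ around $\mu$ into $\sigma\sqrt{L}$. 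Second, your Step~3 sign argument misses the subcase where both $1-C_1\sqrt{L/m}$ and $1-\varepsilon_v$ are negative, so their product is positive; handle it instead by enlarging $C_2$, using $\sigma\sqrt{L}\gtrsim\sqrt{L/m}$.
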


\begin{proof}
The proof combines three auxiliary concentration bounds via a union bound.
\paragraph{Concentration of the value-side extrema.}
Apply Lemma~\ref{lem:max-pairwise-ip-values} with failure probability $\delta/3$.
This yields an event $\mathcal{E}_{v}$ such that $\Pbb(\mathcal{E}_{v}^c)\le \delta/3$ and on $\mathcal{E}_{v}$,
\[
\max_{i\neq j}\big|\ip{\vv_i}{\vv_j}\big|\ \le\ \varepsilon_{v}.
\]
Hence, on $\mathcal{E}_{v}$,
\[
V_{\min}\ \ge\ 1-\varepsilon_{v},
\qquad
V_{\max}\ \le\ 1+\varepsilon_{v}.
\]

\paragraph{Concentration of the diagonal term.}
Apply Lemma~\ref{lem:Kdiag-lower-bilinear} with failure probability $\delta/3$.
This yields an event $\mathcal{E}_{1}$ such that $\Pbb(\mathcal{E}_{1}^c)\le \delta/3$ and on $\mathcal{E}_{1}$,
\[
K_{\min}^{\mathrm{diag}}\ \ge\ 1-C\sqrt{\frac{L}{m}},
\]
after adjusting the absolute constant hidden in $L$.

\paragraph{Concentration of the off-diagonal term.}
Apply Lemma~\ref{lem:Koff-iso-keys} with failure probability $\delta/3$.
This yields an event $\mathcal{E}_{2}$ such that $\Pbb(\mathcal{E}_{2}^c)\le \delta/3$ and on $\mathcal{E}_{2}$,
\[
K_{\max}^{\mathrm{off}}\ \le\ \mu+C\Big(\sigma\sqrt{L}+\frac{L^2}{m}\Big),
\]
again after adjusting the absolute constant hidden in $L$.

\paragraph{Union bound and plug-in.}
By a union bound,
\[
\Pbb(\mathcal{E}_{v}\cap\mathcal{E}_{1}\cap\mathcal{E}_{2})
\ \ge\ 1-\Pbb(\mathcal{E}_{v}^c)-\Pbb(\mathcal{E}_{1}^c)-\Pbb(\mathcal{E}_{2}^c)
\ \ge\ 1-\delta.
\]
On this intersection, the deterministic signal lower bound gives
\[
\widetilde{s}_{\min}\ \ge\ K_{\min}^{\mathrm{diag}}V_{\min}-K_{\max}^{\mathrm{off}}V_{\max}.
\]
Substituting the bounds for $K_{\min}^{\mathrm{diag}}$, $K_{\max}^{\mathrm{off}}$, $V_{\min}$, and $V_{\max}$ yields the stated lower bound.

\paragraph{Simplification of the off-diagonal coefficient.}
If \(L\ge 1\) and $m \geq L^3$, then the same reasoning as above gives
\[
\mu+C\Big(\sigma\sqrt{L}+\frac{L^2}{m}\Big)\ \le\ C\sigma\sqrt{L},
\]
and substituting this estimate into the previous bound yields the refined inequality.
\end{proof}

\subsection{Margin Bounds}
We now combine our signal and cross-talk bounds to bound the Hebbian memory margin.
\subsubsection{Arbitrary Keys, Arbitrary Values}
\label{app:combined-deterministic}
\label{app:theory:margin_bounds:akav}
\phantomsection\label{app:theory:proof:arb-arb-det}
\begin{theorem}[Margin bound --- arbitrary keys, arbitrary values]
\label{thm:combined-deterministic}
Assume the positivity condition~\eqref{eq:Apos} (i.e., \(V_{ij}\ge 0\) for all \(i\neq j\)).
Let \(K_{\min}^{\mathrm{diag}}\), \(K_{\max}^{\mathrm{off}}\) (\cref{def:Kdiag}), \(V_{\min}\), and \(V_{\max}\) (\cref{def:Vmin}) be the signal-side summary statistics, and let \(\widetilde{E}_{K}\), \(\widetilde{E}_{v}\), and \(\widetilde{\kappa}\) be the
cross-talk summary statistics from \eqref{eq:summary-stats}.
Then
\begin{equation}
\gamma_{\min}\ \ge\ K_{\min}^{\mathrm{diag}}\,V_{\min}\ -\ K_{\max}^{\mathrm{off}}\,V_{\max}\ -\ \sqrt{\widetilde{E}_K}\,\sqrt{\widetilde{E}_v}\,\widetilde{\kappa}.
\label{eq:combined-det-bound}
\end{equation}
Moreover, if we instead keep the original signal/cross-talk decomposition (\eqref{eq:signal-crosstalk})
(i.e., without absorbing the competitor term), then applying Cauchy--Schwarz with the coupling factor
$\kappa$ from \cref{def:kappa} yields the alternative bound
\begin{equation}
\gamma_{\min}\ \ge\ K_{\min}^{\mathrm{diag}}\,V_{\min}\ -\ \sqrt{E_K}\,\sqrt{E_v}\,\kappa,
\label{eq:app:akav:thm}
\end{equation}
where \(E_{K}\) and \(E_{v}\) are as defined in \cref{def:Ecol,def:EY}.

\end{theorem}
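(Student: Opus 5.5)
The first bound follows directly from the absorbed decomposition \eqref{eq:new-signal-crosstalk}. For every pair $i\neq j$ we have $\gamma_{ij}=\widetilde{s}_{ij}+\widetilde{z}_{ij}$, which gives
\[
\gamma_{ij}\ \ge\ \widetilde{s}_{\min}-\widetilde{z}_{\max}.
\]
The deterministic signal lower bound for arbitrary keys and values (\Cref{app:signal-det-arb-arb}) gives $\widetilde{s}_{\min}\ge K_{\min}^{\mathrm{diag}}V_{\min}-K_{\max}^{\mathrm{off}}V_{\max}$. That bound uses the positivity condition \eqref{eq:Apos}, which the theorem assumes. The deterministic cross-talk bound (\Cref{lem:deterministic}) gives $\widetilde{z}_{\max}\le\sqrt{\widetilde{E}_K}\sqrt{\widetilde{E}_v}\,\widetilde{\kappa}$. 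Substituting both bounds and minimizing over $(i,j)$ yields \eqref{eq:combined-det-bound}.

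For the alternative bound \eqref{eq:app:akav:thm}, I return to the original split \eqref{eq:signal-crosstalk}. There the signal is the single term $\hat{\matK}_{ii}\langle\vv_i-\vv_j,\vv_i\rangle=\hat{\matK}_{ii}V_{ij}$, and the cross-talk is $\sum_{t\neq i}\hat{\matK}_{ti}\langle\vv_i-\vv_j,\vc_t\rangle$, with the competitor $t=j$ kept inside the sum.

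For the signal, $\hat{\matK}_{ii}=\|\phi(\vk_i)\|^2\ge K_{\min}^{\mathrm{diag}}\ge 0$ and $V_{ij}\ge V_{\min}$. Since both factors are nonnegative by \eqref{eq:Apos}, we get $\hat{\matK}_{ii}V_{ij}\ge K_{\min}^{\mathrm{diag}}V_{\min}$.

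For the cross-talk, I write it as an inner product of the full off-diagonal column $(\hat{\matK}_{ti})_{t\neq i}$ with $(\langle\vv_i-\vv_j,\vc_t\rangle)_{t\neq i}$. By the definition of $\kappa^{ij}$ in \cref{def:kappa}, its absolute value equals $\sqrt{E_K(i)}\sqrt{E_v(i,j)}\,\kappa^{ij}$; if either norm is zero, the sum is zero and the bound is trivial. This is at most $\sqrt{E_K}\sqrt{E_v}\,\kappa$ by the maxima in \cref{def:Ecol,def:EY,def:kappa}. Lower-bounding the cross-talk by the negative of this quantity and minimizing over $(i,j)$ gives the claim.

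Both parts are bookkeeping, so there is no real obstacle. The one point needing care is the sign step in the signal term: the product lower bound needs $V_{ij}\ge 0$ and $\hat{\matK}_{ii}\ge 0$, and this is exactly where \eqref{eq:Apos} enters. The other care point is making sure the index sets match the definitions. The $\widetilde{\cdot}$ statistics run over $t\notin\{i,j\}$, while the untilded $E_K$, $E_v$, $\kappa$ run over $t\neq i$, so the competitor term must be handled consistently in each variant.
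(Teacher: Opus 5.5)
Your proposal is correct and follows the paper's proof essentially line for line. The first bound combines the deterministic signal lower bound with \Cref{lem:deterministic} through $\gamma_{\min}\ge\widetilde{s}_{\min}-\widetilde{z}_{\max}$, and the second applies the definition of $\kappa^{ij}$ to the untruncated cross-talk sum over $t\neq i$ together with $\hat{\matK}_{ii}V_{ij}\ge K_{\min}^{\mathrm{diag}}V_{\min}$; your explicit use of $K_{\min}^{\mathrm{diag}}\ge 0$ in that product step and your zero-norm convention for $\kappa^{ij}$ are details the paper leaves implicit.
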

\begin{proof}
Under the positivity condition (\eqref{eq:Apos}), the deterministic signal lower bound gives
\[
\widetilde{s}_{\min}\ \ge\ K_{\min}^{\mathrm{diag}}\,V_{\min}\ -\ K_{\max}^{\mathrm{off}}\,V_{\max}.
\]

Further, Theorem~\ref{lem:deterministic} yields
\[
\widetilde{z}_{\max}\ \le\ \sqrt{\widetilde{E}_K}\,\sqrt{\widetilde{E}_v}\,\widetilde{\kappa}.
\]

Recalling that \(\gamma_{\min}\ge \widetilde{s}_{\min}-\widetilde{z}_{\max}\), we therefore have
\[
\gamma_{\min}
\ \ge\ \widetilde{s}_{\min}-\widetilde{z}_{\max}
\ \ge\ K_{\min}^{\mathrm{diag}}\,V_{\min}
-
K_{\max}^{\mathrm{off}}\,V_{\max}
-
\sqrt{\widetilde{E}_K}\,\sqrt{\widetilde{E}_v}\,\widetilde{\kappa},
\]

Finally, to obtain \eqref{eq:app:akav:thm}, start from the original decomposition \eqref{eq:signal-crosstalk}:
\[
\gamma_{ij} \,=\, \hat{\matK}_{ii}\,V_{ij} \,+\, z_{ij},
\qquad
z_{ij}\ \defeq\ \sum_{t\neq i}\hat{\matK}_{ti}\,\ip{\vv_{i}-\vv_j}{\vc_t}.
\]
By definition of the coupling factor $\kappa^{ij}$ (\cref{def:kappa}),
\(
|z_{ij}|\ \le\ \sqrt{E_{K}(i)}\,\sqrt{E_{v}(i,j)}\,\kappa^{ij}
\ \le\ \sqrt{E_K}\,\sqrt{E_v}\,\kappa.
\)
Since $\hat{\matK}_{ii}\ge K_{\min}^{\mathrm{diag}}$ and $V_{ij}\ge V_{\min}$, taking $\min_{i\neq j}$ yields \eqref{eq:app:akav:thm}.

\end{proof}

\begin{corollary}[Arbitrary-embedding margin scaling --- simplified]\label{cor:bilinear-scaling-simplified}
Under the assumptions of \Cref{thm:combined-deterministic}, further reasonably assuming $d\ge 4\log F$ and using the penalty statistics $S_{\mathrm{sig}}$, $P_{\mathrm{key}}$, $P_{\mathrm{val}}$, $P_{\mathrm{align}}$ from \cref{sec:arbk-isov}, for some $C\in(0,1)$ we have
\[
\gamma_{\min}
\;\geq\;
C\,S_{\mathrm{sig}}
\;-\;
P_{\mathrm{key}}\,P_{\mathrm{val}}\,P_{\mathrm{align}}\sqrt{\frac{F\log F}{md}}.
\]
\end{corollary}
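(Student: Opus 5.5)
The plan is to start from the second (non-absorbed) bound of \Cref{thm:combined-deterministic}, namely \eqref{eq:app:akav:thm},
\[
\gamma_{\min}\ \ge\ K_{\min}^{\mathrm{diag}}\,V_{\min}\ -\ \sqrt{E_K}\,\sqrt{E_v}\,\kappa .
\]
This form already has the multiplicative cross-talk structure we want, and it avoids the $K_{\max}^{\mathrm{off}}V_{\max}$ term. The rest of the argument is algebraic: rewrite each raw statistic in terms of the normalized penalty statistics. This requires no further probability, since the corollary is deterministic given the statistics.

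\textbf{Cross-talk.} We invert the definitions of $P_{\mathrm{key}}$, $P_{\mathrm{val}}$ and $P_{\mathrm{align}}$ from \cref{sec:arbk-isov}:
\[
\sqrt{E_K}=P_{\mathrm{key}}\sqrt{F/m},\qquad
\sqrt{E_v}=P_{\mathrm{val}}\sqrt{F/d},\qquad
\kappa=P_{\mathrm{align}}\sqrt{\log(F)/F}.
\]
Multiplying these gives
\[
\sqrt{E_K}\sqrt{E_v}\kappa
= P_{\mathrm{key}}P_{\mathrm{val}}P_{\mathrm{align}}\sqrt{\tfrac{F}{m}\cdot\tfrac{F}{d}\cdot\tfrac{\log F}{F}}
= P_{\mathrm{key}}P_{\mathrm{val}}P_{\mathrm{align}}\sqrt{\tfrac{F\log F}{md}},
\]
which is exactly the subtracted term. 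The one point to check is that the definitions of $E_K$, $E_v$ and $\kappa$ in the main text (built from $\keyact_i$ and $\valint_{i,j}$ indexed over $t\neq i$) coincide with \cref{def:Ecol,def:EY,def:kappa}. They do, with $\vc_t=\vv_t$.

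\textbf{Signal.} By definition of $S_{\mathrm{sig}}$,
\[
K_{\min}^{\mathrm{diag}}V_{\min}=S_{\mathrm{sig}}\bigl(1-\sqrt{\log(F)/d}\bigr).
\]
Under $d\ge 4\log F$ we have $\sqrt{\log(F)/d}\le 1/2$, so $1-\sqrt{\log F/d}\in[1/2,1)$. We will handle this in two cases.
\begin{itemize}
\item If $S_{\mathrm{sig}}\ge0$, which holds under the positivity condition \eqref{eq:Apos} together with $K^{\mathrm{diag}}_{\min}\ge0$ since $\hat{\matK}$ is a Gram matrix, then $K_{\min}^{\mathrm{diag}}V_{\min}\ge \tfrac12 S_{\mathrm{sig}}$.
\end{itemize}
Taking $C=1/2\in(0,1)$ then yields the claim.

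\textbf{Main obstacle.} No step is genuinely hard. The only care needed is in the signal normalization: we must make sure the sign of $S_{\mathrm{sig}}$ is nonnegative, so that replacing $1-\sqrt{\log F/d}$ by its lower bound $1/2$ is a valid inequality direction, and we must confirm $F\ge 2$ so that $\log F>0$ and the normalizations are well defined. The positivity assumption \eqref{eq:Apos}, together with $\hat{\matK}_{ii}=\|\phi(\vk_i)\|^2\ge0$, settles the sign question.
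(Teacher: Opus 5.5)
Your proposal is correct and follows the paper's proof essentially verbatim: start from the non-absorbed bound \eqref{eq:app:akav:thm}, invert the definitions of $P_{\mathrm{key}},P_{\mathrm{val}},P_{\mathrm{align}}$ to get the exact identity $\sqrt{E_K}\sqrt{E_v}\,\kappa=P_{\mathrm{key}}P_{\mathrm{val}}P_{\mathrm{align}}\sqrt{F\log F/(md)}$, and then use $d\ge 4\log F$ together with $S_{\mathrm{sig}}\ge 0$ to take $C=1/2$. Your announced ``two cases'' actually reduce to one, because \eqref{eq:Apos} and $\hat{\matK}_{ii}\ge 0$ already force $S_{\mathrm{sig}}\ge 0$, exactly as the paper notes.
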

\begin{proof}
The statistics $E_K,E_v,\kappa$ (\cref{sec:arbk-isov}) sum over $t\neq i$ (the competitor $t=j$ included), so they coincide with the untruncated statistics of \cref{def:Ecol,def:EY,def:kappa} and \Cref{thm:combined-deterministic} (\cref{eq:app:akav:thm}) applies:
\(
\gamma_{\min}\ge K_{\min}^{\mathrm{diag}}\,V_{\min}-\sqrt{E_K}\,\sqrt{E_v}\,\kappa.
\)
Substituting $\sqrt{E_K}=P_{\mathrm{key}}\sqrt{F/m}$, $\sqrt{E_v}=P_{\mathrm{val}}\sqrt{F/d}$, and $\kappa=P_{\mathrm{align}}\sqrt{\log(F)/F}$ gives the exact identity
\[
\sqrt{E_K}\,\sqrt{E_v}\,\kappa
= P_{\mathrm{key}}\,P_{\mathrm{val}}\,P_{\mathrm{align}}\sqrt{\tfrac{F}{m}\cdot\tfrac{F}{d}\cdot\tfrac{\log F}{F}}
= P_{\mathrm{key}}\,P_{\mathrm{val}}\,P_{\mathrm{align}}\sqrt{\tfrac{F\log F}{md}}.
\]
For the signal, the definition of $S_{\mathrm{sig}}$ gives $K_{\min}^{\mathrm{diag}}\,V_{\min}=\big(1-\sqrt{\log(F)/d}\big)\,S_{\mathrm{sig}}$. Since $d\ge 4\log F$ forces $\sqrt{\log(F)/d}\le\tfrac12$, and $S_{\mathrm{sig}}\ge 0$ (as $K_{\min}^{\mathrm{diag}}\ge 0$ and $V_{\min}\ge 0$ by~\cref{eq:Apos}), we get $K_{\min}^{\mathrm{diag}}\,V_{\min}\ge C\,S_{\mathrm{sig}}$ with $C\defeq\tfrac12$. Combining the two equations proves the claim.
\end{proof}

\begin{corollary}[Arbitrary-embedding fact-storage capacity]\label{cor:optimal-fact-storage-capacity-arb-formal}
Under the assumptions of \Cref{cor:bilinear-scaling-simplified}, the margin lower bound is positive precisely when
\[
P_{\mathrm{key}}\,P_{\mathrm{val}}\,P_{\mathrm{align}}\sqrt{\frac{F\log F}{md}} < C\,S_{\mathrm{sig}},
\qquad\text{i.e.}\qquad
md > \frac{1}{C^2}\,F\log F\left(\frac{P_{\mathrm{key}} P_{\mathrm{val}} P_{\mathrm{align}}}{S_{\mathrm{sig}}}\right)^{\!2}.
\]
Hence our construction stores all $F$ facts with positive margin using a parameter budget
\[
W = md = \Theta\!\left(F\log F\left(\frac{P_{\mathrm{key}} P_{\mathrm{val}} P_{\mathrm{align}}}{S_{\mathrm{sig}}}\right)^{\!2}\right),
\]
the same rate fact-storage rate as in the isotropic setting up to the penalization factors.
\end{corollary}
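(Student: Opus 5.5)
The plan is to read the result off \Cref{cor:bilinear-scaling-simplified} by solving for the width that makes the margin lower bound positive. By that corollary, under $d\ge 4\log F$ and the positivity condition~\eqref{eq:Apos}, we have
\[
\gamma_{\min}\;\ge\; C\,S_{\mathrm{sig}}-P_{\mathrm{key}}P_{\mathrm{val}}P_{\mathrm{align}}\sqrt{\tfrac{F\log F}{md}},
\]
with $C=\tfrac12$. The right-hand side is strictly positive if and only if $P_{\mathrm{key}}P_{\mathrm{val}}P_{\mathrm{align}}\sqrt{F\log F/(md)}<C\,S_{\mathrm{sig}}$. Assuming $S_{\mathrm{sig}}>0$ (i.e.\ $K^{\mathrm{diag}}_{\min}>0$ and $V_{\min}>0$; otherwise the bound is vacuous), squaring both nonnegative sides and rearranging gives exactly the displayed condition $md>C^{-2}F\log F\,(P_{\mathrm{key}}P_{\mathrm{val}}P_{\mathrm{align}}/S_{\mathrm{sig}})^2$. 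Positive lower bound implies $\gamma_{\min}>0$, which by \Cref{def:margin} is equivalent to storing the fact set in the sense of \Cref{def:fact-storage}.

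For the parameter count, the construction \eqref{eq:mlp-construction} has $\mathbf{A},\mathbf{G}\in\R^{m\times d}$ and $\mathbf{B}\in\R^{d\times m}$, so $W=3md=\Theta(md)$. I would then choose $m$ to be the smallest integer satisfying the condition (e.g.\ $md$ equal to twice the threshold, rounded up). This gives $W=\Theta\bigl(F\log F\,(P_{\mathrm{key}}P_{\mathrm{val}}P_{\mathrm{align}}/S_{\mathrm{sig}})^2\bigr)$, with the lower matching direction coming from the threshold itself as a sufficient-width scale.

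The main subtlety is that the penalty statistics depend on $m$ and on the random sketch through $\hat{\matK}$ (notably $P_{\mathrm{key}}$ via $E_K$, and $S_{\mathrm{sig}}$ via $K^{\mathrm{diag}}_{\min}$), so the inequality is implicit in $m$. I would handle this by treating the statistics as evaluated at the chosen construction, so that the statement is a consistency condition. I would also note that $P_{\mathrm{key}}$ is normalized by $\sqrt{F/m}$ precisely so that it stays $O(1)$-scale as $m$ varies, as in the isotropic case. A fully rigorous version would require concentration of $E_K$ and $K^{\mathrm{diag}}_{\min}$ uniformly in $m$, which I would only remark on rather than prove.
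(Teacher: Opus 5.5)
Your proposal is correct and follows the paper's route: the paper gives no separate proof, since the corollary is read off \Cref{cor:bilinear-scaling-simplified} by requiring the lower bound (with $C=\tfrac12$ from $d\ge 4\log F$) to be positive and rearranging. That rearrangement is exactly your step of squaring both nonnegative sides, dividing by $S_{\mathrm{sig}}>0$, and using $W\asymp md$. Your remark that the penalty statistics depend on $m$ through $\hat{\matK}$, so the threshold is really an implicit consistency condition, is a point the paper also leaves unaddressed rather than a gap in your argument.
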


\subsubsection{Arbitrary Keys, Isotropic Values}
\label{app:combined-regime1}
\label{app:theory:margin_bounds:akrv}
\phantomsection\label{app:theory:proof:arbk-isov}

\begin{theorem}[Margin bound --- arbitrary keys, isotropic values]\label{thm:combined-akrv}
Assume \(\vv_{1},\dots,\vv_{F}\) are i.i.d. uniform on \(\Sph\subset\R^d\), and treat the kernel matrix \(\hat{\matK}\) as arbitrary.
Fix \(\delta\in(0,1)\) and let
\[
L\ \defeq\ \log\!\Big(\frac{C_{0} F^2}{\delta}\Big),
\qquad
\varepsilon_{v}\ \defeq\ C_{1}\sqrt{\frac{L}{d-1}}.
\]
Define the (untruncated) kernel-column energy
\[
E_{K}\ \defeq\ \max_{i\in[F]}\ \sum_{t\neq i}\hat{\matK}_{ti}^2.
\]
Then with probability at least \(1-\delta\) (over the values),
\begin{equation}
\gamma_{\min}\ \ge\ K_{\min}^{\mathrm{diag}}(1-\varepsilon_{v})-K_{\max}^{\mathrm{off}} - C_{2}\sqrt{E_K}\,\sqrt{\frac{L}{d-1}}\,\sqrt{1+\frac{L}{F-2}}.
\label{eq:app:akrv:thm-full}
\end{equation}
In particular, if \(F-2\ge L\), then
\begin{equation}
\gamma_{\min}
\ \ge\
K_{\min}^{\mathrm{diag}}(1-\varepsilon_{v})-K_{\max}^{\mathrm{off}}
\ -\ C_{3}\sqrt{E_K}\,\sqrt{\frac{L}{d}}.
\label{eq:app:akrv:thm}
\end{equation}
\end{theorem}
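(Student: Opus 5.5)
The bound will follow by combining the arbitrary-keys/isotropic-values signal bound (\Cref{app:signal-arbkeys-isovalues}) with the arbitrary-keys/isotropic-values cross-talk bound (\Cref{app:regime1}) through a union bound. Since $\gamma_{\min}\ge \widetilde{s}_{\min}-\widetilde{z}_{\max}$ under the competitor-absorbing convention \eqref{eq:new-signal-crosstalk}, it suffices to lower bound $\widetilde{s}_{\min}$ and upper bound $\widetilde{z}_{\max}$ on a common high-probability event over the values. We condition on the kernel matrix $\hat{\matK}$ throughout. Both ingredient theorems treat $\hat{\matK}$ as arbitrary, so no independence issue arises.

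\textbf{Step 1 (signal).} We apply \Cref{app:signal-arbkeys-isovalues} with failure probability $\delta/2$. This gives
\[
\widetilde{s}_{\min}\ge K_{\min}^{\mathrm{diag}}(1-\varepsilon_v)-K_{\max}^{\mathrm{off}}(1+\varepsilon_v).
\]
The statement has coefficient $1$ rather than $1+\varepsilon_v$ on $K_{\max}^{\mathrm{off}}$. To match it, we bound the extra term $\varepsilon_v K_{\max}^{\mathrm{off}}$ by the cross-talk term. Since $K_{\max}^{\mathrm{off}}=\max_{t\ne i}|\hat{\matK}_{ti}|\le\sqrt{E_K}$, we get $\varepsilon_v K_{\max}^{\mathrm{off}}\le C_1\sqrt{E_K}\sqrt{L/(d-1)}$, which is then absorbed into $C_2$.

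\textbf{Step 2 (cross-talk).} We apply \Cref{app:regime1} with failure probability $\delta/2$. This gives
\[
\widetilde{z}_{\max}\le C\sqrt{\widetilde{E}_K}\sqrt{L/(d-1)}\sqrt{1+L/(F-2)}.
\]
The truncated energy satisfies $\widetilde{E}_K(i,j)=\sum_{t\notin\{i,j\}}\hat{\matK}_{ti}^2\le\sum_{t\ne i}\hat{\matK}_{ti}^2$, so $\widetilde{E}_K\le E_K$ and we may replace $\widetilde{E}_K$ by $E_K$. Changing $\delta$ to $\delta/2$ only alters $L$ by enlarging $C_0$, as in the proofs of the component theorems.

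\textbf{Step 3 (combine).} On the intersection event, which has probability at least $1-\delta$, subtracting the two bounds gives \eqref{eq:app:akrv:thm-full} after renaming constants. For the special case: if $F-2\ge L$ then $\sqrt{1+L/(F-2)}\le\sqrt2$, and $d-1\ge d/2$ for $d\ge2$, which yields \eqref{eq:app:akrv:thm}.

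The only nonroutine point is Step 1, namely reconciling the $(1+\varepsilon_v)$ factor. This costs nothing beyond constants via $K_{\max}^{\mathrm{off}}\le\sqrt{E_K}$, so the combination is otherwise mechanical bookkeeping of constants and failure probabilities.
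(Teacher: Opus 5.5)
Your proposal is correct and matches the paper's proof step for step. The paper likewise union-bounds the signal bound of \Cref{app:signal-arbkeys-isovalues} and the cross-talk bound of \Cref{app:regime1} at level $\delta/2$ each, replaces $\widetilde{E}_K$ by $E_K$, and absorbs the leftover $\varepsilon_v K_{\max}^{\mathrm{off}}$ into the cross-talk term via $K_{\max}^{\mathrm{off}}\le\sqrt{E_K}$ and $\sqrt{1+L/(F-2)}\ge 1$; the only cosmetic difference is that it does this absorption after combining the two bounds rather than within the signal step.
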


\begin{proof}
The proof combines the signal and cross-talk bounds via a union bound.
\paragraph{Signal lower bound.}
Apply the signal bound from Theorem~\ref{app:signal-arbkeys-isovalues} with failure probability \(\delta/2\).
This yields an event \(\mathcal E_{\mathrm{sig}}\) such that
\(
\Pbb(\mathcal E_{\mathrm{sig}}^c)\le \delta/2
\)
and on \(\mathcal E_{\mathrm{sig}}\),
\begin{equation}
\widetilde{s}_{\min}
\ \ge\
K_{\min}^{\mathrm{diag}}(1-\varepsilon_{v})
\ -\
K_{\max}^{\mathrm{off}}(1+\varepsilon_{v}),
\label{eq:combined-reg1-signal}
\end{equation}
after the standard adjustment of the absolute constant hidden in
\(
L=\log(\tfrac{C_{0} F^2}{\delta})
\)
so that replacing \(\delta\) by \(\delta/2\) only changes the constant.

\paragraph{Cross-talk bound.}
Apply the cross-talk bound from Theorem~\ref{app:regime1} with failure probability \(\delta/2\).
This yields an event \(\mathcal E_{\mathrm{cross}}\) such that
\(
\Pbb(\mathcal E_{\mathrm{cross}}^c)\le \delta/2
\)
and on \(\mathcal E_{\mathrm{cross}}\),
\begin{equation}
\widetilde{z}_{\max}
\ \le\
C\sqrt{E_K}\,\sqrt{\frac{L}{d-1}}\,\sqrt{1+\frac{L}{F-2}}.
\label{eq:combined-reg1-xtalk}
\end{equation}
Here we used the trivial bound \(\widetilde{E}_{K}\le E_{K}\), since dropping coordinates can only decrease the $\ell_{2}$ norm.
Again we have absorbed the harmless \(\delta\mapsto \delta/2\) change into the absolute constant inside \(L\).

\paragraph{Union bound and combine.}
By a union bound,
\[
\Pbb(\mathcal E_{\mathrm{sig}}\cap \mathcal E_{\mathrm{cross}})
\ \ge\
1-\Pbb(\mathcal E_{\mathrm{sig}}^c)-\Pbb(\mathcal E_{\mathrm{cross}}^c)
\ \ge\ 1-\delta.
\]
On \(\mathcal E_{\mathrm{sig}}\cap \mathcal E_{\mathrm{cross}}\), use \(\gamma_{\min}\ge \widetilde{s}_{\min}-\widetilde{z}_{\max}\).
Substituting \eqref{eq:combined-reg1-signal} and \eqref{eq:combined-reg1-xtalk} yields
\begin{align}
\gamma_{\min}
&\ge
K_{\min}^{\mathrm{diag}}(1-\varepsilon_{v})
-
K_{\max}^{\mathrm{off}}(1+\varepsilon_{v})
-
C_{1}\sqrt{E_K}\,\sqrt{\frac{L}{d-1}}\,\sqrt{1+\frac{L}{F-2}}
\notag\\
&=
K_{\min}^{\mathrm{diag}}(1-\varepsilon_{v})
-
K_{\max}^{\mathrm{off}}
-
\Bigg[
K_{\max}^{\mathrm{off}}\,\varepsilon_{v}
+
C_{1}\sqrt{E_K}\,\sqrt{\frac{L}{d-1}}\,\sqrt{1+\frac{L}{F-2}}
\Bigg].
\label{eq:combined-reg1-before-absorb}
\end{align}

\paragraph{Absorb the competitor term.}
We claim that
\begin{equation}
K_{\max}^{\mathrm{off}}\ \le\ \sqrt{E_K}.
\label{eq:Koff-le-sqrtEK}
\end{equation}
Choose indices \((a,b)\) with \(a\neq b\) such that
\(
|\hat{\matK}_{ba}|=K_{\max}^{\mathrm{off}}.
\)
Then by definition of \(E_{K}\),
\[
E_{K}
\ \ge\
\sum_{t\neq a}\hat{\matK}_{ta}^2
\ \ge\
\hat{\matK}_{ba}^2
\ =\
\big(K_{\max}^{\mathrm{off}}\big)^2,
\]
which proves \eqref{eq:Koff-le-sqrtEK}.
Therefore,
\[
K_{\max}^{\mathrm{off}}\,\varepsilon_{v}
\ \le\
\sqrt{E_K}\,\varepsilon_{v}
\ =\
C_{2}\sqrt{E_K}\,\sqrt{\frac{L}{d-1}}
\ \le\
C_{3}\sqrt{E_K}\,\sqrt{\frac{L}{d-1}}\,\sqrt{1+\frac{L}{F-2}},
\]
since \(\sqrt{1+L/(F-2)}\ge 1\).
Absorbing this into the bracket in \eqref{eq:combined-reg1-before-absorb} yields the claimed bound.

\paragraph{Simplified form when \(F-2\ge L\).}
If \(F-2\ge L\), then \(\sqrt{1+L/(F-2)}\le \sqrt{2}\).
Also \(d-1\asymp d\) for \(d\ge 2\), so after adjusting the absolute constant we obtain the simplified bound stated in the theorem.
\end{proof}

\subsubsection{Isotropic Keys, Arbitrary Values (Bilinear Kernel)}
\label{app:combined-regime2}
\label{app:theory:margin_bounds:rkav}
\phantomsection\label{app:theory:proof:isok-bilinear-hp}

\begin{theorem}[Margin bound --- isotropic keys, arbitrary values (bilinear kernel)]\label{thm:combined-rkav}
Assume keys \(\vk_{1},\dots,\vk_{F}\) are i.i.d. uniform on \(\Sph\), the kernel is the bilinear random-feature kernel from \cref{lem:bilinear-sketched-k2} with $m$ features, the values \(\vv_{1},\dots,\vv_{F}\in\R^d\) are deterministic, and the positivity condition \(\ip{\vv_{i}-\vv_j}{\vv_i}\ge 0\) holds for all \(i\neq j\).
Let
\[
\mu\ \defeq\ \frac{1}{d},
\qquad
\sigma^2\ \defeq\ \E[(\hat{\matK}_{ti}-\mu)^2]=\Theta\!\Big(\frac{1}{d^2}+\frac{1}{m}\Big),
\]
and fix \(\delta\in(0,1)\). Set
\[
L\ \defeq\ \log\!\Big(\frac{C_{0} F^2}{\delta}\Big), \qquad \varepsilon_k \defeq \sqrt{\frac{L}{m}}
\]
and assume in addition that \(F-2\ge L\), \(m \geq L^3\), and \(m\le d^2/L\).

Recall the centered-sum budget \(B_Y\) (\cref{def:BY}) and let \(L_v\) be as in \cref{def:Lv}.
Then with probability at least \(1-\delta\) (over the keys and features),
\begin{equation}
\gamma_{\min}
\ \ge\
\Big(1-C_{1}\varepsilon_k\Big)V_{\min}
-
C_{2}\Big(B_{Y}+\sqrt{E_v}\,\sqrt{L_v}\Big)
\Big(\sigma\sqrt{L}\Big).
\label{eq:app:rkav:thm}
\end{equation}
\end{theorem}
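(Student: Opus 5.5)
The proof follows the same template as \Cref{thm:combined-akrv}: combine the isotropic-keys signal bound (\Cref{app:signal-isokeys-arbvalues}) with the isotropic-keys cross-talk bound (\Cref{app:regime2}) via a union bound, each applied at failure probability $\delta/2$. The change $\delta\mapsto\delta/2$ is absorbed into the constant $C_0$ hidden in $L$. On the intersection event we use $\gamma_{\min}\ge \widetilde{s}_{\min}-\widetilde{z}_{\max}$.

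\textbf{Signal.} Since $m\ge L^3$ and $L\ge 1$ (the latter can be ensured by taking $C_0$ large), the refined form of \Cref{app:signal-isokeys-arbvalues} gives
\[
\widetilde{s}_{\min}\ \ge\ (1-C_1\varepsilon_k)V_{\min} - C\,\sigma\sqrt{L}\,V_{\max}.
\]

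\textbf{Cross-talk.} \Cref{app:regime2} gives
\[
\widetilde{z}_{\max}\ \le\ \frac{1}{d}B_Y + C\sqrt{\widetilde{E}_v}\bigl(\sigma\sqrt{L}+L^2/m\bigr)\sqrt{L_v}.
\]
We then reduce each piece to the form $(\cdot)\,\sigma\sqrt{L}$:
\begin{itemize}
\item $\widetilde{E}_v\le E_v$, because dropping a coordinate only decreases the $\ell_2$ norm.
\item $1/d=\mu\le C\sigma\le C\sigma\sqrt{L}$, as in the simplification step of the signal theorem, using $\sigma^2=\Theta(1/d^2+1/m)$.
\item $L^2/m\lesssim \sigma\sqrt{L}$. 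Since $\sigma\gtrsim 1/\sqrt m$, it suffices that $L^2/m\le \sqrt{L/m}$, i.e.\ $L^3\le m$, which is exactly the assumption.
\end{itemize}
Hence $\widetilde{z}_{\max}\le C(B_Y+\sqrt{E_v}\sqrt{L_v})\,\sigma\sqrt{L}$.

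\textbf{Absorbing the competitor term $C\sigma\sqrt{L}\,V_{\max}$.} This is the main obstacle: the target bound contains no $V_{\max}$, so the term must be dominated by $\sqrt{E_v}\sqrt{L_v}\,\sigma\sqrt{L}$. Note that $E_v$ sums over all $t\ne i$, competitor $t=j$ included. So for the pair $(i,j)$ that attains $V_{\max}=|\langle \vv_i-\vv_j,\vv_j\rangle|$ we have
\[
E_v\ \ge\ \langle \vv_i-\vv_j,\vv_j\rangle^2=V_{\max}^2,
\]
exactly as in the argument $K_{\max}^{\mathrm{off}}\le\sqrt{E_K}$ used in \Cref{thm:combined-akrv}. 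Since $L_v\ge 1$, this gives $V_{\max}\le \sqrt{E_v}\sqrt{L_v}$.

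\textbf{Conclusion.} Adding the signal and cross-talk bounds and renaming constants yields \eqref{eq:app:rkav:thm}. The assumptions $F-2\ge L$ and $m\le d^2/L$ are needed only to invoke the cited theorems. The point to check carefully is the $\sigma\gtrsim 1/\sqrt m$ lower bound used in the cross-talk step, since the $\Theta$ in $\sigma^2$ must hold in both directions for the $L^2/m$ term to be absorbed.
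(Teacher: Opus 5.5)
Your proposal is correct and follows the paper's proof essentially step for step. You union-bound the isotropic-keys signal and cross-talk theorems at $\delta/2$, absorb $1/d$ and $L^2/m$ into $\sigma\sqrt{L}$ using $\sigma^2\gtrsim 1/d^2+1/m$ and $m\ge L^3$, and absorb $V_{\max}$ and $\widetilde{E}_v$ via $V_{\max}\le\sqrt{E_v}\le\sqrt{E_v}\sqrt{L_v}$. The only cosmetic differences are that you invoke the refined signal bound up front rather than absorbing the off-diagonal coefficient afterward. You also supply the correct justification for $V_{\max}\le\sqrt{E_v}$ (the competitor term $t=j$ in $E_v$), which the paper leaves implicit.
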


\begin{proof}
The proof combines the signal and cross-talk bounds via a union bound.
\paragraph{Signal lower bound.}
Apply the signal bound from Theorem~\ref{app:signal-isokeys-arbvalues} with failure probability \(\delta/2\).
This yields an event \(\mathcal E_{\mathrm{sig}}\) such that
\(
\Pbb(\mathcal E_{\mathrm{sig}}^c)\le \delta/2
\)
and on \(\mathcal E_{\mathrm{sig}}\),
\begin{equation}
\widetilde{s}_{\min}
\ \ge\
\Big(1-C_{1}\sqrt{\tfrac{L}{m}}\Big)V_{\min}
-
\Big(\frac{1}{d}+C_{2}\big(\sigma\sqrt{L}+\tfrac{L^2}{m}\big)\Big)V_{\max}.
\label{eq:combined-reg2-signal}
\end{equation}
As before, replacing \(\delta\) by \(\delta/2\) only changes the absolute constant hidden in \(L\).

\paragraph{Cross-talk bound.}
Apply the cross-talk bound from Theorem~\ref{app:regime2} with failure probability \(\delta/2\).
This yields an event \(\mathcal E_{\mathrm{cross}}\) such that
\(
\Pbb(\mathcal E_{\mathrm{cross}}^c)\le \delta/2
\)
and on \(\mathcal E_{\mathrm{cross}}\),
\begin{equation}
\widetilde{z}_{\max}
\ \le\
\frac{1}{d}B_{Y}
\ +\ C_{1}\sqrt{\widetilde{E}_v}\Big(\sigma\sqrt{L}+\frac{L^2}{m}\Big)\sqrt{L_v}.
\label{eq:combined-reg2-xtalk}
\end{equation}
After adjusting the absolute constant,
\begin{equation}
\widetilde{z}_{\max}
\ \le\
\frac{1}{d}B_{Y}
\ +\ C_{2}\sqrt{\widetilde{E}_v}\Big(\sigma\sqrt{L}+\frac{L^2}{m}\Big)\sqrt{L_v}.
\label{eq:combined-reg2-xtalk-simple}
\end{equation}

\paragraph{Union bound and combine.}
By a union bound,
\[
\Pbb(\mathcal E_{\mathrm{sig}}\cap\mathcal E_{\mathrm{cross}})\ \ge\ 1-\delta.
\]
On this intersection, use \(\gamma_{\min}\ge \widetilde{s}_{\min}-\widetilde{z}_{\max}\).
Substituting \eqref{eq:combined-reg2-signal} and \eqref{eq:combined-reg2-xtalk-simple} gives
\begin{align}
\gamma_{\min}
&\ge
\Big(1-C_{1}\sqrt{\tfrac{L}{m}}\Big)V_{\min}
-
\frac{1}{d}(B_{Y}+V_{\max})
\notag\\
&\hspace{1.6em}
-
C_{2}\Big(V_{\max}+\sqrt{\widetilde{E}_v}\,\sqrt{L_v}\Big)\Big(\sigma\sqrt{L}+\frac{L^2}{m}\Big).
\label{eq:combined-reg2-combined}
\end{align}

\paragraph{Absorb the value-side terms.}
Since \(V_{\max}\le \sqrt{E_v}\) and \(\sqrt{\widetilde{E}_v}\le \sqrt{E_v}\), and \(L_{v}\ge1\) by \cref{def:Lv} (which holds even in the degenerate case \(\widetilde{E}_{v}=0\), where the cross-talk term vanishes but the signal-side \(V_{\max}\) need not, by the convention \(L_v=1\)), we have \(V_{\max}+\sqrt{\widetilde{E}_v}\sqrt{L_v}\le \sqrt{E_v}\sqrt{L_v}+\sqrt{E_v}\sqrt{L_v}= 2\sqrt{E_v}\sqrt{L_v}\). Therefore, after adjusting the absolute constant,
\[
C_{2}\Big(V_{\max}+\sqrt{\widetilde{E}_v}\,\sqrt{L_v}\Big)\Big(\sigma\sqrt{L}+\frac{L^2}{m}\Big)
\ \le\
C_{3}\sqrt{E_v}\,\sqrt{L_v}\Big(\sigma\sqrt{L}+\frac{L^2}{m}\Big).
\]

\paragraph{Absorb the \(d^{-1}\) factor.}
Since \(L=\log(\tfrac{C_{0} F^2}{\delta})\ge 1\) and
\(
\sigma^2=\Theta\!\big(\tfrac{1}{d^2}+\tfrac{1}{m}\big)
\)
implies \(\sigma\gtrsim 1/d\), we have \(\frac{1}{d}\le C\,\sigma\sqrt{L}\).
Therefore, using also \(V_{\max}\le \sqrt{E_v}\le \sqrt{E_v}\,\sqrt{L_v}\),
\[
\frac{1}{d}(B_{Y}+V_{\max})
\ \le\
C_{1}\Big(B_{Y}+\sqrt{E_v}\,\sqrt{L_v}\Big)\sigma\sqrt{L}
\ \le\
C_{2}\Big(B_{Y}+\sqrt{E_v}\,\sqrt{L_v}\Big)\Big(\sigma\sqrt{L}+\frac{L^2}{m}\Big).
\]
Plugging the last two estimates into \eqref{eq:combined-reg2-combined} and using $m \geq L^3$, we have
\(\gamma_{\min} \ge \Big(1-C_{4}\sqrt{\tfrac{L}{m}}\Big)V_{\min} - C_{5}\Big(B_{Y}+\sqrt{E_v}\,\sqrt{L_v}\Big)\sigma\sqrt{L}\),
which is exactly the stated bound.
\end{proof}

\subsubsection{Isotropic Keys, Isotropic Values (Bilinear Kernel)}
\label{app:combined-regime3}
\label{app:theory:margin_bounds:rkrv}
\phantomsection\label{app:theory:proof:exactK2}
\phantomsection\label{app:theory:proof:bilinear-scaling}
\phantomsection\label{app:theory:proof:bilinear-fact-capacity}

\begin{theorem}[Margin bound --- isotropic keys, isotropic values (bilinear kernel)]\label{thm:combined-rkrv}
Assume keys \(\vk_{1},\dots,\vk_{F}\) are i.i.d. uniform on \(\Sph\), values \(\vv_{1},\dots,\vv_{F}\) are i.i.d. uniform on \(\Sph\) and independent of the keys, and the kernel is the bilinear random-feature kernel from \cref{lem:bilinear-sketched-k2} with $m$ features.
Fix \(\delta\in(0,1)\) and set
\[
L\ \defeq\ \log\!\Big(\frac{C_{0} F^2}{\delta}\Big).
\]
Assume $F \ge 2L$, $m \ge L^3$, and $m\le d^2/L$. Then, with probability at least \(1-\delta\) (over keys, features, and values), the minimum ``absorbed-competitor'' margin satisfies
\begin{equation}
\gamma_{\min}
\ \ge\
1
\;-\;
C_{1}\sqrt{\frac{L}{m}}
\;-\;
\varepsilon_{v}
\;-\;
C_{3}\,\sqrt{L}\sqrt{\frac{F}{d^3}}
\;-\;
C_{4}\,\sqrt{L}\sqrt{\frac{F}{md}},
\qquad
\varepsilon_{v}\defeq C_{2}\sqrt{\frac{L}{d-1}}.
\label{eq:app:rkrv:sketchedk2_full}
\end{equation}
\end{theorem}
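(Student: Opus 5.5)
The plan follows the template of the earlier combined-regime proofs. We combine the signal lower bound of Theorem~\ref{app:signal-isokeys-isovalues} with the cross-talk upper bound of Theorem~\ref{app:isoiso}, each invoked with failure probability $\delta/2$. We absorb the change $\delta\mapsto\delta/2$ into the constant $C_0$ inside $L$. A union bound then gives an event of probability at least $1-\delta$ on which both bounds hold, and there we use $\gamma_{\min}\ge \widetilde{s}_{\min}-\widetilde{z}_{\max}$.

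First we check the hypotheses. Theorem~\ref{app:signal-isokeys-isovalues} needs $m\le d^2/L$, and its simplified form needs $L\ge 1$ and $m\ge L^3$; all three hold under our assumptions. Theorem~\ref{app:isoiso} needs $m\le d^2/L$, $F\ge 3$, $F-2\ge L$, and $L^3\le c_0\sigma^2 m^2$.
\begin{itemize}
\item Since $F\ge 2L$ and $L$ is large, we get $F-2\ge L$ and $F\ge 3$.
\item Since $m\le d^2/L$, we have $\sigma^2\asymp 1/m$, so $\sigma^2m^2\asymp m\ge L^3$. This gives the last condition after adjusting $C_0$, or equivalently the constant in $m\ge L^3$.
\end{itemize}
This hypothesis-matching is the step I expect to need the most care, since $c_0$ appears there. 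If it does not match cleanly, I would enlarge $C_0$ so that $L\ge 1/c_0$.

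Next we simplify the signal bound $\big(1-C_1\sqrt{L/m}\big)(1-\varepsilon_v)-C\sigma\sqrt L(1+\varepsilon_v)$.
\begin{itemize}
\item Expanding the product, drop the positive term $C_1\sqrt{L/m}\,\varepsilon_v$ to get at least $1-C_1\sqrt{L/m}-\varepsilon_v$.
\item For the subtracted term, $\sigma\lesssim 1/\sqrt m$ and $\varepsilon_v\le 1$ (otherwise the claimed bound is negative and trivial), so $C\sigma\sqrt L(1+\varepsilon_v)\le C'\sqrt{L/m}$.
\item Since $F\ge 2L$ and $d\ge 1$, we have $\sqrt{L/m}\le \sqrt{L}\sqrt{F/(md)}\cdot\sqrt{d/F}$. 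This does not directly dominate, so instead I would fold $C'\sqrt{L/m}$ into the $C_1\sqrt{L/m}$ term by enlarging $C_1$.
\end{itemize}
This yields $\widetilde{s}_{\min}\ge 1-C_1\sqrt{L/m}-\varepsilon_v$.

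Finally, Theorem~\ref{app:isoiso} gives $\widetilde{z}_{\max}\le C_6\sqrt L\sqrt{F/d^3}+C_7\sqrt L\sqrt{F/(md)}$. Subtracting this from the signal bound and renaming $C_3=C_6$ and $C_4=C_7$ gives \eqref{eq:app:rkrv:sketchedk2_full}.
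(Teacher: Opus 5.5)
Your proposal is correct and is essentially the paper's proof. Both apply Theorem~\ref{app:signal-isokeys-isovalues} (refined form, via $m\ge L^3$) and Theorem~\ref{app:isoiso} (via $F-2\ge L$) at level $\delta/2$ each, take a union bound, use $\gamma_{\min}\ge\widetilde{s}_{\min}-\widetilde{z}_{\max}$, and absorb $C\sigma\sqrt{L}(1+\varepsilon_v)\lesssim\sqrt{L/m}$ (from $\sigma\asymp 1/\sqrt{m}$) into the $C_1\sqrt{L/m}$ term after expanding the product.

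Two minor points, neither of which changes the route:
\begin{itemize}
\item The condition $L^3\le c_0\sigma^2m^2$ of Theorem~\ref{app:isoiso} really needs $m\ge CL^3$ with a large constant; the paper does not check it either. Enlarging $C_0$ only increases $L$ and cannot help, so the right fix is to read $m\ge L^3$ up to an absolute constant.
\item Your justification for $\varepsilon_v\le 1$ does not hold: a negative lower bound on $\gamma_{\min}$ is not trivially true, since $\gamma_{\min}$ can itself be negative. Instead, $\varepsilon_v=O(1)$ follows from $d^2\ge mL\ge L^4$, which gives $\varepsilon_v\lesssim L^{-1/2}$.
\end{itemize}
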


\begin{proof}
\paragraph{Signal lower bound.}
Apply the signal bound from Theorem~\ref{app:signal-isokeys-isovalues} with failure probability $\delta/2$. Since $m\ge L^3$, this yields an event $\mathcal E_{\mathrm{sig}}$ with $\Pbb(\mathcal E_{\mathrm{sig}}^c)\le\delta/2$ on which
\[
\widetilde{s}_{\min}
\ \ge\
\Big(1-C_{1}\sqrt{\frac{L}{m}}\Big)(1-\varepsilon_{v})
\ -\
C_{4}\,\sigma\sqrt{L}\,(1+\varepsilon_{v}),
\]
with $\varepsilon_{v}$ as in the theorem statement, after the standard adjustment of the absolute constant hidden in $L$.

\paragraph{Cross-talk bound.}
Apply the cross-talk bound from Theorem~\ref{app:isoiso} with failure probability $\delta/2$. Since $F\ge 2L$ gives $F-2\ge L$, this yields an event $\mathcal E_{\mathrm{cross}}$ with $\Pbb(\mathcal E_{\mathrm{cross}}^c)\le\delta/2$ on which
\[
\widetilde{z}_{\max}
\ \le\
C_{6}\,\sqrt L\sqrt{\frac{F}{d^3}}
\ +\
C_{7}\,\sqrt L\sqrt{\frac{F}{md}}.
\]

\paragraph{Union bound and combine.}
By a union bound, $\Pbb(\mathcal E_{\mathrm{sig}}\cap\mathcal E_{\mathrm{cross}})\ge 1-\delta$. On this intersection, $\gamma_{\min}\ge\widetilde{s}_{\min}-\widetilde{z}_{\max}$ gives
\[
\gamma_{\min}
\ \ge\
\Big(1-C_{1}\sqrt{\frac{L}{m}}\Big)(1-\varepsilon_{v})
\ -\
C_{4}\,\sigma\sqrt{L}\,(1+\varepsilon_{v})
\ -\
C_{6}\,\sqrt L\sqrt{\frac{F}{d^3}}
\ -\
C_{7}\,\sqrt L\sqrt{\frac{F}{md}}.
\]
Since $m\le d^2/L$ forces $\sigma\asymp 1/\sqrt m$, and $\varepsilon_{v}=O(1)$, the signal off-diagonal penalty obeys $C_{4}\,\sigma\sqrt{L}\,(1+\varepsilon_{v})\le C\sqrt{L/m}$ and is absorbed into the first term. Expanding $\big(1-C\sqrt{L/m}\big)(1-\varepsilon_{v})\ge 1-C\sqrt{L/m}-\varepsilon_{v}$ and renaming the absolute constants yields \cref{eq:app:rkrv:sketchedk2_full}.
\end{proof}

\begin{corollary}[Isotropic margin scaling --- simplified]\label{cor:informal-iso-iso-margin}
Under the assumptions of \Cref{thm:combined-rkrv} and $m,d\le F$, retaining only the dominant cross-talk term gives, with probability at least $1-\delta$,
\[
\gamma_{\min}
\;\geq\;
1
\;-\;
C\sqrt{\frac{F\log(F/\delta)}{md}}.
\]
\end{corollary}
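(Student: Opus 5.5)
The plan is to start from the conclusion of \Cref{thm:combined-rkrv}, which holds on an event of probability at least $1-\delta$. There,
\[
\gamma_{\min}\ \ge\ 1 - C_1\sqrt{L/m} - C_2\sqrt{L/(d-1)} - C_3\sqrt{L}\sqrt{F/d^3} - C_4\sqrt{L}\sqrt{F/(md)},
\]
with $L=\log(C_0F^2/\delta)$. I would show that each of the first three error terms is dominated, up to an absolute constant, by the last one, namely $\sqrt{FL/(md)}$. After that, it only remains to replace $L$ by $\log(F/\delta)$.

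The domination comparisons use the extra assumptions $m,d\le F$, together with $m\le d^2/L$ (already assumed in \Cref{thm:combined-rkrv}).
\begin{itemize}
\item Since $d\le F$, we have $F/(md)\ge 1/m$, so $\sqrt{L/m}\le\sqrt{FL/(md)}$.
\item Since $m\le F$, we have $F/(md)\ge 1/d$, so $\sqrt{L/(d-1)}\lesssim\sqrt{L/d}\le\sqrt{FL/(md)}$. This uses $d-1\ge d/2$ for $d\ge 2$.
\item For the mean cross-talk term, $m\le d^2/L\le d^2$ gives $F/d^3\le F/(md)$, so $\sqrt{L}\sqrt{F/d^3}\le\sqrt{FL/(md)}$.
\end{itemize}
Summing the four error terms then gives $\gamma_{\min}\ge 1-C'\sqrt{FL/(md)}$.

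Finally, $L=\log(C_0F^2/\delta)\le \log C_0+2\log F+\log(1/\delta)\le C''\log(F/\delta)$. This holds whenever $F\ge 2$ and $\delta<1$, since then $\log(F/\delta)\ge\log 2$ and the additive constant $\log C_0$ can be absorbed. Substituting this bound and renaming constants yields the stated bound $1-C\sqrt{F\log(F/\delta)/(md)}$.

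The only point needing care is the $\varepsilon_v=\sqrt{L/d}$ term, which is where $m\le F$ is genuinely used. Without that assumption the value-overlap term would not be dominated, because it does not decay with $m$. The other comparisons are immediate from the standing hypotheses, and no further probabilistic argument is needed: the event is exactly the one from \Cref{thm:combined-rkrv}.
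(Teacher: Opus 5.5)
Your proposal is correct and follows the same route as the paper. Both start from the bound of \Cref{thm:combined-rkrv}, absorb the lower-order terms into the dominant $\sqrt{L}\sqrt{F/(md)}$ cross-talk term, and use $L=\log(C_0F^2/\delta)\lesssim\log(F/\delta)$; the paper only asserts the absorption, whereas you check it explicitly via $d\le F$, $m\le F$, and $m\le d^2/L\le d^2$.
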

\begin{proof}
The dominant cross-talk term in \cref{eq:app:rkrv:sketchedk2_full} is $C_{4}\sqrt{L}\sqrt{F/(md)}$; under the standing assumptions, the remaining terms are lower order and are absorbed into the constant $C$. Since $L = \log(C_0 F^2/\delta) = \Theta(\log(F/\delta))$, we have $\sqrt{L}\sqrt{F/(md)} = \Theta\!\big(\sqrt{F\log(F/\delta)/(md)}\big)$, which yields the stated form.
\end{proof}

\begin{corollary}[Isotropic fact-storage capacity]\label{cor:optimal-fact-storage-capacity-formal}
Under the assumptions of \Cref{cor:informal-iso-iso-margin}, the margin lower bound is positive precisely when
\[
C\sqrt{\frac{F\log(F/\delta)}{md}} < 1,
\qquad\text{i.e.}\qquad
md > C^2\,F\log(F/\delta).
\]
Hence our construction stores all $F$ facts with positive margin using a parameter budget
\[
W = md = \Theta\!\left(F\log F\right),
\]
the information-theoretically optimal fact-storage rate.
\end{corollary}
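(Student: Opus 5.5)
The plan is to read the capacity statement directly off the simplified margin bound of \Cref{cor:informal-iso-iso-margin}. Under the standing assumptions of \Cref{thm:combined-rkrv} (namely $F\ge 2L$, $m\ge L^3$, $m\le d^2/L$) together with $m,d\le F$, that corollary gives, with probability at least $1-\delta$,
\[
\gamma_{\min}\ \ge\ 1-C\sqrt{\frac{F\log(F/\delta)}{md}}.
\]
Positivity of the right-hand side is equivalent to $C^2F\log(F/\delta)<md$. This is a monotone rearrangement: both sides are positive, so we may square. Since $\gamma_{\min}>0$ is exactly fact storage in the sense of \Cref{def:fact-storage}, any width with $md>C^2F\log(F/\delta)$ stores the fact set. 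This gives the first displayed equivalence, read as the condition under which the lower bound certifies storage.

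For the parameter count, the construction \cref{eq:mlp-construction} has $\mathbf{A},\mathbf{G}\in\R^{m\times d}$ and $\mathbf{B}\in\R^{d\times m}$. It therefore has $3md=\Theta(md)$ parameters. I would choose the smallest admissible width, $m=\lceil 2C^2F\log(F/\delta)/d\rceil$, which gives $W=\Theta(F\log(F/\delta))$. For constant $\delta$ (or $\delta\ge F^{-O(1)}$) we have $\log(F/\delta)=\Theta(\log F)$, so $W=\Theta(F\log F)$. Matching this against \Cref{thm: info_bounds_capacity-const} with $|\mathbf{K}|=|\mathbf{V}|=F$ gives $\Omega(F\log F)$, so the construction is order-optimal. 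The lower-bound half of the $\Theta$ concerns the rate, not this particular construction.

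The main obstacle is checking that this choice of $m$ is compatible with the hypotheses the margin bound needs. The key condition is $L^3\le m\le d^2/L$ (together with $m\le F$), which must hold simultaneously with $md\asymp F\log F$. I would first substitute $m\asymp F\log F/d$ into these constraints. The condition $m\le d^2/L$ then becomes $F\lesssim d^3/\log^2F$. The condition $m\ge L^3$ becomes $F\gtrsim d\log^2 F$. Both are mild polynomial regime conditions, consistent with the experimental regime $F\approx d^2$. I would state them explicitly as the regime in which the corollary holds, absorbing constants.

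I would also confirm that the lower-order terms dropped in \Cref{cor:informal-iso-iso-margin} are at most a small constant at this width, so that they do not flip the sign. These are $\sqrt{L/m}$, $\varepsilon_v=\sqrt{L/d}$, and $\sqrt{LF/d^3}$. Each is $o(1)$ under $m\ge L^3$, $d\gtrsim\log F$, and $F\lesssim d^3/\log F$. This can be handled by enlarging $C$ so that the dominant term is at most $1/2$.
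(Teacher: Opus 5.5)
Your proposal is correct and follows the paper's argument. The paper gives no separate proof: it simply rearranges $1-C\sqrt{F\log(F/\delta)/(md)}>0 \iff md>C^2F\log(F/\delta)$ from \Cref{cor:informal-iso-iso-margin}, then uses $W\asymp md$ and $\log(F/\delta)=\Theta(\log F)$.

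Your regime check, namely that $d\log^2F\lesssim F\lesssim d^3/\log^2F$ makes $L^3\le m\le d^2/L$ compatible with $md\asymp F\log F$, is a detail the paper leaves implicit. Your separate smallness check on the dropped terms is unnecessary, though: once $m,d\le F$ and $m\le d^2/L$, the terms $\sqrt{L/m}$, $\varepsilon_v$ and $\sqrt{LF/d^3}$ are each already bounded by $\sqrt{LF/(md)}$. For the same reason, calling $\varepsilon_v$ ``$o(1)$'' under only $d\gtrsim\log F$ is imprecise; it is merely $O(1)$ there, but this does not affect the argument.
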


\subsubsection{Summary Table: Margin Bounds Across Regimes}

\begin{table}[H]
\centering
\renewcommand{\arraystretch}{1.5}
\begin{tabular}{c|c|c}
&
\makecell{\textbf{Random}\ \textbf{keys}}
&
\makecell{\textbf{Arbitrary}\ \textbf{keys}}
\\ \hline
\makecell{\textbf{Random}\\ \textbf{values}}
&
$\begin{aligned}
\gamma_{\min} &\ge\; 1 \\
&-\, C\sqrt{\frac{FL}{md}}
\end{aligned}$
&
$\begin{aligned}
\gamma_{\min} &\ge\; K_{\min}^{\mathrm{diag}}(1-\varepsilon_{v})-K_{\max}^{\mathrm{off}} \\
&-\, C\sqrt{E_K}\,\sqrt{\frac{L}{d}}
\end{aligned}$
\\ \hline
\makecell{\textbf{Arbitrary}\\ \textbf{values}}
&
$\begin{aligned}
\gamma_{\min} &\ge\; \Big(1-\varepsilon_{k}\Big)V_{\min} \\
&-\, C\!\left(B_Y+\sqrt{E_v}\sqrt{L_v}\right)\!\sqrt{\frac{L}{m}}
\end{aligned}$
&
$\begin{aligned}
\gamma_{\min} &\ge\; K_{\min}^{\mathrm{diag}}V_{\min} \\
&-\, \sqrt{E_K}\,\sqrt{E_v}\,\kappa
\end{aligned}$
\end{tabular}
\caption{\textbf{Margin bounds across key/value geometry regimes.}
Each cell shows the decoding margin scaling for a given key/value geometry.
The top-left (isotropic) cell is the baseline.
Relaxing the key geometry introduces key-geometry statistics ($K_{\min}^{\mathrm{diag}}$, $K_{\max}^{\mathrm{off}}$, $E_{K}$);
relaxing value geometry introduces value-geometry statistics ($V_{\min}$, $B_{Y}$, $V_{\max}$, $E_{v}$, $L_{v}$);
relaxing both geometries introduces a coupling factor $\kappa$.
The arbitrary keys and values cell uses the raw summary-statistic form of \Cref{thm:combined-deterministic}, which is equivalent to the penalty-statistic general margin/capacity bound in \Cref{cor:optimal-fact-storage-capacity-arb-formal} of the main text.
See Theorems~\ref{thm:combined-deterministic}, \ref{thm:combined-akrv}, \ref{thm:combined-rkav} and \ref{thm:combined-rkrv} for formal statements and Appendix~\ref{app:exp:section4:margins} for empirical scaling results.}
\label{tab:2x2}
\end{table}

\input{appendix/theory/03_margin_bounds/02a_welch_upper_bound_iso}

\subsection{Auxiliary Results}

\subsubsection{Signal Bounds: Isotropic Values}
\label{app:max-pairwise-ip-values}

\begin{lemma}[Isotropic inner-product concentration]
\label{lem:iso-innerprod}
Let $\vv\sim\mathrm{Unif}(\Sph)$ with $d\ge 2$.
Then for any fixed $\vw\in\R^d$ and all $t\ge 0$,
$\|\ip{\vw}{\vv}\|_{\psi_2}\lesssim \|\vw\|_{2}/\sqrt{d-1}$ and
$\big\|\ip{\vw}{\vv}^2-\E[\ip{\vw}{\vv}^2]\big\|_{\psi_1}\ \lesssim\ \frac{\norm{\vw}_{2}^2}{d-1}$. I.e.\ $\ip{\vw}{\vv}$ is sub-Gaussian and $\ip{\vw}{\vv}^2-\E[\ip{\vw}{\vv}^2]$ is sub-exponential, with the corresponding scale parameters.
\end{lemma}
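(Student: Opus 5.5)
The plan is to reduce to a one-dimensional statement about a single coordinate of a uniform vector. First I would dispose of the trivial case $\vw=0$. For $\vw\neq 0$, I would write $\ip{\vw}{\vv}=\|\vw\|_2\,\ip{\vu}{\vv}$ with $\vu=\vw/\|\vw\|_2$. By rotation invariance of $\mathrm{Unif}(\Sph)$, the law of $\ip{\vu}{\vv}$ equals that of the first coordinate $v_1$. Both claimed bounds are homogeneous in $\|\vw\|_2$: the $\psi_2$ norm scales linearly and the centered square scales quadratically. So it suffices to show $\|v_1\|_{\psi_2}\lesssim 1/\sqrt{d-1}$ and $\|v_1^2-1/d\|_{\psi_1}\lesssim 1/(d-1)$, using $\E[v_1^2]=1/d$ by symmetry.

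For the sub-Gaussian bound, I would use the explicit density of $v_1$ on $[-1,1]$, which is proportional to $(1-s^2)^{(d-3)/2}$. For $d\ge 3$ this gives a tail bound $\Pbb(|v_1|\ge s)\le 2\exp(-(d-1)s^2/2)$ up to constants. One route is to compare $(1-s^2)^{(d-3)/2}\le e^{-(d-3)s^2/2}$ and normalize. The cleaner route is to cite the standard concentration of measure on the sphere, i.e.\ Lévy's lemma applied to the 1-Lipschitz function $\vv\mapsto v_1$, as in \citet{vershynin2018high}. Either way, the sub-Gaussian tail with variance proxy of order $1/(d-1)$ yields the $\psi_2$ bound. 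The case $d=2$ is covered trivially, since $|v_1|\le 1$ gives $\|v_1\|_{\psi_2}\lesssim 1\lesssim 1/\sqrt{d-1}$. The $d-1$ rather than $d$ in the statement is just bookkeeping that makes these small cases uniform.

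For the sub-exponential bound, I would use the standard facts that $\|X^2\|_{\psi_1}=\|X\|_{\psi_2}^2$ and that centering costs at most an absolute constant, $\|Y-\E Y\|_{\psi_1}\lesssim\|Y\|_{\psi_1}$. Together these give $\|v_1^2-1/d\|_{\psi_1}\lesssim\|v_1\|_{\psi_2}^2\lesssim 1/(d-1)$. Rescaling by $\|\vw\|_2$ then finishes both claims.

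The only step with real content is the sharp $1/\sqrt{d-1}$ sub-Gaussian scale for a spherical coordinate. I would simply invoke spherical concentration rather than re-derive it. If a self-contained argument is wanted, I would use the Gaussian representation $\vv=\mathbf{g}/\|\mathbf{g}\|_2$: bound $g_1$ above, and lower-bound $\|\mathbf{g}\|_2\gtrsim\sqrt d$ with exponentially small failure probability. The resulting tail mixes two pieces, but on $[0,1]$ it is still dominated by $e^{-cds^2}$.
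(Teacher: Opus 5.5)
Your proposal is correct and follows essentially the same route as the paper, whose proof is a one-line citation of standard concentration of measure on the sphere. Your reduction to a single coordinate by rotation invariance, followed by the identities $\|X^2\|_{\psi_1}=\|X\|_{\psi_2}^2$ and the centering bound, simply unpacks that citation, and it handles the small-$d$ cases correctly.
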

\begin{proof}
    This is a standard consequence of concentration of measure on the sphere, see \cite{vershynin2026high}.
\end{proof}

\begin{lemma}[Max pairwise inner product for isotropic values]
\label{lem:max-pairwise-ip-values}
Assume $\vv_{1},\dots,\vv_{F}$ are i.i.d.\ uniform on $\Sph$ with $d\ge 2$.
Fix $\delta\in(0,1)$ and set
\begin{equation}
L\ \defeq\ \log\Big(\frac{C_{0} F^2}{\delta}\Big)
\label{eq:L-signal-values}
\end{equation}
for a sufficiently large absolute constant $C_{0}$.
Then with probability at least $1-\delta$,
\begin{equation}
\max_{i\neq j}\big|\ip{\vv_i}{\vv_j}\big|
\ \le\
C_{1}\,\sqrt{\frac{L}{d-1}}.
\label{eq:maxpair-ip-values}
\end{equation}
\end{lemma}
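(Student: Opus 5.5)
The plan is to prove the bound one pair at a time using the sub-Gaussian estimate of \Cref{lem:iso-innerprod}, and then take a union bound over all ordered pairs $(i,j)$ with $i\neq j$.

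\textbf{Step 1 (one fixed pair).} Fix $i\neq j$ and condition on $\vv_j$. Because $\vv_i$ is independent of $\vv_j$ and uniform on $\Sph$, we can apply \Cref{lem:iso-innerprod} with the fixed vector $\vw=\vv_j$, which has $\|\vw\|_2=1$. This shows that $\ip{\vv_i}{\vv_j}$ is sub-Gaussian with $\|\ip{\vv_i}{\vv_j}\|_{\psi_2}\le c/\sqrt{d-1}$ for an absolute constant $c$. The standard sub-Gaussian tail bound then gives
\[
\Pbb\big(|\ip{\vv_i}{\vv_j}|\ge s\big)\ \le\ 2\exp\!\big(-c' (d-1) s^2\big)
\]
conditionally on $\vv_j$, for every $s\ge 0$. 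The right-hand side does not depend on $\vv_j$, so the same bound holds unconditionally.

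\textbf{Step 2 (union bound).} There are fewer than $F^2$ ordered pairs, and since $\ip{\vv_i}{\vv_j}$ is symmetric in $i,j$, unordered pairs would suffice anyway. A union bound gives
\[
\Pbb\Big(\max_{i\neq j}|\ip{\vv_i}{\vv_j}|\ge s\Big)\ \le\ 2F^2\exp\!\big(-c'(d-1)s^2\big).
\]

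\textbf{Step 3 (choosing the threshold).} Set $s=C_1\sqrt{L/(d-1)}$ with $L=\log(C_0F^2/\delta)$. The failure probability becomes $2F^2\,(\delta/(C_0F^2))^{c'C_1^2}$. Take $C_1$ large enough that $c'C_1^2\ge 1$, and take $C_0\ge 2$. Since $\delta/(C_0F^2)\le 1$, raising it to a power at least $1$ can only make it smaller, so the failure probability is at most $2F^2\cdot\delta/(C_0F^2)\le\delta$. This proves \eqref{eq:maxpair-ip-values}.

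The argument is routine. The only point that needs care is Step 1: the sub-Gaussian scale must be the sharp $1/\sqrt{d-1}$ rather than $1$, and the conditioning on $\vv_j$ must be justified. Both are supplied directly by \Cref{lem:iso-innerprod} together with the rotation invariance of the uniform distribution on the sphere.
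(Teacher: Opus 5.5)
Your proposal is correct and follows essentially the same route as the paper's proof: condition on one vector of the pair, invoke \Cref{lem:iso-innerprod} to get a tail bound uniform in the conditioning, union bound over at most $F^2$ pairs, and set the threshold $C_1\sqrt{L/(d-1)}$. The only differences are cosmetic. You condition on $\vv_j$ rather than $\vv_i$. You also make the choice of constants explicit ($c'C_1^2\ge 1$, $C_0\ge 2$), which the paper leaves implicit.
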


\begin{proof}
Fix a pair $(i,j)$ with $i\neq j$.
Condition on $\vv_{i}$ and define $\mathcal{F}_{i}\defeq\sigma(\vv_{i})$.
Then $\vv_{j}$ is still uniform on $\Sph$ and independent of $\mathcal{F}_{i}$.
Applying Lemma~\ref{lem:iso-innerprod} with $\vw=\vv_{i}$ (note $\|\vv_{i}\|_{2}=1$) yields that conditional on $\mathcal{F}_{i}$,
the scalar $\ip{\vv_i}{\vv_j}$ is centered sub-Gaussian with scale $\lesssim 1/\sqrt{d-1}$.
Equivalently, there exist absolute constants $c,C_{0}>0$ such that for all $t\ge 0$,
\begin{equation}
\Pbb\Big(\big|\ip{\vv_i}{\vv_j}\big|\ge t\ \Big|\ \mathcal{F}_{i}\Big)
\ \le\ 2e^{-c(d-1)t^2}.
\label{eq:pair-ip-tail}
\end{equation}
Since the bound from \eqref{eq:pair-ip-tail} does not depend on the realized $\vv_{i}$,
it also holds unconditionally:
\[
\Pbb\Big(\big|\ip{\vv_i}{\vv_j}\big|\ge t\Big)\ \le\ 2e^{-c(d-1)t^2}.
\]

Now take a union bound over all pairs $(i,j)$ with $i\neq j$ (at most $F^2$ pairs):
\[
\Pbb\Big(\max_{i\neq j}\big|\ip{\vv_i}{\vv_j}\big|\ge t\Big)
\ \le\
\sum_{i\neq j}\Pbb\Big(\big|\ip{\vv_i}{\vv_j}\big|\ge t\Big)
\ \le\
2F^2 e^{-c(d-1)t^2}.
\]
Choose
\(
t\defeq C\sqrt{L/(d-1)}
\)
with $C$ large enough so that $2F^2 e^{(-c(d-1)t^2)}\le\delta$.
This yields \eqref{eq:maxpair-ip-values}.
\end{proof}

\subsubsection{Signal Bounds: Isotropic Keys and Bilinear Random Features}
\label{app:Koff-iso-keys}

\begin{lemma}[Uniform off-diagonal bound (isotropic keys, bilinear kernel)]
\label{lem:Koff-iso-keys}
Assume keys are i.i.d.\ uniform on $\Sph$ and the kernel is the bilinear random-feature kernel from \cref{lem:bilinear-sketched-k2} with $m$ features.
Let $\mu\defeq\E[\hat{\matK}_{ij}]=1/d$ for $i\neq j$ and let $\sigma^2\defeq\E[(\hat{\matK}_{ij}-\mu)^2]$ as in
Lemma~\ref{lem:bilinear-mean-var}.
Fix $\delta\in(0,1)$ and define $L\defeq \log(\tfrac{C_{0} F^2}{\delta})$.
Assume that $m \le d^2/L$.
Then with probability at least $1-\delta$ (over keys and features),
\begin{equation}
\max_{i\neq j}\big|\hat{\matK}_{ij}-\mu\big|
\ \le\
C_{1}\Big(\sigma\sqrt{L}+\frac{L^2}{m}\Big),
\label{eq:Koff-centered-max}
\end{equation}
and hence
\begin{equation}
K_{\max}^{\mathrm{off}}=\max_{i\neq j}|\hat{\matK}_{ij}|
\ \le\
\mu\ +\ C_{2}\Big(\sigma\sqrt{L}+\frac{L^2}{m}\Big).
\label{eq:Koff-abs-max}
\end{equation}
\end{lemma}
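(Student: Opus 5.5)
Fix a pair $i\neq j$ and write
\[
\hat{\matK}_{ij}-\mu=\frac1m\sum_{r=1}^m Z_r,\qquad Z_r\defeq(\va_r^\top\vk_i)(\va_r^\top\vk_j)(\vb_r^\top\vk_i)(\vb_r^\top\vk_j)-\mu .
\]
We condition on the keys first. Given $\vk_i,\vk_j$, the $Z_r$ are i.i.d. in $r$. Each $Z_r$ is a product of two independent quadratic forms in Gaussians, shifted by a constant. So $Z_r$ is a degree-4 Gaussian chaos. Conditionally, its mean is $\rho^2-\mu$, where $\rho\defeq\ip{\vk_i}{\vk_j}$. Each quadratic factor $(\va^\top\vk_i)(\va^\top\vk_j)$ is sub-exponential with $\psi_1$-norm $O(1)$, since the keys have unit norm. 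The product of two independent sub-exponential variables then has $\psi_{1/2}$-norm $O(1)$.

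We split the centered quantity into a key-randomness part and a feature-randomness part:
\[
\hat{\matK}_{ij}-\mu=(\rho^2-\mu)+\frac1m\sum_r\bigl(Z_r-\E[Z_r\mid\vk]\bigr).
\]

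\textbf{Key term.} By Lemma~\ref{lem:iso-innerprod}, $\rho$ is sub-Gaussian with scale $1/\sqrt{d-1}$. Hence $\rho^2-1/d$ is sub-exponential with scale $\lesssim 1/d$. This gives $|\rho^2-\mu|\lesssim L/d$ with probability $1-\delta/(2F^2)$. The bound we need is in terms of $\sigma$, and $L/d\le\sqrt{L}\,\sigma\sqrt{L}$ with $\sigma\gtrsim 1/d$. To absorb this we use $m\le d^2/L$, which gives $L/d\le\sqrt{L/m}\lesssim\sigma\sqrt L$, because $\sigma\asymp 1/\sqrt m$ in this regime. So the key term is at most $C\sigma\sqrt L$. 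This is the step where the hypothesis $m\le d^2/L$ is used.

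\textbf{Feature term.} This is an average of $m$ i.i.d. centered variables with $\psi_{1/2}$-norm $O(1)$. Its conditional variance is $\Var(Z_r\mid\vk)=O(1)$; in fact it is $\asymp (1+\rho^2)^2$, which is $\asymp 1$. We apply a Bernstein-type inequality for $\alpha$-subexponential variables with $\alpha=1/2$, for example the generalized Bernstein inequality of Kuchibhotla--Chakrabortty, or a moment/truncation argument. This gives, with probability $1-\delta/(2F^2)$,
\[
\Bigl|\frac1m\sum_r(\cdots)\Bigr|\lesssim\sqrt{\frac{L}{m}}+\frac{L^{2}}{m}.
\]
The $L^2$ in the tail term is exactly $L^{1/\alpha}$ with $\alpha=1/2$, which is where the $L^2/m$ in \eqref{eq:Koff-centered-max} comes from. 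Since $\sqrt{L/m}\lesssim\sigma\sqrt L$, this matches the target form.

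\textbf{Union bound.} The tail bounds above hold conditionally on the keys with constants that do not depend on the keys, so they also hold unconditionally. We take a union bound over the at most $F^2$ pairs $(i,j)$; the choice $L=\log(C_0F^2/\delta)$ absorbs this. This yields \eqref{eq:Koff-centered-max}. The bound \eqref{eq:Koff-abs-max} then follows from the triangle inequality $|\hat{\matK}_{ij}|\le\mu+|\hat{\matK}_{ij}-\mu|$.

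\textbf{Main obstacle.} The delicate step is the heavy-tailed concentration of the degree-4 chaos average. Standard Bernstein does not apply, because $Z_r$ is only $\psi_{1/2}$. We also need a variance proxy that matches $\sigma$ rather than a crude $O(1)$. Our first attempt is the $\alpha$-subexponential Bernstein bound. If that proves awkward, the fallback is a truncation argument: truncate each $Z_r$ at level $L^2$, control the truncated sum with Bernstein, and show the truncation event has probability at most $m e^{-cL}$. This fallback requires $m$ polynomial in $F/\delta$, which is harmless after enlarging $C_0$.
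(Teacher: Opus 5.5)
Your proof is correct and follows the paper's argument essentially step for step. The paper also splits $|\hat{\matK}_{ij}-\mu|\le|\hat{\matK}_{ij}-\rho^2|+|\rho^2-\mu|$. It bounds the feature fluctuation by $\sqrt{L/m}+L^2/m$ via hypercontractivity and a sub-Weibull$(1/2)$ Bernstein--Orlicz inequality. It bounds the geometric term at scale $L/d$; it uses chi-square tails there, whereas your sub-exponential bound from Lemma~\ref{lem:iso-innerprod} is slightly cleaner because it avoids the case split on $d\gtrsim L$. It then absorbs $L/d\le\sqrt{L/m}\lesssim\sigma\sqrt{L}$ using $m\le d^2/L$ and union bounds over the $F^2$ pairs.

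The only caveat concerns your fallback. Truncating at level $L^2$ and applying ordinary Bernstein gives a tail term $L^3/m$ rather than $L^2/m$. Union bounding the truncation event over $r\in[m]$ also needs $\log m\lesssim L$, which is not among the hypotheses, and enlarging $C_0$ does not supply it. So the generalized Bernstein route is the one that actually proves the stated bound.
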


\begin{proof}
For any fixed pair $(i,j)$ with $i\neq j$, apply Lemma~\ref{lem:chaos-centered-mu} with failure probability $\delta' \coloneqq \delta/F^2$.
With $L' \coloneqq \log(6/\delta') = \log(6F^2/\delta)\le L$ (for $C_{0}$ large enough), the assumption $m\le d^2/L$ implies $m \le d^2/L'$, so Lemma~\ref{lem:chaos-centered-mu} yields
\[
    \big|\hat{\matK}_{ij}-\mu\big|
    \le
    C\Big(\sqrt{\tfrac{L'}{m}} + \tfrac{(L')^2}{m} + \tfrac{\sqrt{L'}}{d}\Big)
\]
with probability at least $1-\delta'$.
A union bound over the at most $F^2$ off-diagonal pairs gives that, with probability at least $1-\delta$,
\[
    \max_{i\neq j}\big|\hat{\matK}_{ij}-\mu\big|
    \le
    C\Big(\sqrt{\tfrac{L}{m}} + \tfrac{L^2}{m} + \tfrac{\sqrt{L}}{d}\Big).
\]
Finally, since $\sigma^2=\Theta(1/d^2+1/m)$ by Lemma~\ref{lem:bilinear-mean-var}, we have $\sqrt{L/m}+\sqrt{L}/d \lesssim \sigma\sqrt{L}$, proving \eqref{eq:Koff-centered-max}.
Equation~\eqref{eq:Koff-abs-max} follows by the triangle inequality.
\end{proof}

\label{app:Kdiag-lower-bilinear}
\begin{lemma}[Uniform diagonal lower bound (bilinear kernel)]
\label{lem:Kdiag-lower-bilinear}
Assume the bilinear random-feature kernel from \cref{lem:bilinear-sketched-k2} with $m$ features, and assume $\|\vk_{i}\|_{2}=1$.
Then there exist absolute constants $c,C_{0}>0$ such that for every $t\in(0,1)$ and every $i\in[F]$,
\begin{equation}
\Pbb\big(\hat{\matK}_{ii}\le 1-t\big)\ \le\ e^{-c m t^2}.
\label{eq:Kdiag-lower-tail}
\end{equation}
Consequently, for any $\delta\in(0,1)$ and $L\defeq \log(\tfrac{C_{0} F^2}{\delta})$, with probability at least $1-\delta$,
\begin{equation}
K_{\min}^{\mathrm{diag}}=\min_{i\in[F]}\hat{\matK}_{ii}
\ \ge\
1\ -\ C_{1}\sqrt{\frac{L}{m}}.
\label{eq:Kdiag-min}
\end{equation}
\end{lemma}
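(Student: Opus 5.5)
Since $\|\vk_i\|_2=1$, I would first reduce $\hat{\matK}_{ii}$ to an average of i.i.d.\ scalars whose law does not depend on $i$. By the definition in \cref{lem:bilinear-sketched-k2},
\[
\hat{\matK}_{ii}=\frac1m\sum_{r=1}^m (\mathbf{A}_r^\top\vk_i)^2(\mathbf{G}_r^\top\vk_i)^2 .
\]
Rotation invariance of the Gaussian rows gives $(\mathbf{A}_r^\top\vk_i,\mathbf{G}_r^\top\vk_i)\stackrel{d}{=}(g_r,h_r)$ with $g_r,h_r$ i.i.d.\ $N(0,1)$. So $\hat{\matK}_{ii}\stackrel{d}{=}\frac1m\sum_r X_r$, where $X_r:=g_r^2h_r^2\ge 0$ are i.i.d.\ with $\E X_r=1$ and $\E X_r^2=(\E g^4)^2=9$. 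This holds conditionally on the key, so the bound will be uniform over $i$ and unconditional.

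The central step is the lower tail \eqref{eq:Kdiag-lower-tail}. The obstacle is that $X_r$ is heavy-tailed: it is a product of two chi-squares, with $\|X_r\|_{\psi_{1/2}}<\infty$ but not sub-exponential. Bernstein-type upper-tail bounds would therefore pick up extra terms such as $L^2/m$. For the \emph{lower} tail this is avoidable, because nonnegativity gives a sub-Gaussian lower tail with variance proxy $\E X^2$. I would use the standard Chernoff argument. For $\lambda\ge0$, the inequality $e^{-x}\le 1-x+x^2/2$ for $x\ge0$ gives
\[
\E e^{-\lambda X}\le 1-\lambda+\tfrac{9}{2}\lambda^2\le \exp\!\bigl(-\lambda+\tfrac92\lambda^2\bigr).
\]
Then
\[
\Pbb\Bigl(\tfrac1m\textstyle\sum_r X_r\le 1-t\Bigr)\le \exp\!\bigl(m(\lambda(1-t)-\lambda+\tfrac92\lambda^2)\bigr)=\exp\!\bigl(m(-\lambda t+\tfrac92\lambda^2)\bigr).
\]
Choosing $\lambda=t/9$ yields $e^{-mt^2/18}$, so $c=1/18$ works for every $t\in(0,1)$ (indeed for all $t>0$). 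This is the step I expect to be the only nontrivial one: recognising that one-sided nonnegativity replaces chaos concentration, such as the Hanson–Wright or $\psi_{1/2}$ tools used for $\hat{\matK}_{ij}$ in \cref{lem:Koff-iso-keys}.

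For \eqref{eq:Kdiag-min} I would take a union bound over the $F$ diagonal entries, which gives $\Pbb(K_{\min}^{\mathrm{diag}}\le 1-t)\le F e^{-cmt^2}$. Set $t=C_1\sqrt{L/m}$ with $C_1^2 c\ge1$. Then $Fe^{-cmt^2}\le F e^{-L}=\delta/(C_0F)\le\delta$ for $C_0\ge1$. If $t\ge1$, the claimed bound is $\le 0$ and holds trivially because $\hat{\matK}_{ii}\ge0$. Otherwise $t\in(0,1)$ and the tail bound applies directly.
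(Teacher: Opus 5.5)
Your proposal is correct and follows the paper's proof. Both arguments reduce $\hat{\mathbf{K}}_{ii}$ to an average of i.i.d.\ nonnegative variables $g_r^2h_r^2$ with mean $1$, use nonnegativity to obtain a sub-Gaussian lower tail free of any heavy-tail correction, and finish with a union bound over $i\in[F]$. The only difference is that the paper cites the one-sided Bernstein inequality for $1-Z_r\le 1$ with variance $8$, while you derive the bound directly by Chernoff using $e^{-x}\le 1-x+x^2/2$ and $\mathbb{E}X_r^2=9$, which gives the explicit constant $c=1/18$ valid for every $t>0$.
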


\begin{proof}
Fix $i$ and write
\[
\hat{\matK}_{ii}=\frac1m\sum_{r=1}^m Z_{r},\qquad Z_{r}:=\big((\va_{r}^\top \vk_{i})(\vb_{r}^\top \vk_{i})\big)^2
=(G_rH_{r})^2,
\]
where $G_{r},H_{r}\stackrel{\text{i.i.d.}}{\sim}\mathcal N(0,1)$. Then
$\E [Z_{r}]=1$ and $\Var(Z_{r})=\E[Z_{r}^2]-(\E[Z_{r}])^2=9-1=8$.
Let $Y_{r}:=1-Z_{r}$. Then $\E[Y_{r}]=0$, $\Var(Y_{r})=8$, and since $Z_{r}\ge0$ we have $Y_{r}\le 1$. Therefore, for $t\in(0,1)$,
\[
\Pbb(\hat{\matK}_{ii}\le 1-t)
=\Pbb\!\left(\sum_{r=1}^m Y_{r}\ge mt\right).
\]
By the one-sided Bernstein inequality for independent mean-zero variables bounded above by $1$,
\[
\Pbb\!\left(\sum_{r=1}^m Y_{r}\ge mt\right)
\le
\exp\!\left(-\frac{(mt)^2}{2(8m+\frac{mt}{3})}\right)
=
\exp\!\left(-\frac{m t^2}{2(8+t/3)}\right)
\le
e^{-cmt^2}
\]
for an absolute $c>0$, proving \eqref{eq:Kdiag-lower-tail}.
The uniform bound (\eqref{eq:Kdiag-min}) follows by a union bound over $i\in[F]$.
\end{proof}

\subsubsection{Cross-Talk Bounds: Isotropic Values}

\begin{lemma}[Mean-term concentration $B_{Y}$ (isotropic values)]
\label{lem:BY-iso-values}
Assume $\vv_{1},\dots,\vv_{F}$ are i.i.d.\ uniform on $\Sph$.
Fix $\delta\in(0,1)$ and let $L\defeq \log(\tfrac{C_{0} F^2}{\delta})$.
Then with probability at least $1-\delta$ (over the values),
\begin{equation}
B_{Y}\ \defeq\ \max_{i\in[F]}\max_{j\neq i}\big|\ip{\vone}{\vY^{(ij)}}\big|
\ \le\ C_{1}\sqrt{\frac{FL}{d}}.
\label{eq:BY-iso-values}
\end{equation}
\end{lemma}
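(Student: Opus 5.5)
Since $\vc_t=\vv_t$, we have $\ip{\vone}{\vY^{(ij)}}=\ip{\vv_i-\vv_j}{\vs_{ij}}$ with $\vs_{ij}\defeq\sum_{t\notin\{i,j\}}\vv_t$. The plan is to condition on everything except the ``outside'' values, bound a sub-Gaussian tail for each pair $(i,j)$, and union bound over the at most $F^2$ pairs. This matches the choice $L=\log(C_0F^2/\delta)$.

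\textbf{Step 1 (fix a pair and condition).} Fix $(i,j)$ with $i\neq j$ and condition on $\vv_i,\vv_j$. Set $\vw\defeq\vv_i-\vv_j$, so $\|\vw\|_2\le 2$. The sum $\ip{\vone}{\vY^{(ij)}}=\sum_{t\notin\{i,j\}}\ip{\vw}{\vv_t}$ then has $F-2$ summands. Conditionally these are independent and mean zero, since $\vv_t$ is uniform on the sphere and independent of $\vw$.

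\textbf{Step 2 (tail bound).} By \Cref{lem:iso-innerprod}, each summand satisfies $\|\ip{\vw}{\vv_t}\|_{\psi_2}\lesssim \|\vw\|_2/\sqrt{d-1}\lesssim 1/\sqrt{d}$. General Hoeffding for independent centered sub-Gaussians gives $\|\sum_t\ip{\vw}{\vv_t}\|_{\psi_2}\lesssim\sqrt{(F-2)/d}$. Hence
\[
\Pbb\Big(\big|\ip{\vone}{\vY^{(ij)}}\big|\ge s\ \Big|\ \vv_i,\vv_j\Big)\le 2\exp\!\big(-c\,s^2 d/F\big).
\]
This bound does not depend on the conditioning, so it also holds unconditionally.

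An alternative route reaches the same estimate: $\vs_{ij}$ is a sum of independent isotropic vectors with $\|\vs_{ij}\|_2\lesssim\sqrt{F}$ with high probability, and for fixed $\vs_{ij}$ the projection $\ip{\vw}{\vs_{ij}}$ is sub-Gaussian at scale $\|\vs_{ij}\|/\sqrt d$. However, $\vw$ and $\vs_{ij}$ are not symmetric in their randomness, so the direct Hoeffding route is cleaner.

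\textbf{Step 3 (union bound).} Take a union bound over at most $F^2$ ordered pairs and choose $s=C_1\sqrt{FL/d}$ with $C_1$ large enough that $2F^2e^{-cC_1^2L}\le\delta$, using $L\ge\log(F^2/\delta)$. This yields \eqref{eq:BY-iso-values}.

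\textbf{Main obstacle.} The only delicate point is justifying conditional independence and the uniformity of the sub-Gaussian constant in Step 2: $\vw$ depends on $\vv_i,\vv_j$, but the excluded indices $t\notin\{i,j\}$ are exactly the independent ones. The rest is routine.
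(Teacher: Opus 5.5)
Your proposal is correct and matches the paper's proof essentially step for step. Like the paper, you condition on $\vv_i,\vv_j$, use \Cref{lem:iso-innerprod} with Hoeffding to get a sub-Gaussian scale $\sqrt{F/d}$ that does not depend on the conditioning, and union bound over the at most $F^2$ pairs, with the tail level tied to $L=\log(C_0F^2/\delta)$.
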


\begin{proof}
Fix a pair $(i,j)$ with $j\neq i$ and set $\vw_{ij}\defeq \vv_{i}-\vv_{j}$.
Let
\[
\mathcal{F}_{ij}\ \defeq\ \sigma(\vv_{i},\vv_{j}).
\]
Conditional on $\mathcal{F}_{ij}$, the random variables
\(
\theta_{t}\defeq \ip{\vw_{ij}}{\vv_t}
\)
for $t\notin\{i,j\}$ are independent, centered, and sub-Gaussian with
$\|\theta_{t}\|_{\psi_2}\lesssim \|\vw_{ij}\|_{2}/\sqrt{d-1}$ by Lemma~\ref{lem:iso-innerprod}.
Therefore, conditional on $\mathcal{F}_{ij}$, the sum
\[
\ip{\vone}{\vY^{(ij)}}=\sum_{t\notin\{i,j\}}\theta_{t}
\]
is centered sub-Gaussian with
\(
\big\|\ip{\vone}{\vY^{(ij)}}\big\|_{\psi_2}\lesssim \|\vw_{ij}\|_{2}\sqrt{\tfrac{F-2}{d-1}}
\lesssim \sqrt{\tfrac{F}{d}}.
\)
Hence for all $s\ge 0$,
\[
\Pbb\Big(
\big|\ip{\vone}{\vY^{(ij)}}\big|\ \ge\ C\sqrt{\tfrac{Fs}{d}}
\ \Big|\ \mathcal{F}_{ij}
\Big)
\ \le\ 2e^{-s}.
\]
This conditional tail bound is uniform in $\mathcal{F}_{ij}$, so taking expectations over $\mathcal{F}_{ij}$
yields the same inequality unconditionally.

Now choose $s= L=\log(\tfrac{C_{0} F^2}{\delta})$ so that
$2e^{-s}\le \delta/F^2$.
Union bound over all pairs $(i,j)$ (at most $F(F-1)\le F^2$ choices) gives \eqref{eq:BY-iso-values}.
\end{proof}

\begin{lemma}[Value-difference energy concentration $\widetilde{E}_{v}$ (isotropic values)]
\label{lem:Ev-iso-values}
Assume $\vv_{1},\dots,\vv_{F}$ are i.i.d.\ uniform on $\Sph$.
Fix $\delta\in(0,1)$ and let $L\defeq \log(\tfrac{C_{0} F^2}{\delta})$.
Then with probability at least $1-\delta$ (over the values),
\begin{equation}
\widetilde{E}_{v}\ \defeq\ \max_{i\in[F]}\max_{j\neq i}\sum_{t\notin\{i,j\}}\ip{\vv_{i}-\vv_j}{\vv_t}^2
\ \le\ C_{1}\,\frac{(F-2)+L}{d-1}.
\label{eq:Ev-iso-values}
\end{equation}
\end{lemma}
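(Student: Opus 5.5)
Our plan is to mirror the proof of Lemma~\ref{lem:BY-iso-values}: fix a pair, condition on the target and competitor values, obtain a sub-exponential tail for a sum of independent squared projections, and then take a union bound over the at most $F^2$ pairs.

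\textbf{Step 1: conditional structure.} Fix $(i,j)$ with $j\neq i$, set $\vw_{ij}\defeq\vv_i-\vv_j$, and condition on $\mathcal F_{ij}\defeq\sigma(\vv_i,\vv_j)$. Then $\|\vw_{ij}\|_2\le 2$. Conditionally, the variables $\theta_t\defeq\ip{\vw_{ij}}{\vv_t}$, $t\notin\{i,j\}$, are i.i.d.\ functions of the independent uniform vectors $\vv_t$. By rotational invariance,
\[
\E[\theta_t^2\mid\mathcal F_{ij}]=\|\vw_{ij}\|_2^2/d\le 4/d .
\]
By Lemma~\ref{lem:iso-innerprod}, the centered variables $\xi_t\defeq\theta_t^2-\E[\theta_t^2\mid\mathcal F_{ij}]$ are sub-exponential with $\|\xi_t\|_{\psi_1}\lesssim \|\vw_{ij}\|_2^2/(d-1)\lesssim 1/(d-1)$.

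\textbf{Step 2: Bernstein.} The conditional mean of $\sum_{t\notin\{i,j\}}\theta_t^2$ is at most $4(F-2)/d\lesssim (F-2)/(d-1)$. Bernstein's inequality for sums of independent sub-exponential variables with $\psi_1$-norm $K\lesssim 1/(d-1)$ gives, for all $s\ge 0$,
\[
\Pbb\Big(\Big|\sum_{t\notin\{i,j\}}\xi_t\Big|\ge C K\big(\sqrt{(F-2)s}+s\big)\ \Big|\ \mathcal F_{ij}\Big)\le 2e^{-s}.
\]
Using $\sqrt{(F-2)s}\le \tfrac12\big((F-2)+s\big)$ (AM--GM), the deviation is $\lesssim ((F-2)+s)/(d-1)$. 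The bound is uniform in $\mathcal F_{ij}$, so it also holds unconditionally. Combined with the mean bound, this gives
\[
\sum_{t\notin\{i,j\}}\theta_t^2\le C\,\frac{(F-2)+s}{d-1}
\]
with probability at least $1-2e^{-s}$.

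\textbf{Step 3: union bound.} Choose $s=L=\log(C_0F^2/\delta)$ with $C_0\ge 2$, so that $2e^{-s}\le\delta/F^2$. A union bound over at most $F^2$ ordered pairs then yields \eqref{eq:Ev-iso-values}.

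The only mildly delicate point is the use of Bernstein. The terms $\theta_t^2$ are not bounded, so we rely on the sub-exponential bound from Lemma~\ref{lem:iso-innerprod} rather than a Hoeffding-type bound. The linear-in-$s$ tail term is exactly what produces the additive $L$ in the numerator $(F-2)+L$. Everything else is routine, and constants are absorbed as in the neighbouring lemmas.
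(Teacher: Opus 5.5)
Your proposal is correct and follows essentially the same route as the paper. Both condition on $(\vv_i,\vv_j)$, apply Bernstein to the centered squares $\theta_t^2-\E[\theta_t^2\mid\mathcal F_{ij}]$ using the sub-exponential bound from Lemma~\ref{lem:iso-innerprod}, absorb $\sqrt{(F-2)s}$ via AM--GM, and union bound over the at most $F^2$ pairs with $s=L$. The only cosmetic difference is that you bound $\|\vw_{ij}\|_2^2\le 4$ at the start, whereas the paper carries $\|\vw_{ij}\|_2^2$ through the tail bound and uses $\|\vw_{ij}\|_2^2\le 4$ only at the end.
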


\begin{proof}
Fix a pair $(i,j)$ with $j\neq i$ and write $\vw_{ij}\defeq \vv_{i}-\vv_{j}$.
Let
\(
\mathcal{F}_{ij}\defeq \sigma(\vv_{i},\vv_{j})
\)
and, for $t\notin\{i,j\}$, define $\theta_{t}\defeq \ip{\vw_{ij}}{\vv_t}$.
Conditional on $\mathcal{F}_{ij}$, the $\theta_{t}$ are independent, centered, sub-Gaussian, therefore the centered squares
\[
Z_{t}\defeq \theta_{t}^2-\E[\theta_{t}^2\mid \mathcal{F}_{ij}]
\]
are independent and sub-exponential with
\(
\|Z_{t}\|_{\psi_1}\lesssim \|\vw_{ij}\|_{2}^2/(d-1)
\)
by Lemma~\ref{lem:iso-innerprod}.

Write
\[
\widetilde{E}_{v}(i,j)\ \defeq\ \sum_{t\notin\{i,j\}}\theta_{t}^2
\ =\ \sum_{t\notin\{i,j\}}\E[\theta_{t}^2\mid\mathcal{F}_{ij}]
\ +\ \sum_{t\notin\{i,j\}}Z_{t}.
\]
Since $\vv_{t}$ is isotropic, $\E[\theta_{t}^2\mid\mathcal{F}_{ij}]=\|\vw_{ij}\|_{2}^2/d$ and hence
$\sum_{t\notin\{i,j\}}\E[\theta_{t}^2\mid\mathcal{F}_{ij}]=(F-2)\|\vw_{ij}\|_{2}^2/d$.
Also $\|\vw_{ij}\|_{2}\le 2$.
A standard Bernstein inequality for sums of independent sub-exponential variables implies that for all $s\ge 0$,
\[
\Pbb\Big(
\sum_{t\notin\{i,j\}}Z_{t}\ \ge\ C\,\frac{\|\vw_{ij}\|_{2}^2}{d-1}\big(\sqrt{(F-2)s}+s\big)
\ \Big|\ \mathcal{F}_{ij}
\Big)
\ \le\ 2e^{-s}.
\]
As in Lemma~\ref{lem:BY-iso-values}, the bound is uniform in $\mathcal{F}_{ij}$, so the same tail holds unconditionally.

Using $\sqrt{(F-2)s}\le \tfrac{(F-2)+s}{2}$ and absorbing constants yields
\[
\Pbb\Big(
\widetilde{E}_{v}(i,j)\ \ge\ C\,\frac{\|\vw_{ij}\|_{2}^2}{d-1}\big((F-2)+s\big)
\Big)
\ \le\ 2e^{-cs}.
\]
Since $\|\vw_{ij}\|_{2}^2\le 4$, choosing $s= L=\log(\tfrac{C_{0} F^2}{\delta})$ makes the right-hand side
$\le \delta/F^2$.
Union bounding over all pairs $(i,j)$ yields \eqref{eq:Ev-iso-values}.
\end{proof}

\begin{lemma}[Coupling concentration $\widetilde{\kappa}$ (isotropic values)]
\label{lem:kappa-iso-values}
Assume $\vv_{1},\dots,\vv_{F}$ are i.i.d.\ uniform on $\Sph$.
Treat the kernel matrix $\hat{\matK}$ as arbitrary/deterministic.
Fix $\delta\in(0,1)$ and let $L\defeq \log(\tfrac{C_{0} F^2}{\delta})$.
Then with probability at least $1-\delta$ (over the values), the coupling statistic $\widetilde{\kappa}$ from \eqref{eq:summary-stats} satisfies
\begin{equation}
\widetilde{\kappa}\ \le\ C_{1}\,\sqrt{\frac{L}{F-2}}.
\label{eq:kappa-iso-values}
\end{equation}
\end{lemma}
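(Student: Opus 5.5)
Fix a pair $(i,j)$ with $j\neq i$, write $\vw_{ij}\defeq\vv_i-\vv_j$, and condition on $\mathcal{F}_{ij}\defeq\sigma(\vv_i,\vv_j)$. The kernel column $\vX^{(ij)}=(\hat{\matK}_{ti})_{t\notin\{i,j\}}$ is deterministic. Conditionally, the entries $Y_t=\ip{\vw_{ij}}{\vv_t}$, $t\notin\{i,j\}$, are independent, centered, and sub-Gaussian with $\|Y_t\|_{\psi_2}\lesssim\|\vw_{ij}\|_2/\sqrt{d-1}$ by Lemma~\ref{lem:iso-innerprod}. The coupling $\widetilde\kappa^{ij}$ is the ratio of a numerator $|\ip{\vX^{(ij)}}{\vY^{(ij)}}|$ and a denominator $\|\vX^{(ij)}\|_2\|\vY^{(ij)}\|_2$. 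I would bound the numerator from above and $\|\vY^{(ij)}\|_2$ from below, each with failure probability $\delta/(2F^2)$, and then take a union bound over the at most $F^2$ pairs.

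\emph{Numerator.} Conditionally on $\mathcal{F}_{ij}$, $\ip{\vX^{(ij)}}{\vY^{(ij)}}=\sum_t \hat{\matK}_{ti}Y_t$ is a weighted sum of independent centered sub-Gaussians. Its $\psi_2$-norm is therefore $\lesssim\|\vX^{(ij)}\|_2\|\vw_{ij}\|_2/\sqrt{d-1}$, which gives
\[
|\ip{\vX^{(ij)}}{\vY^{(ij)}}|\le C\,\|\vX^{(ij)}\|_2\,\frac{\|\vw_{ij}\|_2}{\sqrt{d-1}}\sqrt{L}
\]
with conditional probability at least $1-\delta/(2F^2)$. This holds uniformly in $\mathcal{F}_{ij}$, so it holds unconditionally. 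If $\|\vX^{(ij)}\|_2=0$, the convention gives $\widetilde\kappa^{ij}=0$ and there is nothing to prove.

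\emph{Denominator.} Here I need a \emph{lower} tail for $\|\vY^{(ij)}\|_2^2=\sum_t Y_t^2$. Its conditional mean is $(F-2)\|\vw_{ij}\|_2^2/d$. Lemma~\ref{lem:iso-innerprod} says the centered squares $Y_t^2-\E[Y_t^2\mid\mathcal{F}_{ij}]$ are sub-exponential with scale $\|\vw_{ij}\|_2^2/(d-1)$. Bernstein's inequality, as in the proof of Lemma~\ref{lem:Ev-iso-values} but applied to the lower tail, gives
\[
\sum_t Y_t^2\ge \frac{\|\vw_{ij}\|_2^2}{d}\Big((F-2)-C\big(\sqrt{(F-2)L}+L\big)\Big).
\]
In the regime $F-2\ge C' L$ the right side is at least $\tfrac12(F-2)\|\vw_{ij}\|_2^2/d$. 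Dividing the numerator bound by $\|\vX^{(ij)}\|_2\|\vY^{(ij)}\|_2$, the factors $\|\vX^{(ij)}\|_2$ and $\|\vw_{ij}\|_2$ cancel, leaving $\widetilde\kappa^{ij}\le C\sqrt{L/(F-2)}$. The cancellation of $\|\vw_{ij}\|_2$ means I never need a lower bound on $\|\vv_i-\vv_j\|_2$, and $\vw_{ij}=0$ happens only with probability zero.

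\emph{Small $F$.} This lower-tail step is the main obstacle. When $F-2< C'L$, the claimed bound $C_1\sqrt{L/(F-2)}$ is at least a constant. Since $\widetilde\kappa\le 1$ always, choosing $C_1\ge\sqrt{C'}$ makes the statement trivial in this regime, so the lower tail is only needed where Bernstein applies. Finally, I would take a union bound over pairs and over the two events, absorbing the factor $2$ and the $F^2$ into $C_0$ inside $L$, exactly as in Lemmas~\ref{lem:BY-iso-values} and~\ref{lem:Ev-iso-values}.
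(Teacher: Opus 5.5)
Your proposal is correct and follows the paper's proof. Both arguments use a conditional sub-Gaussian upper bound on the numerator and a Bernstein lower tail for $\|\mathbf{Y}^{(ij)}\|_2^2$ when $F-2\gtrsim L$, fall back on the trivial bound $\widetilde{\kappa}^{ij}\le 1$ when $F-2\lesssim L$, and finish with a union bound over pairs; the only cosmetic difference is that the paper first normalizes $\mathbf{X}^{(ij)}$ to a unit vector, whereas you carry $\|\mathbf{X}^{(ij)}\|_2$ through and cancel it at the end.
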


\begin{proof}
Fix a pair $(i,j)$ with $j\neq i$.
If $\|\vX^{(ij)}\|_{2}=0$ or $\|\vY^{(ij)}\|_{2}=0$ then $\widetilde{\kappa}^{ij}=0$, so assume both norms are positive and set
\[
\vu^{(ij)}\defeq \vX^{(ij)}/\|\vX^{(ij)}\|_{2}.
\]
Write $\vw_{ij}\defeq \vv_{i}-\vv_{j}$ and let $\mathcal{F}_{ij}\defeq \sigma(\vv_{i},\vv_{j})$.
For $t\notin\{i,j\}$ define $\theta_{t}\defeq \ip{\vw_{ij}}{\vv_t}$. Let
\[
S_{ij}\ \defeq\ \ip{\vu^{(ij)}}{\vY^{(ij)}}\ =\ \sum_{t\notin\{i,j\}}\vu^{(ij)}_{t}\,\theta_{t}.
\]
Then
\[
\widetilde{\kappa}^{ij}=\frac{|\ip{\vX^{(ij)}}{\vY^{(ij)}}|}{\|\vX^{(ij)}\|_{2}\|\vY^{(ij)}\|_2}
=\frac{|\ip{\vu^{(ij)}}{\vY^{(ij)}}|}{\|\vY^{(ij)}\|_2}
=\frac{|S_{ij}|}{\|\vY^{(ij)}\|_2}.
\]

\paragraph{Numerator tail.}
Conditional on $\mathcal{F}_{ij}$, the vectors $(\vv_{t})_{t\notin\{i,j\}}$ are i.i.d.\ uniform on $\Sph$,
so each $\theta_{t}=\ip{\vw_{ij}}{\vv_t}$ is centered sub-Gaussian with
$\|\theta_{t}\|_{\psi_2}\lesssim \|\vw_{ij}\|_{2}/\sqrt{d-1}$ by Lemma~\ref{lem:iso-innerprod}.
Since $\sum_{t\notin\{i,j\}}(\vu^{(ij)}_{t})^2\le 1$, $S_{ij}$ is centered sub-Gaussian with
$\|S_{ij}\|_{\psi_2}\lesssim \|\vw_{ij}\|_{2}/\sqrt{d-1}$. Hence for all $s\ge 0$,
\[
\Pbb\Big(|S_{ij}|\ \ge\ C\,\|\vw_{ij}\|_{2}\sqrt{\tfrac{s}{d-1}}\ \Big|\ \mathcal{F}_{ij}\Big)\ \le\ 2e^{-s}.
\]
The right-hand side does not depend on $\mathcal{F}_{ij}$, so taking expectation over $\mathcal{F}_{ij}$ yields the same bound unconditionally.

\paragraph{Denominator lower tail.}
Conditional on $\mathcal{F}_{ij}$, the centered squares
$Z_{t}\defeq \theta_{t}^2-\E[\theta_{t}^2\mid\mathcal{F}_{ij}]$ are independent, mean-zero, and sub-exponential with
$\|Z_{t}\|_{\psi_1}\lesssim \|\vw_{ij}\|_{2}^2/(d-1)$ (Lemma~\ref{lem:iso-innerprod}).
A Bernstein inequality for sums of independent sub-exponential variables (applied to $-Z_{t}$) yields that for all $s\ge 0$,
\[
\Pbb\Bigg(
\|\vY^{(ij)}\|_{2}^2
\le
\sum_{t\notin\{i,j\}}\E[\theta_{t}^2\mid\mathcal{F}_{ij}]
-
C\,\frac{\|\vw_{ij}\|_{2}^2}{d-1}\big(\sqrt{(F-2)s}+s\big)
\ \Bigg|\ \mathcal{F}_{ij}
\Bigg)
\ \le\ e^{-s}.
\]
Again the bound is uniform in $\mathcal{F}_{ij}$, so it holds unconditionally after taking the expectation.
Since $\vv_{t}$ is isotropic, $\E[\theta_{t}^2\mid\mathcal{F}_{ij}]=\|\vw_{ij}\|_{2}^2/d$, and hence
$\sum_{t\notin\{i,j\}}\E[\theta_{t}^2\mid\mathcal{F}_{ij}]=(F-2)\|\vw_{ij}\|_{2}^2/d$.
Using $1/d\ge 1/(2(d-1))$ for $d\ge 2$, the complement event implies
\[
\|\vY^{(ij)}\|_{2}^2
\ge
\frac{\|\vw_{ij}\|_{2}^2}{d-1}\Big(\frac{F-2}{2}-C\big(\sqrt{(F-2)s}+s\big)\Big).
\]

\paragraph{A high-probability ratio bound.}
On the intersection of the numerator and denominator events, if $F-2\ge C_{0} s$,
then $\|\vY^{(ij)}\|_{2}\ge c\,\|\vw_{ij}\|_{2}\sqrt{(F-2)/(d-1)}$ and hence
\[
\widetilde{\kappa}^{ij}=\frac{|S_{ij}|}{\|\vY^{(ij)}\|_2}\ \le\ C\,\sqrt{\frac{s}{F-2}}.
\]
If instead $F-2< C_{0} s$, then $\sqrt{s/(F-2)}\ge c$ and the same bound holds trivially since $\widetilde{\kappa}^{ij}\le 1$.
Overall, for the fixed $(i,j)$ the bound holds with failure probability at most $3e^{-s}$.

\paragraph{Choose $s$ and union bound.}
Set $s=L=\log(\tfrac{C_{0} F^2}{\delta})$.
Then for a fixed $(i,j)$ the failure probability is $\le 3e^{-L}\le \delta/F^2$ (for $C_{0}$ large enough).
Union bounding over all pairs $(i,j)$ yields \eqref{eq:kappa-iso-values}.
\end{proof}

\begin{lemma}[Centered coupling concentration \(\kappa^\circ\) (isotropic values)]
\label{lem:kappa-centered-iso-values}
Assume \(\vv_1,\dots,\vv_F\) are i.i.d. uniform on \(\Sph\). Let
\(\vX^{\circ(ij)}\in\R^{F-2}\), \(i\neq j\), be deterministic vectors or random
vectors independent of the values. Define
\[
\kappa^{\circ(ij)}
\defeq
\frac{\left|\left\langle \vX^{\circ(ij)},\vY^{(ij)}\right\rangle\right|}
{\|\vX^{\circ(ij)}\|_2\|\vY^{(ij)}\|_2},
\qquad
\kappa^\circ\defeq\max_{i\neq j}\kappa^{\circ(ij)},
\]
with the convention \(\kappa^{\circ(ij)}=0\) if a denominator vanishes. Fix
\(\delta\in(0,1)\), and set
\[
L\defeq \log\left(\frac{C_0F^2}{\delta}\right).
\]
Then, with probability at least \(1-\delta\),
\[
\kappa^\circ\le C\sqrt{\frac{L}{F-2}}.
\]
\end{lemma}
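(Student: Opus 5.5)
The plan is to run the proof of Lemma~\ref{lem:kappa-iso-values} again with $\vX^{(ij)}$ replaced by $\vX^{\circ(ij)}$. That proof never used any structure of $\vX^{(ij)}$ beyond its being independent of the values, and that is exactly the hypothesis here. Fix a pair $(i,j)$ with $j\neq i$. If either norm vanishes then $\kappa^{\circ(ij)}=0$ and there is nothing to prove. Otherwise set $\vu^{(ij)}\defeq \vX^{\circ(ij)}/\|\vX^{\circ(ij)}\|_2$ and $\vw_{ij}\defeq\vv_i-\vv_j$. We then have $\kappa^{\circ(ij)}=|S_{ij}|/\|\vY^{(ij)}\|_2$ with $S_{ij}=\sum_{t\notin\{i,j\}}\vu^{(ij)}_t\theta_t$ and $\theta_t=\ip{\vw_{ij}}{\vv_t}$.

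We condition on $\mathcal{G}_{ij}\defeq\sigma(\vv_i,\vv_j,\vX^{\circ(ij)})$. Since $\vX^{\circ(ij)}$ is independent of the values, under this conditioning the $(\vv_t)_{t\notin\{i,j\}}$ remain i.i.d.\ uniform on $\Sph$ and $\vu^{(ij)}$ is a fixed unit vector. The next step is a numerator bound. By Lemma~\ref{lem:iso-innerprod} each $\theta_t$ is centered and sub-Gaussian with scale $\lesssim\|\vw_{ij}\|_2/\sqrt{d-1}$, so $S_{ij}$ is too, and
\[
|S_{ij}|\le C\|\vw_{ij}\|_2\sqrt{s/(d-1)}
\]
except with probability at most $2e^{-s}$. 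Then comes a denominator bound. We apply the lower-tail Bernstein inequality to the sub-exponential centered squares, using $\E[\theta_t^2\mid\mathcal{G}_{ij}]=\|\vw_{ij}\|_2^2/d$. This gives
\[
\|\vY^{(ij)}\|_2^2\ge \frac{\|\vw_{ij}\|_2^2}{d-1}\Big(\tfrac{F-2}{2}-C\big(\sqrt{(F-2)s}+s\big)\Big)
\]
except with probability at most $e^{-s}$. Both tail bounds are uniform in the conditioning, so they also hold unconditionally.

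To combine them we split into two cases. If $F-2\ge C_0 s$, the denominator is at least $c\|\vw_{ij}\|_2\sqrt{(F-2)/(d-1)}$. The factor $\|\vw_{ij}\|_2/\sqrt{d-1}$ then cancels between numerator and denominator, leaving $\kappa^{\circ(ij)}\le C\sqrt{s/(F-2)}$. If instead $F-2<C_0 s$, the same bound holds trivially because $\kappa^{\circ(ij)}\le 1$. One pair therefore fails with probability at most $3e^{-s}$. Setting $s=L$ and taking $C_0$ large makes this at most $\delta/F^2$, and a union bound over the at most $F^2$ pairs gives the claim.

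The only genuinely delicate point is the conditioning step. We must verify that $\vX^{\circ(ij)}$, which in the application is built from keys and features, is jointly independent of $(\vv_t)_t$, so that conditioning on it together with $\vv_i$ and $\vv_j$ leaves the remaining values isotropic. Once that holds, $\vu^{(ij)}$ behaves as a deterministic unit vector and everything else is identical to Lemma~\ref{lem:kappa-iso-values}.
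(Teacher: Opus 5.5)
Your proposal is correct and matches the paper's argument. The paper conditions on the whole family $\{\mathbf{X}^{\circ(ij)}\}_{i\neq j}$, notes that these vectors then become deterministic and independent of the values, and reuses the proof of \Cref{lem:kappa-iso-values} verbatim with $\mathbf{X}^{(ij)}$ replaced by $\mathbf{X}^{\circ(ij)}$; your per-pair conditioning on $\sigma(\mathbf{v}_i,\mathbf{v}_j,\mathbf{X}^{\circ(ij)})$ is an equivalent packaging that relies on the same joint-independence hypothesis, and your numerator and denominator tails, the case split at $F-2\ge C_0 s$, and the union bound with $s=L$ reproduce that proof step for step.
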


\begin{proof}
Condition on \(\{\vX^{\circ(ij)}:i\neq j\}\). After conditioning, these columns
are deterministic and independent of the values. The proof of
Lemma~\ref{lem:kappa-iso-values} applies unchanged with \(\vX^{(ij)}\) replaced
by \(\vX^{\circ(ij)}\). Removing the conditioning proves the claim.
\end{proof}

\subsubsection{Cross-Talk Bounds: Isotropic Keys and Bilinear Random Features}
\begin{lemma}[Bilinear kernel mean and variance (isotropic keys)]
\label{lem:bilinear-mean-var}
Let $\hat{\matK}$ be the bilinear random-feature kernel from \cref{lem:bilinear-sketched-k2} with $m$ features and
Gaussian weights $(\va_{r},\vb_{r})_{r=1}^m$.
Assume keys $\vk_{1},\dots,\vk_{F}$ are i.i.d.\ uniform on $\Sph$.
Fix $t\neq i$ and write $\rho=\ip{\vk_t}{\vk_i}$.
Then:
\begin{enumerate}[leftmargin=2.2em]
\item (Conditional mean) $\E[\hat{\matK}_{ti}\mid \rho]=\rho^2$.
\item (Unconditional mean) $\mu\defeq \E[\hat{\matK}_{ti}]=\E[\rho^2]=1/d$.
\item (Variance scale) letting $\sigma^2\defeq \E[(\hat{\matK}_{ti}-\mu)^2]$, we have
$\sigma^2=\Theta(\tfrac{1}{d^2}+\tfrac{1}{m})$ %
\end{enumerate}
\end{lemma}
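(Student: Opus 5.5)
The plan is to compute everything by conditioning on the keys. Given $(\vk_t,\vk_i)$, the kernel entry $\hat{\matK}_{ti}$ is an average of $m$ i.i.d.\ terms. We get the conditional mean and conditional variance exactly from Gaussian moment identities. We then integrate over the keys, using the moments of $\rho=\ip{\vk_t}{\vk_i}$ for independent uniform unit vectors, and finish with the law of total variance.

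\emph{Step 1 (conditional mean).} Condition on the keys, so that $\rho$ is fixed and $\|\vk_t\|_2=\|\vk_i\|_2=1$. Write
\[
\hat{\matK}_{ti}=\frac1m\sum_{r=1}^m X_r,\qquad X_r=(\va_r^\top\vk_t)(\va_r^\top\vk_i)(\vb_r^\top\vk_t)(\vb_r^\top\vk_i).
\]
Since $\va_r$ and $\vb_r$ are independent, the computation in \Cref{lem:bilinear-sketched-k2} gives $\E[X_r\mid\rho]=\rho\cdot\rho=\rho^2$, which is item 1.

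\emph{Step 2 (unconditional mean).} Condition on $\vk_i$. Then $\rho$ is the first coordinate of a uniform point on $\Sph$, so by symmetry $\E[\rho^2]=1/d$. This gives item 2.

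\emph{Step 3 (variance).} By the law of total variance,
\[
\sigma^2=\E\big[\Var(\hat{\matK}_{ti}\mid\rho)\big]+\Var(\rho^2).
\]
For the first term, conditional independence across $r$ gives $\Var(\hat{\matK}_{ti}\mid\rho)=\tfrac1m\big(\E[X_r^2\mid\rho]-\rho^4\big)$. The pair $(\va^\top\vk_t,\va^\top\vk_i)$ is jointly Gaussian with unit variances and correlation $\rho$, so Isserlis' theorem gives
\[
\E[(\va^\top\vk_t)^2(\va^\top\vk_i)^2]=1+2\rho^2 .
\]
The same identity holds for $\vb$, and by independence of $\va,\vb$,
\[
\E[X_r^2\mid\rho]=(1+2\rho^2)^2 .
\]
Hence
\[
\Var(\hat{\matK}_{ti}\mid\rho)=\frac{1+4\rho^2+3\rho^4}{m}\in\Big[\frac1m,\frac8m\Big],
\]
so the first term is $\Theta(1/m)$.

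For the second term, the fourth moment of a coordinate of a uniform point on the sphere is $\E[\rho^4]=3/(d(d+2))$. Therefore
\[
\Var(\rho^2)=\frac{3}{d(d+2)}-\frac1{d^2}=\frac{2d-2}{d^2(d+2)}=\Theta(1/d^2)\quad\text{for } d\ge2.
\]
Both terms are nonnegative, so their sum is $\Theta(1/d^2+1/m)$, which is item 3.

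The only mildly delicate point is the exact fourth-moment computations: Isserlis' theorem for the Gaussian features and the spherical moment $\E[\rho^4]=3/(d(d+2))$. The latter follows, for instance, by writing a uniform point on $\Sph$ as $g/\|g\|_2$ with $g$ standard Gaussian, and using the independence of $g/\|g\|_2$ and $\|g\|_2$. Everything else is bookkeeping.
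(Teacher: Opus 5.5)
Your proposal is correct and follows essentially the same route as the paper: you condition on the keys, use Isserlis to obtain $\Var(\hat{\matK}_{ti}\mid\rho)=(1+4\rho^2+3\rho^4)/m$, and apply the law of total variance with $\E[\rho^4]=3/(d(d+2))$. You also make explicit two points the paper leaves implicit: the bounds $[1/m,8/m]$ on the conditional variance, and the caveat $d\ge 2$, which is needed because $\Var(\rho^2)$ vanishes at $d=1$.
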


\begin{proof}

Condition on $(\vk_{t},\vk_{i})$ and hence on $\rho$.
Write
\[
\hat{\matK}_{ti}=\frac{1}{m}\sum_{r=1}^m U_{r},
\qquad
U_{r} \defeq (\va_{r}^\top \vk_{t})(\va_{r}^\top \vk_{i})\,(\vb_{r}^\top \vk_{t})(\vb_{r}^\top \vk_{i}),
\]
with $(\va_{r},\vb_{r})$ i.i.d.\ standard Gaussian.
Let $A_{r}\defeq (\va_{r}^\top \vk_{t})(\va_{r}^\top \vk_{i})$ and $B_{r}\defeq (\vb_{r}^\top \vk_{t})(\vb_{r}^\top \vk_{i})$ so that $U_{r}=A_rB_{r}$
and $A_{r}$ is independent of $B_{r}$.

\emph{Mean.}
Since $(\va_{r}^\top \vk_{t},\va_{r}^\top \vk_{i})$ is centered bivariate Gaussian with covariance $\rho$,
$\E[A_{r}\mid \rho]=\rho$ and likewise $\E[B_{r}\mid\rho]=\rho$, hence
$\E[U_{r}\mid\rho]=\rho^2$ and $\E[\hat{\matK}_{ti}\mid\rho]=\rho^2$.
Averaging over isotropic keys yields $\mu=\E[\rho^2]=1/d$.

\emph{Variance scale.}
A Wick/Isserlis calculation gives $\E[A_{r}^2\mid\rho]=1+2\rho^2$ and hence
$\Var(U_{r}\mid\rho)=1+4\rho^2+3\rho^4$.
Therefore
$\Var(\hat{\matK}_{ti}\mid\rho)=\tfrac{1}{m}(1+4\rho^2+3\rho^4)$.
Using the law of total variance gives
\(
\sigma^2=\E[\Var(\hat{\matK}_{ti}\mid\rho)]+\Var(\rho^2)
=\Theta(\tfrac{1}{m})+\Theta(\tfrac{1}{d^2})
\),
where $\Var(\rho^2)=\Theta(1/d^2)$ follows from $\E[\rho^2]=1/d$ and $\E[\rho^4]=3/(d(d+2))$.
\end{proof}

\begin{lemma}[Gaussian-chaos entry concentration, centered at $\rho^2$]\label{lem:chaos-centered-conditional}
Let $\mathbf{x},\mathbf{y} \in \R^d$ be deterministic unit vectors and set $\rho \coloneqq \langle \mathbf{x}, \mathbf{y}\rangle$.
Let $\{\va_{r},\vb_{r}\}_{r=1}^m$ be i.i.d.\ with $\va_{r},\vb_{r} \sim \cN(0,I_{d})$, and define the bilinear random-feature kernel
\[
    \hat K(\mathbf{x},\mathbf{y}) \coloneqq \frac1m \sum_{r=1}^m (\va_{r}^\top \mathbf{x})(\va_{r}^\top \mathbf{y})(\vb_{r}^\top \mathbf{x})(\vb_{r}^\top \mathbf{y}).
\]
Then there exists a universal constant $C>0$ such that, for all $u \ge 0$,
\[
    \Pbb\Big(
        \big|\hat K(\mathbf{x},\mathbf{y}) - \rho^2\big|
        \ge
        C\Big(\sqrt{\tfrac{u}{m}} + \tfrac{u^2}{m}\Big)
    \Big)
    \le 2e^{-u}.
\]
\end{lemma}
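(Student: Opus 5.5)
Write $U_r \defeq (\va_r^\top\mathbf{x})(\va_r^\top\mathbf{y})$ and $V_r \defeq (\vb_r^\top\mathbf{x})(\vb_r^\top\mathbf{y})$, so that $\hat K(\mathbf{x},\mathbf{y}) - \rho^2 = \frac1m\sum_{r=1}^m (U_rV_r - \rho^2)$ is an average of i.i.d.\ centered variables. Since $(\va_r^\top\mathbf{x},\va_r^\top\mathbf{y})$ is a centered bivariate Gaussian with unit variances and correlation $\rho$, $U_r$ is a degree-2 Gaussian chaos with $\E U_r=\rho$ and $\|U_r\|_{\psi_1}\lesssim 1$ uniformly in $\rho\in[-1,1]$, by $|U_r|\le \tfrac12((\va_r^\top\mathbf{x})^2+(\va_r^\top\mathbf{y})^2)$. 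The same holds for $V_r$, independently. The product $W_r\defeq U_rV_r$ is therefore a degree-4 chaos in the Gaussian vector $(\va_r,\vb_r)$.

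The plan is to control $W_r-\rho^2$ through its Orlicz tail. A product of two independent sub-exponential variables is sub-Weibull of order $1/2$: writing $|U_rV_r|\le \tfrac12(U_r^2+V_r^2)$, or using hypercontractivity ($\|W_r\|_{L^p}\le \|U_r\|_{L^p}\|V_r\|_{L^p}\lesssim p^2$), gives $\|W_r-\rho^2\|_{\psi_{1/2}}\le C$ for a universal constant. Separately, the variance is computed in \Cref{lem:bilinear-mean-var} to be $\Var(W_r)=1+4\rho^2+3\rho^4\le 8$.

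Next apply a Bernstein-type concentration inequality for sums of i.i.d.\ centered $\psi_\alpha$ variables with $\alpha=1/2$, for example the moment bound of Kuchibhotla--Chakrabortty or Hitczenko--Montgomery-Smith--Oleszkiewicz, or directly Latała's moment estimates. This gives
\[
\Pbb\Big(\Big|\sum_{r=1}^m (W_r-\rho^2)\Big|\ge C\big(\sqrt{m u}+u^{2}\big)\Big)\le 2e^{-u}.
\]
The $\sqrt{mu}$ term comes from the bounded variance. The $u^{1/\alpha}=u^2$ term comes from the heavy tail: in the $\alpha<1$ regime the maximal summand dominates, and $\max_r|W_r|\lesssim (\log m+u)^2$ can be absorbed into $u^2$ after adjusting constants. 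One can also handle this directly with moments, using $\|\sum_r (W_r-\rho^2)\|_{L^p}\lesssim \sqrt{pm}+p^{2}\cdot\|W_1\|_{\psi_{1/2}}$ (Latała's inequality for i.i.d.\ symmetric Weibull-type tails, after symmetrization) and then Markov's inequality with $p=u$. Dividing by $m$ yields exactly $C(\sqrt{u/m}+u^2/m)$.

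The main obstacle is obtaining the $u^2$ term without an extra $\log m$ factor, i.e.\ using a genuinely $\alpha=1/2$ Bernstein inequality rather than truncation plus a union bound. If citing such a result is unavailable, the fallback is the moment route: show $\|W_r\|_{L^p}\le Cp^2$ by Gaussian hypercontractivity for degree-4 polynomials, then apply Rosenthal/Latała to get $\|\sum_r(W_r-\rho^2)\|_{L^p}\le C(\sqrt{pm}+p^2\,p^{0})$ up to constants, and finish with Markov's inequality at $p=u$.
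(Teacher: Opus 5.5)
Your proposal is correct and follows essentially the paper's route: bound the $L^p$ norms of the centered summands $W_r-\rho^2$ by $Cp^2$ (a sub-Weibull$(1/2)$ tail), then invoke a generalized Bernstein--Orlicz inequality for sub-Weibull sums with uniform weights $1/m$, which gives exactly $C(\sqrt{u/m}+u^2/m)$. The only real difference is cosmetic: you obtain the $p^2$ growth by factoring $W_r=U_rV_r$ into independent sub-exponential factors, whereas the paper applies degree-4 hypercontractivity directly. Two side remarks are imprecise but not load-bearing, since the cited inequality handles both: absorbing $\max_r|W_r|\lesssim(\log m+u)^2$ requires the $\sqrt{mu}$ term as well as $u^2$, and plain Rosenthal in your fallback is too lossy, so only the Lata{\l}a/Hitczenko--Montgomery-Smith--Oleszkiewicz moment bound gives the sharp rate.
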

\begin{proof}
Fix $\mathbf{x},\mathbf{y}$ and write
\[
    Z_{r} \coloneqq (\va_{r}^\top \mathbf{x})(\va_{r}^\top \mathbf{y})(\vb_{r}^\top \mathbf{x})(\vb_{r}^\top \mathbf{y}) - \rho^2,
    \qquad r\in[m].
\]
Then $\{Z_{r}\}_{r=1}^m$ are i.i.d.\ and mean-zero.
Moreover, $Z_{r}$ is a centered degree-$4$ polynomial in jointly Gaussian random variables with bounded covariance, so $\|Z_{r}\|_{L_2}\le C$; by Gaussian hypercontractivity (Theorem~\ref{thm:gaussian-hypercontractivity} with $k=4$), $\|Z_{r}\|_{L_p}\le (p-1)^2\|Z_{r}\|_{L_2}\le C'p^2$ for all $p\ge2$. A standard moment-to-tail conversion then yields a sub-Weibull$(\tfrac12)$ tail: there exists a universal constant $C_{0}>0$ such that, for all $u\ge 0$,
\[
    \Pbb\Big( |Z_{r}| \ge C_{0}(\sqrt{u} + u^2)\Big) \le 2e^{-u}.
\]
Applying Lemma~\ref{lem:weighted-bern-subweib} with weights $w_{r} \equiv 1/m$ (so that $\|\mathbf{w}\|_{2} = m^{-1/2}$ and $\|\mathbf{w}\|_\infty = m^{-1}$) yields the claim.
\end{proof}

\begin{lemma}[Gaussian-chaos entry concentration, centered at $\mu$]\label{lem:chaos-centered-mu}
Let $\mathbf{x},\mathbf{y} \stackrel{\mathrm{i.i.d.}}{\sim} \mathrm{Unif}(\Sph)$ be independent of $\{\va_{r},\vb_{r}\}_{r=1}^m$, and set $\mu \coloneqq 1/d$ and $\rho\coloneqq \langle \mathbf{x},\mathbf{y}\rangle$.
Fix $\delta\in(0,1)$ and define $L \coloneqq \log(6/\delta)$.
Assume
\begin{equation}\label{eq:m-assumption}
    m \le \frac{d^2}{L}.
\end{equation}
Then there exists a universal constant $C>0$ such that
\[
    \Pbb\Big(
        \big|\hat K(\mathbf{x},\mathbf{y}) - \mu\big|
        \le
        C\Big(
            \sqrt{\tfrac{L}{m}} + \tfrac{L^2}{m} + \sqrt{\tfrac{L}{d^2}}
        \Big)
    \Big)
    \ge 1-\delta.
\]
\end{lemma}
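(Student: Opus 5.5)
The plan is to split the deviation by the triangle inequality,
\[
|\hat K(\mathbf{x},\mathbf{y})-\mu|\ \le\ |\hat K(\mathbf{x},\mathbf{y})-\rho^2|\ +\ |\rho^2-\mu|,
\]
and to control the two pieces separately, each with failure probability at most $\delta/2$. The first piece is the random-feature sketching error given the keys. The second is the fluctuation of the exact quadratic kernel over isotropic keys. A union bound then gives total failure probability at most $\delta$. The constant $6$ inside $L=\log(6/\delta)$ leaves room for the $2e^{-u}$-type tails of both pieces.

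\emph{Sketching term.} Condition on $(\mathbf{x},\mathbf{y})$. Since they are independent of the features $\{\va_r,\vb_r\}$, Lemma~\ref{lem:chaos-centered-conditional} applies to the realized unit vectors. Taking $u=L$ gives
\[
|\hat K-\rho^2|\le C\big(\sqrt{L/m}+L^2/m\big)
\]
with conditional probability at least $1-2e^{-L}=1-\delta/3$. This bound is uniform in the realized $(\mathbf{x},\mathbf{y})$, so integrating over the keys shows it also holds unconditionally.

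\emph{Key-fluctuation term.} Condition on $\mathbf{x}$. By Lemma~\ref{lem:iso-innerprod} with $\vw=\mathbf{x}$, the variable $\rho=\ip{\mathbf{x}}{\mathbf{y}}$ is sub-Gaussian with scale $\lesssim 1/\sqrt{d-1}$. Hence $\rho^2-\E[\rho^2]=\rho^2-1/d$ is sub-exponential with $\psi_1$-norm $\lesssim 1/(d-1)$. Its tail therefore gives
\[
|\rho^2-\mu|\le C\,\frac{L+\sqrt{L}}{d}\lesssim \frac{L}{d}
\]
with probability at least $1-2e^{-L}$. A cruder route works too: $\rho^2\le CL/d$ and $\mu=1/d$, so $|\rho^2-\mu|\lesssim L/d$.

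\emph{Main obstacle: matching the claimed form.} The statement asks for $\sqrt{L/d^2}=\sqrt L/d$, whereas the natural sub-exponential bound is $L/d$, which is larger by a factor $\sqrt L$. I plan two ways to handle this.
\begin{itemize}
\item If I may use the standing assumption $m\le d^2/L$, then $L/d=\sqrt{L}\cdot\sqrt{L/d^2}\le \sqrt{L}\cdot\sqrt{1/m}=\sqrt{L/m}$. The key term is then dominated by the sketching term $\sqrt{L/m}$ already present, and the claimed bound follows after adjusting $C$. This is exactly where assumption~\eqref{eq:m-assumption} enters, and it is the route I would commit to.
\item For a sharper, assumption-free version, I would instead split by the sub-Gaussian and sub-exponential regimes of $\rho^2-1/d$. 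Only the $\sqrt L/d$ part survives when $L\lesssim d$; otherwise I fall back on the $m\le d^2/L$ absorption above.
\end{itemize}

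Finally, I add the two bounds on the intersection event. Its probability is at least $1-4e^{-L}\ge 1-\delta$, using $e^{-L}=\delta/6$.
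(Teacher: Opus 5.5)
Your committed route is the paper's proof: the same triangle-inequality split, Lemma~\ref{lem:chaos-centered-conditional} with $u=L$ for the sketching term, an $O(L/d)$ bound on $|\rho^2-\mu|$, and absorption of $L/d$ into $\sqrt{L/m}$ via $m\le d^2/L$. The only difference is that the paper gets $|\rho^2-\mu|\le \frac{C}{d}(\sqrt{L}+L)$ from $\rho^2=g_1^2/\|g\|_2^2$ and Laurent--Massart, where you use the $\psi_1$ bound of Lemma~\ref{lem:iso-innerprod}.

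Drop your ``assumption-free'' bullet, though. Since $d\rho^2\approx g_1^2$ is a single variable with no sub-Gaussian regime, its deviation at confidence $1-e^{-L}$ is genuinely of order $L/d$ even when $L\lesssim d$. Indeed the statement fails as $m\to\infty$ without $m\le d^2/L$, so that absorption step is necessary.
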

\begin{proof}
By the triangle inequality,
\[
    |\hat K(\mathbf{x},\mathbf{y}) - \mu|
    \le
    |\hat K(\mathbf{x},\mathbf{y}) - \rho^2| + |\rho^2 - \mu|.
\]

\paragraph{Control random-feature fluctuation.}
Condition on $\mathbf{x},\mathbf{y}$ and apply Lemma~\ref{lem:chaos-centered-conditional} with $u=L$ to obtain
\[
    \Pbb\Big(
        |\hat K(\mathbf{x},\mathbf{y}) - \rho^2|
        \le
        C\Big(\sqrt{\tfrac{L}{m}} + \tfrac{L^2}{m}\Big)
        \,\Big|\, \mathbf{x},\mathbf{y}
    \Big)
    \ge 1-\delta/3
\]
for a suitable universal constant $C>0$.

\paragraph{Control geometric fluctuation.}
Recall that if $\mathbf{x},\mathbf{y} \stackrel{\text{i.i.d.}}{\sim} \mathrm{Unif}(\Sph)$, then $\rho \stackrel{d}{=} g_{1}/\|g\|_{2}$ for $g \sim \cN(0,I_{d})$.
Let $X \coloneqq \|g\|_{2}^2 \sim \chi^2_{d}$, so that $\rho^2 = g_1^2/X$.
Since $g_1^2 \sim \chi^2_1$ and $X \sim \chi^2_d$, Theorem~\ref{thm:laurent-massart} implies that, each with probability at least $1-\delta/3$,
\[
    |g_1^2 - 1| \le C(\sqrt{L} + L),
    \qquad
    |X - d| \le C(\sqrt{dL} + L)
\]
for a universal constant $C>0$.
In particular, whenever $d \ge C'L$ for a suitable absolute constant $C'$, the second estimate forces $X \ge d/2$; when $d \lesssim L$ the final bound of this step is vacuous up to constants and can be absorbed by enlarging $C$. So assume $X \ge d/2$. Writing $\rho^2 - \mu = \dfrac{g_1^2 d - X}{Xd}$ and using $X \ge d/2$ together with the triangle inequality $|g_1^2 d - X| \le d\,|g_1^2 - 1| + |X - d|$,
\[
    |\rho^2 - \mu|
    =
    \frac{|g_1^2 d - X|}{Xd}
    \le
    \frac{2}{d}\,|g_1^2 - 1| + \frac{2}{d^2}\,|X - d|.
\]
Substituting the two chi-square estimates and using $d \ge 1$,
\[
    |\rho^2 - \mu|
    \le
    \frac{C}{d}(\sqrt{L} + L) + \frac{C}{d^2}(\sqrt{dL} + L)
    \le
    \frac{C}{d}\big(\sqrt{L} + L\big).
\]
Thus, on the intersection of the two chi-square events (of probability at least $1-2\delta/3$),
\[
    |\rho^2 - \mu|
    \le
    \frac{C}{d}\big(\sqrt{L} + L\big).
\]

\paragraph{Combine and simplify.}
Taking a union bound over both gives, with probability at least $1-\delta$,
\[
    |\hat K(\mathbf{x},\mathbf{y}) - \mu|
    \le
    C\Big(
        \sqrt{\tfrac{L}{m}} + \tfrac{L^2}{m} + \frac{\sqrt{L}}{d} + \frac{L}{d}
    \Big).
\]
Finally, under \eqref{eq:m-assumption}, we have $L/d \le \sqrt{L/m}$, so the $L/d$ term is absorbed, yielding the stated bound.
\end{proof}

\begin{lemma}[Centered column-energy concentration \(E_K^\circ\) (isotropic keys, bilinear kernel; sharpened)]
\label{lem:EK-iso-keys}
Assume keys \(\vk_1,\dots,\vk_F\) are i.i.d. uniform on \(\Sph\), independent of
the bilinear random features from \cref{lem:bilinear-sketched-k2}. Let
\[
\mu\defeq \frac1d,
\qquad
\sigma^2\defeq
\E[(\hat{\matK}_{ti}-\mu)^2]
=
\Theta\left(\frac1{d^2}+\frac1m\right),
\qquad t\neq i.
\]
Fix \(\delta\in(0,1)\), and define
\[
L\defeq \log\left(\frac{C_0F^2}{\delta}\right).
\]
Assume
\[
m\le \frac{d^2}{L},
\qquad
L^3\le c_0\sigma^2m^2.
\]
Then, for \(c_0>0\) sufficiently small and \(C_0\) sufficiently large, with
probability at least \(1-\delta\),
\[
E_K^\circ
\le
C\sigma^2\bigl((F-2)+L\bigr).
\]
Equivalently,
\[
\sqrt{E_K^\circ}
\le
C\sigma\sqrt{(F-2)+L}.
\]
In particular, if \(F-2\ge L\), then
\[
\sqrt{E_K^\circ}
\le
C\sigma\sqrt{F-2}.
\]
\end{lemma}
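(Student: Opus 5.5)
Fix a column index $i$ and, for concreteness, a competitor $j$. Write the centered column energy as
\[
E_K^\circ(i,j)=\sum_{t\notin\{i,j\}}(\hat{\matK}_{ti}-\mu)^2 .
\]
The plan is to condition on $\vk_i$ and the random features $(\va_r,\vb_r)_{r=1}^m$. Conditionally, the summands $W_t\defeq(\hat{\matK}_{ti}-\mu)^2$ for $t\notin\{i,j\}$ are i.i.d., because each depends only on the fresh key $\vk_t$. Their common conditional mean is
\[
\bar\sigma^2(\vk_i,\va,\vb)\defeq\E[W_t\mid \vk_i,\va,\vb].
\]
The proof then has two steps. First, show that $\bar\sigma^2\le C\sigma^2$ with probability at least $1-\delta/(2F)$ over the features. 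Second, show that the conditional sum concentrates as
\[
\sum_t W_t\le (F-2)\bar\sigma^2 + C\sigma^2(\sqrt{(F-2)L}+L).
\]
A union bound over the at most $F^2$ pairs $(i,j)$, with failure probability $\delta/F^2$ each and absorbed into $C_0$, then gives $E_K^\circ\le C\sigma^2((F-2)+L)$. Using $\sqrt{(F-2)L}\le \tfrac12((F-2)+L)$ and taking square roots yields the displayed forms. The case $F-2\ge L$ is immediate.

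\textbf{Step 1.} Expand
\[
\hat{\matK}_{ti}=\vk_t^\top \mathbf{M}_i\vk_t,\qquad \mathbf{M}_i\defeq\frac1m\sum_r \mathrm{sym}\big(\va_r\vb_r^\top\big)(\va_r^\top\vk_i)(\vb_r^\top\vk_i),
\]
so that the entry is a quadratic form in the isotropic vector $\vk_t$. Over $\vk_t\sim\mathrm{Unif}(\Sph)$, its conditional mean is $\mathrm{tr}(\mathbf{M}_i)/d$ and its conditional variance is $O(\|\mathbf{M}_i\|_F^2/d^2)$. Hence
\[
\bar\sigma^2\lesssim (\mathrm{tr}(\mathbf{M}_i)/d-\mu)^2+\|\mathbf{M}_i\|_F^2/d^2 .
\]
Both $\mathrm{tr}(\mathbf{M}_i)$ and $\|\mathbf{M}_i\|_F^2$ are low-degree Gaussian chaoses in the features. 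I would bound them with Lemma~\ref{lem:chaos-centered-conditional}-style hypercontractive tails (Theorem~\ref{thm:gaussian-hypercontractivity}). Their expectations are $\E\,\mathrm{tr}(\mathbf{M}_i)\approx 1$, giving mean $\approx\mu$, and $\E\|\mathbf{M}_i\|_F^2\approx d^2/m+O(1)$. This gives $\bar\sigma^2\lesssim 1/m+1/d^2\asymp\sigma^2$ up to fluctuation terms of order $\sqrt{L}/(m d)$ and $L^2/m^2$. The assumptions $m\le d^2/L$ and $L^3\le c_0\sigma^2 m^2$ are exactly what let these fluctuations be absorbed into $C\sigma^2$.

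\textbf{Step 2, the main obstacle.} The $W_t$ are squares of degree-2 polynomials in $\vk_t$ (for fixed features), so they are not sub-exponential with scale $\sigma^2$ uniformly. Their tail scale is governed by $\|\mathbf{M}_i\|_{\mathrm{op}}^2$ and by the mean shift. I would apply Hanson--Wright on the sphere to get
\[
|\hat{\matK}_{ti}-\mathrm{tr}(\mathbf{M}_i)/d|\lesssim \frac{\|\mathbf{M}_i\|_F\sqrt{s}}{d}+\frac{\|\mathbf{M}_i\|_{\mathrm{op}}\,s}{d}.
\]
This makes each $W_t$ sub-Weibull$(\tfrac12)$. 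I would then use the weighted Bernstein for sub-Weibull variables (Lemma~\ref{lem:weighted-bern-subweib}) on $\sum_t(W_t-\bar\sigma^2)$. The heavy-tail term, which is of order $\|\mathbf{M}_i\|_{\mathrm{op}}^2L^2/d^2$, must be shown to be $\lesssim\sigma^2 L$. This needs an operator-norm bound $\|\mathbf{M}_i\|_{\mathrm{op}}\lesssim 1+\sqrt{d/m}+\mathrm{polylog}/\sqrt m$ from random-matrix bounds, and this is where the condition $L^3\le c_0\sigma^2m^2$ is consumed. If this route is too lossy, the fallback is a truncation argument. That would split the event $\{\max_t|\hat{\matK}_{ti}-\mu|\le C(\sigma\sqrt L+L^2/m)\}$, already available from Lemma~\ref{lem:Koff-iso-keys}. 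On it the summands are bounded by $C(\sigma^2L+L^4/m^2)\le C'\sigma^2 L$ (again using $L^3\lesssim\sigma^2m^2$), so bounded-variable Bernstein gives the required deviation $\sigma^2(\sqrt{(F-2)L}+L)$.
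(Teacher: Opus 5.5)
Your overall architecture matches the paper's. You condition on $\vk_i$ and the features and write $\hat{\matK}_{ti}=\vk_t^\top\mathbf M_i\vk_t$; your $\mathbf M_i$ is the paper's $A(\vk_i)$. You then split $\mathbf M_i$ into its trace-free part $B_i$ and the shift $\beta_i=\operatorname{tr}(\mathbf M_i)/d-\mu$. For $X_t=\hat{\matK}_{ti}-\mu$ you use the spherical Hanson--Wright tail $|X_t|\lesssim\sqrt{v_iu}+r_iu$, with $v_i\asymp\|B_i\|_F^2/d^2+\beta_i^2$ and $r_i=\|B_i\|_{\mathrm{op}}/d+|\beta_i|$. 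Finally you need $v_i\lesssim\sigma^2$ and $r_i^2L^2\lesssim\sigma^2L$. The paper unions over columns via $E_K^\circ\le\max_i\sum_{t\neq i}(\hat{\matK}_{ti}-\mu)^2$ rather than over pairs, but that difference is immaterial.

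\textbf{The gap is in summing the squares.} Neither tool you name produces the linear term $\sigma^2L$.
\begin{itemize}
\item \emph{Truncation fallback.} The summands are bounded by $b\asymp\sigma^2L$, and bounded Bernstein at confidence $e^{-L}$ then has linear term $bL\asymp\sigma^2L^2$. So you only get $E_K^\circ\lesssim\sigma^2((F-2)+L^2)$. This is not slack in Bernstein. A variable equal to $\sigma^2L$ with probability $L^{-2}$ and $0$ otherwise satisfies $W\le\sigma^2L$, $\E W\le\sigma^2$ and $\E W^2\le\sigma^4$. Yet $L$ independent copies sum to at least $\sigma^2L^2/(4\log L)$ with probability at least $\tfrac12e^{-L/2}$.
\item \emph{\Cref{lem:weighted-bern-subweib}.} It fails more mildly. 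It has only a sub-Gaussian level and a Weibull-$\tfrac12$ level. But $W_t-\bar\sigma^2$ has tails $v_iu+r_i^2u^2$, which include a genuine sub-exponential level $v_iu$. Even the best absorption of that level into the lemma's two levels loses a factor $L^{1/3}$ when $F-2\asymp L$.
\end{itemize}

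\textbf{The missing idea} is to keep the two levels of $X_t$ separate \emph{before} squaring. Dominate $|X_t|\le C(\sqrt{v_iT_t}+r_iT_t)$ with independent exponential-tailed $T_t$, so that $\sum_tX_t^2\le C(v_i\sum_tT_t+r_i^2\sum_tT_t^2)$. With probability $1-Ce^{-s}$ one has $\sum_tT_t\lesssim n+s$ and $\sum_tT_t^2\lesssim n+s^2$, where $n=F-1$ and $s\asymp L$. The $s^2$ term now carries only $r_i^2\lesssim\sigma^2/L$. This is \Cref{lem:square-sum-two-level}, applied through \Cref{lem:spherical-quad-square-sum}.

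\textbf{Step 1 is also under-specified in a way that matters.}
\begin{itemize}
\item $\|\mathbf M_i\|_F^2$ is a degree-$8$ chaos in the features. Per-summand hypercontractivity gives $p^4$ moment growth, hence a relative fluctuation of order $L^4/m$ in the diagonal ($r=s$) part. That is not $O(1)$ when only $m\gtrsim L^3$ is assumed.
\item The $r\neq s$ cross terms form a U-statistic that the i.i.d.-average lemmas do not cover.
\item The paper avoids both problems by rotating $\vk_i$ to $e_1$ and separating each $\va_r,\vb_r$ into its $\vk_i$-coordinate and an independent orthogonal Gaussian part. It then controls the scalar weights $\gamma_r=(\va_r^\top\vk_i)(\vb_r^\top\vk_i)$ with degree-$4$ and degree-$6$ chaos bounds. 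Finally it bounds the orthogonal block $\frac1{2m}(U\Gamma V^\top+V\Gamma U^\top)$ conditionally on $\Gamma$, using Gaussian-product, net and Laurent--Massart arguments (\Cref{lem:bilinear-column-matrix-regularity,lem:bilinear-column-parameters}).
\end{itemize}

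\textbf{Your operator-norm target is false for $m\lesssim d$.} The target $1+\sqrt{d/m}+\mathrm{polylog}/\sqrt m$ fails there because the single feature with the largest $|\gamma_r|\asymp\log m$ contributes a rank-one piece of norm $\asymp d\log(m)/m$. What is true, and all that $r_i^2L^2\lesssim\sigma^2L$ needs, is $\|\mathbf M_i-\vk_i\vk_i^\top\|_{\mathrm{op}}\lesssim\sqrt{d/m}+d/\sqrt{mL}$. That bound uses $\|\gamma\|_\infty^2L\lesssim m$, which is where $m\gtrsim L^3$ actually enters.
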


\begin{proof}
Assume
\(F\ge3\). For each \(i\in[F]\), define
\[
\mathcal C_i
\defeq
\sum_{t\neq i}(\hat{\matK}_{ti}-\mu)^2.
\]
For each \(i\neq j\),
\[
E_K^\circ(i,j)
=
\sum_{t\notin\{i,j\}}(\hat{\matK}_{ti}-\mu)^2
\le
\mathcal C_i.
\]
Thus
\[
E_K^\circ\le \max_i\mathcal C_i.
\]

Fix \(i\), and set \(q_i=\vk_i\). For \(q\in\Sph\), define \(A(q)\) as in
Lemma~\ref{lem:bilinear-column-matrix-regularity}. Then, for every \(x\in\Sph\),
\[
x^\top A(q)x
=
\frac1m\sum_{r=1}^m
(\va_r^\top x)(\va_r^\top q)(\vb_r^\top x)(\vb_r^\top q)
=
\hat K(x,q).
\]
Therefore
\[
\hat{\matK}_{ti}=\vk_t^\top A(q_i)\vk_t,
\qquad t\neq i.
\]
Define
\[
\tau_i\defeq \frac{\operatorname{tr}A(q_i)}{d},
\qquad
B_i\defeq A(q_i)-\tau_iI_d,
\qquad
\beta_i\defeq \tau_i-\mu.
\]
Then \(\operatorname{tr}B_i=0\), and
\[
\hat{\matK}_{ti}-\mu
=
\vk_t^\top B_i\vk_t+\beta_i.
\]
Let
\[
\mathcal G_i
\defeq
\sigma\left(\vk_i,\{(\va_r,\vb_r)\}_{r=1}^m\right).
\]
Conditional on \(\mathcal G_i\), the vectors \(\{\vk_t:t\neq i\}\) are independent
uniform spherical vectors. Define
\[
v_i\defeq \frac{2\|B_i\|_F^2}{d(d+2)}+\beta_i^2,
\qquad
r_i\defeq \frac{\|B_i\|_{\mathrm{op}}}{d}+|\beta_i|.
\]
By Lemma~\ref{lem:bilinear-column-parameters}, with probability at least
\(1-\delta/4\), simultaneously for every \(i\in[F]\),
\[
v_i\le C\sigma^2,
\qquad
r_i^2L^2\le C\sigma^2L.
\]
On this event, Lemma~\ref{lem:spherical-quad-square-sum}, applied conditionally
with \(n=F-1\) and \(s=L\), gives
\[
\Pbb\left(
\mathcal C_i
>
C\left((F-1)\sigma^2+\sigma^2L\right)
\,\middle|\,
\mathcal G_i
\right)
\le e^{-L},
\]
because \(L\ge \log(e(F-1))\) after increasing \(C_0\). Union bounding over
\(i\in[F]\),
\[
Fe^{-L}
=
F\frac{\delta}{C_0F^2}
\le
\delta/4
\]
for \(C_0\) sufficiently large. Combining the conditional square-sum event with
the parameter event yields, with probability at least \(1-\delta\),
\[
\max_i\mathcal C_i
\le
C\sigma^2\bigl((F-1)+L\bigr).
\]
Since \(F\ge3\), \(F-1\le2(F-2)\), so
\[
E_K^\circ
\le
C\sigma^2\bigl((F-2)+L\bigr).
\]
Taking square roots proves
\[
\sqrt{E_K^\circ}
\le
C\sigma\sqrt{(F-2)+L}.
\]
If \(F-2\ge L\), then \((F-2)+L\le2(F-2)\), giving
\[
\sqrt{E_K^\circ}
\le
C\sigma\sqrt{F-2}.
\]
\end{proof}

\begin{lemma}[Concentration of the effective coupling $\kappa_{\mathrm{eff}}^\circ$ (isotropic keys, bilinear kernel)]
\label{lem:keff-iso-keys}
Assume the setting of Theorem~\ref{app:regime2}: isotropic keys, bilinear kernel, and deterministic values/codes.
Let $\mu=1/d$ and $\sigma^2=\Theta(\tfrac{1}{d^2}+\tfrac{1}{m})$.
Fix $\delta\in(0,1)$ and let $L\defeq \log(\tfrac{C_{0} F^2}{\delta})$.
Assume additionally that $m\le d^2/L$.
Define
\[
\kappa_{\mathrm{eff}}^\circ
\ \defeq\
\max_{i\neq j}\frac{|\ip{\vX^{\circ(ij)}}{\vY^{(ij)}}|}{\|\vY^{(ij)}\|_2}
\]
and
\[
L_{v}\ \defeq\ \max_{i\neq j}\frac{\|\vY^{(ij)}\|_{1}^2}{\|\vY^{(ij)}\|_{2}^2}
\]
Then with probability at least $1-\delta$ (over keys and features),
\[
\kappa_{\mathrm{eff}}^\circ
\ \le\
C\left(\sigma\sqrt{L}+\frac{L^2}{m}\right)\sqrt{L_v}.
\]
\end{lemma}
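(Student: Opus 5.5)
The plan is to bound $\ip{\vX^{\circ(ij)}}{\vY^{(ij)}}$ entrywise rather than through a Cauchy--Schwarz/coupling argument, since the values are deterministic and no cancellation is available. For each pair $(i,j)$ we write
\[
\big|\ip{\vX^{\circ(ij)}}{\vY^{(ij)}}\big| \le \max_{t\notin\{i,j\}}\big|\hat{\matK}_{ti}-\mu\big|\cdot\|\vY^{(ij)}\|_1 \le \Big(\max_{t\neq i}|\hat{\matK}_{ti}-\mu|\Big)\sqrt{L_v}\,\|\vY^{(ij)}\|_2 .
\]
The second inequality is just the definition of $L_v$ (\cref{def:Lv}). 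Dividing by $\|\vY^{(ij)}\|_2$ gives, for every pair,
\[
\frac{|\ip{\vX^{\circ(ij)}}{\vY^{(ij)}}|}{\|\vY^{(ij)}\|_2} \le \sqrt{L_v}\,\max_{t\neq i}|\hat{\matK}_{ti}-\mu| .
\]
Pairs with $\vY^{(ij)}=0$ contribute nothing under the zero convention. Taking the maximum over $i\neq j$ then reduces the claim to a uniform bound on the centered off-diagonal kernel entries.

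That uniform bound is exactly \Cref{lem:Koff-iso-keys}. Under the assumption $m\le d^2/L$, it gives with probability at least $1-\delta$ over keys and features
\[
\max_{i\neq j}|\hat{\matK}_{ij}-\mu| \le C_1\Big(\sigma\sqrt{L}+\frac{L^2}{m}\Big).
\]
This uses the same $L=\log(C_0F^2/\delta)$, so no further union bound or adjustment of constants is needed. Substituting it into the display above yields
\[
\kappa_{\mathrm{eff}}^\circ\le C\Big(\sigma\sqrt{L}+\frac{L^2}{m}\Big)\sqrt{L_v}.
\]
Because the values are deterministic, they never enter the probability calculation. The only randomness is in the kernel entries, which \Cref{lem:Koff-iso-keys} already controls.

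There is no real obstacle; the proof is a short deterministic reduction followed by one cited lemma. The one step that needs care is the $\ell_\infty$--$\ell_1$ Hölder step. It is this step that produces the factor $\sqrt{L_v}$ rather than a $\sqrt{F}$-type factor, and $L_v$ is defined over the truncated vector $\vY^{(ij)}$, which matches the index set $t\notin\{i,j\}$ used here. It is also worth noting that this route is lossy compared with a conditional sub-Gaussian argument over the keys. A sharper bound would exploit the independence of $\{\vk_t\}_{t\neq i}$ given $(\vk_i,\text{features})$, as in \Cref{lem:EK-iso-keys}, and would avoid $L_v$ altogether. The stated lemma, however, only asks for the $\sqrt{L_v}$ form, so the simple bound suffices.
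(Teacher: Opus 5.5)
Your proposal is correct and follows the paper's proof: the $\ell_\infty$--$\ell_1$ H\"older step gives $\kappa_{\mathrm{eff}}^\circ \le \bigl(\max_{p\neq q}|\hat{\matK}_{pq}-\mu|\bigr)\sqrt{L_v}$ deterministically. A single application of \Cref{lem:Koff-iso-keys} with failure probability $\delta$ then bounds the maximum centered off-diagonal entry and finishes the argument.
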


\begin{proof}
Fix a pair $(i,j)$ with $j\neq i$.
Write $\theta_{t}\defeq \hat{\matK}_{ti}-\mu$ for $t\notin\{i,j\}$, so that
$\vX^{\circ(ij)}=(\theta_{t})_{t\notin\{i,j\}}$.
Then
\[
\frac{|\ip{\vX^{\circ(ij)}}{\vY^{(ij)}}|}{\|\vY^{(ij)}\|_2}
\ =\
\frac{\big|\sum_{t\notin\{i,j\}} \theta_{t}\,\vY^{(ij)}_{t}\big|}{\|\vY^{(ij)}\|_2}
\ \le\
\Big(\max_{t\notin\{i,j\}}|\theta_{t}|\Big)\,\frac{\|\vY^{(ij)}\|_1}{\|\vY^{(ij)}\|_2}.
\]
Taking a maximum over $(i,j)$ and using the definition of $L_{v}$ gives
\[
\kappa_{\mathrm{eff}}^\circ
\ \le\
\Big(\max_{p\neq q}|\hat{\matK}_{pq}-\mu|\Big)\,\sqrt{L_v}.
\]
Now apply Lemma~\ref{lem:Koff-iso-keys} with failure probability $\delta$ to bound
\(
\max_{p\neq q}|\hat{\matK}_{pq}-\mu|\le C(\sigma\sqrt{L}+\tfrac{L^2}{m}).
\)
Multiplying by $\sqrt{L_v}$ completes the proof.
\end{proof}

\subsubsection{Additional Concentration Tools}
\label{app:aux-additional-concentration-coupling}

\begin{lemma}[Weighted Bernstein for sub-Weibull$(1/2)$-type tails]
\label{lem:weighted-bern-subweib}
Let $Z_1,\dots,Z_N$ be independent, mean-zero random variables such that
\[
\Pbb\!\left(|Z_r|\ge C_0(\sigma\sqrt{u}+u^2/m)\right)\le 2e^{-u}
\qquad\text{for all }u\ge0,\ r\in[N].
\]
Then there exist absolute constants $C,c>0$ such that for every deterministic
$w\in\R^N$ and every $t\ge0$,
\[
\Pbb\!\left(
\Big|\sum_{r=1}^N w_r Z_r\Big|
\ge
C\Big(\sigma\|w\|_2\sqrt{t}+\frac{t^2}{m}\|w\|_\infty\Big)
\right)
\le 2e^{-ct}.
\]
Consequently, for $u=\log(C_1/\delta)$, with probability at least $1-\delta$,
\[
\Big|\sum_{r=1}^N w_r Z_r\Big|
\le
C_2\Big(\sigma\|w\|_2\sqrt{u}+\frac{u^2}{m}\|w\|_\infty\Big).
\]
\end{lemma}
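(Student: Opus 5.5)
Conditionally on the sub-Weibull tails, I would prove the weighted Bernstein bound by splitting each $Z_r$ into a sub-Gaussian part and a heavy-tailed part, or equivalently by bounding moments and applying Markov's inequality. The moment route is cleaner, so I would commit to it.

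\emph{Step 1: moment bounds.} Integrate the tail to get
\[
\|Z_r\|_{L_p}\le C\bigl(\sigma\sqrt p+p^2/m\bigr)
\]
for all $p\ge1$. Concretely, write $\E|Z_r|^p=\int_0^\infty p s^{p-1}\Pbb(|Z_r|\ge s)\,ds$ and substitute $s=C_0(\sigma\sqrt u+u^2/m)$. Using $(a+b)^{p}\le 2^{p}(a^p+b^p)$, this gives a Gamma-function bound $\lesssim (\sigma^p\Gamma(p/2+1)+m^{-p}\Gamma(2p+1))^{1/p}$, which is the claimed moment scale by Stirling.

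\emph{Step 2: moments of the weighted sum.} Bound $\|\sum_r w_rZ_r\|_{L_p}$ by symmetrization. Replace $Z_r$ by $\varepsilon_r Z_r$ at the cost of a factor $2$, then apply a Latała/Rosenthal-type moment inequality for independent symmetric variables:
\[
\Bigl\|\sum w_rZ_r\Bigr\|_{L_p}\lesssim \sqrt p\Bigl(\sum w_r^2\|Z_r\|_{L_2}^2\Bigr)^{1/2}+p\Bigl(\sum |w_r|^p\|Z_r\|_{L_p}^p\Bigr)^{1/p}.
\]
The first term is $\lesssim \sigma\|w\|_2\sqrt p$, since $\|Z_r\|_{L_2}\lesssim\sigma+1/m$. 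The second term is where the heavy part enters, and it is the main obstacle. A naive bound $(\sum|w_r|^p)^{1/p}\le\|w\|_2$ multiplied by $p\cdot p^2/m$ would lose powers of $p$. To avoid this, I would split $Z_r=Z_r\mathbf 1_{|Z_r|\le\tau}+Z_r\mathbf 1_{|Z_r|>\tau}$ at a level $\tau$ tuned to $t$:
\begin{itemize}
\item The truncated part is bounded by $\tau$ and has variance $\lesssim\sigma^2$. Scalar Bernstein then gives a tail $\sigma\|w\|_2\sqrt t+\tau\|w\|_\infty t$.
\item For the remainder, choose $\tau\asymp\sigma\sqrt t+t^2/m$. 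Each term then exceeds $\tau$ with probability $\le 2e^{-t}$, and the expectation it contributes is small. Alternatively, handle it via the Hitczenko--Montgomery-Smith--Oleszkiewicz bound for sums of Weibull-type variables, which states directly that $\|\sum w_rZ_r\|_{L_p}\lesssim \sigma\sqrt p\|w\|_2+\frac{p^2}{m}\|w\|_\infty$ up to constants, treating $p\le$ the relevant range.
\end{itemize}
I expect the cleanest way to close this is to cite that known result for sub-Weibull$(\alpha<1)$ sums (Kuchibhotla--Chakrabortty style), namely
\[
\|\textstyle\sum w_rZ_r\|_{L_p}\lesssim\sqrt p\,\|w\|_2\max_r\|Z_r\|_{\text{sG}}+p^{1/\alpha}\|w\|_\infty\max_r\|Z_r\|_{\psi_\alpha},
\]
with $\alpha=1/2$, the sub-Gaussian scale $\sigma$, and the $\psi_{1/2}$ scale $1/m$, obtained from the two-regime tail.

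\emph{Step 3: moments to tails.} Markov's inequality at $p=t$ gives
\[
\Pbb\bigl(|\textstyle\sum w_rZ_r|\ge eC(\sigma\|w\|_2\sqrt t+\tfrac{t^2}{m}\|w\|_\infty)\bigr)\le e^{-t},
\]
for $t\ge1$, while $t<1$ is handled trivially by adjusting constants. This yields the stated inequality with some $c>0$. The consequence follows by setting $t=u/c$ with $u=\log(C_1/\delta)$ and absorbing $c$ into $C_2$.
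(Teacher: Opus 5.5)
\textbf{Gap: Step 2 never establishes the key moment bound.} Your Steps 1 and 3 are fine. Step 2 is the only nontrivial step, and it never delivers $\|\sum_r w_rZ_r\|_{L_p}\lesssim\sigma\sqrt p\,\|w\|_2+\frac{p^2}{m}\|w\|_\infty$.
\begin{itemize}
\item Rosenthal loses powers of $p$, as you note yourself.
\item The truncation fallback fails for two reasons. First, you control the remainder only through the event that some $|Z_r|>\tau$. You can bound that probability only by $2Ne^{-t}$, so the estimate is not $N$-free. Second, Bernstein at level $\tau\asymp\sigma\sqrt t+t^2/m$ gives a heavy term $\tau t\|w\|_\infty=(\sigma t^{3/2}+t^3/m)\|w\|_\infty$. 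That is a full factor $t$ too large, and it exceeds the target even for $N=1$.
\item The inequality you propose to cite does not apply with the scales you plug in. The hypothesis allows genuinely Weibull$(1/2)$ tails, so $Z_r$ need not have a finite sub-Gaussian norm. Also, $\|Z_r\|_{\psi_{1/2}}$ is of order $\sigma+1/m$, not $1/m$: an $\mathcal{N}(0,\sigma^2)$ variable satisfies the hypothesis and has $\psi_{1/2}$ norm $\asymp\sigma$. With the true $\psi_{1/2}$ norms, the Kuchibhotla--Chakrabortty bound gives only $(\sigma+1/m)\bigl(\|w\|_2\sqrt t+t^2\|w\|_\infty\bigr)$. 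That loses exactly the separation of scales the lemma claims.
\item Similarly, the Hitczenko--Montgomery-Smith--Oleszkiewicz (HMSO) bound is about Weibull variables themselves. It does not apply to $Z_r$ ``directly''.
\end{itemize}

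\textbf{What is missing and how to fix it.} You need a device that handles both tail regimes at once. The paper reads the hypothesis as a generalized Bernstein--Orlicz (GBO) bound $\|Z_r\|_{\Psi_{1/2,L}}\lesssim\sigma$ with $L\asymp(\sigma m)^{-1}$. It then cites a weighted-sum theorem for GBO-bounded variables (Bong--Kuchibhotla, Thm.~2.4). That gives $\sigma\|w\|_2\sqrt t+\sigma L\,t^2\|w\|_\infty$, and $\sigma L\asymp 1/m$.

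If you want to keep your more self-contained moment route, you can close Step 2 by comparison:
\begin{itemize}
\item Up to absolute constants, the hypothesis gives tail domination of $|Z_r|$ by $\sigma|g_r|+|W_r|/m$. Here $g_r$ is standard Gaussian and $\Pbb(|W_r|>s)=e^{-\sqrt s}$.
\item After symmetrization, the contraction principle under tail domination (Ledoux--Talagrand, Lemma~4.6) bounds $\|\sum_r w_r\varepsilon_rZ_r\|_{L_p}$ by a constant times $\sigma\|\sum_r w_rg_r\|_{L_p}+\frac1m\|\sum_r w_r\varepsilon_r|W_r|\|_{L_p}$.
\item Gaussian moments plus the HMSO bound $\|\sum_r w_r\varepsilon_r|W_r|\|_{L_p}\lesssim\sqrt p\,\|w\|_2+p^2\|w\|_\infty$ finish it.
\end{itemize}

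\textbf{An unstated condition you should add.} The comparison route produces an extra term $\frac1m\sqrt p\,\|w\|_2$, which is absorbed only when $\sigma\gtrsim 1/m$. The same condition is hidden in your step ``$\|Z_r\|_{L_2}\lesssim\sigma+1/m$, so the first term is $\lesssim\sigma\|w\|_2\sqrt p$''. On the GBO route it appears as $L\lesssim 1$. The condition is genuinely needed. Take $\sigma=0$, let $Z_r$ be i.i.d.\ copies of $W_r/m$, and set $w=N^{-1/2}\mathbf{1}$. Then the sum is asymptotically $\mathcal{N}(0,c/m^2)$, while the claimed threshold $Ct^2/(m\sqrt N)$ tends to $0$. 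So you should state $\sigma\gtrsim 1/m$ explicitly as an assumption.
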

\begin{proof}
The assumed tail bound is equivalent up to absolute constants to the statement
that each $Z_r$ is sub-Weibull of order $\alpha=\tfrac12$ with generalized
Bernstein--Orlicz parameters $\nu\asymp \sigma$ and $L\asymp (\sigma m)^{-1}$.
Applying Theorem~2.4 of \citep{BongKuchibhotla2023} to the weighted sum
$\sum_{r=1}^N w_r Z_r$ gives
\[
\Pbb\!\left(
\Big|\sum_{r=1}^N w_r Z_r\Big|
\ge
C\Big(\sigma\|w\|_2\sqrt{t}+\sigma L\, t^2 \|w\|_\infty\Big)
\right)
\le 2e^{-ct}.
\]
Since $\sigma L \asymp 1/m$, this becomes
\[
\Pbb\!\left(
\Big|\sum_{r=1}^N w_r Z_r\Big|
\ge
C\Big(\sigma\|w\|_2\sqrt{t}+\frac{t^2}{m}\|w\|_\infty\Big)
\right)
\le 2e^{-ct}.
\]
Setting $t=\log(C_1/\delta)$ yields the stated high-probability bound.
\end{proof}

\begin{theorem}[Gaussian hypercontractivity; Nelson--Gross]
\label{thm:gaussian-hypercontractivity}
Let \(G\sim\mathcal N(0,I_n)\), and let \(P(G)\) be a polynomial of degree at
most \(k\) in the standard Gaussian variables. Then, for every \(q\ge2\),
\[
\|P(G)\|_{L_q}
\le
(q-1)^{k/2}\|P(G)\|_{L_2}.
\]
\end{theorem}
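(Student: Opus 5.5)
This statement is the classical Nelson--Gross hypercontractivity inequality. The plan is to derive it from the hypercontractivity of the Ornstein--Uhlenbeck semigroup $(P_s)_{s\ge0}$ on $L_2(\gamma_n)$, where $\gamma_n$ is the standard Gaussian measure. The input is Nelson's theorem: for $1<p\le q<\infty$ and $e^{2s}\ge (q-1)/(p-1)$,
\[
\|P_s f\|_{L_q}\le \|f\|_{L_p}.
\]
We would take this as known, via Gross's logarithmic Sobolev inequality for $\gamma_n$ and the standard differentiation-in-$q$ argument (or a tensorization of the two-point inequality plus the central limit theorem). The only ingredient we actually use is the case $p=2$, with $e^{2s}=q-1$.

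\textbf{Step 1: Hermite decomposition.} Expand $P$ in the multivariate Hermite basis, $P=\sum_{j=0}^k H_j$, where $H_j$ is the projection onto the $j$-th Wiener chaos. Since $P$ has degree at most $k$, only chaoses up to order $k$ appear. The OU semigroup acts diagonally on this decomposition:
\[
P_s H_j=e^{-js}H_j.
\]
Define $Q:=\sum_j e^{js}H_j$, which is again a polynomial of degree at most $k$. Then $P_sQ=P$, so Nelson's bound gives
\[
\|P\|_{L_q}=\|P_sQ\|_{L_q}\le \|Q\|_{L_2}.
\]

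\textbf{Step 2: bound $\|Q\|_{L_2}$.} The chaoses are mutually orthogonal, so
\[
\|Q\|_{L_2}^2=\sum_{j\le k}e^{2js}\|H_j\|_{L_2}^2\le e^{2ks}\sum_j\|H_j\|_{L_2}^2=(q-1)^k\|P\|_{L_2}^2.
\]
Taking square roots gives $\|P\|_{L_q}\le (q-1)^{k/2}\|P\|_{L_2}$, which is the claim.

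\textbf{Main obstacle.} The only nontrivial input is Nelson's theorem itself. If a self-contained proof were required, we would first prove the one-dimensional two-point (Bonami) inequality, tensorize it to the hypercube, and then pass to Gaussians via the central limit theorem applied to polynomials of normalized sums of Rademacher variables. In the context of this paper, we would simply cite it as a standard result.
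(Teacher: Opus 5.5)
Your proof is correct and is the standard argument. The paper does not prove this itself; it cites O'Donnell, Theorem 11.23, where the bound is derived in the same way. One writes $P=P_sQ$ with $e^{-s}=1/\sqrt{q-1}$, applies $(2,q)$-hypercontractivity of the Ornstein--Uhlenbeck semigroup, and uses orthogonality of the Wiener chaoses to get $\|Q\|_{L_2}\le (q-1)^{k/2}\|P\|_{L_2}$.
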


\noindent
This is the standard degree-\(k\) polynomial-chaos corollary of the Gaussian
hypercontractivity theorem; see O'Donnell~\cite[Theorem~11.23]{odonnell2014analysis}.

\begin{theorem}[Sub-Weibull average bound from moment growth]
\label{thm:subweibull-bernstein-average}
Let \(W_1,\dots,W_m\) be independent mean-zero random variables. Suppose that,
for some \(\rho\ge1\) and \(K>0\),
\[
\|W_r\|_{L_p}\le Kp^\rho
\qquad\text{for every }p\ge2\text{ and every }r\in[m].
\]
Then there is a constant \(C_\rho>0\), depending only on \(\rho\), such that for
every \(u\ge1\),
\[
\Pbb\left(
\left|
\frac1m\sum_{r=1}^m W_r
\right|
>
C_\rho K\left(
\sqrt{\frac{u}{m}}+
\frac{u^\rho}{m}
\right)
\right)
\le 2e^{-u}.
\]
In particular, if \(W_1,\dots,W_m\) are independent centered degree-\(k\)
Gaussian chaoses with \(k\ge2\) and
\[
\|W_r\|_{L_2}\le K_0,
\]
then
\[
\Pbb\left(
\left|
\frac1m\sum_{r=1}^m W_r
\right|
>
C_kK_0\left(
\sqrt{\frac{u}{m}}+
\frac{u^{k/2}}{m}
\right)
\right)
\le 2e^{-u}.
\]
\end{theorem}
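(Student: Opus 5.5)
The plan is the standard moment-to-tail route: Markov's inequality applied to a high moment of the average, with that moment controlled by a Rosenthal/Latała-type bound for sums of independent mean-zero variables whose moments grow like $p^\rho$. Set $S \defeq \frac1m\sum_{r=1}^m W_r$ and fix $u\ge1$. For every $p\ge2$, Markov gives
\[
\Pbb(|S|>t)\le \|S\|_{L_p}^p/t^p .
\]
It therefore suffices to show
\[
\|S\|_{L_p}\le C'_\rho K\Big(\sqrt{p/m}+p^\rho/m\Big).
\]
Then, choosing $p=u$ (or $p=\max(2,u)$) and $t=e\,C'_\rho K(\sqrt{u/m}+u^\rho/m)$, the right-hand side becomes $e^{-p}\le 2e^{-u}$. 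For $u<2$, enlarging $C_\rho$ handles the boundary, since the $p=2$ bound gives tail probability at most $1/e^2\cdot$const.

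To get the moment bound, I would invoke Rosenthal's inequality with its sharp constant dependence (Johnson--Schechtman--Zinn / Latała):
\[
\Big\|\sum_r W_r\Big\|_{L_p}\le C\Big(\sqrt p\,\big(\textstyle\sum_r\|W_r\|_{L_2}^2\big)^{1/2}+p\,\big(\textstyle\sum_r\|W_r\|_{L_p}^p\big)^{1/p}\Big).
\]
The first term is at most $C\sqrt p\,\sqrt m\cdot 2^\rho K$. The second is at most $p\,m^{1/p}Kp^\rho$, which is off by a factor of $p$ and also carries $m^{1/p}$. This step is the main obstacle: a naive Rosenthal application loses a power of $p$ and gives $u^{\rho+1}/m$ instead of $u^\rho/m$.

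To fix this I would instead use a symmetrization and Bernstein–Orlicz argument. The hypothesis means each $W_r$ is sub-Weibull of order $\alpha=1/\rho$, so I would apply the generalized Bernstein inequality of Kuchibhotla--Chakrabortty / Bong--Kuchibhotla (the same result cited in \Cref{lem:weighted-bern-subweib}) with $w_r=1/m$. That gives exactly the two regimes $\sqrt{u/m}$ (variance part, $\nu\asymp K$) and $u^\rho\|w\|_\infty=u^\rho/m$ (the heavy-tail part). If I wanted to avoid citing it, I would truncate at level $\tau=K(e\,u)^\rho$. The bounded part is handled by Bernstein with the $L_2$ bound. For the remainder, a union bound shows that with probability $\ge 1-me^{-cu\cdot}$... this loses $\log m$, so I would prefer the cited theorem; and for $\rho\ge1$ the $m^{1/p}$ issue disappears in that theorem's proof.

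For the "in particular" clause, I would apply \Cref{thm:gaussian-hypercontractivity} with degree $k$: $\|W_r\|_{L_p}\le(p-1)^{k/2}K_0\le K_0p^{k/2}$. This is the hypothesis with $\rho=k/2\ge1$ (using $k\ge2$) and $K=K_0$, so the first part yields the displayed bound with $C_k\defeq C_{k/2}$.
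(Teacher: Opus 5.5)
Your final route matches the paper's proof. You read the moment hypothesis as a sub-Weibull bound $\|W_r\|_{\psi_{1/\rho}}\le C_\rho K$, apply the generalized Bernstein--Orlicz inequality of Kuchibhotla--Chakrabortty with weights $1/m$ to get the $\sqrt{u/m}$ and $u^{\rho}/m$ regimes, and use hypercontractivity with $\rho=k/2$ for the chaos case. Your observation that plain Rosenthal loses a factor of $p$ is also right, so the Rosenthal and truncation detours can simply be dropped.

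The one step you leave implicit is the moment-to-Orlicz conversion. The paper justifies it, with $\alpha=1/\rho$, by bounding $\|W_r\|_{L_{\alpha j}}\le C_\rho K j^{1/\alpha}$ for integers $j\ge1$ and summing the exponential series using $j!\ge (j/e)^j$.
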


\begin{proof}
Set \(\alpha\defeq1/\rho\in(0,1]\). By a standard argument, the moment growth
condition implies the Orlicz-norm bound
\[
\|W_r\|_{\psi_\alpha}
\defeq
\inf\left\{c>0:\E\exp\left((|W_r|/c)^\alpha\right)\le2\right\}
\le
C_\rho K
\qquad\text{for every }r\in[m]:
\]
indeed, monotonicity of \(p\mapsto\|W_r\|_{L_p}\) and the assumption give
\(\|W_r\|_{L_{\alpha j}}\le C_\rho Kj^{1/\alpha}\) for every integer \(j\ge1\),
and expanding the exponential and using \(j!\ge(j/e)^j\) shows
\(\E\exp\left((|W_r|/(A_\rho K))^\alpha\right)\le2\) for \(A_\rho\) sufficiently
large.

Now apply the generalized Bernstein--Orlicz inequality ~\cite[Theorem~3.1]{kuchibhotla2022moving} to
\(Z\defeq\frac1m\sum_{r=1}^mW_r\), with weights \(a_r=1/m\) and
\(b_r\defeq a_r\|W_r\|_{\psi_\alpha}\), so that
\(\|b\|_2\le C_\rho K/\sqrt m\) and \(\|b\|_\infty\le C_\rho K/m\).
The defining tail property of their generalized Bernstein--Orlicz norm yields,
for every \(u\ge1\),
\[
\Pbb\left(
|Z|
>
C_\alpha\left(
\|b\|_2\sqrt u
+
\|b\|_\infty u^{1/\alpha}
\right)
\right)
\le
2e^{-u}.
\]
Substituting the bounds on \(\|b\|_2\) and \(\|b\|_\infty\) and using
\(1/\alpha=\rho\) gives the stated tail bound.

Finally, if \(W_r\) is a centered degree-\(k\) Gaussian chaos with
\(\|W_r\|_{L_2}\le K_0\), Theorem~\ref{thm:gaussian-hypercontractivity} gives
\[
\|W_r\|_{L_p}
\le
(p-1)^{k/2}\|W_r\|_{L_2}
\le
C_kK_0p^{k/2},
\qquad p\ge2.
\]
Apply the first part with \(\rho=k/2\) and \(K=C_kK_0\).
\end{proof}
\begin{theorem}[Operator norm from bilinear forms on nets]
\label{thm:operator-norm-net}
Let \(A\in\mathbb R^{n\times m}\), let \(0<\varepsilon<1/2\), and let
\(\mathcal N\subset \mathbb S^{n-1}\), \(\mathcal M\subset\mathbb S^{m-1}\)
be finite \(\varepsilon\)-nets. Then
\[
\max_{x\in\mathcal N,\;y\in\mathcal M}|x^\top Ay|
\le
\|A\|_{\mathrm{op}}
\le
\frac{1}{1-2\varepsilon}
\max_{x\in\mathcal N,\;y\in\mathcal M}|x^\top Ay|.
\]
In particular, for \(\varepsilon=1/4\),
\[
\|A\|_{\mathrm{op}}
\le
2\max_{x\in\mathcal N,\;y\in\mathcal M}|x^\top Ay|.
\]
Moreover, for every \(d\ge 1\), the sphere \(\mathbb S^{d-1}\) admits a
\(1/4\)-net of cardinality at most \(9^d\). Hence one may choose such nets with
\[
|\mathcal N|\le 9^n
\qquad\text{and}\qquad
|\mathcal M|\le 9^m .
\]
\end{theorem}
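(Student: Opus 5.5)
The plan is to prove the two inequalities by the standard approximation argument and then bound the net size by a volumetric packing argument.

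\textbf{Lower bound.} This is immediate. For any unit vectors $x\in\mathcal N$ and $y\in\mathcal M$, we have $|x^\top Ay|\le \|x\|_2\|Ay\|_2\le \|A\|_{\mathrm{op}}$. Taking the maximum over the finite nets gives the left inequality.

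\textbf{Upper bound.} By compactness of the spheres, choose unit vectors $x^\star\in\mathbb S^{n-1}$ and $y^\star\in\mathbb S^{m-1}$ with $x^{\star\top}Ay^\star=\|A\|_{\mathrm{op}}$. Pick $x_0\in\mathcal N$ and $y_0\in\mathcal M$ with $\|x^\star-x_0\|_2\le\varepsilon$ and $\|y^\star-y_0\|_2\le\varepsilon$. Then decompose
\[
x^{\star\top}Ay^\star = x_0^\top Ay_0 + (x^\star-x_0)^\top A y^\star + x_0^\top A(y^\star-y_0).
\]
Each of the two error terms is bounded by $\varepsilon\|A\|_{\mathrm{op}}$, since the remaining vectors have norm at most $1$. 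This gives
\[
\|A\|_{\mathrm{op}}\le \max_{x\in\mathcal N,\,y\in\mathcal M}|x^\top Ay| + 2\varepsilon\|A\|_{\mathrm{op}}.
\]
Rearranging, which is valid because $1-2\varepsilon>0$, yields the right inequality. Plugging in $\varepsilon=1/4$ gives the factor $2$.

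\textbf{Net cardinality.} Take a maximal $\varepsilon$-separated subset of $\mathbb S^{d-1}$; by maximality it is an $\varepsilon$-net. The balls of radius $\varepsilon/2$ around its points are pairwise disjoint and all lie inside the ball of radius $1+\varepsilon/2$. Comparing volumes gives
\[
|\mathcal N|\le \left(\frac{1+\varepsilon/2}{\varepsilon/2}\right)^d=(1+2/\varepsilon)^d,
\]
which equals $9^d$ at $\varepsilon=1/4$. Applying this with $d=n$ and $d=m$ gives the stated bounds.

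None of these steps is a serious obstacle. The only points needing care are the existence of a maximizer (compactness) and the requirement $\varepsilon<1/2$ for the rearrangement.
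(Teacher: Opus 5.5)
Your proof is correct and is exactly the standard argument the paper relies on: the paper gives no proof of its own and only cites Vershynin (Lemma~4.4.2 for the $1/(1-2\varepsilon)$ approximation bound and Corollary~4.2.11 for the volumetric $(1+2/\varepsilon)^d$ net bound), so there is no alternative route to compare. The one detail you leave implicit is that a maximal $\varepsilon$-separated set exists and is finite; this follows from your own volume comparison, which bounds the cardinality of every $\varepsilon$-separated subset of $\mathbb S^{d-1}$.
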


\noindent
This is Vershynin~\cite[Lemma~4.4.2 and Corollary~4.2.11]{vershynin2026high}.

\begin{theorem}[Laurent--Massart weighted chi-square tail]
\label{thm:laurent-massart}
Let \(Y_1,\dots,Y_D\) be independent standard Gaussian random variables, and let
\(a_1,\dots,a_D\ge0\). Define
\[
Z\defeq \sum_{i=1}^D a_i(Y_i^2-1).
\]
Then, for every \(x>0\),
\[
\Pbb\left(
Z\ge 2\|a\|_2\sqrt x+2\|a\|_\infty x
\right)
\le e^{-x},
\]
and
\[
\Pbb\left(
Z\le -2\|a\|_2\sqrt x
\right)
\le e^{-x}.
\]
In particular, if \(Q\sim\chi^2_D\), then
\[
\Pbb\left(Q-D\ge 2\sqrt{Dx}+2x\right)
\le e^{-x},
\]
and
\[
\Pbb\left(D-Q\ge 2\sqrt{Dx}\right)
\le e^{-x}.
\]
\end{theorem}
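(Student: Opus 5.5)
The plan is the Chernoff (Cramér–Chernoff) method with explicit control of the log-moment generating function of a weighted centered chi-square. We treat the upper and lower tails separately. The special case \(Q\sim\chi^2_D\) then follows by taking \(a_i\equiv 1\), so that \(\|a\|_2=\sqrt D\) and \(\|a\|_\infty=1\).

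\emph{Upper tail.} For a single term and \(0<\lambda<1/(2a_i)\), the Gaussian integral gives exactly
\[
\log\E e^{\lambda a_i(Y_i^2-1)}=-\lambda a_i-\tfrac12\log(1-2\lambda a_i).
\]
Write \(u=\lambda a_i\in[0,1/2)\) and expand \(-\tfrac12\log(1-2u)=\sum_{k\ge1}(2u)^k/(2k)\). Then
\[
-u-\tfrac12\log(1-2u)=\sum_{k\ge2}\frac{(2u)^k}{2k}\le \frac{u^2}{1-2u}.
\]
Summing over \(i\) by independence and bounding \(1-2\lambda a_i\ge 1-2\lambda\|a\|_\infty\), we get for \(0<\lambda<1/(2\|a\|_\infty)\)
\[
\log\E e^{\lambda Z}\le \frac{\lambda^2\|a\|_2^2}{1-2\lambda\|a\|_\infty}.
\]
This says \(Z\) is sub-gamma on the right, with variance factor \(v=2\|a\|_2^2\) and scale \(c=2\|a\|_\infty\).

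\emph{Lower tail.} For \(\lambda>0\),
\[
\log\E e^{-\lambda a_i(Y_i^2-1)}=u-\tfrac12\log(1+2u).
\]
Using \(\log(1+w)\ge w-w^2/2\) for \(w\ge0\), this is at most \(u^2\). Hence \(\log\E e^{-\lambda Z}\le\lambda^2\|a\|_2^2\) for every \(\lambda>0\). Markov's inequality with the optimal choice \(\lambda=\sqrt x/\|a\|_2\) gives \(\Pbb(Z\le -2\|a\|_2\sqrt x)\le e^{-x}\).

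\emph{Inverting the sub-gamma bound.} This is the only step needing care, and it is the main obstacle. Markov's inequality gives, for every admissible \(\lambda\),
\[
\Pbb(Z\ge t)\le\exp\!\Big(-\lambda t+\frac{v\lambda^2}{2(1-c\lambda)}\Big).
\]
We must show that at \(t=\sqrt{2vx}+cx\) the infimum over \(\lambda\) of the exponent is at most \(-x\). Rather than computing the Legendre transform in closed form, we substitute an explicit choice:
\[
\lambda=\frac{\sqrt{2x/v}}{1+c\sqrt{2x/v}}\in[0,1/c).
\]
With this \(\lambda\) one has \(1-c\lambda=1/(1+c\sqrt{2x/v})\). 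Setting \(s=\sqrt{2x/v}\), the exponent simplifies to
\[
-\frac{s(\sqrt{2vx}+cx)}{1+cs}+\frac{vs^2}{2(1+cs)}.
\]
Since \(vs^2/2=x\) and \(s\sqrt{2vx}=2x\), the numerator is \(-2x-csx+x=-x(1+cs)\), so the exponent equals exactly \(-x\).

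Substituting \(v=2\|a\|_2^2\) and \(c=2\|a\|_\infty\) turns the threshold into \(2\|a\|_2\sqrt x+2\|a\|_\infty x\). This proves the first inequality. Combined with the lower-tail bound and the specialization \(a_i\equiv1\), this gives all four stated bounds.
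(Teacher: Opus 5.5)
Your proof is correct. The paper gives no proof of its own here; it only cites Lemma~1 of Laurent--Massart, and your proposal essentially reconstructs their argument. The common steps are:
\begin{itemize}
\item the exact log-Laplace computation;
\item the bound $-u-\tfrac12\log(1-2u)\le u^2/(1-2u)$, which gives the sub-gamma estimate $\log\E e^{\lambda Z}\le \lambda^2\|a\|_2^2/(1-2\lambda\|a\|_\infty)$;
\item the bound $\log\E e^{-\lambda Z}\le\lambda^2\|a\|_2^2$ for the left tail.
\end{itemize}

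The one real difference is the inversion step. Laurent--Massart appeal to a general lemma of Birg\'e--Massart that turns a sub-gamma log-MGF bound into the tail $\sqrt{2vx}+cx$. You instead plug in the explicit choice $\lambda=s/(1+cs)$ and check that the exponent is exactly $-x$. This keeps the proof self-contained at no extra length. Your sub-gamma step is valid because $\lambda a_i\le\lambda\|a\|_\infty<1/2$.

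One caveat should be stated explicitly: both inequalities need $a\neq 0$, and so do your choices $\lambda=s/(1+cs)$ (which needs $v>0$) and $\lambda=\sqrt{x}/\|a\|_2$. If $a=0$, then $Z\equiv 0$ and both thresholds are $0$. Each probability then equals $1>e^{-x}$, so the statement as written fails in this degenerate case. Add the hypothesis $\|a\|_2>0$, or note that $a=0$ is excluded.
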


\noindent
This is Laurent--Massart~\cite[Lemma~1]{laurent2000adaptive}.

\begin{lemma}[Weighted product-Gaussian chaos]
\label{lem:weighted-product-gaussian}
Let \((g_r,h_r)_{r=1}^m\) be independent pairs of independent standard
Gaussians, and let \(a=(a_1,\dots,a_m)\in\R^m\) be deterministic. Then, for every
\(t\ge0\),
\[
\Pbb\left(
\left|
\sum_{r=1}^m a_rg_rh_r
\right|
>
C\left(\|a\|_2\sqrt t+\|a\|_\infty t\right)
\right)
\le 2e^{-t}.
\]
\end{lemma}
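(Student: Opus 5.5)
The plan is to reduce the product chaos to weighted chi-square variables by polarization, and then invoke the Laurent--Massart bound (Theorem~\ref{thm:laurent-massart}). Since $g_r,h_r$ are independent standard Gaussians, the variables $U_r\defeq (g_r+h_r)/\sqrt2$ and $V_r\defeq (g_r-h_r)/\sqrt2$ are again independent standard Gaussians. They are independent across $r$ and of each other, because the map is an orthogonal rotation of each pair. Moreover $g_rh_r=\tfrac12(U_r^2-V_r^2)$, so
\[
\sum_{r=1}^m a_rg_rh_r=\tfrac12\sum_{r=1}^m a_r(U_r^2-1)-\tfrac12\sum_{r=1}^m a_r(V_r^2-1),
\]
where the centering constants cancel.

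Laurent--Massart requires nonnegative weights, so next I split $a=a^+-a^-$ with $a^\pm\ge0$ entrywise. This writes the sum as a signed combination of four terms of the form $\sum_r b_r(Y_r^2-1)$, with $b\in\{a^+,a^-\}$ and $Y\in\{U,V\}$. Each weight vector satisfies $\|b\|_2\le\|a\|_2$ and $\|b\|_\infty\le\|a\|_\infty$.

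For each of the four terms I apply both tails of Theorem~\ref{thm:laurent-massart} at level $x$. With probability at least $1-8e^{-x}$, every term has absolute value at most $2\|a\|_2\sqrt x+2\|a\|_\infty x$. On this event the triangle inequality gives
\[
\Big|\sum_r a_rg_rh_r\Big|\le 4\bigl(\|a\|_2\sqrt x+\|a\|_\infty x\bigr).
\]

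Finally I fix constants by taking $x=t+\log 4$, so that the failure probability is $8e^{-x}=2e^{-t}$. For $t\ge\log 2$ we have $t+\log4\le 3t$, so the threshold is at most $C(\|a\|_2\sqrt t+\|a\|_\infty t)$ with, for instance, $C=4\cdot 3=12$. For $t<\log 2$ the claimed bound $2e^{-t}>1$ holds trivially. The only mildly delicate point is this bookkeeping of the union bound against the target constant $2$ in the tail, and the range split above handles it. An alternative route is to observe that $\|g_rh_r\|_{\psi_1}\lesssim1$ and apply the weighted sub-exponential Bernstein inequality directly, but the polarization argument stays within results already stated in the paper.
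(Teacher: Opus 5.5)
Your proof is correct, and it takes a different route from the paper.

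The paper works directly with the moment generating function. Conditioning on $g$, it computes $\E e^{\lambda gh}=(1-\lambda^2)^{-1/2}$. It then bounds the log-MGF of the weighted sum by $C\lambda^2\|a\|_2^2$ on the range $|\lambda|\le c/\|a\|_\infty$ and optimizes a Chernoff bound over $\lambda$. In effect, it reproves the sub-exponential Bernstein inequality for this particular summand.

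You instead use the polarization identity $g_rh_r=\tfrac12(U_r^2-V_r^2)$ and split $a=a^+-a^-$. This turns the chaos into four weighted chi-square sums with nonnegative weights, and the Chernoff step is delegated to the Laurent--Massart bound the paper already states.

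The two arguments are the same computation seen from two sides. The factorization $(1-\lambda^2)^{-1/2}=(1-\lambda)^{-1/2}(1+\lambda)^{-1/2}$ is exactly the MGF of $\tfrac12(U^2-V^2)$ written as a product of two chi-square MGFs.

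Your route gives fully explicit constants: $C=12$ with exactly the $2e^{-t}$ tail. It also handles the degenerate range $t<\log 2$ cleanly and needs no optimization over $\lambda$. The paper's MGF argument only uses that each $g_rh_r$ has a sub-exponential MGF near the origin. It would therefore carry over unchanged to other sub-exponential summands where no polarization identity reduces things to chi-squares.
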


\begin{proof}
For independent \(g,h\sim N(0,1)\), conditioning on \(g\) gives
\[
\E e^{\lambda gh}
=
\E e^{\lambda^2g^2/2}
=
(1-\lambda^2)^{-1/2},
\qquad |\lambda|<1.
\]
Therefore, for \(|\lambda|\le c/\|a\|_\infty\),
\[
\log\E\exp\left(\lambda\sum_{r=1}^m a_rg_rh_r\right)
=
-\frac12\sum_{r=1}^m\log(1-\lambda^2a_r^2)
\le
C\lambda^2\|a\|_2^2.
\]
If \(a=0\), the claim is trivial. Otherwise, Chernoff's bound gives
\[
\Pbb\left(
\sum_{r=1}^m a_rg_rh_r
>
\nu
\right)
\le
\exp\left(-\lambda\nu+C\lambda^2\|a\|_2^2\right)
\]
for every \(0\le\lambda\le c/\|a\|_\infty\). Optimizing with
\[
\lambda
=
 c\min\left\{\frac{\nu}{\|a\|_2^2},\frac1{\|a\|_\infty}\right\}
\]
gives
\[
\Pbb\left(
\sum_{r=1}^m a_rg_rh_r
>
C(\|a\|_2\sqrt t+\|a\|_\infty t)
\right)
\le e^{-t}.
\]
Apply the same argument to \(-\sum_r a_rg_rh_r\) and union bound.
\end{proof}

\begin{lemma}[Square-sum concentration from a two-level tail]
\label{lem:square-sum-two-level}
Let \(X_1,\dots,X_n\) be independent random variables. Assume that, for constants
\(v>0\), \(r>0\), and \(C_0\ge1\),
\[
\E X_i^2\le v
\]
and, for every \(u\ge1\),
\[
\Pbb\left(
|X_i|>C_0(\sqrt{vu}+ru)
\right)
\le 2e^{-u}.
\]
Assume also that
\[
r^2\le C_0v.
\]
Then, for every \(s\ge1\),
\[
\Pbb\left(
\sum_{i=1}^nX_i^2
>
C\left(nv+vs+r^2s^2\right)
\right)
\le Ce^{-s},
\]
where \(C\) depends only on \(C_0\).
\end{lemma}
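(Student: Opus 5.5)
The plan is to dominate each $X_i^2$ by a deterministic function of a scalar random level $W_i$ with an exponential tail. This splits $\sum_i X_i^2$ into a sub-exponential sum at scale $v$ and a sub-Weibull$(\tfrac12)$ sum at scale $r^2$. Each part is then controlled by Bernstein-type inequalities already available in the paper.

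\textbf{Step 1 (random level).} For each $i$ define
\[
W_i \defeq \inf\bigl\{u\ge1:\ |X_i|\le C_0(\sqrt{vu}+ru)\bigr\}.
\]
The threshold is continuous and increasing in $u$, so the assumed tail gives $\Pbb(W_i>u)\le 2e^{-u}$ for all $u\ge1$. Moreover $|X_i|\le C_0(\sqrt{vW_i}+rW_i)$, hence
\[
X_i^2\le 2C_0^2\bigl(vW_i+r^2W_i^2\bigr).
\]
Each $W_i$ is a measurable function of $X_i$ alone, so the $W_i$ are independent. Integrating the tail gives $\E W_i\le C$, $\E W_i^2\le C$, and moment growth $\|W_i\|_{L_p}\le Cp$ and $\|W_i^2\|_{L_p}\le Cp^2$ for $p\ge2$.

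\textbf{Step 2 (linear part).} The centered variables $W_i-\E W_i$ are independent, mean-zero and sub-exponential with $O(1)$ $\psi_1$-norm. Apply Theorem~\ref{thm:subweibull-bernstein-average} with $\rho=1$, or the standard sub-exponential Bernstein inequality. With probability at least $1-2e^{-s}$,
\[
\sum_i W_i\le Cn+C(\sqrt{ns}+s)\le C(n+s),
\]
using $\sqrt{ns}\le (n+s)/2$. Hence $v\sum_iW_i\le C(nv+vs)$.

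\textbf{Step 3 (quadratic part, the main obstacle).} The terms $W_i^2$ are only sub-Weibull$(\tfrac12)$, so the plain Bernstein inequality does not apply. Instead I would apply Theorem~\ref{thm:subweibull-bernstein-average} to $W_i^2-\E W_i^2$, using the moment bound from Step 1 with $\rho=2$ and $K=O(1)$. For $s\ge1$ this gives, with probability at least $1-2e^{-s}$,
\[
\sum_i W_i^2\le Cn+C(\sqrt{ns}+s^2).
\]
Multiplying by $r^2$ produces the term $r^2s^2$ from the claim, plus $r^2(n+\sqrt{ns})\le r^2(2n+s)$. The hypothesis $r^2\le C_0 v$ absorbs the latter into $C(nv+vs)$. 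This absorption is exactly where the assumption $r^2\le C_0v$ is used. The moment assumption $\E X_i^2\le v$ is not needed beyond what the tail bound already implies.

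\textbf{Step 4 (combine).} Take a union bound over the events of Steps 2 and 3, and combine them with the domination from Step 1. This gives
\[
\sum_i X_i^2\le C\bigl(nv+vs+r^2s^2\bigr)
\]
with probability at least $1-4e^{-s}$, where $C$ depends only on $C_0$. This is the claim.
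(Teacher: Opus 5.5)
Your proposal is correct and follows the paper's proof essentially step for step. The paper defines the same random level $T_i=\inf\{u\ge1:\,|X_i|\le C_0(\sqrt{vu}+ru)\}$, dominates $X_i^2$ by $C(vT_i+r^2T_i^2)$, and applies Theorem~\ref{thm:subweibull-bernstein-average} with $\rho=1$ to $T_i-\E T_i$ and with $\rho=2$ to $T_i^2-\E T_i^2$. It then absorbs $r^2n$ into $nv$ using $r^2\le C_0v$; its only difference is a detour through independent copies of $|X_i|$, which your direct definition from $X_i$ makes unnecessary.
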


\begin{proof}
We first reduce the problem to sums of variables with exponential tails. For each
\(i\), let \(R_i\) have the same distribution as \(|X_i|\), with the variables
\(R_1,\dots,R_n\) independent. Define
\[
T_i
\defeq
\inf\left\{u\ge1:
R_i\le C_0(\sqrt{vu}+ru)
\right\}.
\]
Then, by definition,
\[
R_i\le C_0(\sqrt{vT_i}+rT_i).
\]
Moreover, for every \(u\ge1\), if \(T_i>u\), then
\[
R_i>C_0(\sqrt{vu}+ru),
\]
and hence the assumed tail bound gives
\[
\Pbb(T_i>u)\le 2e^{-u}.
\]
Since the desired estimate depends only on the marginal laws and independence, we may
work with this coupled representation and write, after changing the absolute constant,
\[
|X_i|
\le
C(\sqrt{vT_i}+rT_i),
\qquad
\Pbb(T_i>u)\le2e^{-u},\quad u\ge1.
\]

Squaring the domination gives
\[
X_i^2
\le
C\left(vT_i+r^2T_i^2\right).
\]
Hence
\[
\sum_{i=1}^n X_i^2
\le
C\left(
 v\sum_{i=1}^nT_i+r^2\sum_{i=1}^nT_i^2
\right).
\]
It remains to control the two sums involving \(T_i\).

The tail bound \(\Pbb(T_i>u)\le2e^{-u}\) implies, for every \(p\ge2\),
\[
\|T_i\|_{L_p}
\le Cp,
\qquad
\|T_i^2\|_{L_p}
=
\|T_i\|_{L_{2p}}^2
\le Cp^2.
\]
Consequently,
\[
\|T_i-\E T_i\|_{L_p}\le Cp,
\qquad
\|T_i^2-\E T_i^2\|_{L_p}\le Cp^2.
\]
Also \(\E T_i\le C\) and \(\E T_i^2\le C\). Applying
Theorem~\ref{thm:subweibull-bernstein-average} to the centered variables
\(T_i-\E T_i\) with \(\rho=1\) gives
\[
\Pbb\left(
\sum_{i=1}^nT_i>C(n+s)
\right)
\le Ce^{-s}.
\]
Similarly, applying Theorem~\ref{thm:subweibull-bernstein-average} to
\(T_i^2-\E T_i^2\) with \(\rho=2\) gives
\[
\Pbb\left(
\sum_{i=1}^nT_i^2>C(n+s^2)
\right)
\le Ce^{-s}.
\]
On the intersection of these two events,
\[
\sum_{i=1}^n X_i^2
\le
C\left(v(n+s)+r^2(n+s^2)\right).
\]
Since \(r^2\le C_0v\), the term \(r^2n\) is absorbed by \(vn\). Therefore
\[
\sum_{i=1}^n X_i^2
\le
C\left(nv+vs+r^2s^2\right)
\]
with probability at least \(1-Ce^{-s}\).
\end{proof}

\begin{lemma}[Spherical quadratic-form tail]
\label{lem:spherical-quadratic-tail}
Let \(x\sim\mathrm{Unif}(\Sph)\), and let \(B\in\R^{d\times d}\) be
symmetric and trace-free. Then, for every \(u\ge1\),
\[
\Pbb\left(
|x^\top Bx|
>
C\left(
\frac{\|B\|_F}{d}\sqrt u+
\frac{\|B\|_{\mathrm{op}}}{d}u
\right)
\right)
\le
2e^{-u}.
\]
\end{lemma}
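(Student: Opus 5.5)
The idea is to reduce the spherical quadratic form to a Gaussian one and apply Hanson--Wright. Write $x = g/\|g\|_2$ with $g\sim\mathcal N(0,I_d)$, so that
\[
x^\top Bx = \frac{g^\top B g}{\|g\|_2^2}.
\]
Because $B$ is symmetric and trace-free, $\E[g^\top Bg]=\operatorname{tr}B=0$. The numerator is therefore a centered Gaussian chaos of order two.

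First, control the numerator. Diagonalize $B=\sum_k\lambda_k u_ku_k^\top$ with orthonormal $u_k$. By rotation invariance,
\[
g^\top Bg \stackrel{d}{=} \sum_k\lambda_k(Y_k^2-1),
\]
using $\sum_k\lambda_k=0$. Split the $\lambda_k$ into positive and negative parts and apply Theorem~\ref{thm:laurent-massart} to each part, taking the upper tail of each. This gives
\[
\Pbb\Bigl(|g^\top Bg|>C(\|B\|_F\sqrt u+\|B\|_{\mathrm{op}}u)\Bigr)\le 4e^{-u},
\]
since $\|\lambda_\pm\|_2\le\|B\|_F$ and $\|\lambda_\pm\|_\infty\le\|B\|_{\mathrm{op}}$. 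This is the standard Hanson--Wright bound.

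Second, control the denominator. The lower tail of Theorem~\ref{thm:laurent-massart} with $D=d$ gives $\|g\|_2^2\ge d-2\sqrt{du}$ with probability at least $1-e^{-u}$. When $u\le d/16$, this yields $\|g\|_2^2\ge d/2$, and dividing gives the claimed bound with a constant factor of $2$.

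The remaining case, $u>d/16$, is the main obstacle, because the denominator is no longer guaranteed to be of order $d$. In this regime I would use the trivial deterministic bound $|x^\top Bx|\le\|B\|_{\mathrm{op}}$. The target threshold already exceeds this: it is at least $C\|B\|_{\mathrm{op}}u/d\ge C\|B\|_{\mathrm{op}}/16$, so taking $C\ge16$ makes the probability zero.

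Finally, combine the two cases with a union bound. This gives failure probability at most $5e^{-u}$. Adjust the constants to get $2e^{-u}$: replace $u$ by $u+\log 3$ and use $u\ge1$ to absorb $\log 3$ into $C$.
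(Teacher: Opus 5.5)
Your proof is correct and follows essentially the same route as the paper's. Both write $x=g/\|g\|_2$, apply Laurent--Massart to the positive and negative eigenvalue parts of $g^\top Bg$, use the chi-square lower tail to get $\|g\|_2^2\ge d/2$ when $u\lesssim d$, and use the deterministic bound $|x^\top Bx|\le\|B\|_{\mathrm{op}}$ when $u\gtrsim d$; the only difference is how the constant $2e^{-u}$ is recovered (you shift $u\mapsto u+\log 3$, the paper rescales $u\mapsto au$). One small wording fix: the two-sided numerator bound needs both the upper and the lower Laurent--Massart tails of each part, which gives four one-sided events and accounts for your $4e^{-u}$, not just the upper tails.
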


\begin{proof}
Write
\[
x=\frac{g}{\|g\|_2},
\qquad
 g\sim\mathcal N(0,I_d).
\]
Diagonalize \(B=Q\operatorname{diag}(\lambda_1,\dots,\lambda_d)Q^\top\). By
rotational invariance,
\[
g^\top Bg
\stackrel{d}{=}
\sum_{i=1}^d\lambda_i g_i^2.
\]
Since \(\operatorname{tr}B=\sum_i\lambda_i=0\),
\[
g^\top Bg
=
\sum_{i=1}^d\lambda_i(g_i^2-1).
\]
Apply Theorem~\ref{thm:laurent-massart} to the positive and negative parts of the
weights \(\lambda_i\). Equivalently, apply it to the two weighted sums associated
with \(\lambda_i^+\) and \(\lambda_i^-\). This gives, for all \(u\ge1\),
\[
\Pbb\left(
|g^\top Bg|
>
C\left(\|B\|_F\sqrt u+\|B\|_{\mathrm{op}}u\right)
\right)
\le
2e^{-u}.
\]
Choose a numerical constant \(a\ge1\), to be fixed below. Applying the above
Gaussian quadratic-form bound with \(au\) in place of \(u\), we get
\[
\Pbb\left(
|g^\top Bg|
>
C_a\left(\|B\|_F\sqrt u+\|B\|_{\mathrm{op}}u\right)
\right)
\le 2e^{-au}.
\]
Also, by the lower-tail part of Theorem~\ref{thm:laurent-massart} applied to
\(\|g\|_2^2\sim\chi_d^2\),
\[
\Pbb\left(\|g\|_2^2<d/2\right)
\le e^{-cd}.
\]
If \(u\le c_1d\), with \(c_1>0\) chosen small enough, then
\(e^{-cd}\le e^{-2u}\). On the event \(\|g\|_2^2\ge d/2\),
\[
|x^\top Bx|
=
\frac{|g^\top Bg|}{\|g\|_2^2}
\le
C_a\left(
\frac{\|B\|_F}{d}\sqrt u+
\frac{\|B\|_{\mathrm{op}}}{d}u
\right).
\]
Thus, for \(u\le c_1d\), the failure probability is at most
\(2e^{-au}+e^{-2u}\). Choosing \(a\) large enough and enlarging the constant in
the threshold gives failure probability at most \(2e^{-u}\).
If \(u>c_1d\), then the same inequality holds deterministically after
enlarging \(C\), because
\[
|x^\top Bx|\le\|B\|_{\mathrm{op}}
\le
C\frac{\|B\|_{\mathrm{op}}}{d}u.
\]
This proves the claim.
\end{proof}

\begin{lemma}[Spherical quadratic square-sum bound]
\label{lem:spherical-quad-square-sum}
Let \(x_1,\dots,x_n\) be i.i.d. uniform on \(\Sph\subset\R^d\). Let
\(B\in\R^{d\times d}\) be deterministic, symmetric, and trace-free, and let
\(\beta\in\R\). Define
\[
X_\ell\defeq x_\ell^\top Bx_\ell+\beta.
\]
Set
\[
v\defeq \frac{2\|B\|_F^2}{d(d+2)}+\beta^2,
\qquad
r\defeq \frac{\|B\|_{\mathrm{op}}}{d}+|\beta|.
\]
Then, for every \(s\ge1\),
\[
\Pbb\left(
\sum_{\ell=1}^nX_\ell^2
>
C\left(nv+vs+r^2s^2\right)
\right)
\le
e^{-s}.
\]
\end{lemma}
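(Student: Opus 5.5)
The plan is to reduce the statement to \Cref{lem:square-sum-two-level}: there the $X_\ell$ are independent, and it is enough to bound $\E X_\ell^2\le Cv$, to establish a two-level tail with parameters $(v,r)$, and to check $r^2\le C_0 v$. The $x_\ell$ are i.i.d.\ and $B,\beta$ are deterministic, so the $X_\ell$ are i.i.d.\ and independence holds. The resulting bound fails with probability $Ce^{-s}$. To turn this into $e^{-s}$, apply the lemma at level $s'=s+\log C$ and absorb the change into the constant $C$ in the threshold, using $s\ge 1$ so that $s'\lesssim s$.

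\emph{Second moment.} Since $\operatorname{tr}B=0$ and $\E[xx^\top]=I_d/d$, the cross term vanishes: $\E[x^\top Bx]=0$. Hence $\E X_\ell^2=\E[(x^\top Bx)^2]+\beta^2$. Apply the standard spherical fourth-moment identity
\[
\E[(x^\top Bx)^2]=\frac{(\operatorname{tr}B)^2+2\|B\|_F^2}{d(d+2)}
\]
for symmetric $B$. With $\operatorname{tr}B=0$ this gives exactly $\E X_\ell^2=v$, which explains why $v$ is defined as it is.

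\emph{Two-level tail.} \Cref{lem:spherical-quadratic-tail} gives, for $u\ge1$,
\[
|x^\top Bx|\le C\Big(\frac{\|B\|_F}{d}\sqrt u+\frac{\|B\|_{\mathrm{op}}}{d}u\Big)
\]
with probability at least $1-2e^{-u}$. Bound the two pieces separately. First, $\|B\|_F/d\le \sqrt{3}\,\|B\|_F/\sqrt{d(d+2)}\lesssim \sqrt v$. Second, $\|B\|_{\mathrm{op}}/d\le r$. For the shift, use $|\beta|\le |\beta| u\le ru$ when $u\ge1$. Together these give $|X_\ell|\le C_0(\sqrt{vu}+ru)$ with probability at least $1-2e^{-u}$.

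\emph{Compatibility condition.} This step needs a little care: we must show $r^2\le C_0v$. The $\beta$ part is immediate, since $\beta^2\le v$. For the $B$ part, use $\|B\|_{\mathrm{op}}\le\|B\|_F$, which gives
\[
\frac{\|B\|_{\mathrm{op}}^2}{d^2}\le\frac{\|B\|_F^2}{d^2}\le \frac{3\|B\|_F^2}{d(d+2)}\lesssim v.
\]
Hence $r^2\le 2\big(\|B\|_{\mathrm{op}}^2/d^2+\beta^2\big)\le C_0 v$. All hypotheses of \Cref{lem:square-sum-two-level} now hold, and it yields the claimed bound $C(nv+vs+r^2s^2)$.
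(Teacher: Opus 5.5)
Your proposal is correct and follows the paper's proof essentially step for step. You compute $\E X_\ell^2=v$ via the spherical fourth-moment identity, transfer the tail from \Cref{lem:spherical-quadratic-tail}, check $r^2\le C_0v$, apply \Cref{lem:square-sum-two-level}, and shift $s$ by a constant to reach failure probability $e^{-s}$; the only cosmetic difference is that you handle the shift $\beta$ via $|\beta|\le ru$, whereas the paper uses $|\beta|\le\sqrt v\le\sqrt{vu}$, and both are valid for $u\ge1$.
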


\begin{proof}
For \(x\sim\mathrm{Unif}(\Sph)\), isotropy gives
\[
\E[x^\top Bx]=\frac{\operatorname{tr}B}{d}=0.
\]
The fourth-moment identity for the sphere gives
\[
\E[(x^\top Bx)^2]
=
\frac{(\operatorname{tr}B)^2+2\|B\|_F^2}{d(d+2)}
=
\frac{2\|B\|_F^2}{d(d+2)}.
\]
Therefore
\[
\E X_\ell^2=v.
\]
Moreover,
\[
\left(\frac{\|B\|_{\mathrm{op}}}{d}\right)^2
\le
\frac{\|B\|_F^2}{d^2}
\le
C\frac{2\|B\|_F^2}{d(d+2)}
\le
Cv,
\]
and \(\beta^2\le v\), so
\[
r^2\le Cv.
\]
By Lemma~\ref{lem:spherical-quadratic-tail}, for every \(u\ge1\),
\[
\Pbb\left(
|x^\top Bx|
>
C\left(
\frac{\|B\|_F}{d}\sqrt u+
\frac{\|B\|_{\mathrm{op}}}{d}u
\right)
\right)
\le 2e^{-u}.
\]
Since \(\|B\|_F/d\le C\sqrt v\), \(\|B\|_{\mathrm{op}}/d\le r\), and
\(|\beta|\le\sqrt v\), this implies
\[
\Pbb\left(
|X_\ell|>C(\sqrt{vu}+ru)
\right)
\le 2e^{-u}.
\]
Lemma~\ref{lem:square-sum-two-level} applies because \(r^2\le Cv\). It gives
\[
\Pbb\left(
\sum_{\ell=1}^nX_\ell^2
>
C\left(nv+vs+r^2s^2\right)
\right)
\le Ce^{-s}.
\]
Replacing \(s\) in the equation above by \(s+c_0\), for a sufficiently large
absolute constant \(c_0\), and enlarging \(C\), changes the right-hand side to
\(e^{-s}\) and leaves the threshold in the same form. This proves the
claim.
\end{proof}

\begin{lemma}[Scalar weight regularity for one bilinear column]
\label{lem:scalar-weight-regularity}
Let \(\alpha_1,\dots,\alpha_m,\beta_1,\dots,\beta_m\) be independent standard
Gaussians, and define
\[
\gamma_r=\alpha_r\beta_r,
\qquad
\eta_r=\alpha_r\beta_r^2,
\qquad
\zeta_r=\alpha_r^2\beta_r.
\]
If \(m\ge C_*L^3\), then, with probability at least \(1-Ce^{-L}\),
\[
\left|
\frac1m\sum_{r=1}^m(\alpha_r^2\beta_r^2-1)
\right|
\le
C\left(\sqrt{\frac Lm}+\frac{L^2}{m}\right),
\]
\[
\sum_{r=1}^m\gamma_r^2\le Cm,
\qquad
\sum_{r=1}^m\eta_r^2\le Cm,
\qquad
\sum_{r=1}^m\zeta_r^2\le Cm,
\]
and
\[
\|\gamma\|_\infty^2L\le Cm.
\]
\end{lemma}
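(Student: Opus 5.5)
The plan is to treat each of the five claims as an instance of one concentration tool and finish with a union bound over the five events, each of failure probability at most \(Ce^{-L}\). Every quantity in the statement is a polynomial in the independent Gaussians \((\alpha_r,\beta_r)\). So for the averaged quantities I would use Theorem~\ref{thm:subweibull-bernstein-average}, in its Gaussian-chaos form, with \(u=L\).

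\textbf{Degree-four average.} Set \(W_r\defeq \alpha_r^2\beta_r^2-1\). This is a centered degree-\(4\) Gaussian chaos with \(\|W_r\|_{L_2}^2=\E[\alpha^4]\E[\beta^4]-1=8\). Theorem~\ref{thm:subweibull-bernstein-average} with \(k=4\) and \(u=L\) then gives
\[
\Big|\tfrac1m\textstyle\sum_r(\alpha_r^2\beta_r^2-1)\Big|\le C\big(\sqrt{L/m}+L^2/m\big)
\]
with probability at least \(1-2e^{-L}\), which is the first claim.

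\textbf{Sums of squares.} The bound on \(\sum_r\gamma_r^2\) is immediate from the first claim, since \(\sum_r\gamma_r^2=m\bigl(1+\tfrac1m\sum_r W_r\bigr)\). The hypothesis \(m\ge C_*L^3\) gives \(\sqrt{L/m}\le1\) and \(L^2/m\le1\), so \(\sum_r\gamma_r^2\le Cm\).

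For \(\eta_r^2=\alpha_r^2\beta_r^4\), I would center at \(\E[\eta_r^2]=3\). The centered variable is a degree-\(6\) chaos with bounded \(L_2\) norm, since \(\E[\alpha^4\beta^8]=3\cdot105\). Theorem~\ref{thm:subweibull-bernstein-average} with \(k=6\) gives
\[
\sum_r\eta_r^2\le 3m+Cm\big(\sqrt{L/m}+L^3/m\big).
\]
Here \(m\ge C_*L^3\) is exactly what makes the \(L^3/m\) term \(O(1)\), so \(\sum_r\eta_r^2\le Cm\). The bound for \(\zeta_r\) follows by the symmetric argument with the roles of \(\alpha\) and \(\beta\) swapped.

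\textbf{Sup-norm bound (main obstacle).} The last claim is the one I expect to need care. The crude estimate \(\|\gamma\|_\infty^2\le\sum_r\gamma_r^2\le Cm\) loses a factor of \(L\), so it is not enough; we need a genuine maximal inequality. For a single coordinate,
\[
\Pbb(|\alpha\beta|>t)\le\Pbb(|\alpha|>\sqrt t)+\Pbb(|\beta|>\sqrt t)\le 4e^{-t/2}.
\]
A union bound over \(r\in[m]\) with \(t=2(L+\log m)\) then gives \(\|\gamma\|_\infty\le 2(L+\log m)\) with failure probability at most \(4e^{-L}\). On that event,
\[
\|\gamma\|_\infty^2L\le C\big(L^3+L\log^2 m\big).
\]
The first term is at most \(Cm\) because \(m\ge C_*L^3\). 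For the second, \(L\le m^{1/3}\) and \(\log^2 m\le Cm^{2/3}\), so \(L\log^2m\le Cm\). Hence \(\|\gamma\|_\infty^2L\le Cm\).

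\textbf{Conclusion.} Intersecting the five events yields all claims simultaneously with probability at least \(1-Ce^{-L}\).
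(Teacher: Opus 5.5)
Your proposal is correct and follows essentially the same route as the paper: hypercontractivity plus Theorem~\ref{thm:subweibull-bernstein-average} with $\rho=2$ and $\rho=3$ for the averaged quantities, and a union bound over $r\in[m]$ using an exponential tail for $|\alpha\beta|$ for the sup-norm claim. The differences are cosmetic. You read off $\sum_r\gamma_r^2\le Cm$ directly from the first claim, since $\gamma_r^2-1=\alpha_r^2\beta_r^2-1$. For the sup-norm step, you fix the threshold $2(L+\log m)$ first and then check $\|\gamma\|_\infty^2 L\lesssim m$, whereas the paper fixes the threshold $\sqrt{Cm/L}$ and checks that the tail exponent dominates $L+\log(2m)$.
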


\begin{proof}
The variable \(\alpha^2\beta^2-1\) is a centered degree-four Gaussian polynomial
with bounded \(L_2\)-norm. By Theorem~\ref{thm:gaussian-hypercontractivity},
\[
\|\alpha^2\beta^2-1\|_{L_p}
\le
Cp^2,
\qquad p\ge2.
\]
Applying Theorem~\ref{thm:subweibull-bernstein-average} with \(\rho=2\) gives
\[
\left|
\frac1m\sum_{r=1}^m(\alpha_r^2\beta_r^2-1)
\right|
\le
C\left(\sqrt{\frac Lm}+\frac{L^2}{m}\right)
\]
with probability at least \(1-2e^{-L}\).

Next,
\[
\E\gamma_r^2=1,
\qquad
\E\eta_r^2=3,
\qquad
\E\zeta_r^2=3.
\]
The centered variables
\[
\gamma_r^2-1,
\qquad
\eta_r^2-3,
\qquad
\zeta_r^2-3
\]
are centered Gaussian polynomials of degrees \(4,6,6\), respectively, with bounded
\(L_2\)-norms. Applying Theorem~\ref{thm:subweibull-bernstein-average} with
\(\rho=2\) for \(\gamma_r^2-1\) and \(\rho=3\) for the degree-six terms gives
\[
\frac1m\sum_{r=1}^m\gamma_r^2\le C,
\qquad
\frac1m\sum_{r=1}^m\eta_r^2\le C,
\qquad
\frac1m\sum_{r=1}^m\zeta_r^2\le C
\]
with probability at least \(1-Ce^{-L}\), provided \(m\ge C_*L^3\).

It remains to control \(\|\gamma\|_\infty\). Since
\[
2|\alpha\beta|\le \alpha^2+\beta^2,
\]
and \(\alpha^2+\beta^2\sim\chi^2_2\), there is an absolute constant \(c>0\) such
that
\[
\Pbb(|\alpha\beta|>u)\le 2e^{-cu}.
\]
Thus
\[
\Pbb\left(\|\gamma\|_\infty^2L>Cm\right)
\le
2m\exp\left(-c\sqrt{\frac{Cm}{L}}\right).
\]
Since \(m\ge C_*L^3\), write \(m=L^3y\) with \(y\ge C_*\). Then
\[
\sqrt{\frac{m}{L}}=L\sqrt y,
\]
whereas
\[
L+\log(2m)+1
=
L+\log(2L^3y)+1
\le
C(L+\log y)
\le
CL\sqrt y.
\]
Choosing the absolute constants \(C_*\) and \(C\) sufficiently large therefore ensures
\[
c\sqrt{\frac{Cm}{L}}
\ge
L+\log(2m)+1.
\]
Therefore
\[
2m\exp\left(-c\sqrt{\frac{Cm}{L}}\right)
\le e^{-L}.
\]
A union bound over the above events completes the proof.
\end{proof}

\begin{lemma}[Weighted Gaussian product bounds]
\label{lem:weighted-gaussian-product}
Let \(U,V\in\R^{n\times m}\) have independent standard Gaussian entries, and let
\(\Gamma=\operatorname{diag}(\gamma_1,\dots,\gamma_m)\) be deterministic. Then,
for every \(s\ge1\), with probability at least \(1-4e^{-s}\),
\[
\|U\Gamma V^\top\|_{\mathrm{op}}
\le
C\left(
\|\gamma\|_2\sqrt{n+s}+\|\gamma\|_\infty(n+s)
\right),
\]
and
\[
\left|\operatorname{tr}(U\Gamma V^\top)\right|
\le
C\left(\sqrt n\,\|\gamma\|_2\sqrt s+\|\gamma\|_\infty s\right).
\]
Furthermore, if
\[
\|\gamma\|_2^2\le C_0m,
\qquad
\|\gamma\|_\infty^2s\le C_0m,
\qquad
n\ge s,
\]
then, with probability at least \(1-6e^{-s}\),
\[
\|U\Gamma V^\top\|_F^2\le Cn^2m.
\]
\end{lemma}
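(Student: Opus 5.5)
We prove the three claims of \Cref{lem:weighted-gaussian-product} separately, each time conditioning on one Gaussian factor so that the bilinear form becomes a weighted product-Gaussian chaos. Write $M\defeq U\Gamma V^\top=\sum_{r=1}^m\gamma_r u_rv_r^\top$, where $u_r,v_r\in\R^n$ are the columns of $U,V$; these are independent, with i.i.d.\ standard Gaussian entries.

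\textbf{Operator norm.} We use \Cref{thm:operator-norm-net} with $1/4$-nets $\mathcal N,\mathcal M\subset\mathbb S^{n-1}$ of size at most $9^n$ each. For fixed unit $x,y$,
\[
x^\top My=\sum_r\gamma_r g_rh_r,\qquad g_r\defeq u_r^\top x,\quad h_r\defeq v_r^\top y,
\]
and the pairs $(g_r,h_r)$ are i.i.d.\ standard Gaussian pairs with independent coordinates. \Cref{lem:weighted-product-gaussian} then gives, with failure probability at most $2e^{-t}$,
\[
|x^\top My|\le C\bigl(\|\gamma\|_2\sqrt t+\|\gamma\|_\infty t\bigr).
\]
We take $t=s+2n\log 9+\log 2$ and union bound over the $81^n$ net pairs, so the total failure probability is at most $2e^{-s}$. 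Since $t\asymp n+s$, \Cref{thm:operator-norm-net} yields the operator-norm bound with an extra factor $2$ absorbed into $C$.

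\textbf{Trace.} We have $\operatorname{tr}M=\sum_r\gamma_r\langle u_r,v_r\rangle=\sum_r\sum_{k=1}^n\gamma_r U_{kr}V_{kr}$. This is a weighted product-Gaussian chaos over $nm$ independent pairs with weight vector $a=(\gamma_r)_{r,k}$, so $\|a\|_2=\sqrt n\|\gamma\|_2$ and $\|a\|_\infty=\|\gamma\|_\infty$. \Cref{lem:weighted-product-gaussian} with $t=s$ gives the trace bound with failure probability $2e^{-s}$. Together with the operator-norm event, the total failure probability is at most $4e^{-s}$.

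\textbf{Frobenius norm.} We write $\|M\|_F^2=\sum_{k=1}^n\|M^\top e_k\|_2^2$ and condition on $U$. For each $k$, $M^\top e_k=V\Gamma U^\top e_k=Vw_k$ with $w_k\defeq\Gamma U^\top e_k\in\R^m$, whose entries are $\gamma_rU_{kr}$. Given $U$, each entry of $Vw_k$ is $N(0,\|w_k\|_2^2)$, so
\[
\|M\|_F^2=\sum_{k,\ell}\|w_k\|_2^2\,Z_{k\ell}^2,\qquad Z_{k\ell}\ \text{i.i.d.}\ N(0,1),
\]
which is a weighted $\chi^2$ with $n^2$ terms. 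By \Cref{thm:laurent-massart}, this is at most $C\bigl(n\sum_k\|w_k\|^2+\sqrt{n}\sqrt{\sum_k\|w_k\|^4}\sqrt s+\max_k\|w_k\|^2 s\bigr)$, which is at most $Cn\bigl(\sum_k\|w_k\|^2+s\max_k\|w_k\|^2\bigr)$ with failure probability $e^{-s}$.

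It remains to control the $w_k$. First,
\[
\|w_k\|_2^2=\sum_r\gamma_r^2U_{kr}^2,
\]
and by \Cref{thm:laurent-massart} each is at most $C\bigl(\|\gamma\|_2^2+\|\gamma\|_2^2\sqrt{u}+\|\gamma\|_\infty^2u\bigr)$ with failure probability $e^{-u}$. Taking $u=s+\log n$ and union bounding over $k$, we get
\[
\max_k\|w_k\|^2\le C\bigl(\|\gamma\|_2^2\sqrt{s+\log n}+\|\gamma\|_\infty^2(s+\log n)\bigr).
\]
Second, $\sum_k\|w_k\|^2=\sum_r\gamma_r^2\chi^2_{n,r}$, and \Cref{thm:laurent-massart} gives $\sum_k\|w_k\|^2\le C\bigl(n\|\gamma\|_2^2+\|\gamma\|_\infty^2 s\bigr)$.

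Under the hypotheses $\|\gamma\|_2^2\le C_0m$, $\|\gamma\|_\infty^2 s\le C_0m$ and $n\ge s$, the sum term is $Cnm$, so $n\sum_k\|w_k\|^2\le Cn^2m$. The remaining task is $ns\max_k\|w_k\|^2\lesssim n^2m$, i.e.\ $s\max_k\|w_k\|^2\lesssim nm$.

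\textbf{Main obstacle.} Bounding the max term naively leaves stray factors $s\sqrt{s+\log n}$ and $\log n$ that do not obviously fit inside $nm$. To avoid them we do not use the max at all. We bound $\sqrt{\sum_k\|w_k\|^4}$ directly by $\|\gamma\|_\infty\sqrt{\sum_k\|w_k\|^2}\cdot\max_{k,r}|U_{kr}|$, or alternatively apply Laurent--Massart on the full $n^2$-term chaos via the crude estimate $\max_k\|w_k\|^2\le\sum_k\|w_k\|^2\le Cnm$. With the latter, the cross term is at most $\sqrt n\sqrt{nm\cdot nm}\sqrt s\le n^{3/2}m\sqrt s\le n^2m$ because $s\le n$, and the last term is $snm\le n^2m$. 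This crude route suffices and costs only one additional event of probability $e^{-s}$, for at most $6e^{-s}$ overall.
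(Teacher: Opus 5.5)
Your operator-norm and trace steps are the paper's argument. For the operator norm you use $1/4$-nets of size $9^n$ with $t\asymp s+n$ together with \Cref{lem:weighted-product-gaussian}. For the trace you treat it as a product-Gaussian chaos with the weights $\gamma_r$ repeated $n$ times.

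\textbf{The Frobenius step rests on a false distributional identity.} Conditionally on $U$, the entries are $M_{k\ell}=\langle w_k,g_\ell\rangle$, where $g_\ell\in\R^m$ is the $\ell$-th row of $V$. For fixed $\ell$ and $k\neq k'$, two entries share the same $g_\ell$. Their conditional covariance is $\langle w_k,w_{k'}\rangle=\sum_r\gamma_r^2U_{kr}U_{k'r}$, which is almost surely nonzero. So the $Z_{k\ell}=M_{k\ell}/\|w_k\|_2$ are not independent, and $\|M\|_F^2$ is not a weighted $\chi^2$ with weights $\|w_k\|_2^2$.

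The correct representation is $\|M\|_F^2=\sum_{\ell=1}^n g_\ell^\top R\,g_\ell$ with $R=\sum_k w_kw_k^\top=\Gamma U^\top U\Gamma$. After diagonalizing $R$, this is a weighted $\chi^2$ whose weights are the eigenvalues of $R$, each repeated $n$ times. \Cref{thm:laurent-massart} then gives the bound $n\operatorname{tr}R+2\sqrt{n\operatorname{tr}(R^2)\,s}+2\|R\|_{\mathrm{op}}s$. Note that $\operatorname{tr}(R^2)=\sum_{k,k'}\langle w_k,w_{k'}\rangle^2\ge\sum_k\|w_k\|_2^4$ and $\|R\|_{\mathrm{op}}\ge\max_k\|w_k\|_2^2$. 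Hence your three-term bound understates the true fluctuation terms. In particular, any refined route through $\max_k\|w_k\|_2^2$, such as the union bound over $k$ you considered, is not justified.

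\textbf{The repair is immediate, and it is exactly the paper's proof.} Your Laurent--Massart bound gives $\operatorname{tr}R=\|U\Gamma\|_F^2=\sum_k\|w_k\|_2^2\le Cnm$. Since $R\succeq 0$, it follows that $\|R\|_{\mathrm{op}}\le\operatorname{tr}R\le Cnm$ and $\operatorname{tr}(R^2)\le\|R\|_{\mathrm{op}}\operatorname{tr}R\le Cn^2m^2$. These are precisely the numbers your crude route plugs in. So, using $s\le n$, the three terms are at most $Cn^2m$, $Cn^{3/2}m\sqrt s\le Cn^2m$, and $Cnms\le Cn^2m$. The probability count $6e^{-s}$ is unchanged.

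One smaller point: keep the three terms separate, as you do at the end. Your earlier AM--GM collapse to $Cn\bigl(\sum_k\|w_k\|_2^2+s\max_k\|w_k\|_2^2\bigr)$ would require $s\max_k\|w_k\|_2^2\lesssim nm$. The crude estimate gives that only up to a factor of $s$.
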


\begin{proof}
For fixed \(x,y\in\mathbb S^{n-1}\),
\[
x^\top U\Gamma V^\top y
=
\sum_{r=1}^m \gamma_r (u_r^\top x)(v_r^\top y),
\]
where \(u_r,v_r\) are the columns of \(U,V\). The variables
\((u_r^\top x)(v_r^\top y)\) are independent products of independent standard
Gaussians. Lemma~\ref{lem:weighted-product-gaussian} gives, for every \(t\ge0\),
\[
\Pbb\left(
|x^\top U\Gamma V^\top y|
>
C(\|\gamma\|_2\sqrt t+\|\gamma\|_\infty t)
\right)
\le
2e^{-t}.
\]
Take \(1/4\)-nets \(\mathcal N,\mathcal M\) of \(\mathbb S^{n-1}\) with cardinalities
at most \(9^n\). Set \(t=s+2n\log 9\), union bound over
\(\mathcal N\times\mathcal M\), and apply Theorem~\ref{thm:operator-norm-net}.
This proves the operator bound.

For the trace,
\[
\operatorname{tr}(U\Gamma V^\top)
=
\sum_{\ell=1}^n\sum_{r=1}^m\gamma_rU_{\ell r}V_{\ell r}.
\]
This is again a weighted sum of products of independent standard Gaussians, with
weights \(\gamma_r\) repeated \(n\) times. Lemma~\ref{lem:weighted-product-gaussian}
therefore gives the stated trace bound.

For the Frobenius bound, first note
\[
\|U\Gamma\|_F^2
=
\sum_{r=1}^m\gamma_r^2\|u_r\|_2^2.
\]
This is a weighted chi-square variable with mean \(n\|\gamma\|_2^2\le Cnm\). By
Theorem~\ref{thm:laurent-massart}, using
\(\|\gamma\|_\infty^2s\le C_0m\), we get
\[
\|U\Gamma\|_F^2\le Cnm
\]
with probability at least \(1-e^{-s}\).

Condition on \(U,\Gamma\), and set
\[
R\defeq \Gamma U^\top U\Gamma.
\]
Then
\[
\|U\Gamma V^\top\|_F^2
=
\operatorname{tr}(VRV^\top).
\]
Writing the rows of \(V\) as \(g_1,\dots,g_n\in\R^m\),
\[
\operatorname{tr}(VRV^\top)
=
\sum_{\ell=1}^n g_\ell^\top Rg_\ell.
\]
Diagonalize \(R\), and apply Theorem~\ref{thm:laurent-massart}. With conditional
probability at least \(1-e^{-s}\),
\[
\operatorname{tr}(VRV^\top)
\le
n\operatorname{tr}R
+2\sqrt{n\operatorname{tr}(R^2)s}
+2\|R\|_{\mathrm{op}}s.
\]
On \(\{\|U\Gamma\|_F^2\le Cnm\}\),
\[
\operatorname{tr}R=\|U\Gamma\|_F^2\le Cnm,
\]
\[
\|R\|_{\mathrm{op}}\le \operatorname{tr}R\le Cnm,
\]
and
\[
\operatorname{tr}(R^2)
\le
\|R\|_{\mathrm{op}}\operatorname{tr}R
\le
Cn^2m^2.
\]
Since \(n\ge s\), all three terms are bounded by \(Cn^2m\). Thus
\[
\|U\Gamma V^\top\|_F^2\le Cn^2m.
\]
Union bounding the events completes the proof.
\end{proof}

\begin{lemma}[Bilinear one-column matrix regularity]
\label{lem:bilinear-column-matrix-regularity}
Fix \(\delta\in(0,1)\), and define
\[
L\defeq \log\left(\frac{C_0F^2}{\delta}\right).
\]
Assume
\[
m\le \frac{d^2}{L},
\qquad
m\ge C_*L^3.
\]
Let \(\va_1,\vb_1,\dots,\va_m,\vb_m\in\R^d\) be i.i.d.\ standard Gaussian
vectors---the bilinear random features of \cref{lem:bilinear-sketched-k2}.
For \(q\in\Sph\), define
\[
A(q)
\defeq
\frac1{2m}\sum_{r=1}^m
(\va_r^\top q)(\vb_r^\top q)(\va_r\vb_r^\top+\vb_r\va_r^\top),
\]
and
\[
N(q)\defeq A(q)-qq^\top.
\]
Let \(\vk_1,\dots,\vk_F\) be i.i.d. uniform on \(\Sph\), independent of the
features. Then, with probability at least \(1-\delta/4\), simultaneously for all
\(i\in[F]\),
\[
\|N(\vk_i)\|_F^2\le C\frac{d^2}{m},
\]
\[
\|N(\vk_i)\|_{\mathrm{op}}
\le
C\left(\sqrt{\frac dm}+\frac{d}{\sqrt{mL}}\right),
\]
and
\[
|\operatorname{tr}N(\vk_i)|
\le
C\sqrt{\frac{dL}{m}}.
\]
\end{lemma}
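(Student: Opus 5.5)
The plan is to fix one key $q=\vk_i$, prove the three bounds for $N(q)$ with failure probability $O(e^{-L})$, and then union bound over $i\in[F]$. Since $Fe^{-L}=\delta/(C_0F)\le\delta/4$ once $C_0$ is large, this gives the simultaneous statement.

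\textbf{Reduction to a fixed direction.} Because the keys are independent of the features, we can condition on $\vk_i=q$. The law of $\{(\va_r,\vb_r)\}$ is rotation invariant, so we may take $q=e_1$. Decompose
\[
\va_r=(\alpha_r,U_r),\qquad \vb_r=(\beta_r,V_r),
\]
with $\alpha_r,\beta_r\in\R$ and $U_r,V_r\in\R^{d-1}$. All of $\alpha_r,\beta_r,U_r,V_r$ are independent standard Gaussians. Set $\gamma_r=\alpha_r\beta_r$, $\eta_r=\alpha_r\beta_r^2$, $\zeta_r=\alpha_r^2\beta_r$ as in Lemma~\ref{lem:scalar-weight-regularity}. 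Then $N(q)$ splits into three blocks.
\begin{itemize}
\item The $(1,1)$ entry is $\frac1m\sum_r(\alpha_r^2\beta_r^2-1)$.
\item The off-diagonal block is the vector $w=\frac1{2m}\sum_r(\zeta_rV_r+\eta_rU_r)\in\R^{d-1}$, together with its transpose.
\item The $q^\perp$ block is $P=\frac1{2m}\bigl(U\Gamma V^\top+V\Gamma U^\top\bigr)$. Here $U,V\in\R^{(d-1)\times m}$ have columns $U_r,V_r$, and $\Gamma=\operatorname{diag}(\gamma)$.
\end{itemize}
We first apply Lemma~\ref{lem:scalar-weight-regularity}, which is valid since $m\ge C_*L^3$. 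It gives the $(1,1)$ bound, together with $\|\gamma\|_2^2,\|\eta\|_2^2,\|\zeta\|_2^2\le Cm$ and $\|\gamma\|_\infty^2L\le Cm$. We then condition on the scalars $(\alpha,\beta)$, which are independent of $(U,V)$.

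\textbf{Off-diagonal block.} Conditionally on $(\alpha,\beta)$, $w$ is a centered Gaussian vector with covariance $\frac{1}{4m^2}\bigl(\|\zeta\|_2^2+\|\eta\|_2^2\bigr)I\preceq \frac{C}{m}I$. Theorem~\ref{thm:laurent-massart} then gives $\|w\|_2^2\le C(d+L)/m$. This contributes at most $C\sqrt{d/m}$ to the operator norm and $Cd/m\le Cd^2/m$ to the squared Frobenius norm, and nothing to the trace.

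\textbf{The $q^\perp$ block.} Apply Lemma~\ref{lem:weighted-gaussian-product} with $n=d-1$ and $s=L$. The side condition $n\ge s$ holds because $m\le d^2/L$ and $m\ge L^3$ together force $d^2\ge L^4$. The Frobenius part gives $\|P\|_F^2\le \frac{1}{m^2}\cdot Cd^2m=Cd^2/m$. The operator part gives
\[
\|P\|_{\mathrm{op}}\le \frac Cm\Bigl(\sqrt m\sqrt{d+L}+\sqrt{m/L}\,(d+L)\Bigr)\lesssim\sqrt{\tfrac dm}+\tfrac{d}{\sqrt{mL}}.
\]
The trace part gives
\[
|\operatorname{tr}P|\le \frac Cm\Bigl(\sqrt d\sqrt m\sqrt L+\sqrt{m/L}\,L\Bigr)\lesssim\sqrt{\tfrac{dL}{m}}.
\]
For the trace of $N(q)$ we also add the $(1,1)$ entry. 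This is at most $C\bigl(\sqrt{L/m}+L^2/m\bigr)$, which is dominated by $\sqrt{dL/m}$ because $m\ge L^3$ and $d\ge L^2$. Its square is tiny relative to $d^2/m$, so it is absorbed in the Frobenius bound as well. Summing the blocks, using the triangle inequality for the operator norm and additivity of squared Frobenius norms over blocks, gives all three claims.

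\textbf{Main obstacle.} The delicate step is the $\|\gamma\|_\infty$ term in the operator and trace bounds. Without the regularity $\|\gamma\|_\infty^2L\le Cm$ from Lemma~\ref{lem:scalar-weight-regularity}, the heavy-tailed products $\alpha_r\beta_r$ would spoil the $d/\sqrt{mL}$ scaling. We also need to handle the conditioning carefully: the bounds from Lemma~\ref{lem:weighted-gaussian-product} are uniform in deterministic $\Gamma$, so they may be applied conditionally on $(\alpha,\beta)$ on the good scalar event.
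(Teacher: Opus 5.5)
Your proposal is correct and matches the paper's proof essentially step for step. The shared steps are: condition on the key and rotate to $q=e_1$; split $N(e_1)$ into the scalar entry, the off-diagonal vector and the $q^\perp$ block; work on the good event of Lemma~\ref{lem:scalar-weight-regularity}; bound the vector block by a conditional Laurent--Massart chi-square bound; bound the $q^\perp$ block via Lemma~\ref{lem:weighted-gaussian-product} with $n=d-1$, $s=L$; and union bound over the $F$ keys. The side conditions you verify (namely $d\gtrsim L^2$ from $m\le d^2/L$ and $m\ge C_*L^3$, and absorption of the $(1,1)$ entry into the trace and Frobenius bounds) are the same ones the paper uses.
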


\begin{proof}
It suffices to prove the result for a fixed deterministic \(q\in\Sph\) with
failure probability at most \(Ce^{-L}\). Conditional on the keys, the vectors
\(\vk_i\) are deterministic and independent of the feature randomness. A union
bound over \(i\in[F]\) gives failure probability at most
\[
CFe^{-L}=CF\frac{\delta}{C_0F^2}\le \delta/4
\]
for \(C_0\) sufficiently large.

Fix \(q\). By rotational invariance, assume \(q=e_1\). Write
\[
\va_r=(\alpha_r,u_r),
\qquad
\vb_r=(\beta_r,v_r),
\]
where \(\alpha_r,\beta_r\sim N(0,1)\), \(u_r,v_r\sim\mathcal N(0,I_{d-1})\), and
all variables are independent across \(r\). Set \(n=d-1\). The \(r\)-th summand is
\[
S_r
=
\frac12\alpha_r\beta_r(\va_r\vb_r^\top+\vb_r\va_r^\top),
\]
and \(\E S_r=e_1e_1^\top\). In block form,
\[
N(e_1)
=
\begin{pmatrix}
N_{11} & N_{1\perp}^\top\\
N_{1\perp} & N_{\perp\perp}
\end{pmatrix},
\]
where
\[
N_{11}=\frac1m\sum_{r=1}^m(\alpha_r^2\beta_r^2-1),
\]
\[
N_{1\perp}=\frac1{2m}\sum_{r=1}^m(\eta_ru_r+\zeta_rv_r),
\]
and, with \(U,V\in\R^{n\times m}\) having columns \(u_r,v_r\),
\(\Gamma=\operatorname{diag}(\gamma_1,\dots,\gamma_m)\),
\[
N_{\perp\perp}
=
\frac1{2m}(U\Gamma V^\top+V\Gamma U^\top),
\]
with
\[
\gamma_r=\alpha_r\beta_r,
\qquad
\eta_r=\alpha_r\beta_r^2,
\qquad
\zeta_r=\alpha_r^2\beta_r.
\]

Let \(\mathcal G\) be the good event from Lemma~\ref{lem:scalar-weight-regularity}.
On \(\mathcal G\),
\[
|N_{11}|\le C\left(\sqrt{\frac Lm}+\frac{L^2}{m}\right),
\]
\[
\|\gamma\|_2^2\le Cm,
\qquad
\|\gamma\|_\infty^2L\le Cm,
\qquad
\sum_r\eta_r^2\le Cm,
\qquad
\sum_r\zeta_r^2\le Cm.
\]
Also \(\Pbb(\mathcal G^c)\le Ce^{-L}\).

Conditional on the scalar weights,
\[
S_\eta\defeq\sum_r\eta_ru_r
\sim
\mathcal N\left(0,\left(\sum_r\eta_r^2\right)I_n\right),
\qquad
S_\zeta\defeq\sum_r\zeta_rv_r
\sim
\mathcal N\left(0,\left(\sum_r\zeta_r^2\right)I_n\right).
\]
On \(\mathcal G\), both covariance scalars are at most \(Cm\). Therefore
\(\|S_\eta\|_2^2/(Cm)\) and \(\|S_\zeta\|_2^2/(Cm)\) are stochastically dominated,
up to an absolute constant, by \(\chi_n^2\). By the chi-square consequence of
Theorem~\ref{thm:laurent-massart}, with conditional failure probability at most
\(2e^{-L}\),
\[
\|S_\eta\|_2
\le
C\sqrt m(\sqrt n+\sqrt L),
\qquad
\|S_\zeta\|_2
\le
C\sqrt m(\sqrt n+\sqrt L).
\]
Hence
\[
\|N_{1\perp}\|_2
\le
C\sqrt{\frac{n+L}{m}}.
\]
Since \(m\ge C_*L^3\) and \(m\le d^2/L\),
\[
d^2\ge mL\ge C_*L^4,
\]
so \(d\ge cL^2\) and \(n+L\le Cd\). Therefore
\[
\|N_{1\perp}\|_2\le C\sqrt{\frac dm}.
\]

By Lemma~\ref{lem:weighted-gaussian-product}, on \(\mathcal G\), with conditional
probability at least \(1-4e^{-L}\),
\[
\|U\Gamma V^\top\|_{\mathrm{op}}
\le
C\left(\sqrt m\sqrt{n+L}+\sqrt{\frac mL}(n+L)\right).
\]
The same bound holds for \(V\Gamma U^\top\). Thus
\[
\|N_{\perp\perp}\|_{\mathrm{op}}
\le
C\left(\sqrt{\frac{n+L}{m}}+\frac{n+L}{\sqrt{mL}}\right)
\le
C\left(\sqrt{\frac dm}+\frac d{\sqrt{mL}}\right).
\]
Combining block estimates gives
\[
\|N(e_1)\|_{\mathrm{op}}
\le
C\left(\sqrt{\frac dm}+\frac d{\sqrt{mL}}\right).
\]

For the Frobenius bound, Lemma~\ref{lem:weighted-gaussian-product} gives
\[
\|U\Gamma V^\top\|_F^2\le Cn^2m,
\]
because \(\|\gamma\|_2^2\le Cm\), \(\|\gamma\|_\infty^2L\le Cm\), and
\(n=d-1\ge L\). Thus
\[
\|N_{\perp\perp}\|_F^2\le C\frac{d^2}{m}.
\]
The scalar and vector blocks satisfy
\[
|N_{11}|^2\le C\frac{d^2}{m},
\qquad
\|N_{1\perp}\|_2^2\le C\frac dm\le C\frac{d^2}{m}.
\]
Therefore
\[
\|N(e_1)\|_F^2\le C\frac{d^2}{m}.
\]

Finally,
\[
\operatorname{tr}N(e_1)
=
N_{11}+\operatorname{tr}N_{\perp\perp}
=
N_{11}+\frac1m\operatorname{tr}(U\Gamma V^\top).
\]
By Lemma~\ref{lem:weighted-gaussian-product},
\[
\frac1m|\operatorname{tr}(U\Gamma V^\top)|
\le
C\left(\sqrt{\frac{dL}{m}}+\sqrt{\frac Lm}\right)
\le
C\sqrt{\frac{dL}{m}}.
\]
The scalar term \(N_{11}\) is absorbed by the same bound. Hence
\[
|\operatorname{tr}N(e_1)|
\le
C\sqrt{\frac{dL}{m}}.
\]
This proves the fixed-direction estimate. Rotational invariance and the initial
union bound over the keys complete the proof.
\end{proof}

\begin{lemma}[Bilinear one-column quadratic-form parameters]
\label{lem:bilinear-column-parameters}
Assume the setting of Lemma~\ref{lem:bilinear-column-matrix-regularity}. Suppose
also that
\[
m\le \frac{d^2}{L},
\qquad
L^3\le c_0\sigma^2m^2,
\]
where
\[
\sigma^2=\Theta\left(\frac1{d^2}+\frac1m\right).
\]
For \(q\in\Sph\), define
\[
\tau(q)\defeq \frac{\operatorname{tr}A(q)}{d},
\qquad
B(q)\defeq A(q)-\tau(q)I_d,
\qquad
\beta(q)\defeq \tau(q)-\frac1d.
\]
Also define
\[
v(q)\defeq \frac{2\|B(q)\|_F^2}{d(d+2)}+\beta(q)^2,
\qquad
r(q)\defeq \frac{\|B(q)\|_{\mathrm{op}}}{d}+|\beta(q)|.
\]
If \(c_0>0\) is sufficiently small and \(C_0\) sufficiently large, then with
probability at least \(1-\delta/4\), simultaneously for every \(i\in[F]\),
\[
v(\vk_i)\le C\sigma^2,
\qquad
r(\vk_i)^2L^2\le C\sigma^2L.
\]
\end{lemma}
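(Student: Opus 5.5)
The plan is to derive both estimates deterministically from the event of Lemma~\ref{lem:bilinear-column-matrix-regularity}, which has probability at least $1-\delta/4$ and holds simultaneously for all keys. Fix $i$, write $q=\vk_i$ and $N=N(q)$. Then $A(q)=qq^\top+N$, so
\[
\tau(q)=\frac{1+\operatorname{tr}N}{d},\qquad \beta(q)=\frac{\operatorname{tr}N}{d},\qquad B(q)=\Bigl(qq^\top-\tfrac1d I_d\Bigr)+\Bigl(N-\tfrac{\operatorname{tr}N}{d}I_d\Bigr).
\]
Two scale facts will be used repeatedly. First, $m\le d^2/L$ and $m\ge C_*L^3$ give $d^2\ge mL\ge C_*L^4$, hence $d\ge cL^2$ and in particular $L\le d$. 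Second, $\sigma^2\asymp 1/d^2+1/m$, and $\sigma^2\asymp 1/m$ in this regime.

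\textbf{Bound on $v$.} The first summand of $B(q)$ is the trace-free part of a rank-one projector, so its Frobenius norm is at most $1$. The second summand is the trace-free part of $N$, so its Frobenius norm is at most $\|N\|_F$. This gives
\[
\|B\|_F^2\le 2\bigl(1+\|N\|_F^2\bigr)\le C\bigl(1+d^2/m\bigr),
\]
and therefore $2\|B\|_F^2/(d(d+2))\le C(1/d^2+1/m)\le C\sigma^2$. For the remaining term, $\beta^2=(\operatorname{tr}N)^2/d^2\le CL/(dm)\le C/m\le C\sigma^2$, using $L\le d$. Together these give $v(\vk_i)\le C\sigma^2$.

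\textbf{Bound on $r$.} The operator norm splits as
\[
\|B\|_{\mathrm{op}}\le 1+\|N\|_{\mathrm{op}}+\frac{|\operatorname{tr}N|}{d}\le 1+C\Bigl(\sqrt{\tfrac dm}+\tfrac{d}{\sqrt{mL}}\Bigr)+C\sqrt{\tfrac{L}{dm}}.
\]
Combining this with $|\beta|\le C\sqrt{L/(dm)}$ yields
\[
r^2\le C\Bigl(\frac1{d^2}+\frac1{dm}+\frac1{mL}+\frac{L}{dm}\Bigr).
\]
Multiplying by $L^2$ and comparing termwise with $\sigma^2L\asymp L/d^2+L/m$:
\begin{itemize}
\item $L^2/d^2\le L/m$, because $mL\le d^2$;
\item $L^2/(dm)\le L/m$, because $L\le d$;
\item $L/m$ is already of the target form;
\item $L^3/(dm)\le L/m$, because $L^2\le d$.
\end{itemize}
Hence $r(\vk_i)^2L^2\le C\sigma^2L$ for every $i$ on the same event. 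The additional hypothesis $L^3\le c_0\sigma^2m^2$ is only needed if some absorption turns out to be tighter than expected, for example to control the $L^2/m$ contribution hidden in the bound on $N_{11}$. In that case I would invoke it to absorb $L^4/m^2\lesssim \sigma^2 L$.

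\textbf{Main obstacle.} I expect the delicate step to be the heavy operator-norm term $d/\sqrt{mL}$ in $\|N\|_{\mathrm{op}}$. After dividing by $d$ and squaring it becomes $1/(mL)$, which contributes exactly $L/m$ once multiplied by $L^2$. The bound on $r$ therefore has no slack in this term, and it is crucial to use the sharpened $1/\sqrt{L}$ factor from Lemma~\ref{lem:bilinear-column-matrix-regularity} rather than a cruder net bound.
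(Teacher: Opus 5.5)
Your proposal is correct and follows the paper's proof essentially step for step. You use the same split $B(q)=(qq^\top-\tfrac1d I_d)+(N-\tfrac{\operatorname{tr}N}{d}I_d)$, the same Frobenius, operator-norm and trace estimates from Lemma~\ref{lem:bilinear-column-matrix-regularity}, and the same absorptions via $mL\le d^2$ and $d\gtrsim L^2$. The paper merely groups terms as $r(q)\le C(1/d+1/\sqrt{mL})$ before squaring, and it uses the hypothesis $L^3\le c_0\sigma^2m^2$ (with $\sigma^2\asymp 1/m$) exactly to obtain $m\ge C_*L^3$, which you instead read off from the assumed setting, so your speculation about needing it for an $N_{11}$ term is unnecessary.
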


\begin{proof}
Since \(m\le d^2/L\),
\[
\frac1{d^2}\le \frac1{mL}\le \frac1m.
\]
Thus
\[
\sigma^2=\Theta\left(\frac1{d^2}+\frac1m\right)\asymp \frac1m.
\]
The assumption \(L^3\le c_0\sigma^2m^2\) implies \(m\ge C_*L^3\), after choosing
\(c_0>0\) sufficiently small. Hence
Lemma~\ref{lem:bilinear-column-matrix-regularity} applies. Also
\[
d^2\ge mL\ge C_*L^4,
\]
so \(d\ge cL^2\).

Work on the event from Lemma~\ref{lem:bilinear-column-matrix-regularity}. Fix
\(i\), write \(q=\vk_i\), \(A=A(q)\), and \(N=N(q)=A-qq^\top\). Since
\[
A=qq^\top+N,
\]
we have
\[
\operatorname{tr}A=1+\operatorname{tr}N,
\qquad
\beta(q)=\frac{\operatorname{tr}N}{d}.
\]
Using the trace bound,
\[
|\beta(q)|\le C\sqrt{\frac{L}{dm}}.
\]
Since \(d\ge L\),
\[
\beta(q)^2\le C\frac{L}{dm}\le C\frac1m\le C\sigma^2.
\]

Next,
\[
B(q)
=
\left(qq^\top-\frac1dI_d\right)
+
\left(N-\frac{\operatorname{tr}N}{d}I_d\right).
\]
The trace-removal map is an orthogonal projection in Frobenius norm, and
\[
\left\|qq^\top-\frac1dI_d\right\|_F^2
=
1-\frac1d
\le1.
\]
Therefore
\[
\|B(q)\|_F^2
\le
C\left(1+\|N\|_F^2\right)
\le
C\left(1+\frac{d^2}{m}\right).
\]
Hence
\[
\frac{2\|B(q)\|_F^2}{d(d+2)}
\le
C\left(\frac1{d^2}+\frac1m\right)
\le
C\sigma^2.
\]
Together with \(\beta(q)^2\le C\sigma^2\), this proves \(v(q)\le C\sigma^2\).

For \(r(q)\), use
\[
\|B(q)\|_{\mathrm{op}}
\le
1+\|N\|_{\mathrm{op}}+\frac{|\operatorname{tr}N|}{d}.
\]
Therefore
\[
\frac{\|B(q)\|_{\mathrm{op}}}{d}
\le
C\left(
\frac1d+
\frac1{\sqrt{dm}}+
\frac1{\sqrt{mL}}+
\sqrt{\frac{L}{d^3m}}
\right).
\]
Since \(d\ge L^2\), all terms except \(1/d\) are dominated by
\(C/\sqrt{mL}\). Also
\[
|\beta(q)|
\le
C\sqrt{\frac{L}{dm}}
\le
C\frac1{\sqrt{mL}}.
\]
Thus
\[
r(q)
\le
C\left(\frac1d+\frac1{\sqrt{mL}}\right).
\]
Squaring and multiplying by \(L^2\),
\[
r(q)^2L^2
\le
C\left(\frac{L^2}{d^2}+\frac{L}{m}\right).
\]
Since \(m\le d^2/L\),
\[
\frac{L^2}{d^2}\le \frac{L}{m}.
\]
Therefore
\[
r(q)^2L^2\le C\frac{L}{m}\le C\sigma^2L.
\]
The estimates hold simultaneously for all \(q=\vk_i\).
\end{proof}

%% file: appendix/theory/03_margin_bounds/02a_welch_upper_bound_iso.tex
\subsubsection{A Welch/Frobenius Upper Bound in the Isotropic Key/Value Regime}
\phantomsection\label{app:theory:margin_bounds:rkrv:welch}
\label{app:welch-upper-bound-iso}

In the main text, \Cref{thm:bilinear-scaling} provides a lower bound on the minimum margin achieved by our construction in the isotropic key/value regime.
We now show that this lower bound is asymptotically tight, up to constants and logarithmic factors, by proving a matching \emph{upper bound} on the best possible margin of any admissible rank-limited kernel memory in this regime. The key idea is that the arbitrary-keys / isotropic-values analysis in \Cref{app:theory:margin_bounds:akrv} is controlled by the column energy
\[
E_{\mathrm{col}}(i) := \sum_{t\neq i} \hat{\matK}_{ti}^2.
\]
Thus, to upper bound the achievable margin, it suffices to show that for any rank-$m$ kernel, some column must have nontrivial off-diagonal squared mass.
We achieve this using a Welch / Frobenius argument: intuitively, low rank forces a non-negligible amount of kernel mass off the diagonal, which in turn yields an unavoidable cross-talk floor.

We begin by formalizing the class of kernels under consideration.

\begin{definition}[Rank-$m$ kernel]
\label{def:rank-m-kernel}
Let $\vk_{1},\dots,\vk_{F} \in \mathbb{R}^d$ be the stored keys. We say that
$\hat{\matK} \in \mathbb{R}^{F\times F}$ is a \emph{rank-$m$ kernel} if there exists a feature
map $\phi(\cdot)$ with $\phi(\vk_{i})\in\mathbb{R}^m$ such that
\[
\hat{\matK}_{ij} = \langle \phi(\vk_{i}), \phi(\vk_{j})\rangle
\qquad\text{for all } i,j\in[F].
\]
Equivalently, $\hat{\matK} \succeq 0$ and $\operatorname{rank}(\hat{\matK})\le m$.
\end{definition}

The following lemma is the kernel-side ingredient. It lower bounds the maximum column energy
of any PSD rank-$m$ kernel in terms of its diagonal.

\begin{lemma}[Welch/Frobenius lower bound for rank-$m$ kernels]
\label{lem:welch-frobenius-general}
Let $\hat{\matK} \in \mathbb{R}^{F\times F}$ be a rank-$m$ kernel in the sense of
Definition~\ref{def:rank-m-kernel}. Define
\[
E_{\mathrm{col}}(i) := \sum_{t\neq i}\hat{\matK}_{ti}^2,
\qquad
E_{\mathrm{col}} := \max_{i\in[F]} E_{\mathrm{col}}(i).
\]
Then
\[
\|\hat{\matK}\|_{F}^2 \ge \frac{\operatorname{tr}(\hat{\matK})^2}{m},
\]
and consequently
\[
E_{\mathrm{col}}
\;\ge\;
\frac{1}{F}\left(
\frac{\operatorname{tr}(\hat{\matK})^2}{m}
-
\sum_{i=1}^F \hat{\matK}_{ii}^2
\right).
\]
In particular, if
\[
\hat{\matK}_{ii}\in[1-\varepsilon_{\mathrm{diag}},\,1+\varepsilon_{\mathrm{diag}}]
\qquad\text{for all } i\in[F],
\]
then
\[
E_{\mathrm{col}}
\;\ge\;
\frac{F(1-\varepsilon_{\mathrm{diag}})^2}{m}
-
(1+\varepsilon_{\mathrm{diag}})^2.
\]
\end{lemma}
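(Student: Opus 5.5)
The plan has three steps: a spectral Cauchy--Schwarz bound for the Frobenius inequality, an averaging argument to pass to the maximum column energy, and a direct substitution for the diagonal-control consequence.

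\textbf{Step 1 (Frobenius inequality).} Since $\hat{\matK}\succeq 0$ with $\operatorname{rank}(\hat{\matK})\le m$, it has at most $m$ nonzero eigenvalues $\lambda_1,\dots,\lambda_m\ge 0$. These satisfy $\sum_r\lambda_r=\operatorname{tr}(\hat{\matK})$ and $\sum_r\lambda_r^2=\|\hat{\matK}\|_F^2$. Cauchy--Schwarz on the vector of nonzero eigenvalues against the all-ones vector in $\mathbb{R}^m$ gives
\[
\operatorname{tr}(\hat{\matK})^2\le m\sum_r\lambda_r^2=m\|\hat{\matK}\|_F^2 .
\]
This is the same argument as in \Cref{lem:whiten_min_frob}, and it yields the first claim.

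\textbf{Step 2 (from Frobenius norm to the worst column).} Split the Frobenius norm column by column into diagonal and off-diagonal parts:
\[
\|\hat{\matK}\|_F^2=\sum_{i=1}^F\hat{\matK}_{ii}^2+\sum_{i=1}^F E_{\mathrm{col}}(i).
\]
Bound each off-diagonal column energy by the maximum, so $\sum_i E_{\mathrm{col}}(i)\le F\,E_{\mathrm{col}}$. Rearranging and inserting Step 1 gives
\[
E_{\mathrm{col}}\ \ge\ \frac{1}{F}\Bigl(\|\hat{\matK}\|_F^2-\sum_i\hat{\matK}_{ii}^2\Bigr)\ \ge\ \frac1F\Bigl(\frac{\operatorname{tr}(\hat{\matK})^2}{m}-\sum_i\hat{\matK}_{ii}^2\Bigr).
\]

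\textbf{Step 3 (diagonal control).} Under $\hat{\matK}_{ii}\in[1-\varepsilon_{\mathrm{diag}},1+\varepsilon_{\mathrm{diag}}]$, assume $\varepsilon_{\mathrm{diag}}\le 1$ implicitly, so that the lower endpoint is nonnegative; this is automatic from PSD-ness anyway, since each diagonal entry is nonnegative. Then
\[
\operatorname{tr}(\hat{\matK})\ge F(1-\varepsilon_{\mathrm{diag}})\ge 0,
\]
and squaring preserves the inequality. Likewise $\sum_i\hat{\matK}_{ii}^2\le F(1+\varepsilon_{\mathrm{diag}})^2$. Substituting both into Step 2 and dividing by $F$ gives
\[
E_{\mathrm{col}}\ \ge\ \frac{F(1-\varepsilon_{\mathrm{diag}})^2}{m}-(1+\varepsilon_{\mathrm{diag}})^2 .
\]

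The only mild subtlety is in Step 3: squaring the trace lower bound requires nonnegativity, which PSD-ness guarantees. Everything else is an elementary identity.
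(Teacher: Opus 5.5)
Your proof is correct and follows the paper's argument step for step. It uses Cauchy--Schwarz on the at most $m$ nonzero eigenvalues, then the diagonal/off-diagonal split of $\|\hat{\matK}\|_F^2$ with max $\ge$ average, then direct substitution of the diagonal bounds.

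One correction to your remark in Step~3. The hypothesis $\varepsilon_{\mathrm{diag}}\le 1$ is genuinely needed; the paper's proof passes over it silently. However, PSD-ness does \emph{not} supply it. To square $\operatorname{tr}(\hat{\matK})\ge F(1-\varepsilon_{\mathrm{diag}})$ you need the right-hand side to be nonnegative, not $\operatorname{tr}(\hat{\matK})$. Indeed, $\hat{\matK}=0$ with $\varepsilon_{\mathrm{diag}}>1$ and $F/m$ large violates the final bound.
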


\begin{proof}
Since $\hat{\matK} \succeq 0$ and $\operatorname{rank}(\hat{\matK})\le m$, let
$\lambda_{1},\dots,\lambda_{r}$ be the nonzero eigenvalues of $\hat{\matK}$, where $r\le m$. Then
\[
\operatorname{tr}(\hat{\matK})=\sum_{a=1}^r \lambda_{a},
\qquad
\|\hat{\matK}\|_{F}^2=\sum_{a=1}^r \lambda_{a}^2.
\]
Applying Cauchy--Schwarz to $(\lambda_{1},\dots,\lambda_{r})$ gives
\[
\Big(\sum_{a=1}^r \lambda_{a}\Big)^2
\le r\sum_{a=1}^r \lambda_{a}^2
\le m\sum_{a=1}^r \lambda_{a}^2,
\]
hence
\[
\|\hat{\matK}\|_{F}^2 \ge \frac{\operatorname{tr}(\hat{\matK})^2}{m}.
\]

Next expand the Frobenius norm entrywise:
\[
\|\hat{\matK}\|_{F}^2
=
\sum_{i=1}^F \hat{\matK}_{ii}^2
+
\sum_{i=1}^F\sum_{t\neq i}\hat{\matK}_{ti}^2
=
\sum_{i=1}^F \hat{\matK}_{ii}^2
+
\sum_{i=1}^F E_{\mathrm{col}}(i).
\]
Therefore
\[
\sum_{i=1}^F E_{\mathrm{col}}(i)
\ge
\frac{\operatorname{tr}(\hat{\matK})^2}{m}
-
\sum_{i=1}^F \hat{\matK}_{ii}^2.
\]
Since the maximum is at least the average,
\[
E_{\mathrm{col}}
=
\max_{i} E_{\mathrm{col}}(i)
\ge
\frac{1}{F}\sum_{i=1}^F E_{\mathrm{col}}(i)
\ge
\frac{1}{F}\left(
\frac{\operatorname{tr}(\hat{\matK})^2}{m}
-
\sum_{i=1}^F \hat{\matK}_{ii}^2
\right).
\]

If moreover $\hat{\matK}_{ii}\in[1-\varepsilon_{\mathrm{diag}},1+\varepsilon_{\mathrm{diag}}]$ for all $i$,
then
\[
\operatorname{tr}(\hat{\matK})=\sum_{i=1}^F \hat{\matK}_{ii} \ge F(1-\varepsilon_{\mathrm{diag}})
\]
and
\[
\sum_{i=1}^F \hat{\matK}_{ii}^2 \le F(1+\varepsilon_{\mathrm{diag}})^2.
\]
Substituting these into the previous display yields
\[
E_{\mathrm{col}}
\ge
\frac{1}{F}\left(
\frac{F^2(1-\varepsilon_{\mathrm{diag}})^2}{m}
-
F(1+\varepsilon_{\mathrm{diag}})^2
\right)
=
\frac{F(1-\varepsilon_{\mathrm{diag}})^2}{m}
-
(1+\varepsilon_{\mathrm{diag}})^2.
\]
\end{proof}

Regarding the assumption
\[
\hat{\matK}_{ii}\in[1-\varepsilon_{\mathrm{diag}},1+\varepsilon_{\mathrm{diag}}],
\]
note that for the exact quadratic kernel on unit-norm isotropic keys, one has
\[
K_{2}(\vk_{i},\vk_{i})=\langle \vk_{i},\vk_{i}\rangle^2 = 1
\]
exactly.
For the sketched / bilinear random-feature kernel used in our construction, the diagonal is generally not exactly $1$, but it instead concentrates near $1$ with high probability by \Cref{lem:Kdiag-lower-bilinear}.

Combining the kernel-side lower bound with the isotropic-value competitor argument used in the
proof of \Cref{rem:welch-benchmark} yields the desired asymptotic optimality statement.

\begin{corollary}[Rank-$m$ kernels are asymptotically unbeatable in the isotropic key/value regime]
\label{cor:rank-m-unbeatable-iso}
Assume the isotropic key/value setting, and let $\hat{\matK}$ be any rank-$m$ kernel independent of
the values, with
\[
\hat{\matK}_{ii}\in[1-\varepsilon_{\mathrm{diag}},\,1+\varepsilon_{\mathrm{diag}}]
\qquad\text{for all } i\in[F].
\]
Then the isotropic-value competitor argument used in the proof of \Cref{rem:welch-benchmark} yields
\[
\gamma_{\min}
\;\le\;
(1+\varepsilon_{\mathrm{diag}})
-
\Omega\!\left(
\sqrt{\frac{E_{\mathrm{col}}\log F}{d}}
\right)
+
\mathcal{O}\!\left(
\sqrt{\frac{\log F}{d}}
\right)
+
\mathcal{O}\!\left(
\sqrt{\frac{E_{\mathrm{col}}\log(1/\delta)}{d}}
\right)
+
\mathcal{O}\!\left(
\sqrt{\frac{E_{\mathrm{col}}}{d}}
\right).
\]
Combining with Lemma~\ref{lem:welch-frobenius-general} yields
\[
\gamma_{\min}
\;\le\;
(1+\varepsilon_{\mathrm{diag}})
-
\Omega\!\left(
\sqrt{
\frac{\big(\frac{F(1-\varepsilon_{\mathrm{diag}})^2}{m}-(1+\varepsilon_{\mathrm{diag}})^2\big)\log F}{d}
}
\right)
+ \text{lower-order terms}.
\]
In particular, if $\varepsilon_{\mathrm{diag}}=o(1)$ and $F/m\to\infty$, then up to constants and
logarithmic factors,
\[
\gamma_{\min}
\;\le\;
1 - \Omega\!\left(\sqrt{\frac{(F/m)\log F}{d}}\right).
\]
Equivalently, no admissible rank-$m$ kernel can asymptotically beat the
$\sqrt{(F/m)\log F / d}$ margin floor in the isotropic key/value regime, up to constants and logs.
\end{corollary}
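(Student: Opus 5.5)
The plan is to fix the ``worst'' stored index and exhibit one competitor against which the cross-talk is forced to be negative and large. Throughout I condition on $\hat{\matK}$, which is independent of the values. Choose $i^\star\in\arg\max_i E_{\mathrm{col}}(i)$ and set
\[
\mathbf{w} \defeq \sum_{t\neq i^\star}\hat{\matK}_{ti^\star}\vv_t .
\]
With $\vc_t=\vv_t$, the decomposition \eqref{eq:signal-crosstalk} gives, exactly, for every $j\neq i^\star$,
\[
\gamma_{i^\star j} = \hat{\matK}_{i^\star i^\star}\bigl(1-\ip{\vv_{i^\star}}{\vv_j}\bigr) + \ip{\vv_{i^\star}}{\mathbf{w}} - \ip{\vv_j}{\mathbf{w}} .
\]
Hence $\gamma_{\min}\le \min_j \gamma_{i^\star j}$, and the task reduces to showing that $\max_j \ip{\vv_j}{\mathbf{w}}$ is of order $\sqrt{E_{\mathrm{col}}\log F/d}$, while all other terms are lower order.

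\textbf{Easy terms.} The term $\hat{\matK}_{i^\star i^\star}\le 1+\varepsilon_{\mathrm{diag}}$ is immediate. The term $|\ip{\vv_{i^\star}}{\vv_j}|$ is $\mathcal{O}(\sqrt{\log F/d})$ by Lemma~\ref{lem:max-pairwise-ip-values}. For $\ip{\vv_{i^\star}}{\mathbf{w}}$, note that $\mathbf{w}$ is independent of $\vv_{i^\star}$, so conditioning on $\mathbf{w}$ and using Lemma~\ref{lem:iso-innerprod} gives a bound of order $\|\mathbf{w}\|_2\sqrt{\log(1/\delta)/d}$. Next I bound $\|\mathbf{w}\|_2^2$ from above and below by a constant multiple of $E_{\mathrm{col}}$. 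Its mean is exactly $E_{\mathrm{col}}(i^\star)$, and its fluctuation is a Hanson--Wright-type deviation in the near-orthogonal $\vv_t$. These estimates produce the $\mathcal{O}(\sqrt{E_{\mathrm{col}}\log(1/\delta)/d})$ and $\mathcal{O}(\sqrt{\log F/d})$ error terms.

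\textbf{Competitor term (main obstacle).} The difficulty is that $\mathbf{w}$ contains the competitor $\vv_j$ itself, and the other competitors as well, so the quantities $\ip{\vv_j}{\mathbf{w}}$ are not independent Gaussians. I would decouple as follows. Partition $[F]\setminus\{i^\star\}$ into a competitor set $S$ and a reservoir $T$ such that $|S|\ge F/3$ and $\sum_{t\in T}\hat{\matK}_{ti^\star}^2\ge E_{\mathrm{col}}(i^\star)/3$. Such a partition exists by a random-split/averaging argument: the expected energy in $T$ is half, so an averaging or Paley--Zygmund step gives both properties simultaneously. Write $\mathbf{w}=\mathbf{w}_T+\mathbf{w}_S$.

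\emph{The reservoir part.} Conditionally on $\mathbf{w}_T$, the variables $\ip{\vv_j}{\mathbf{w}_T}$ for $j\in S$ are i.i.d. with spherical marginals of scale $\|\mathbf{w}_T\|_2/\sqrt d$. A lower-tail (anti-concentration) bound for spherical marginals, of the form $\Pbb\bigl(\ip{\vv}{\mathbf{u}}\ge s\|\mathbf{u}\|_2/\sqrt d\bigr)\ge e^{-Cs^2}$, then gives, with high probability,
\[
\max_{j\in S}\ip{\vv_j}{\mathbf{w}_T} \ge c\,\|\mathbf{w}_T\|_2\sqrt{\log F/d} = \Omega\bigl(\sqrt{E_{\mathrm{col}}\log F/d}\bigr).
\]

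\emph{The remainder for the chosen $j$.} For that $j$ the leftover is $\ip{\vv_j}{\mathbf{w}_S} = \hat{\matK}_{ji^\star} + \ip{\vv_j}{\mathbf{w}_{S\setminus j}}$. The first piece satisfies $|\hat{\matK}_{ji^\star}|\le\sqrt{E_{\mathrm{col}}}$. I expect this to be the hardest bookkeeping: this bare term is only $\mathcal{O}(\sqrt{E_{\mathrm{col}}})$ rather than $\mathcal{O}(\sqrt{E_{\mathrm{col}}/d})$, so I would try first to select $j$ among the indices of $S$ with $|\hat{\matK}_{ji^\star}|\le\sqrt{E_{\mathrm{col}}/|S|}$, which are the majority by Markov's inequality. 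The second piece is controlled because $\vv_j$ is nearly orthogonal to the independent vector $\mathbf{w}_{S\setminus j}$. Taking a worst case over $j$ costs a $\sqrt{\log F}$ factor, which I would absorb using the $\mathcal{O}(\sqrt{E_{\mathrm{col}}/d})$ and $\mathcal{O}(\sqrt{E_{\mathrm{col}}\log(1/\delta)/d})$ slack. If that absorption fails, I would instead restrict to competitors whose $\mathbf{w}_S$ contributions are small, again by a Markov argument.

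\textbf{Conclusion.} Collecting the terms yields the first display of the corollary. Substituting the lower bound on $E_{\mathrm{col}}$ from Lemma~\ref{lem:welch-frobenius-general} gives the second display. Finally, when $\varepsilon_{\mathrm{diag}}=o(1)$ and $F/m\to\infty$, the Welch term $F/m$ dominates the constant term, giving $E_{\mathrm{col}}\gtrsim F/m$, and the lower-order terms are absorbed up to constants and logarithmic factors.
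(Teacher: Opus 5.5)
Your skeleton matches the paper's proof. You fix the maximal-energy column $i^\star$, exhibit a competitor $j$ with $\ip{\vv_j}{\vw}\gtrsim\sqrt{E_{\mathrm{col}}\log F/d}$, and then insert Lemma~\ref{lem:welch-frobenius-general}. The paper's proof invokes this competitor step only by reference and never carries it out, so your decoupling is exactly the content that has to be supplied. The following parts are fine: your exact decomposition of $\gamma_{i^\star j}$, the easy terms, the spherical anti-concentration lower bound on $\max_{j\in S}\ip{\vv_j}{\vw_T}$, and the Welch substitution. The remainder step is where the argument breaks.

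\textbf{The gap.} For $j\in S$, the term $\ip{\vv_j}{\vw_{S\setminus j}}$ has typical size $\sqrt{E_S/d}$, where $E_S\defeq\sum_{t\in S}\hat{\matK}_{ti^\star}^2$. Its worst case over $j\in S$ is therefore of order $\sqrt{E_S\log F/d}$.
\begin{itemize}
\item This is the same order as your gain $c\,\|\vw_T\|_2\sqrt{\log F/d}$, not lower order. It cannot be absorbed into $\mathcal{O}(\sqrt{E_{\mathrm{col}}/d})$ or $\mathcal{O}(\sqrt{E_{\mathrm{col}}\log(1/\delta)/d})$, since those lack the $\sqrt{\log F}$ factor. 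The only exception is $\log(1/\delta)\gtrsim\log F$, where the whole bound is vacuous.
\item Your split only guarantees $E_T\ge E_{\mathrm{col}}/3$, so $E_S$ may be as large as $2E_{\mathrm{col}}/3>E_T$. The anti-concentration constant $c$ and the sub-Gaussian tail constant are unrelated, so the net contribution need not even have the right sign.
\item The Markov fallback does not repair this. The set of $j$ with small $\ip{\vv_j}{\vw_{S\setminus j}}$ is random and determined by the same $\vv_j$ you maximize over. Your i.i.d.\ maximum bound therefore does not apply to it, and knowing that most $j$ are good says nothing about the maximizer.
\end{itemize}

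\textbf{Two repairs.} Either change closes the gap.

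(a) Make the competitor pool low-energy. Let $S$ be the $\lceil\sqrt{F}\,\rceil$ indices $t\neq i^\star$ with smallest $|\hat{\matK}_{ti^\star}|$, and let $T$ be the rest. This choice depends only on $\hat{\matK}$. It gives $\log|S|=\tfrac12\log F$, $E_S\le E_{\mathrm{col}}|S|/(F-1)$, and $E_T\ge(1-o(1))E_{\mathrm{col}}$. It also gives $|\hat{\matK}_{ji^\star}|\le\sqrt{E_{\mathrm{col}}/|T|}$ for every $j\in S$, which replaces your separate Markov selection. The union bound over $j\in S$ now costs only $o(\sqrt{E_{\mathrm{col}}\log F/d})$.

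(b) Keep your split, but bound the remainder only at the selected $j^\dagger$. Conditionally on $\{\vv_t\}_{t\in T}$, the components of the $\vv_j$ along $\vw_T$ determine $j^\dagger$, and they are independent of the normalized orthogonal parts. Hence $\ip{\vv_{j^\dagger}}{\vw_{S\setminus j^\dagger}}$ has only its typical size $\mathcal{O}(\sqrt{E_{\mathrm{col}}\log(1/\delta)/d})$, up to terms negligible when $d\gg\log F$, with no $\sqrt{\log F}$ from a union bound.

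\textbf{A regime assumption you should state.} Absorbing the bare term $|\hat{\matK}_{ji^\star}|\lesssim\sqrt{E_{\mathrm{col}}/F}$ into $\mathcal{O}(\sqrt{E_{\mathrm{col}}/d})$ requires $F\gtrsim d$. This is not cosmetic. Take $\phi(\vk_{i^\star})=\mathbf{e}_1$ and $\phi(\vk_t)=-\tfrac12\mathbf{e}_1+\tfrac{\sqrt3}{2}\mathbf{u}_t$, with $\mathbf{u}_t$ random unit vectors orthogonal to $\mathbf{e}_1$ and $m\ge5$. Then every off-diagonal entry of column $i^\star$ equals $-\tfrac12$, and $i^\star$ is the maximal-energy column. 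In this example $\min_j\gamma_{i^\star j}\ge\tfrac32-\mathcal{O}(\sqrt{F\log F/d})$. So for $d\gg F\log F$, no argument confined to column $i^\star$ yields the stated bound.
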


\begin{proof}
Apply the isotropic-value competitor argument used in the proof of \Cref{rem:welch-benchmark} to
the column $i^\star$ with $E_{\mathrm{col}}(i^\star)=E_{\mathrm{col}}$. This gives
\[
\gamma_{\min}
\le
\hat{\matK}_{i^\star i^\star}
-
\Omega\!\left(
\sqrt{\frac{E_{\mathrm{col}}\log F}{d}}
\right)
+
\mathcal{O}\!\left(
|\hat{\matK}_{i^\star i^\star}|\sqrt{\frac{\log F}{d}}
\right)
+
\mathcal{O}\!\left(
\sqrt{\frac{E_{\mathrm{col}}\log(1/\delta)}{d}}
\right)
+
\mathcal{O}\!\left(
\sqrt{\frac{E_{\mathrm{col}}}{d}}
\right),
\]
after absorbing harmless constant-factor differences into the big-$\mathcal{O}$/big-$\Omega$
notation.

Using $\hat{\matK}_{i^\star i^\star}\le 1+\varepsilon_{\mathrm{diag}}$ gives the first bound.
The second bound results from substituting the lower bound on $E_{\mathrm{col}}$ from~\Cref{lem:welch-frobenius-general}.

Finally, if $\varepsilon_{\mathrm{diag}}=o(1)$, then
\[
\frac{F(1-\varepsilon_{\mathrm{diag}})^2}{m}
-
(1+\varepsilon_{\mathrm{diag}})^2
=
\Theta\!\left(\frac{F}{m}\right)
\]
whenever $F/m\to\infty$, so the dominant negative term scales as
\[
\sqrt{\frac{(F/m)\log F}{d}},
\]
which proves the last claim.
\end{proof}

%% file: appendix/theory/04_transformers_perturbation.tex
\subsubsection{Noisy Margin}
\paragraph{Clean and Noisy Margins}
Given per-item queries $\vq_{1},\dots,\vq_{F}$, define 
\[
\gamma_{\min}(\vq)\ \defeq\ \min_{i\in[F]}\ \min_{j\neq i}\ \gamma_{ij}(\vq_{i}).
\]
Notably, let the noisy margin, for noisy queries $\tilde{\vk}$, be
\[
\gamma_{\min}(\tilde{\vk})\ \defeq\ \min_{i\in[F]}\ \min_{j\neq i}\ \gamma_{ij}(\tilde{\vk}_{i}).
\]
and the clean margin for noiseless queries $\vk$ be
\[
\gamma_{\min}(\vk)\ \defeq\ \min_{i\in[F]}\ \min_{j\neq i}\ \gamma_{ij}(\vk_{i}).
\]

\paragraph{Noisy queries.}
We query with $\widetilde{\vk}_{i}$ (instead of $\vk_{i}$), assuming
\begin{equation}
\norm{\widetilde{\vk}_{i}-\vk_i}_{2}\ \le\ \epsilon
\qquad \forall i\in[F].
\label{eq:noise-assump}
\end{equation}

\paragraph{Lipschitz stability in the query argument.}
Assume that for some $L_{k}$ and all queries of interest,
\begin{equation}
\big|K(\vk_{t},\vq)-K(\vk_{t},\vq')\big|
\ \le\
L_{k}\,\norm{\vq-\vq'}_{2}
\qquad \forall t\in[F].
\label{eq:Lipk}
\end{equation}
(Here $\vq,\vq'$ range over a set containing $\{\vk_{i}\}\cup\{\widetilde{\vk}_{i}\}$.)

\begin{theorem}[Noised margin bound with Lipschitz stability (isotropic values)]
\label{thm:noised-lip-isovalues}
Assume noisy queries (\cref{eq:noise-assump}), lipschitz stability on the kernel (\cref{eq:Lipk}), and isotropic values.
Fix $\delta\in(0,1)$ and set $L= \log(\tfrac{C_0F^2}{\delta})$.
Then with probability at least $1-\delta$, simultaneously for all $i\in[F]$ and all $j\neq i$,
\begin{equation}
\gamma_{ij}(\widetilde{\vk}_{i})
\ \ge\
\gamma_{ij}(\vk_{i})
\ -\ 2(1+\mu_{v})\,L_{k}\,\epsilon
\ -\ 2\sqrt{2}\,L_{k}\,\epsilon\,\sqrt{\frac{FL}{d}}.
\label{eq:thm1-main}
\end{equation}
Moreover, using the standard isotropic-values coherence bound for $\mu_{v}$ from \cref{lem:max-pairwise-ip-values},
the same event implies the simplified form
\begin{equation}
\gamma_{ij}(\widetilde{\vk}_{i})
\ \ge\
\gamma_{ij}(\vk_{i})
\ -\ 2\,L_{k}\,\epsilon
\ -\ C_{1}\,L_{k}\,\epsilon\,\sqrt{\frac{FL}{d}},
\label{eq:thm1-absorbed}
\end{equation}
for an absolute constant $C_{1}>0$. Further, in the common regime $FL \ge d$ (so that $\sqrt{FL/d}\ge 1$), we have
\begin{equation}
\gamma_{ij}(\widetilde{\vk}_{i})
\ \ge\
\gamma_{ij}(\vk_{i})
\ -\ C_{\mathrm{2}}\,L_{k}\,\epsilon\,\sqrt{\frac{FL}{d}},
\label{eq:thm1-pairwise-simplified}
\end{equation}
for an absolute constant $C_{\mathrm{2}}>0$.
\end{theorem}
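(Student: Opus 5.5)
We compare the noisy and clean pairwise margins directly through the retrieval output. By \eqref{eq:yi}, the output at a query $\vq$ is $\vy(\vq)=\sum_t \vv_t K(\vk_t,\vq)$. Hence
\[
\gamma_{ij}(\widetilde{\vk}_i)-\gamma_{ij}(\vk_i)=\sum_{t=1}^F \Delta_t\,\ip{\vv_i-\vv_j}{\vv_t},
\qquad \Delta_t\defeq K(\vk_t,\widetilde{\vk}_i)-K(\vk_t,\vk_i),
\]
and by \eqref{eq:Lipk} and \eqref{eq:noise-assump} we have $|\Delta_t|\le L_k\epsilon$ for every $t$. We split this sum into the two terms $t\in\{i,j\}$ and the remaining $F-2$ terms, mirroring the absorbed-competitor convention \eqref{eq:new-signal-crosstalk}.

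\textbf{Signal part ($t\in\{i,j\}$).} Write $\mu_v\defeq\max_{p\neq q}|\ip{\vv_p}{\vv_q}|$. Since the values have unit norm,
\[
|\ip{\vv_i-\vv_j}{\vv_i}|=1-\ip{\vv_i}{\vv_j}\le 1+\mu_v,
\]
and the same bound holds for $t=j$. These two terms therefore contribute at most $2(1+\mu_v)L_k\epsilon$. This accounts for the first correction in \eqref{eq:thm1-main}.

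\textbf{Cross-talk part ($t\notin\{i,j\}$).} By Cauchy--Schwarz,
\[
\Big|\sum_{t\notin\{i,j\}}\Delta_t\,\ip{\vv_i-\vv_j}{\vv_t}\Big|\le \|\Delta\|_2\,\|\vY^{(ij)}\|_2\le \sqrt{F}\,L_k\epsilon\,\sqrt{\widetilde{E}_v}.
\]
Lemma~\ref{lem:Ev-iso-values} gives $\widetilde{E}_v\le C((F-2)+L)/(d-1)$ with high probability. The target constant $2\sqrt2$ together with the extra $\sqrt L$ suggests using a coupling-type bound instead. The natural candidate is Lemma~\ref{lem:kappa-iso-values} applied with $\vX^{(ij)}$ replaced by $(\Delta_t)_t$, which would give
\[
|\ip{\Delta}{\vY}|\le \|\Delta\|_2\sqrt{\widetilde{E}_v}\,\widetilde\kappa\lesssim L_k\epsilon\sqrt{F}\sqrt{F/d}\sqrt{L/F}=L_k\epsilon\sqrt{FL/d}.
\]
This is the claimed form.

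\textbf{Main obstacle.} The difficulty is that $\Delta$ is independent of the values only if the noisy queries $\widetilde{\vk}_i$ do not depend on the values. I will assume this (attention noise determined by keys and context, or adversarial but value-independent), so that Lemma~\ref{lem:kappa-iso-values} applies conditionally on $\Delta$ exactly as in Lemma~\ref{lem:kappa-centered-iso-values}. Note that Lemma~\ref{lem:kappa-iso-values} was proved for $\vX^{(ij)}$ a column of $\hat{\matK}$ and is here applied to a different value-independent vector, which is why the conditional argument of Lemma~\ref{lem:kappa-centered-iso-values} is the relevant template. The bound then follows directly: conditionally on $\Delta$, the sum $\sum_t\Delta_t\theta_t$ is sub-Gaussian with scale $\|\Delta\|_2\|\vv_i-\vv_j\|_2/\sqrt{d-1}\le 2L_k\epsilon\sqrt{F}/\sqrt{d-1}$, so with tail parameter $L$ it is at most $C L_k\epsilon\sqrt{FL/d}$. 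This direct bound avoids the coupling lemma entirely, and tracking the constant yields the stated $2\sqrt2$ after using $d-1\ge d/2$. A union bound over the at most $F^2$ pairs, together with the event of Lemma~\ref{lem:max-pairwise-ip-values}, gives simultaneity with probability $1-\delta$.

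\textbf{Simplifications.} On the event of Lemma~\ref{lem:max-pairwise-ip-values}, $\mu_v\le C\sqrt{L/d}$. The excess $2\mu_vL_k\epsilon\le C L_k\epsilon\sqrt{L/d}\le C L_k\epsilon\sqrt{FL/d}$ is then absorbed into the cross-talk term, which yields \eqref{eq:thm1-absorbed}. Finally, when $FL\ge d$ the term $2L_k\epsilon$ is at most $2L_k\epsilon\sqrt{FL/d}$, which yields \eqref{eq:thm1-pairwise-simplified}.
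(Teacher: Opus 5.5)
Your proposal is correct and, once you drop the Cauchy--Schwarz and coupling-lemma detours, it is the paper's argument. The steps match: the exact expansion $\sum_t \Delta_{t,i}\ip{\vv_i-\vv_j}{\vv_t}$, the deterministic $2(1+\mu_v)L_k\epsilon$ bound for $t\in\{i,j\}$, conditional sub-Gaussian concentration of the remaining sum with $\sum_t\Delta_{t,i}^2\le F(L_k\epsilon)^2$, a union bound over pairs, and the same two absorption steps. The paper uses your independence assumption silently, when it conditions on $(\vv_i,\vv_j)$ and treats $\Delta_{t,i}$ as fixed weights. One constant correction: the $2\sqrt{2}$ needs the sharp per-term variance proxy $\|\vv_i-\vv_j\|_2^2/d\le 4/d$, whereas your $1/(d-1)\le 2/d$ route gives $4$.
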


\begin{proof}
Fix $i\in[F]$ and $j\neq i$, and define
\[
\Delta_{t,i}\ \defeq\ K(\vk_{t},\widetilde{\vk}_{i})-K(\vk_{t},\vk_{i}).
\]
Subtracting the noisy and clean margin gives the exact expansion
\begin{equation}
\gamma_{ij}(\widetilde{\vk}_{i})-\gamma_{ij}(\vk_{i})\ =\ \sum_{t=1}^F \Delta_{t,i}\,\ip{\vv_{i}-\vv_j}{\vv_t}.
\label{eq:margin-diff-expand}
\end{equation}

\paragraph{Bound $\Delta_{t,i}$ by Lipschitzness.}
By \cref{eq:Lipk} and \cref{eq:noise-assump}, $|\Delta_{t,i}|\le L_{k}\epsilon$ for all $t$.

\paragraph{Handle $t=i$ and $t=j$.}
Using $\big|\ip{\vv_{i}-\vv_j}{\vv_i}\big|\le 1+\mu_{v}$ and $\big|\ip{\vv_{i}-\vv_j}{\vv_j}\big|\le 1+\mu_{v}$,
\[
\Delta_{i,i}\ip{\vv_{i}-\vv_j}{\vv_i}+\Delta_{j,i}\ip{\vv_{i}-\vv_j}{\vv_j}
\ \ge\ -\,2(1+\mu_{v})\,L_{k}\epsilon.
\]

\paragraph{Concentrate the sum over $t\notin\{i,j\}$.}
Let
\[
S_{ij}\ \defeq\ \sum_{t\notin\{i,j\}} \Delta_{t,i}\,\ip{\vv_{i}-\vv_j}{\vv_t}.
\]
Condition on $(\vv_{i},\vv_{j})$. Then $(\vv_{t})_{t\notin\{i,j\}}$ are independent isotropic vectors, so by the
\emph{isotropic inner-product sub-Gaussianity lemma} (\cref{lem:iso-innerprod}),
each $\ip{\vv_{i}-\vv_j}{\vv_t}$ is mean-zero and sub-Gaussian with parameter
$\lesssim \|\vv_{i}-\vv_{j}\|_{2}/\sqrt{d}\le 2/\sqrt{d}$.
By standard sub-Gaussian closure under weighted sums (see, e.g., the proof of \cref{lem:kappa-iso-values}),
$S_{ij}$ is sub-Gaussian with variance proxy at most
\[
\frac{4}{d}\sum_{t\notin\{i,j\}}\Delta_{t,i}^2
\ \le\ \frac{4}{d}\cdot F\,(L_{k}\epsilon)^2.
\]
Hence the usual one-sided sub-Gaussian tail bound yields
\[
\Pbb\!\left(S_{ij}\le -u\right)
\ \le\ \exp\!\left(-\frac{u^2\,d}{8FL_{k}^2\epsilon^2}\right)
\qquad \forall u\ge 0.
\]
With $u\defeq 2\sqrt{2}\,L_{k}\epsilon\sqrt{\tfrac{FL}{d}}$, the right-hand side equals $e^{-L}=\delta/(C_0F^2)$.
A union bound over all ordered pairs $(i,j)$ gives that with probability at least $1-\delta/2$,
simultaneously for all $i$ and $j\neq i$,
\[
S_{ij}\ \ge\ -2\sqrt{2}\,L_{k}\epsilon\sqrt{\frac{FL}{d}}.
\]

\paragraph{Combine.}
Plugging the bounds for the $t=i,j$ terms and for $S_{ij}$ into \cref{eq:margin-diff-expand} yields \cref{eq:thm1-main}.

\paragraph{Absorb the $\mu_{v}$ term.}
By the standard coherence bound for isotropic values (\cref{lem:max-pairwise-ip-values}),
with probability at least $1-\delta/2$ we have $\mu_{v}\le C\sqrt{\tfrac{L}{d-1}}$ for an absolute $C$.
Since $\sqrt{\tfrac{L}{d-1}}\le \sqrt{\tfrac{FL}{d}}$, the term $2\mu_{v} L_{k}\epsilon$ can be absorbed into the
$\sqrt{\tfrac{FL}{d}}$ term, giving \cref{eq:thm1-absorbed} (after adjusting $C_{1}$).
Finally, in the regime $FL\ge d$, we have $2L_{k}\epsilon\le 2L_{k}\epsilon\sqrt{\tfrac{FL}{d}}$, so the
$2L_{k}\epsilon$ term can also be absorbed into the $\sqrt{\tfrac{FL}{d}}$ term, yielding \cref{eq:thm1-pairwise-simplified}.
\end{proof}

\begin{corollary}[Noised min-margin bound for bilinear-MLP stored-query retrieval (isotropic values)]
\label{cor:noised-bilinear}
Consider the isotropic-keys/isotropic-values bilinear-MLP setting of \cref{app:combined-regime3},
and assume the kernel satisfies \cref{eq:Lipk} with Lipschitz constant $L_{\mathrm{bil}}$.
Fix $\delta\in(0,1)$ and set $L=\log(\tfrac{C_{0} F^2}{\delta})$ for an absolute $C$.
Assume also the common regime $FL\ge d$.
Then with probability at least $1-\delta$,
\begin{equation}
\gamma_{\min}(\widetilde{\vk})
\ \ge\
\gamma_{\min}(\vk)
\ -\ C_{\mathrm{2}}\,L_{\mathrm{bil}}\,\epsilon\,\sqrt{\frac{FL}{d}}.
\label{eq:thm1-simplified}
\end{equation}
\end{corollary}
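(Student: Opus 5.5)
The corollary is a direct specialization of Theorem~\ref{thm:noised-lip-isovalues}, followed by taking minima on both sides. The hypotheses of that theorem hold in the bilinear setting of \cref{app:combined-regime3}. The values are i.i.d.\ uniform on $\Sph$ and independent of the keys and features. The queries satisfy \cref{eq:noise-assump} with radius $\epsilon$. The kernel is the bilinear random-feature kernel $\hat K$, which by assumption satisfies \cref{eq:Lipk} with $L_k = L_{\mathrm{bil}}$.

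I will condition on the keys and the feature weights $(\va_r,\vb_r)$. The values are independent of both, so after conditioning the kernel entries $K(\vk_t,\vq)$ are deterministic. This is exactly the situation in the proof of Theorem~\ref{thm:noised-lip-isovalues}, where the sub-Gaussian concentration of $S_{ij}$ is taken over the values alone, with the weights $\Delta_{t,i}$ held fixed. The conditional failure probability is at most $\delta$ uniformly over the conditioning, so integrating gives the same bound unconditionally. Since $FL\ge d$ is assumed, the simplified form \cref{eq:thm1-pairwise-simplified} applies. With probability at least $1-\delta$, simultaneously for all $i$ and all $j\neq i$,
\[
\gamma_{ij}(\widetilde{\vk}_{i}) \ \ge\ \gamma_{ij}(\vk_{i}) - C_{2}L_{\mathrm{bil}}\epsilon\sqrt{FL/d}.
\]

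On this event I take $\min_{i}\min_{j\neq i}$ of both sides. The subtracted term does not depend on $(i,j)$. Each $\gamma_{ij}(\vk_i)$ is at least $\gamma_{\min}(\vk)$, so every noisy pairwise margin is bounded below by $\gamma_{\min}(\vk)-C_2L_{\mathrm{bil}}\epsilon\sqrt{FL/d}$. The minimum $\gamma_{\min}(\widetilde{\vk})$ therefore obeys the same bound, which is \cref{eq:thm1-simplified}.

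The only point that needs care is the conditioning step. Theorem~\ref{thm:noised-lip-isovalues} was stated with the kernel effectively fixed, so I must check that using the random $\hat K$ does not introduce any dependence on the values. It does not, because the keys, the features and the noisy queries $\widetilde{\vk}_i$ are all independent of the values. If the noisy queries could depend on the values (for instance through attention), I would avoid the concentration step altogether. In that case I would fall back on the deterministic bound $|S_{ij}|\le L_{\mathrm{bil}}\epsilon\sqrt{F}\sqrt{\widetilde E_v}$, obtained from Cauchy–Schwarz, and combine it with Lemma~\ref{lem:Ev-iso-values}. This gives the same $\sqrt{FL/d}$-type rate up to constants.
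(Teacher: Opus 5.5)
Your main argument is correct and is the paper's own proof. You invoke \Cref{thm:noised-lip-isovalues} in its simplified form (valid under $FL\ge d$) with $L_k=L_{\mathrm{bil}}$, then use the fact that the bound holds simultaneously for all $(i,j)$ to take $\min_i\min_{j\neq i}$ on both sides. Your conditioning remark (keys, features and noisy queries independent of the values) makes explicit an assumption that the paper's proof of the theorem uses only implicitly.

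Your fallback remark lies outside the corollary's hypotheses, but it is stated incorrectly. Cauchy--Schwarz with $\|(\Delta_{t,i})_t\|_2\le L_{\mathrm{bil}}\epsilon\sqrt{F}$, together with $\sqrt{\widetilde E_v}\lesssim\sqrt{F/d}$ from \Cref{lem:Ev-iso-values}, gives $|S_{ij}|\lesssim L_{\mathrm{bil}}\epsilon\,F/\sqrt{d}$. This exceeds $L_{\mathrm{bil}}\epsilon\sqrt{FL/d}$ by a factor $\sqrt{F/L}$, so it is not the same rate up to constants. If the noisy queries were allowed to depend on the values, that route would not recover the stated bound. You should either drop the claim or state the weaker rate it actually gives.
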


\begin{proof}
Apply \cref{thm:noised-lip-isovalues} in the regime $FL\ge d$ using \cref{eq:thm1-pairwise-simplified} with $L_{k}=L_{\mathrm{bil}}$.
Since the bound holds \emph{simultaneously} for all $(i,j)$, we may take the minimum over $i\in[F]$ and $j\neq i$:
\[
\gamma_{\min}(\widetilde{\vk})
=\min_{i}\min_{j\neq i}\gamma_{ij}(\widetilde{\vk}_{i})
\ \ge\
\min_{i}\min_{j\neq i}\gamma_{ij}(\vk_{i})\ -\ C_{\mathrm{2}}L_{\mathrm{bil}}\epsilon\sqrt{\frac{FL}{d}}
=\gamma_{\min}(\vk)\ -\ C_{\mathrm{2}}L_{\mathrm{bil}}\epsilon\sqrt{\frac{FL}{d}}.
\]
\end{proof}

\begin{corollary}[Condition on $\epsilon$ for $\gamma_{\min}(\widetilde{\vk})>0$]
\label{cor:epspos-new}
Under the assumptions of \cref{cor:noised-bilinear}, assume the clean margin is strictly positive:
\begin{equation}
\gamma_{\min}(\vk)\ >\ 0.
\label{eq:clean-positive-assump}
\end{equation}
A sufficient condition for $\gamma_{\min}(\widetilde{\vk})>0$ is
\begin{equation}
\epsilon
\ <\
\frac{\gamma_{\min}(\vk)}
{C_{\mathrm{2}}\,L_{\mathrm{bil}}}\,\sqrt{\frac{d}{FL}}.
\label{eq:epspos-new}
\end{equation}
\end{corollary}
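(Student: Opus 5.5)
This is a direct consequence of \Cref{cor:noised-bilinear}, so I would reduce to it and then rearrange. First, work on the high-probability event (probability at least $1-\delta$) on which \cref{eq:thm1-simplified} holds. Since \cref{eq:thm1-simplified} was obtained by applying \Cref{thm:noised-lip-isovalues} simultaneously to all ordered pairs $(i,j)$ and then minimizing, it holds for every admissible perturbation $\widetilde{\vk}$ satisfying \cref{eq:noise-assump}. On this event we have
\[
\gamma_{\min}(\widetilde{\vk})\ \ge\ \gamma_{\min}(\vk)-C_{2}\,L_{\mathrm{bil}}\,\epsilon\sqrt{\frac{FL}{d}}.
\]

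The remaining step is elementary algebra. The right-hand side is strictly positive exactly when
\[
C_{2}L_{\mathrm{bil}}\epsilon\sqrt{FL/d}<\gamma_{\min}(\vk).
\]
Because $C_2,L_{\mathrm{bil}}>0$ and $\sqrt{FL/d}>0$, I can divide through. This rearranges to $\epsilon<\gamma_{\min}(\vk)\,\sqrt{d/(FL)}/(C_{2}L_{\mathrm{bil}})$, which is \cref{eq:epspos-new}. Hence \cref{eq:epspos-new} implies $\gamma_{\min}(\widetilde{\vk})>0$. By \Cref{def:margin} and \Cref{def:fact-storage}, this means every noisy query still decodes to its correct value. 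The assumption \cref{eq:clean-positive-assump} ensures the admissible range of $\epsilon$ is nonempty.

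I see no genuine obstacle here; the conclusion holds "with probability at least $1-\delta$", inherited from \Cref{cor:noised-bilinear}. The one point to state carefully is the quantifier order. The bound on $S_{ij}$ in \Cref{thm:noised-lip-isovalues} treats $\Delta_{t,i}$ as fixed when conditioning on the values, so I will note that the noisy queries, like the kernel, are taken independent of the values. This is the same convention under which \Cref{cor:noised-bilinear} was stated. I will also remark that $\gamma_{\min}(\vk)$ may itself be replaced by its lower bound from \Cref{thm:combined-rkrv} to obtain an explicit threshold, after a union bound over the two events. Doing so recovers the form \cref{eq:eps-cap} used in \Cref{thm:noise-robust-capacity} when $\gamma_{\min}(\vk)\ge c_0$.
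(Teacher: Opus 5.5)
Your argument is correct and is the same as the paper's: on the probability-$(1-\delta)$ event of \Cref{cor:noised-bilinear}, the bound \cref{eq:thm1-simplified} plus the rearrangement you give yields $\gamma_{\min}(\widetilde{\vk})>0$ under \cref{eq:epspos-new}. One caution: your assertion that this single event covers \emph{every} admissible perturbation overstates \Cref{thm:noised-lip-isovalues}, whose simultaneity is only over pairs $(i,j)$ for queries fixed independently of the values (your own later remark that the weights $\Delta_{t,i}$ are held fixed is the correct reading). Uniformity over the whole $\epsilon$-ball would need an additional net argument, but the corollary does not require it.
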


\begin{proof}
By \cref{eq:thm1-simplified},
\[
\gamma_{\min}(\widetilde{\vk})
\ \ge\
\gamma_{\min}(\vk)\ -\ C_{\mathrm{2}}L_{\mathrm{bil}}\epsilon\sqrt{\frac{FL}{d}}.
\]
If \cref{eq:epspos-new} holds, the subtracted term is $<\gamma_{\min}(\vk)$, so the right-hand side is positive.
\end{proof}

\begin{corollary}[Noise-robust fact-storage capacity remains information-theoretically optimal]
\label{cor:capacity-new}
Fix $\delta\in(0,1)$ and set $L\defeq \log(\tfrac{C_{0} F^2}{\delta})$ for an absolute $C_{0}$.
Consider the iso--iso bilinear-MLP model with dimension $d$.
There exist universal constants $c,C,c_{0}>0$ such that if
\begin{equation}
d\ \ge\ C L,
\qquad
m\ \ge\ C L^3,
\qquad
F\ \le\ c\,\min\Big\{\frac{d^3}{L},\frac{md}{L}\Big\},
\label{eq:capacity-regime}
\end{equation}
then $\gamma_{\min}(\vk)\ge c_{0}$ with probability at least $1-\delta$ (by the combined margin bound of \cref{app:combined-regime3}).
If additionally
\begin{equation}
\epsilon
\ \le\
\frac{c_0}{2\,C_{\mathrm{2}}\,L_{\mathrm{bil}}}\,\sqrt{\frac{d}{F L}},
\label{eq:capacity-noise}
\end{equation}
then $\gamma_{\min}(\widetilde{\vk})\ge c_{0}/2$ with probability at least $1-\delta$.

In the sketch-limited regime $F\lesssim md/L$, the parameter count $W\asymp md$ thus satisfies
$W\asymp F L\asymp F\log(F/\delta)$, i.e.\ the (noise-robust) fact-storage scaling is information-theoretically optimal
up to constants (subject to \cref{eq:capacity-noise}).
\end{corollary}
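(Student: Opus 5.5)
The proof chains three earlier results: the clean iso--iso margin bound (\Cref{thm:combined-rkrv}), the noisy-margin corollary (\Cref{cor:noised-bilinear}), and the parameter count $W\asymp md$.

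\textbf{Clean margin.} Work on the event of \Cref{thm:combined-rkrv} with failure probability $\delta/2$. Its hypotheses $m\ge L^3$ and $F\ge 2L$ come from \eqref{eq:capacity-regime}; the lower bound on $F$ is harmless, since for $F<2L$ we can pad with dummy facts. The remaining hypothesis $m\le d^2/L$ needs care. If instead $m>d^2/L$, then $\sigma^2\asymp 1/d^2$, and the same proof goes through with the cross-talk controlled by the $\sqrt{F/d^3}$ term. I would either note this or restrict to the sketch-limited regime, which is the only one used in the final claim.

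On this event, \eqref{eq:app:rkrv:sketchedk2_full} gives
\[
\gamma_{\min}(\vk)\ge 1-C_1\sqrt{L/m}-C_2\sqrt{L/(d-1)}-C_3\sqrt{LF/d^3}-C_4\sqrt{LF/(md)}.
\]
Each subtracted term is made at most $1/8$ by the regime assumptions:
\begin{itemize}
\item $m\ge CL^3\ge CL$ controls $C_1\sqrt{L/m}$;
\item $d\ge CL$ controls $C_2\sqrt{L/(d-1)}$;
\item $F\le c\,d^3/L$ controls $C_3\sqrt{LF/d^3}$;
\item $F\le c\,md/L$ controls $C_4\sqrt{LF/(md)}$.
\end{itemize}
Choosing $C$ large and $c$ small then yields $\gamma_{\min}(\vk)\ge 1/2=:c_0$.

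\textbf{Noisy margin.} Apply \Cref{cor:noised-bilinear} with failure probability $\delta/2$ to get
\[
\gamma_{\min}(\widetilde{\vk})\ge \gamma_{\min}(\vk)-C_2L_{\mathrm{bil}}\epsilon\sqrt{FL/d}.
\]
Under \eqref{eq:capacity-noise} the subtracted term is at most $c_0/2$, so $\gamma_{\min}(\widetilde{\vk})\ge c_0/2$. A union bound over the two events gives total failure probability at most $\delta$, after absorbing the $\delta/2$ into $C_0$ as elsewhere in the paper.

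That corollary assumes $FL\ge d$. In the complementary case $FL<d$, I would use \eqref{eq:thm1-absorbed} instead, which has an extra $2L_{\mathrm{bil}}\epsilon$ term. In that case $\sqrt{d/(FL)}>1$, so I would bound the extra term by tightening the constant in \eqref{eq:capacity-noise}, or simply restrict to the common regime $FL\ge d$ as the corollary does. I expect this bookkeeping of side regimes ($m$ versus $d^2/L$, and $FL$ versus $d$) to be the main obstacle; everything else is substitution.

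\textbf{Capacity.} In the sketch-limited regime, take $m$ minimal subject to $F\le c\,md/L$, so that $md\asymp FL/c$. The parameter count of \eqref{eq:mlp-construction} is $W=2md+dm\asymp md\asymp FL$. Finally, $L=\log(C_0F^2/\delta)=\Theta(\log(F/\delta))$, so $W\asymp F\log(F/\delta)$. For constant $\delta$ this matches the lower bound of \Cref{thm: info_bounds_capacity-const} with $|\mathbf{K}|=|\mathbf{V}|=F$, which proves optimality up to constants.
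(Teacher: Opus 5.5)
Your proposal follows the paper's own proof: \Cref{thm:combined-rkrv} gives $\gamma_{\min}(\vk)\ge c_0$, \Cref{cor:noised-bilinear} together with \eqref{eq:capacity-noise} halves it, and $W\asymp md\asymp FL$ is read off. Your term-by-term check, $\delta/2$ union bound, and choice of $m$ at the threshold are more careful than the paper's three-line argument.

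One caution concerns your side remarks. When $FL<d$, no tightening of the constant in \eqref{eq:capacity-noise} can absorb the extra $2L_{\mathrm{bil}}\epsilon$ term of \eqref{eq:thm1-absorbed}: $\sqrt{d/(FL)}$ is then unbounded, so \eqref{eq:capacity-noise} still permits $2L_{\mathrm{bil}}\epsilon>c_0$. Likewise, for $m>d^2/L$ it is not the case that the same proof goes through, because supporting lemmas such as \Cref{lem:chaos-centered-mu} and \Cref{lem:bilinear-column-parameters} genuinely use $m\le d^2/L$. In both cases your restriction option is the valid one, and it is exactly what the paper's proof silently assumes.
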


\begin{proof}
Under \cref{eq:capacity-regime}, the combined margin bound of \cref{app:combined-regime3} yields
$\gamma_{\min}(\vk)\ge c_{0}$ with probability at least $1-\delta$.
Then apply \cref{eq:thm1-simplified}:
\[
\gamma_{\min}(\widetilde{\vk})
\ \ge\
c_{0}\ -\ C_{\mathrm{2}}L_{\mathrm{bil}}\epsilon\sqrt{\frac{F L}{d}}.
\]
If \cref{eq:capacity-noise} holds, the subtracted term is at most $c_{0}/2$, giving $\gamma_{\min}(\widetilde{\vk})\ge c_{0}/2$.

Finally, when $F\lesssim md/L$, we have $md\asymp F L\asymp F\log(F/\delta)$.
\end{proof}

%% file: appendix_short/related_work.tex
\section{Additional Related Work}
\label{app:related_work}

\paragraph{Empirical studies: probing and editing LLM knowledge.}
\citet{geva2021transformerfeedforwardlayerskeyvalue,geva2022transformerfeedforwardlayersbuild} observed that knowledge is often stored within MLPs via key--value mappings, motivating a line of work that attempts to reverse engineer the facts encoded in MLPs~\citep{dai2022knowledgeneuronspretrainedtransformers,nanda2023factfinding} and to edit them~\citep{dai2022knowledgeneuronspretrainedtransformers,meng2023locatingeditingfactualassociations,memit,model_edit_scaling,gu2024modeleditingharmsgeneral,fang2025alphaeditnullspaceconstrainedknowledge,sun2025mitigatingnegativeinterferencemultilingual}. These studies provide strong empirical evidence that MLPs act as a locus of factual storage in large language models.

\paragraph{Empirical studies: scaling factual knowledge.}
A related empirical line of work formalizes factual knowledge as associative recall over key--value stores and studies its scaling behavior~\citep{elhage2022toymodelssuperposition,allenzhu2024physicslanguagemodels33,zucchet2025languagemodelslearnfacts}. These works consistently find that trained models store facts at the asymptotically optimal rate implied by \Cref{thm: info_bounds_capacity-const}~\citep{allenzhu2024physicslanguagemodels33,zucchet2025languagemodelslearnfacts,morris2025languagemodelsmemorize}, which motivates the search for explicit constructions with comparable parameter efficiency.